\newcommand{\modelname}{GraS$^2$P}
\newtheorem{theorem}{Theorem}[section]
\newtheorem{lemma}[theorem]{Lemma}
\theoremstyle{definition}
\newtheorem{definition}{Definition}[section]
\theoremstyle{remark}
\DeclareRobustCommand\bfseries{%
	\not@math@alphabet\bfseries\mathbf
	\fontseries\bfdefault\selectfont
	\boldmath 
}
\title{Continuous-time Graph Representation with Sequential Survival Process}
\author{
    Abdulkadir \c{C}elikkanat, 
    Nikolaos Nakis, 
    Morten Mørup
}
\begin{document}

\maketitle

\begin{abstract}
Over the past two decades, there has been a tremendous increase in the growth of representation learning methods for graphs, with numerous applications across various fields, including bioinformatics, chemistry, and the social sciences. However, current dynamic network approaches focus on discrete-time networks or treat links in continuous-time networks as instantaneous events. Therefore, these approaches have limitations in capturing the persistence or absence of links that continuously emerge and disappear over time for particular durations. To address this, we propose a novel stochastic process relying on survival functions to model the durations of links and their absences over time. This forms a generic new likelihood specification explicitly accounting for intermittent edge-persistent networks, namely \textsc{\modelname}: Graph Representation with Sequential Survival Process. We apply the developed framework to a recent continuous time dynamic latent distance model characterizing network dynamics in terms of a sequence of piecewise linear movements of nodes in latent space. We quantitatively assess the developed framework in various downstream tasks, such as link prediction and network completion, demonstrating that the developed modeling framework accounting for link persistence and absence well tracks the intrinsic trajectories of nodes in a latent space and captures the underlying characteristics of evolving network structure.
\end{abstract}
\section{Introduction}\label{sec:introduction}

In diverse fields spanning physical and social sciences, entities ranging from minuscule scales like microorganisms and proteins to larger scales such as humans to scales of celestial objects like planets and galaxies always exert influence upon and interact with one another. These evolving and intricate interconnections inherently translate into networks, providing a versatile framework to encapsulate the subtle interplay of relationships. In this regard, networks (or graphs) have become essential for investigating and comprehending the intricate dynamics of these complex systems evolving over time \citep{newman2003structure}.

Representation learning models on graphs have gained popularity due to their ability to effectively extract knowledge from networks and achieve various objectives like predicting linkage and node properties \citep{hamilton_book,GRL-survey-ieeebigdata20}. However, their primary emphasis has been on static networks. The early works relied either on random walks \citep{deepwalk-perozzi14, grover2016node2vec, line} or matrix factorization techniques \cite{netmf-wsdm18, netsmf-www2019}. In recent years, Graph Neural Network (GNN) architectures have become a prominent way to address network embedding problems \cite{gnn_survey}, and a plethora of methods have also been developed to address a variety of network types, such as signed networks \citep{li2020learning,nakis23aistats} and knowledge graphs \citep{dai2020survey}, or to serve diverse purposes like encoding the hierarchical structure of networks in learning node embeddings \cite{louvainNE-wsdm20, nakis22hbdm}.  

Lately, there has been a growing interest in modeling and learning latent representations of temporal networks, encompassing the transient nature of node interactions \citep{xue2022dynamic}. The evolving focus seeks to unveil a richer understanding of node interactions throughout time, accounting for relationships' complex and evolving dynamics. Importantly, dynamic network modeling can thereby reveal intricate patterns within network structures that static approaches cannot adequately address. Initially, these dynamic modeling approaches focused on discrete time networks \citep{expl2,ishiguro2010dynamic, heaukulani2013dynamic, herlau2013modeling, yang2021discrete}. However, in recent years, substantial attention has also been devoted to the modeling of continuous-time networks. Prominent works have utilized Poisson \citep{fan2021continuous,dyrep, pivem} and Hawkes processes \citep{hp1,hp2,hawkes_1,hawkes_2,hawkes_3,zuo2018embedding,lu2019temporal,huang2022mutually,yang2017decoupling} in order to define principled learning procedures under continuous-time network likelihoods of event-based data. Contrary to the previous studies, which work on a network block level, the \textsc{HTNE} \citep{zuo2018embedding} extends the Hawkes process modeling to account for node-level embeddings. Furthermore, the GNN extensions for continuous-time dynamic networks, \textsc{TGN} \citep{rossi2020temporal}, and the temporal-point process of \textsc{M$^2$DNE} \citep{lu2019temporal} use a case-control approach optimizing a binary cross-entropy loss. In particular, \textsc{M$^2$DNE} takes into account both pairwise interactions at the micro level and broader network-wide dynamics at the macro level. Finally, non-likelihood-based procedures utilizing dynamic random walks such as (CTDNE) \citep{nguyen2018continuous} perform temporal random walks based on the observed continuous-time interactions. 

However, the currently existing approaches designed for modeling continuous-time dynamic networks exhibit significant limitations. In particular, when utilizing the event-based Poisson Process or extended Hawkes Process, they treat network links as instantaneous events, whereas the case-control approach using binary cross-entropy does not explicitly account for edge persistence in the likelihood. Nevertheless, numerous continuous-time dynamic networks in real-world scenarios surpass these perspectives. Links within these real-world networks often showcase intermittent patterns as interactions persist and dissipate consecutively over time. This nuanced nature of network dynamics necessitates a more comprehensive approach to accurately account for the persistent presence and absence of edges between nodes.

There are many prominent examples in which we can observe intermittent persistent linkage structures in real-world scenarios. For instance, consider a social media platform where users can follow or unfollow each other and thereby form a connection with each other over different time periods or contact and collaboration networks in which people can respectively be together or collaborate for extended periods of time. These intermittent persistent pairwise dynamics challenge traditional continuous-time dynamic network models that only account for the event of a tie but not its persistence and static models that assume constant and steady relationships. Although there are recent efforts modeling networks of intermittent link characteristics, they rely on certain assumptions, such as the stochastic block model \citep{rastelli2020stochastic,xu2015stochastic}, and are unable to produce continuous-time latent representations. There is, therefore, a need for new continuous-time dynamic network modeling approaches that can explicitly account for network connections that persist and dissipate consecutively over time.

In this paper, we introduce the continuous-time Graph Sequential Survival Process (\textsc{\modelname}). Specifically, we extend the traditional usage of Survival analysis to the realm of network science by developing a \textit{Sequential Survival} process that can capture the dynamic persistence of links and their absence in networks. To the best of our knowledge, this is the first approach capable of explicitly characterizing networks featuring intermittent time-persistent linkage structures. The main contributions can be outlined as:
\begin{itemize}
\item \textbf{A Novel Counting Process}. We introduce a novel stochastic process by leveraging the survival analysis to model the intermittent time-persistent linkage structure of the networks forming the \textsc{\modelname} model.	
 We further highlight the utility of the \textsc{\modelname} model considering the recently proposed continuous-time node embedding procedure \cite{pivem}.
 
\item \textbf{Experimental Validation}. We conduct extensive experiments on diverse real-world datasets to evaluate \textsc{\modelname}. The results showcase its effectiveness in capturing intricate characteristics of networks by explicitly accounting for intermittent edge persistence, and outperforming baseline methods in downstream tasks.
\item \textbf{Visualization Tool}. We show that the proposed approach can embed continuous-time edge persistent dynamic complex networks in low dimensional spaces accurately, thereby also serving as a visualization tool to get insights into the intricate temporal dynamics of link-persistent networks.
\end{itemize}

\noindent\textbf{Implementation}. The source codes and other details can be found at \url{https://abdcelikkanat.github.io/projects/grassp}
\section{Proposed Model}\label{sec:model}

In this section, we present our proposed approach, but before delving into the details, we will first establish the notations used throughout the paper. Without loss of generality,  we can suppose that the timeline begins at time $0$ and ends at $T$, and we will use $[T]$ to denote the time interval $[0,T)$. We employ the conventional notation, $\mathcal{G}=(\mathcal{V},\mathcal{E})$ to indicate a graph where $\mathcal{V}=\{1,\ldots,N\}$ is the vertex set and $\mathcal{E}:=\cup_{i,j\in\mathcal{V}}\mathcal{E}_{ij}$ refers to the edge set of the network, comprising of pairwise temporal links, $\mathcal{E}_{ij}$, for each pair $(i,j)\in\mathcal{V}^2$.

Again, it is worth emphasizing that we assume a pair of nodes consists of sequential links indicating intermittent interactions over time \cite{holme201297}. An existing link (i.e., interaction) can disappear and then emerge again. In this regard, we will utilize tuple $(i,j,t_k,t_{k+1})$ to denote a link between nodes $i$ and $j$ for the interval from $t_k\in[T]$ up to $t_{k+1}\in[T]$. We provide the formal description of the networks considered in this work in Definition \ref{defn:network} below:

\begin{definition}[Continuous-time Interval Graphs]\label{defn:network}
A \textit{continuous-time interval network} over a timeline $[T] := [0, T]$ is an ordered pair $\mathcal{G}=(\mathcal{V}, \mathcal{E})$ where $\mathcal{V} = \{1,\ldots,N\}$ is the set of nodes and $\mathcal{E}:=\{ (i,j, t_k, t_{k+1}) \in \mathcal{V}^2\times [T]^2 \mid t_k < t_{k+1} \}$ denotes the set of non-overlapping temporal links, i.e. if $(i,j,t_k, t_{k+1})$ and $(i,j,\bar{t}_l, \bar{t}_{l+1})$ are distinct links of $(i,j)$ pair, then it satisfies $[t_k, t_{k+1}) \cap [\bar{t}_l, \bar{t}_{l+1}) = \emptyset$.
\end{definition}

We call the initial and the last time of each link period as an \textit{event time}, and for practical purposes, we always suppose $0$ is also an event time for each node pair. In addition, we introduce the \textit{state} function, $s:\mathcal{V}^2 \times [T]\rightarrow\mathcal{S}$ as an indicator of the presence or absence of a link for a given time $t\in[T]$ where $\mathcal{S} := \{1, -1\}$ is the state space ($+1$ symbolizes the existence of the link and $-1$ its absence). Note that the state of each pair is constant until the next event time; thereby, we omit the input variable from the state function, $s(t)$, for convenience, and we write $s$ to denote the state of the interaction for the corresponding interval. In this regard, we can partition the timeline with respect to the values of the state function for each node pair (i.e., depending on whether a link exists or not), so if a pair consists of $M$ events $e_0 = 0 < e_1 < \cdots < e_m < \cdots < e_{M-1} < T$ then there must exist $M$ consecutive intervals, $\{[e_m, e_{m+1} )\subseteq [T] : \forall m\in  \{0,\ldots, M-1 \} \}$, having different states. 

Even though networks with sporadic interactions over time are prevalent in real-world contexts such as contact or social networks, to the best of our knowledge, they have not been considered previously to learn the continuous-time latent node representations.

\subsection{Sequential Survival Process}\label{sec:sec_survival_process}
Many research fields strongly emphasize modeling the time duration required for an event to unfold, such as investigating the lifespan of living organisms or analyzing the reliability of mechanical systems. The term "survival" is mostly employed in those works to describe the duration leading up to the occurrence of death or failure, which is a measure that encapsulates the essence of lifetime estimation and plays a fundamental role in understanding the dynamics of complex systems. More formally, for a given continuous random variable, $T$, representing the lifetime of an object or a system, the survival function is given by 
\begin{align*}
S(t) := \mathbb{P}\{ T>t \} = \int_{t}^{\infty}f(u)\mathrm{d}u = 1 - F(t)
\end{align*}
where $F(t)$ and $f(t)$ indicate the cumulative distribution and probability density functions. It can also be reparameterized using an associated hazard function $\lambda:[0,\infty)\rightarrow \mathbb{R}^+$ as:
\begin{align*}
S(t) = \exp\left( -\int_{0}^{t}\lambda(t^{\prime})dt^{\prime} \right).
\end{align*}
For further details on survival analysis, we recommend unfamiliar readers refer to the work of \citep{jenkins2005survival_book}.

Our approach characterizes node interactions by utilizing the power of survival functions. As mentioned before, we always assume that the state of the model alters after each event time point. Therefore, we design our \textit{Sequential Survival} process as consecutive survival functions denoting "surviving" and "dying" events. In other words, for a given initial state, $s_0\in\mathcal{S}$, we define the process, $\{ M(t): t\geq 0 \}$ as a counting process showing the total number of occurrences or events that have happened up to time $t$. Hence, we write the probability of the random variable $M(t)$ being equal to $m$ as follows:
\begin{align}
p_{M(t)}(m) = \int\limits_{\bm{\xi}\in\mathcal{R}}\prod_{k=1}^{m}\frac{\int_{e_{k-1}}^{e_k}\lambda(s_k,t^{\prime})\mathrm{d}t^{\prime}}{ \exp\left(\int_{e_{k-1}}^{e_k}\lambda(s_k,t^{\prime})\mathrm{d}t^{\prime}\right) }\mathrm{d}\bm{\xi}\label{eq:seq_surv_prob}
\end{align}
where $\lambda(s_k, t)$ is the hazard rate for given time $t\in[T]$ and state $s_k \in \mathcal{S}$, and $\mathcal{R} := \{ (t_1, \ldots, t_m)\in [T]^m: 0\leq t_1 < t_2 < \cdots t_m < T) \}$ is the domain of the integration. 

We can also write the likelihood function of the process from the probability given in Eq. \eqref{eq:seq_surv_prob}. Let $\Xi = (\Phi, M(t))$ be a random variable where $M(t)$ denotes the number of events up to time $t$, and $\Phi$ is the corresponding ordered event sequence. Then, we can write the marginal distribution of $M(t)$ by integrating over all possible ordered sequences in the set $\mathcal{R}$. In other words,
\begin{align*}
p_{M(t)}(m) = \int_{\bm{\xi} \in \mathcal{R}} p_{(\Phi,M(t))}(\bm{\xi}, m)d\bm{\xi}, 
\end{align*}
and by using the fundamental theorem of calculus, we can obtain the probability density function of the random variable, $\Xi = (\Phi,M(t))$, evaluated at $((e_1,\ldots,e_m), m)$ as: 
\begin{align}\label{eq:likelihood}
\!\!\!\!p_{(\Phi,M(t)}((e_1,\ldots,e_m), m) \!= \!\!\prod_{k=1}^{m}\!\frac{\lambda(s_k,e_k) }{ \exp\!\left(\int\limits_{e_{k-1}}^{e_k}\!\!\!\lambda(s_k,t^{\prime})\mathrm{d}t^{\prime}\!\!\right) }.
\end{align}

\subsection{Problem Formulation}
Our objective is to learn continuous-time node representations in a metric space $(\mathsf{X}, d_{\mathsf{X}})$ to uncover underlying temporal patterns of a network so the pairwise distances among nodes in a latent space should acquire the temporal changes within the network. We will use, $\mathbf{r}(i,t)$ or simply $\mathbf{r}_{i}(t)$, to denote the embedding of node $i\in\mathcal{V}$ at time $t\in[T]$ in a $ D$-dimensional space ($ D \ll |\mathcal{V}|$).
More specifically, we desire to obtain a map $\mathbf{r}:\mathcal{V}\times[T] \rightarrow \mathsf{X}$ satisfying 
\begin{align}
&\int_{e_l}^{e_u}\!\!\psi^{s}\left(d_{\mathsf{X}}\big(\mathbf{r}(i,t),\mathbf{r}(j,t)\big)\right)\mathrm{d}t \approx \int_{e_l}^{e_u}\!\!\bm{\lambda}_{ij}(s,t^{\prime})\mathrm{d}t^{\prime} && 
\end{align}
for a continuous function $\psi^{s}:\mathbb{R} \rightarrow \mathbb{R}^+$ and all $(i,j,s)\in\mathcal{V}^2\times\mathcal{S}$ where $\bm{\lambda}{(s, t)}$ indicates the \textit{true hazard rate} between $i$ and $j$ at time $t\in[T]$ and state $s\in\mathcal{S}$.  

Since we assume that a node pair has connections of alternating states (i.e., link or non-link periods) over time, we utilize the Sequential Survival process introduced in the previous part to characterize these intermittent persistent edges. In this regard, by using Eq \eqref{eq:likelihood}, we can write the log-likelihood function for the whole network as follows:
\begin{align}\label{eq:log_likelihood}
\!\!\!\!\mathcal{L}(\Omega|\mathcal{G})\nonumber \!:=& \log p(\mathcal{G}|\Omega)\nonumber
\\
\!=&\!\!\!\sum_{i,j\in\mathcal{V}}\sum_{\substack{m=1}}^{|\mathcal{E}_{ij}|} \!\Bigg( \!\!\log\lambda_{ij}(s_m,e_m) \!- \!\!\!\!\!\int\limits_{e_m}^{e_{m+1}}\!\!\!\!\lambda_{ij}(s_m,t)\mathrm{d}t \Bigg)
\end{align}
where $\Omega$ refers to the set of model hyper-parameters.

To learn continuous-time node dynamics, we consider the latent distance modeling framework \cite{exp1}.  We leverage the hazard functions to define the latent representations uncovering the evolving relationships between nodes in the network. Based on our assumption, when a pair of nodes has a link or interaction at a particular time, it is expected to dissolve eventually. As a result, their latent positions should also naturally drift apart from each other over time to reflect their temporal connection strength. Conversely, when they do not have any connection, they might interact in the future, so their latent positions should also approach each other to reflect the potential for a coming link. In this regard, we define the hazard function, $\lambda_{ij}(s, t)$ as follows:
\begin{align}\label{eq:hazard_function}
\lambda_{ij}(s, t) := \exp\left( \beta(s) + s\| \mathbf{r}_i(t) - \mathbf{r}_j(t) \|^2 \right).
\end{align}
for node pair $(i,j)\in\mathcal{V}^2$ and state $s\in\mathcal{S}:=\{-1, 1\}$. We incorporate bias terms ($\beta(s)$) for each state value in the definition of the hazard function given in Eq. \eqref{eq:hazard_function}, and they are responsible for capturing the global information in the network \citep{krivitsky2009representing,pivem}. We further use the squared Euclidean metric \cite{LSM_geo,poincare}. Using this formulation, Lemma \ref{lemma:bounds} ensures the latent representations of nodes will be positioned close enough or significantly distant from each other depending on the state (i.e., link or non-link periods) of the node pairs.

\begin{lemma}\label{lemma:bounds}
Let $e_k$ and $e_{k+1}$ be a consecutive event times following a Sequential Survival process for node pair $(i,j)\in\mathcal{V}^2$. Then, the average squared distance between nodes during interval $[e_k, e_{k+1})$ associated with survival function $S(\cdot)$ and state $s\in\{-1,1\}$ can be bounded by 
\begin{align*}
b(-1) &\leq \frac{1}{(e_{k+1}\!-\!e_k)}\!\!\!\int\limits_{e_k}^{e_{k+1}}\!\!\!\! \| \mathbf{r}_i(t) - \mathbf{r}_j(t) \|^2 dt \leq b(+1)
\end{align*}
where $b(s)\!:=\!\!-2s\log(e_{k+1}\!-e_k)-\!s\log{ \!S(e_{k+1}\!)}\!-\!s\beta{(s)}$.
\end{lemma}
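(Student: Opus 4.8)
The plan is to reduce everything to a single application of Jensen's inequality. Fix the pair $(i,j)$, the consecutive event times $e_k<e_{k+1}$, and the state $s\in\{-1,+1\}$, and abbreviate $\Delta:=e_{k+1}-e_k>0$ and $\varphi(t):=\|\mathbf{r}_i(t)-\mathbf{r}_j(t)\|^2\ge 0$, so that the quantity to be bounded is exactly the average $\bar\varphi:=\tfrac{1}{\Delta}\int_{e_k}^{e_{k+1}}\varphi(t)\,\mathrm{d}t$. From the hazard/survival reparametrization recalled in Section~\ref{sec:sec_survival_process}, the survival function attached to this interval is $S(e_{k+1})=\exp\!\big(-\int_{e_k}^{e_{k+1}}\lambda_{ij}(s,t)\,\mathrm{d}t\big)$; substituting the latent-distance hazard of Eq.~\eqref{eq:hazard_function} gives
\[
-\log S(e_{k+1}) \;=\; \int_{e_k}^{e_{k+1}} \exp\!\big(\beta(s)+s\,\varphi(t)\big)\,\mathrm{d}t \;=\; e^{\beta(s)}\!\int_{e_k}^{e_{k+1}} e^{\,s\,\varphi(t)}\,\mathrm{d}t .
\]

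The core step is to lower bound the last integral via Jensen's inequality for the convex map $x\mapsto e^{x}$ against the uniform probability measure $\mathrm{d}t/\Delta$ on $[e_k,e_{k+1})$:
\[
\frac{1}{\Delta}\int_{e_k}^{e_{k+1}} e^{\,s\,\varphi(t)}\,\mathrm{d}t \;\ge\; \exp\!\Big(\tfrac{s}{\Delta}\int_{e_k}^{e_{k+1}}\varphi(t)\,\mathrm{d}t\Big) \;=\; e^{\,s\,\bar\varphi}.
\]
Combining the two displays yields $-\log S(e_{k+1})\ge \Delta\, e^{\beta(s)+s\bar\varphi}$, and taking logarithms and isolating $s\bar\varphi$ produces the single inequality
\[
s\,\bar\varphi \;\le\; \log\!\big(-\log S(e_{k+1})\big) - \log\Delta - \beta(s).
\]

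It then remains to split on the sign of $s$ and to remove the nested logarithm. For $s=+1$ this reads $\bar\varphi\le \log(-\log S(e_{k+1}))-\log\Delta-\beta(1)$; the elementary bound $\log x\le x$ applied with $x=-\log S(e_{k+1})\ge 0$ replaces $\log(-\log S(e_{k+1}))$ by $-\log S(e_{k+1})$, and the leftover $-\log\Delta$ is absorbed into $-2\log\Delta$ (legitimate since $\Delta\le 1$ on the rescaled timeline, so $\log\Delta\le 0$), giving $\bar\varphi\le b(+1)$. For $s=-1$ the same inequality becomes $\bar\varphi\ge \log\Delta+\beta(-1)-\log(-\log S(e_{k+1}))$; using $-\log x\ge -x$ in the form $-\log(-\log S(e_{k+1}))\ge \log S(e_{k+1})$, together with $\log\Delta\le 0$ to peel off the second $\log\Delta$, yields $\bar\varphi\ge b(-1)$, completing the argument.

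As for where the care is needed: the only load-bearing step is Jensen, and the thing to get right is its interaction with the sign of $s$ — convexity of $\exp$ is used identically in both states, but the monotonicity of $x\mapsto e^{sx}$ flips, which is precisely why the averaged squared distance is upper bounded when $s=+1$ and lower bounded when $s=-1$, matching the modelling intuition that linked pairs drift apart while unlinked pairs drift together. I expect the only genuinely delicate part of the write-up to be bookkeeping: confirming that the average emerging from Jensen is literally the integral appearing in the statement, pinning down $S(e_{k+1})$ as $\exp(-\int_{e_k}^{e_{k+1}}\lambda_{ij}(s,\cdot))$ rather than another normalization, and matching the precise constants $b(\pm1)$ using $\log x\le x-1\le x$ and $\Delta\le 1$.
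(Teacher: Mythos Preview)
Your Jensen step coincides with the paper's second step, but the paper precedes it with a different inequality: Markov's inequality $S(e_{k+1})=\mathbb{P}\{T_k\ge e_{k+1}-e_k\}\le \mathbb{E}[T_k]/(e_{k+1}-e_k)$, together with the identification $\mathbb{E}[T_k]=\big(\int_{e_k}^{e_{k+1}}\lambda_{ij}(s,t)\,\mathrm{d}t\big)^{-1}$. This Markov step injects an \emph{extra} factor $1/\Delta$ before Jensen is applied, and that is exactly where the coefficient $2$ in $-2s\log\Delta$ originates: one $\Delta$ comes from Markov, the other from the uniform averaging in Jensen. By starting instead from the exact identity $-\log S(e_{k+1})=\int_{e_k}^{e_{k+1}}\lambda_{ij}(s,t)\,\mathrm{d}t$, you only ever produce a single $\Delta$, so your core inequality reads $s\bar\varphi \le \log(-\log S(e_{k+1})) - \log\Delta - \beta(s)$, which carries different constants from $b(s)$.

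The gap is in your bridge to the stated $b(\pm 1)$. The step $\log x\le x$ (with $x=-\log S$) is harmless in both directions, but the remaining move---trading a single $-\log\Delta$ for $-2\log\Delta$ on the upper side and $\log\Delta$ for $2\log\Delta$ on the lower side---is equivalent to $\log\Delta\le 0$, i.e.\ $\Delta=e_{k+1}-e_k\le 1$. The lemma does not assume this, and your ``rescaled timeline'' escape hatch does not work: under $t\mapsto t/T$ the hazard picks up a factor $T$, so effectively $\beta(s)\mapsto\beta(s)+\log T$, and a line of algebra gives $b(s)\mapsto b(s)+s\log T$; the bound is not scale-invariant, and the rescaled statement is strictly weaker when $T>1$. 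Concretely, whenever $e_{k+1}-e_k>1$ the inequality you need reverses and the argument breaks. Your Jensen-only bound is correct (and in some regimes sharper), but it does not deliver the specific constants $b(\pm 1)$; for those you need the extra $1/\Delta$ that the paper extracts via Markov, not an unstated normalization of the interval length.
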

\begin{proof}
Please refer to the appendix on the project page.
\end{proof}

\subsection{Continuous-time Node Representations using Piecewise Linear Approximations }
 For the embedding vectors $\{\mathbf{r}_{i}(t): i\in\mathcal{V}, t\in[T]\}$ we consider the continuous-time extension of the latent distance model proposed in \cite{pivem} in the context of event-based (Poisson Process likelihood) graphs using analytically tractable piecewise linear approximations of latent dynamics. Specifically, we define each node embedding as a linear function depending on time: 
\begin{align}
\mathbf{r}_{i}(t) := \mathbf{x}_i + \mathbf{v}_i t
\end{align}
The definition can be understood as assigning the initial position ($\mathbf{x}_i$) and velocity ($\mathbf{v}_i$) to each node, enabling us to locate the node's position in the latent space at any given time. However, it also constrains the motion capacity of nodes in the embedding space, as they are limited to moving in a single direction. To overcome this limitation, the model is extended by dividing the timeline into $B$ equal-sized bins, introducing bin-specific velocity vectors. More specifically, the latent position of node $i\in\mathcal{V}$ at time $t\in[T]$ is given by
\begin{align}\label{eq:piecewise_definition}
\mathbf{r}_i(t) &:= \mathbf{x}^{(0)}_i + \Delta_B\mathbf{v}_i^{(1)} + \Delta_B\mathbf{v}_i^{(2)} +\cdots+ \nonumber 
\\
&\quad +\Delta_B\mathbf{v}_i^{(b)}+ \cdots + \text{mod}(t,\Delta_B)\mathbf{v}_i^{\left(\lfloor t/\Delta_B \rfloor+1\right)}
\end{align}
where $\Delta_B$ is the bin width (i.e. $T / B$), and $\text{mod}(\cdot, \cdot)$ is the modulo operation giving the remaining time. Importantly, employing such a piecewise interpretation of the timeline enables tracking the paths of nodes in the embedding space effectively, and by augmenting the number of bins, more accurate trajectories can be obtained. In particular, the use of finer-grained divisions in the timeline allows for a more detailed and precise representation of node movements, leading to improved accuracy in capturing their dynamics within the embedding space \cite{pivem}.

\subsubsection{Regularization.} In order to control the nodes' mobility in the latent space, we incorporate a prior distribution for the velocity vectors. Imagine a situation for a pair of nodes only interacting with each other during a period; the model situates them closely in the embedding space when they have a link. Nevertheless, their distance in the latent space tends towards infinity as the link is inactive. Therefore, we assume the velocity vectors, $\mathbf{v}\in\mathbb{R}^{B\times N\times D}$, follow a multivariate normal distribution with zero mean:
\begin{align*}
\text{vect}(\mathbf{v}) \sim \mathcal{N}(\bm{0}, \lambda^2\Sigma)
\end{align*}
where $\lambda$ is the scaling coefficient, and $\Sigma\in\mathbb{R}^{BND \times BND}$ is a diagonal matrix defined as a Kronecker product of three other vectors. In other words, $\Sigma:= \text{diag}(\bm{\sigma}_B \otimes \bm{\sigma}_N \otimes \bm{\sigma}_D)$ where the vectors, $\bm{\sigma}_B$, $\bm{\sigma}_N$ and $\bm{\sigma}_D$ are responsible for the influence of the model's bins, nodes, and dimensions, respectively. Here, the notation, $\otimes$, symbolizes the Kronecker product, and $\text{vect}(\mathbf{z})$ represents the vectorization operator converting the given tensor into a vector form. We constrain $\bm{\sigma}_B$ and $\bm{\sigma}_N$ within the standard $(B-1)$ and $(N-1)$-simplex sets, and we define $\bm{\sigma}_D$ as $\bm{1}_D=(1,1,\ldots,1)\in\mathbb{R}^D$ to have uncorrelated dimensions. To sum up, we can restrain the embedding space by utilizing the prior distribution since it allows us to control the motions of nodes. For higher values of the scaling factor $\lambda$, the model can have more flexibility, enabling more dynamic node movements in the latent space, whereas lower values restrict node mobility, resulting in more static node representations. Notably, with this regularization, the model can be considered a continuous-time extension of the discrete-time latent distance model based on diffusion considered in \cite{expl2} in which the diffusion between time-bins propagates continuously.

\subsection{Inference}
Our objective function defined by the log-likelihood given in Eq. \eqref{eq:log_likelihood} together with the log-prior regularization term is a non-convex function, so the learning strategy applied for inferring the model's hyper-parameters is crucial to avoid poor-quality local minima in the resulting representations. 

The position vectors ($\mathbf{x}$) are initialized uniformly within the $[-1, 1]$ range at random. The bias terms $(\bm{\beta})$ and velocities ($\mathbf{v}$) are sampled from the standard normal distribution. The prior parameters, ($\bm{\sigma}_{B}$, $\bm{\sigma}_{N}$) are set to $\mathbf{1}_B/B$ and $\mathbf{1}_N/N$ at the beginning. We follow a sequential learning strategy for training the model, i.e., we optimize different sets of parameters in stages. Firstly, we optimize the velocities ($\mathbf{v}$) for $100$ epochs. Then, we include the initial positions ($\mathbf{x}$) into the optimization procedure, and we continue to train the model by optimizing these two parameters ($\mathbf{x}$, $\mathbf{v}$) together for another $100$ epochs. Finally, we incorporate the bias and prior parameters and optimize all model hyper-parameters together. In total, we use $300$ epochs for the whole learning procedure, and the \textit{Adam optimizer} \cite{kingma2017adam} is employed with an initial learning rate of $0.1$. In the experiments, we set the number of bins ($B$) to $100$ to ensure sufficient capacity for tracking nodes in the latent space ($D=2$).

\begin{figure*}[!ht]
\centering
\subfigure[$t=350$]{\includegraphics[trim={5cm 6cm 5cm 6cm},clip,scale=0.22]{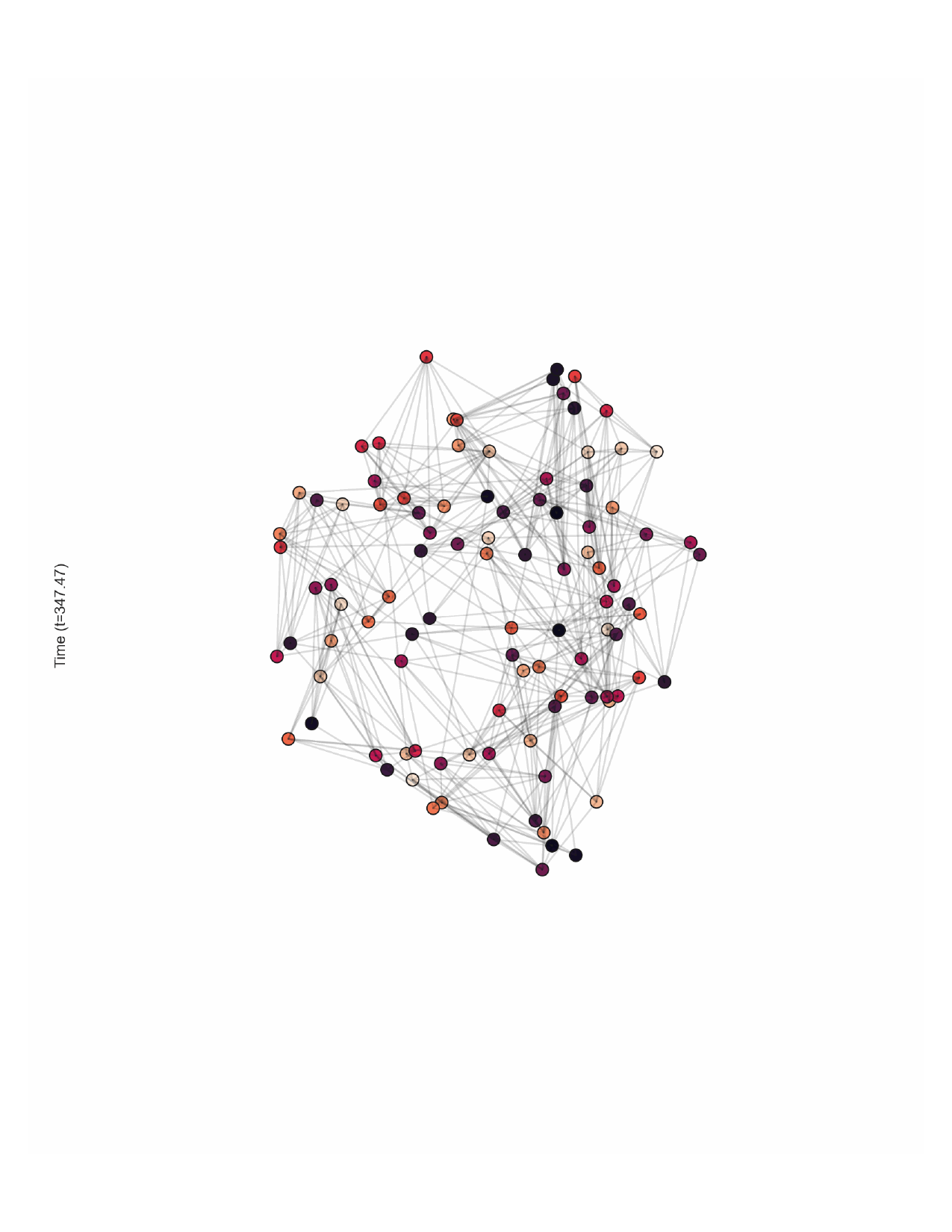}}
\hfill
\subfigure[$t=400$]{\includegraphics[trim={5cm 6cm 5cm 6cm},clip,scale=0.22]{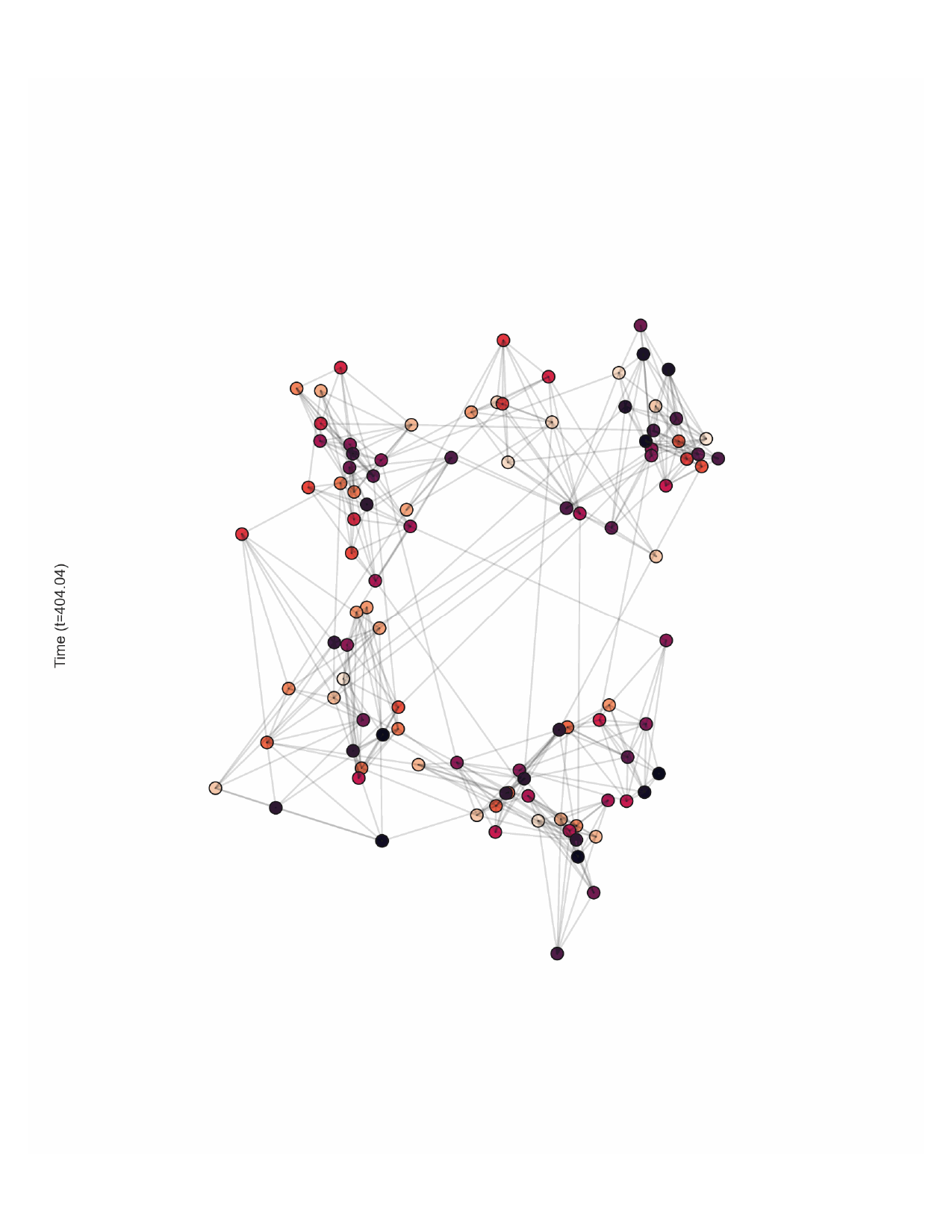}}
\hfill
\subfigure[$t=450$]{\includegraphics[trim={5cm 6cm 5cm 6cm},clip,scale=0.22]{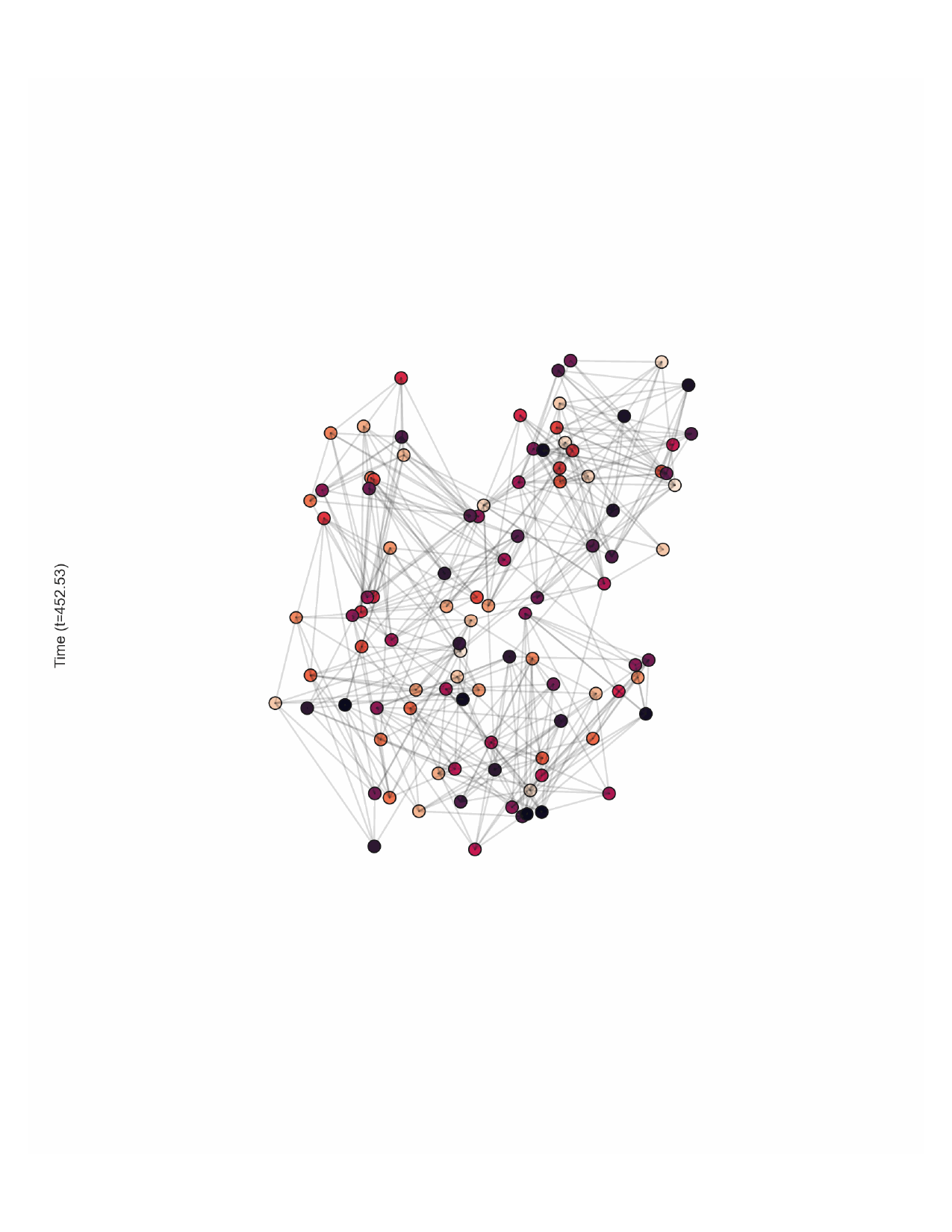}}
\hfill
\subfigure[$t=500$]{\includegraphics[trim={5cm 6cm 5cm 6cm},clip,scale=0.22]{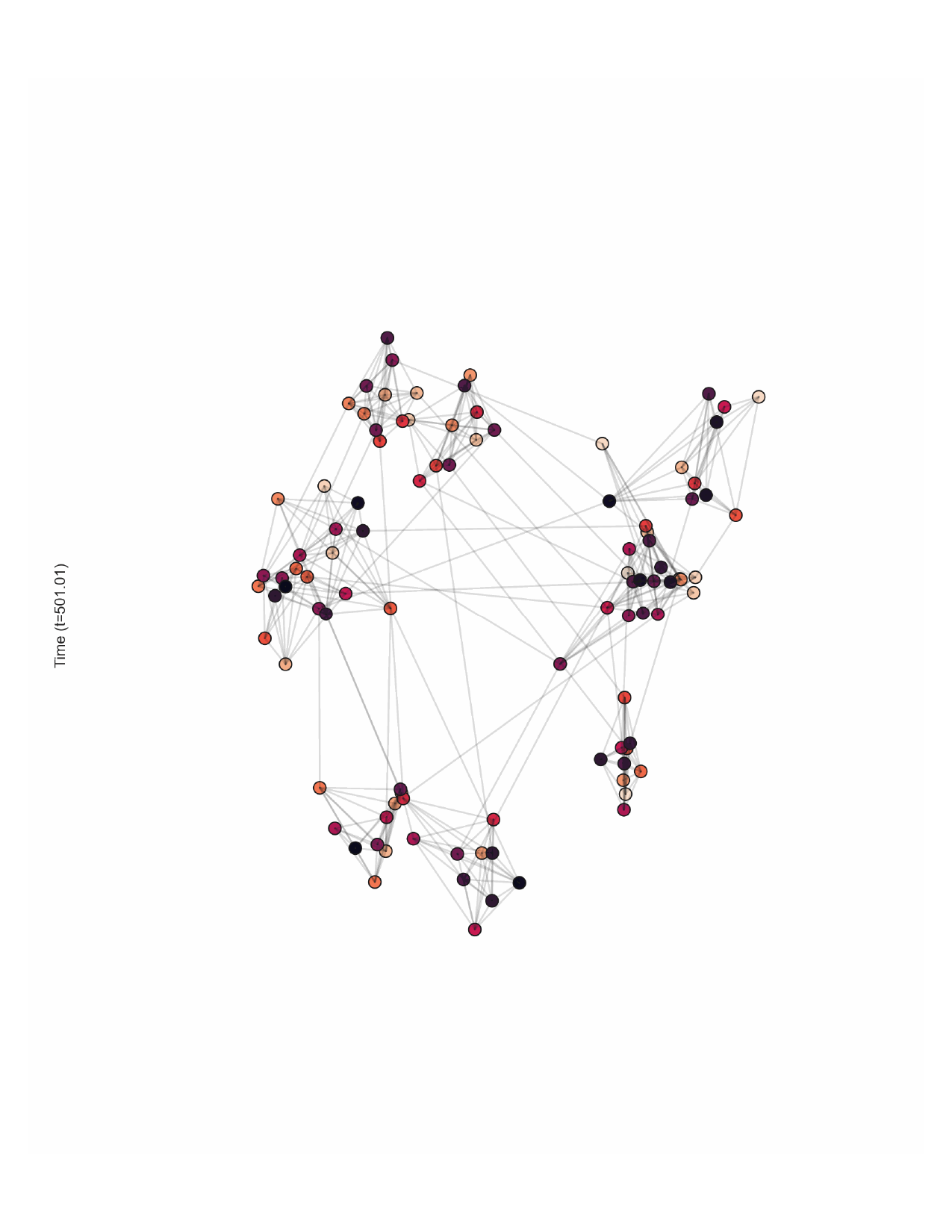}}
\hfill
\subfigure[$t=550$]{\includegraphics[trim={5cm 6cm 5cm 6cm},clip,scale=0.22]{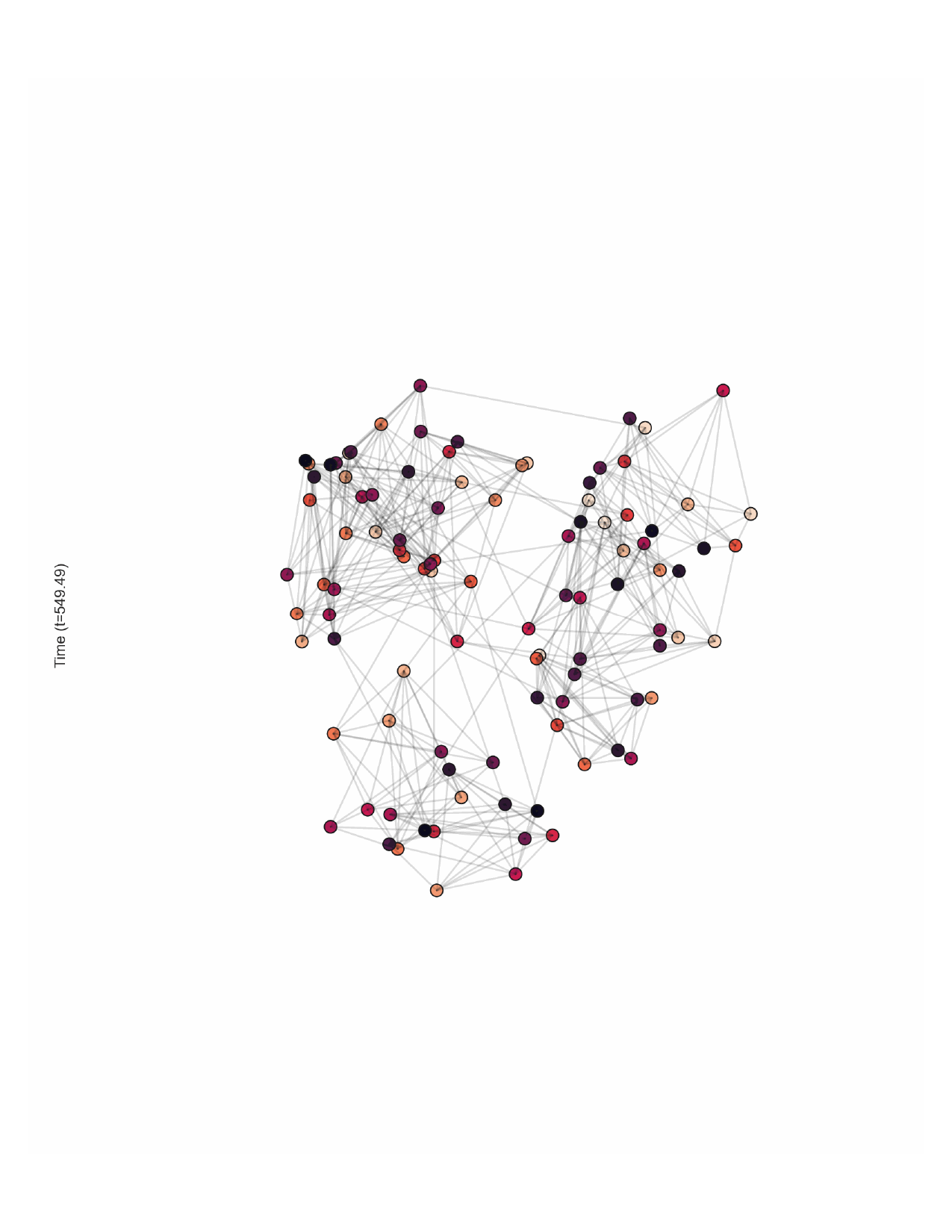}}
\hfill
\subfigure[$t=600$]{\includegraphics[trim={5cm 6cm 5cm 6cm},clip,scale=0.22]{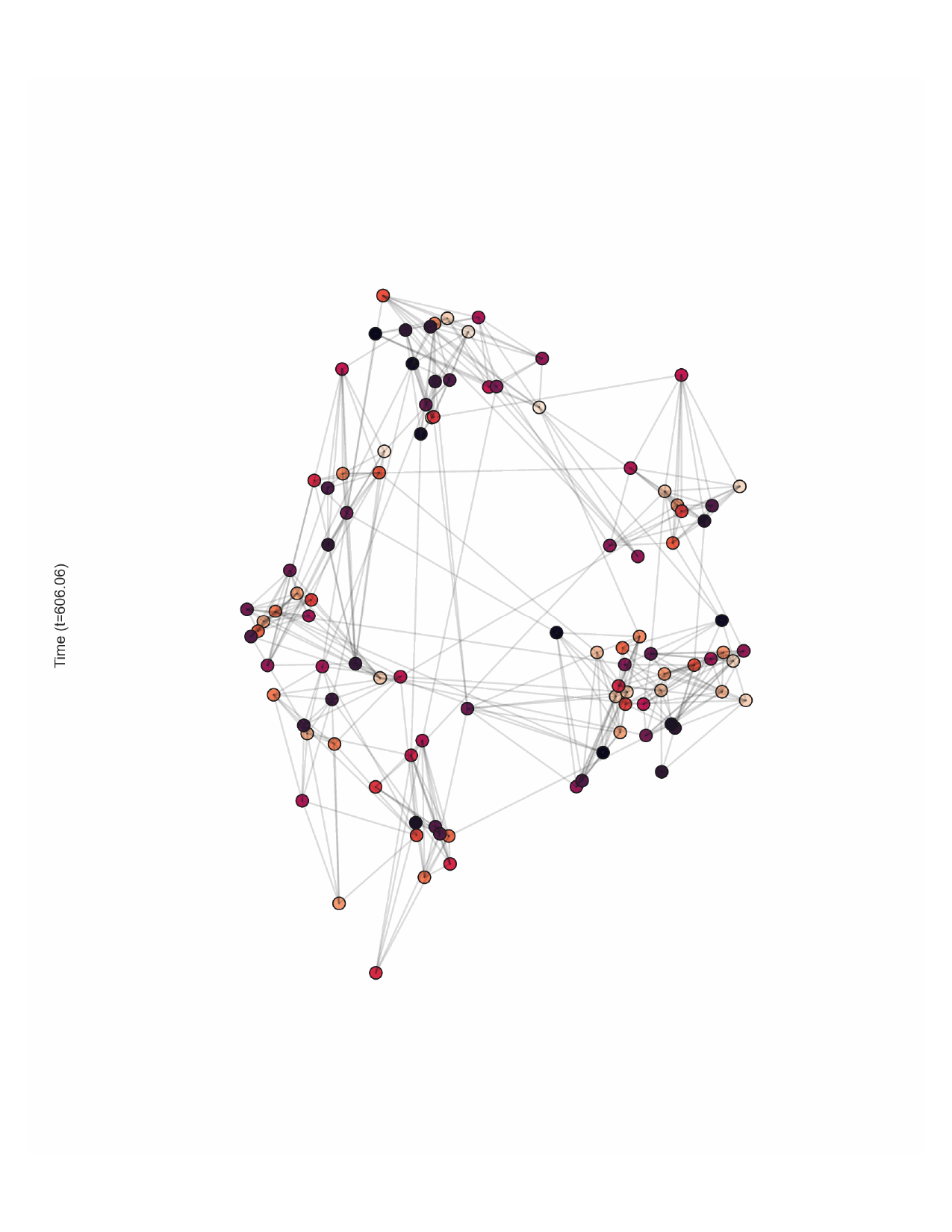}}
\caption{Snapshots of the continuous-time embeddings learned by \textsc{\modelname} for various time points over \textsl{Synthetic-$\beta$}.}\label{fig:visualization}
\end{figure*}


\section{Experimental Evaluation}\label{sec:experiments}
In this section, we will examine the performance of the proposed model over a diverse range of networks varying in size and characteristics. But before delving into the experimental evaluations, we will first present details regarding the experimental setup, considered datasets, and baseline approaches.

\subsection{Experimental Setup}
We first split the networks into two sets, such that the events taking place within the last 10\% of the timeline are considered for the future prediction task. Furthermore, we randomly choose 20\% node pairs among all possible dyads in the initial first part, and they are divided into two equal-sized groups to design the validation and testing sets. The residual network does not contain any link from these dyads, and it forms the training set. If there is any node pair without any link period during the training time but included in the prediction set, it is also excluded from the network.

We need to generate the labeled data to perform link prediction tasks. For this purpose, we divide the timeline of each sampled dyad into segments based on the state values. Within these segments, we randomly select time $t$ to define a sample interval $[t-\epsilon, t+\epsilon]$, and $\epsilon$ is set to $10^{-2}\times T$ where $T$ is the dataset's timeline length. We deliberately exclude samples containing the event times where the state of the corresponding dyad changes since it is impractical to predict whether a link exists for the periods with multiple states.

We organized these generated samples into two categories as ``\textit{simple}'' and ``\textit{hard}'' sets. The ``\textit{hard}'' set consists of samples for the node pairs having at least one linked and non-linked period over time \citep{huang2023temporal}. Contrarily, dyads having stable states throughout the timeline produce ``\textit{simple}'' sets of samples since predicting the labels (i.e., state) of these instances is relatively straightforward. Additionally, the samples generated for the future link prediction task are categorized based on the dyads' linkage during the training time by following the study \citep{poursafaei2022towards}. 

We consider an equal number of $k$ link and non-link samples, and $k := \min\{10^3, \text{link set size}, \text{non-link set size}\}$. Each link or non-link category contains $h/2$ elements picked up from the \textit{hard} set of size $h$, and we randomly select the remaining $k-h/2$ samples from the residual \textit{hard} and the \textit{simple} instances. For the experiments, we report both AUC-ROC and AUC-PR scores to comprehensively evaluate the models' performances across different aspects of true and false positives and precision-recall characteristics.

\begin{table*}[!ht]
\centering
\caption{The performance comparison of the models for the network completion experiment across diverse datasets.}
\label{tab:completion}
\resizebox{0.9\textwidth}{!}{%
\begin{tabular}{rcccccccc}
\toprule
 &  & \textsc{LDM} & \textsc{Node2Vec} & \textsc{CTDNE} & \textsc{HTNE} & \textsc{M$^2$DNE} & \textsc{PiVeM} & \textsc{\modelname} \\\cmidrule(rl){2-2}\cmidrule(rl){3-3}\cmidrule(rl){4-4}\cmidrule(rl){5-5}\cmidrule(rl){6-6}\cmidrule(rl){7-7}\cmidrule(rl){8-8}\cmidrule(rl){9-9}
\multirow{2}{*}{\textsl{Synthetic-$\alpha$}} & ROC & $.711\pm.004$ & $.743\pm.002$ & $.692\pm.007$ & $.698\pm.021$ & $.558\pm.008$ & \underline{$.744\pm.002$} & \textbf{$.810\pm.009$} \\
 & PR & $.630\pm.006$ & \underline{$.667\pm.009$} & $.650\pm.007$ & $.645\pm.019$ & $.582\pm.004$ & $.653\pm.004$ & \textbf{$.751\pm.011$} \\\cmidrule(rl){2-9}
\multirow{2}{*}{\textsl{Synthetic-$\beta$}} & ROC & $.491\pm.020$ & $.534\pm.008$ & $.502\pm.008$ & $.525\pm.004$ & $.517\pm.013$ & \underline{$.593\pm.006$} & \textbf{$.677\pm.018$} \\
 & PR & $.486\pm.016$ & $.498\pm.007$ & $.502\pm.010$ & $.517\pm.006$ & $.522\pm.015$ & \underline{$.587\pm.011$} & \textbf{$.646\pm.022$} \\\cmidrule(rl){2-9}
\multirow{2}{*}{\textsl{Contacts}} & ROC & $.508\pm.008$ & \underline{$.584\pm.004$} & $.564\pm.034$ & $.472\pm.024$ & $.486\pm.013$ & $.493\pm.006$ & \textbf{$.680\pm.013$} \\
 & PR & $.490\pm.004$ & \underline{$.555\pm.023$} & $.543\pm.036$ & $.477\pm.023$ & $.500\pm.008$ & $.492\pm.016$ & \textbf{$.641\pm.023$} \\\cmidrule(rl){2-9}
\multirow{2}{*}{\textsl{HyperText}} & ROC & \underline{$.541\pm.015$} & $.533\pm.012$ & $.462\pm.016$ & $.441\pm.017$ & $.461\pm.021$ & $.426\pm.013$ & \textbf{$.692\pm.010$} \\
 & PR & \underline{$.503\pm.010$} & $.490\pm.013$ & $.477\pm.016$ & $.449\pm.009$ & $.479\pm.023$ & $.437\pm.007$ & \textbf{$.656\pm.024$} \\\cmidrule(rl){2-9}
\multirow{2}{*}{\textsl{Infectious}} & ROC & \underline{$.689\pm.007$} & $.671\pm.003$ & $.639\pm.006$ & $.653\pm.013$ & $.554\pm.005$ & $.669\pm.004$ & \textbf{$.742\pm.026$} \\
 & PR & \underline{$.615\pm.007$} & $.601\pm.005$ & $.593\pm.005$ & $.596\pm.010$ & $.560\pm.009$ & $.598\pm.004$ & \textbf{$.673\pm.024$} \\\cmidrule(rl){2-9}
\multirow{2}{*}{\textsl{Facebook}} & ROC & $.717\pm.004$ & $.675\pm.001$ & $.539\pm.005$ & $.608\pm.001$ & $.570\pm.010$ & \underline{$.710\pm.002$} & \textbf{$.723\pm.010$} \\
 & PR & $.659\pm.006$ & $.603\pm.005$ & $.538\pm.013$ & $.575\pm.001$ & $.562\pm.009$ & \underline{$.662\pm.002$} & \textbf{$.671\pm.012$} \\\cmidrule(rl){2-9}
\multirow{2}{*}{\textsl{NeurIPS}} & ROC & $.679\pm.010$ & $.697\pm.005$ & $.558\pm.020$ & $.654\pm.025$ & $.531\pm.005$ & \textbf{$.748\pm.010$} & \underline{$.735\pm.029$} \\
 & PR & $.618\pm.016$ & $.606\pm.020$ & $.552\pm.025$ & $.613\pm.026$ & $.553\pm.011$ & \textbf{$.761\pm.020$} & \underline{$.749\pm.021$}\\\bottomrule
\end{tabular}%
}
\end{table*}

\begin{table*}[!ht]
\centering
\caption{The performance comparison of the models for the network reconstruction experiment across diverse datasets.}
\label{tab:reconstruction}
\resizebox{0.9\textwidth}{!}{%
\begin{tabular}{rcccccccc}
\toprule
 &  & \textsc{LDM} & \textsc{Node2Vec} & \textsc{CTDNE} & \textsc{HTNE} & \textsc{M$^2$DNE} & \textsc{PiVeM} & \textsc{\modelname} \\\cmidrule(rl){2-2}\cmidrule(rl){3-3}\cmidrule(rl){4-4}\cmidrule(rl){5-5}\cmidrule(rl){6-6}\cmidrule(rl){7-7}\cmidrule(rl){8-8}\cmidrule(rl){9-9}
\multirow{2}{*}{\textsl{Synthetic-$\alpha$}} & ROC & $.702\pm.002$ & $.693\pm.003$ & $.638\pm.006$ & $.675\pm.011$ & $.507\pm.002$ & \underline{$.749\pm.002$} & \textbf{$.845\pm.006$} \\
 & PR & $.654\pm.006$ & $.627\pm.011$ & $.596\pm.009$ & $.639\pm.007$ & $.566\pm.003$ & \underline{$.665\pm.002$} & \textbf{$.782\pm.009$} \\\cmidrule(rl){2-9}
\multirow{2}{*}{\textsl{Synthetic-$\beta$}} & ROC & $.564\pm.009$ & $.507\pm.006$ & $.512\pm.008$ & $.544\pm.007$ & $.511\pm.002$ & \underline{$.680\pm.006$} & \textbf{$.744\pm.019$} \\
 & PR & $.553\pm.006$ & $.494\pm.005$ & $.511\pm.007$ & $.528\pm.007$ & $.513\pm.002$ & \underline{$.652\pm.008$} & \textbf{$.701\pm.013$} \\\cmidrule(rl){2-9}
\multirow{2}{*}{\textsl{Contacts}} & ROC & \underline{$.593\pm.004$} & $.556\pm.004$ & $.534\pm.018$ & $.528\pm.004$ & $.534\pm.002$ & $.496\pm.006$ & \textbf{$.825\pm.008$} \\
 & PR & \underline{$.541\pm.003$} & $.523\pm.015$ & $.528\pm.017$ & $.510\pm.008$ & $.537\pm.004$ & $.465\pm.002$ & \textbf{$.754\pm.014$} \\\cmidrule(rl){2-9}
\multirow{2}{*}{\textsl{HyperText}} & ROC & \underline{$.550\pm.002$} & $.535\pm.004$ & $.477\pm.012$ & $.473\pm.011$ & $.489\pm.003$ & $.430\pm.002$ & \textbf{$.760\pm.004$} \\
 & PR & \underline{$.513\pm.003$} & $.507\pm.007$ & $.488\pm.010$ & $.470\pm.008$ & $.479\pm.004$ & $.431\pm.001$ & \textbf{$.689\pm.007$} \\\cmidrule(rl){2-9}
\multirow{2}{*}{\textsl{Infectious}} & ROC & \underline{$.701\pm.006$} & $.688\pm.003$ & $.667\pm.005$ & $.676\pm.009$ & $.579\pm.002$ & $.666\pm.005$ & \textbf{$.788\pm.015$} \\
 & PR & \underline{$.626\pm.008$} & $.602\pm.007$ & $.606\pm.008$ & $.613\pm.008$ & $.584\pm.005$ & $.577\pm.006$ & \textbf{$.697\pm.013$} \\\cmidrule(rl){2-9}
\multirow{2}{*}{\textsl{Facebook}} & ROC & \underline{$.682\pm.005$} & $.645\pm.003$ & $.544\pm.007$ & $.624\pm.002$ & $.582\pm.009$ & $.673\pm.003$ & \textbf{$.731\pm.010$} \\
 & PR & $.615\pm.009$ & $.589\pm.008$ & $.535\pm.008$ & $.590\pm.009$ & $.573\pm.009$ & \underline{$.617\pm.004$} & \textbf{$.667\pm.013$} \\\cmidrule(rl){2-9}
\multirow{2}{*}{\textsl{NeurIPS}} & ROC & \underline{$.760\pm.007$} & $.720\pm.003$ & $.598\pm.004$ & $.731\pm.008$ & $.594\pm.001$ & $.698\pm.002$ & \textbf{$.889\pm.013$} \\
 & PR & $.687\pm.010$ & $.631\pm.007$ & $.590\pm.010$ & $.659\pm.006$ & $.599\pm.003$ & \underline{$.711\pm.002$} & \textbf{$.819\pm.020$}\\\bottomrule
\end{tabular}%
}
\end{table*}

\begin{table*}[!ht]
\centering
\caption{The performance comparison of the models for the future link prediction experiment across diverse datasets.}
\label{tab:future_prediction}
\resizebox{0.9\textwidth}{!}{%
\begin{tabular}{rcccccccc}
\toprule
 &  & \textsc{LDM} & \textsc{Node2Vec} & \textsc{CTDNE} & \textsc{HTNE} & \textsc{M$^2$DNE} & \textsc{PiVeM} & \textsc{\modelname} \\\cmidrule(rl){2-2}\cmidrule(rl){3-3}\cmidrule(rl){4-4}\cmidrule(rl){5-5}\cmidrule(rl){6-6}\cmidrule(rl){7-7}\cmidrule(rl){8-8}\cmidrule(rl){9-9}
\multirow{2}{*}{\textsl{Synthetic-$\alpha$}} & ROC & $.748\pm.007$ & $.756\pm.005$ & $.652\pm.012$ & \underline{$.784\pm.013$} & $.654\pm.011$ & $.740\pm.007$ & \textbf{$.902\pm.011$} \\
 & PR & $.719\pm.012$ & $.700\pm.020$ & $.636\pm.019$ & \underline{$.800\pm.016$} & $.745\pm.008$ & $.741\pm.005$ & \textbf{$.918\pm.008$} \\\cmidrule(rl){2-9}
\multirow{2}{*}{\textsl{Synthetic-$\beta$}} & ROC & $.515\pm.018$ & $.538\pm.004$ & $.503\pm.020$ & $.560\pm.006$ & $.519\pm.012$ & \textbf{$.894\pm.005$} & \underline{$.880\pm.012$} \\
 & PR & $.525\pm.021$ & $.501\pm.007$ & $.494\pm.016$ & $.548\pm.004$ & $.554\pm.014$ & \textbf{$.845\pm.007$} & \underline{$.843\pm.014$} \\\cmidrule(rl){2-9}
\multirow{2}{*}{\textsl{Contacts}} & ROC & \textbf{$.821\pm.004$} & $.703\pm.002$ & $.635\pm.013$ & $.727\pm.002$ & $.590\pm.002$ & $.692\pm.005$ & \underline{$.793\pm.013$} \\
 & PR & \textbf{$.773\pm.005$} & $.648\pm.006$ & $.599\pm.014$ & $.689\pm.004$ & $.610\pm.006$ & $.675\pm.004$ & \underline{$.752\pm.019$} \\\cmidrule(rl){2-9}
\multirow{2}{*}{\textsl{HyperText}} & ROC & \textbf{$.663\pm.004$} & $.553\pm.003$ & $.503\pm.010$ & $.530\pm.018$ & $.548\pm.004$ & $.559\pm.003$ & \underline{$.654\pm.005$} \\
 & PR & \underline{$.609\pm.003$} & $.516\pm.008$ & $.503\pm.006$ & $.518\pm.014$ & $.529\pm.008$ & $.534\pm.002$ & \textbf{$.612\pm.010$} \\\cmidrule(rl){2-9}
\multirow{2}{*}{\textsl{Infectious}} & ROC & \textbf{$.958\pm.004$} & $.869\pm.002$ & $.847\pm.008$ & $.893\pm.013$ & $.655\pm.008$ & \underline{$.945\pm.006$} & $.943\pm.017$ \\
 & PR & \textbf{$.943\pm.008$} & $.818\pm.007$ & $.820\pm.014$ & $.853\pm.007$ & $.698\pm.009$ & \underline{$.932\pm.006$} & $.923\pm.025$ \\\cmidrule(rl){2-9}
\multirow{2}{*}{\textsl{Facebook}} & ROC & \textbf{$.781\pm.007$} & $.694\pm.003$ & $.564\pm.005$ & $.626\pm.003$ & $.609\pm.015$ & \underline{$.775\pm.002$} & $.705\pm.009$ \\
 & PR & \underline{$.765\pm.009$} & $.653\pm.004$ & $.557\pm.004$ & $.599\pm.011$ & $.603\pm.011$ & \textbf{$.766\pm.003$} & $.648\pm.009$ \\\cmidrule(rl){2-9}
\multirow{2}{*}{\textsl{NeurIPS}} & ROC & $.682\pm.019$ & \underline{$.695\pm.012$} & $.637\pm.007$ & $.676\pm.014$ & $.661\pm.006$ & $.623\pm.010$ & \textbf{$.820\pm.008$} \\
 & PR & $.634\pm.024$ & $.621\pm.018$ & $.615\pm.015$ & $.635\pm.025$ & \underline{$.674\pm.014$} & $.628\pm.006$ & \textbf{$.788\pm.018$}\\\bottomrule
\end{tabular}%
}
\end{table*}

\subsection{Datasets}
We treat the networks as undirected and employ the finest available temporal granularity for the input timestamps, including measurements at the level of seconds and milliseconds. We tailor the datasets according to the chosen baselines to enable a meaningful comparison. For instance, we transform dynamic networks into static weighted and unweighted networks by aggregating links over time for static baselines. Additionally, we exclude the non-link events for the baselines since they cannot process these data points. 

In the experiments, we have used several real datasets of diverse types, including a social network (\textsl{Facebook}) \citep{viswanath2009evolution}, collaboration graph (\textsl{NeurIPS}), and three contact networks \citep{genois2015data,isella2011s}. 
Due to the constraints on the page number, we provide the details regarding the real datasets on the project page.
We also generated two synthetic networks to examine the model's predictive behavior in controlled settings. The link and non-link event times of the \textsl{Synthetic-$\alpha$} graph are generated from the Sequential Survival process introduced in Section \ref{sec:model}, and the initial embedding locations and velocities are sampled from a multivariate normal distribution as described in \cite{pivem}. For the \textsl{Synthetic-$\beta$} network, we divide the timeline into equal-sized $8$ bins and randomly group the nodes into $10$ clusters for each bin. Then, we establish connections between nodes within the same cluster with a probability of $0.8$, while nodes belonging to different clusters are linked with a $10^{-2}$ probability. These links stay persistent until the next bin.

\subsection{Baselines}
Due to the lack of an analogous approach to our proposed model explicitly accounting for dynamic networks with intermittent persistent linkage structures, we establish several static and dynamic methods as baselines to compare with the performance of our model. (i) \textsc{LDM} \citep{krivitsky2009representing,hoff2005bilinear} is a static node embedding approach for unweighted graphs in which we used the Poisson formulation in \cite{nakis22hbdm} when modeling the links with node-specific random effect terms. (ii) \textsc{PiVeM} \citep{pivem} can be viewed as an expansion of the \textsc{LDM} model, which models event-based continuous-time dynamic networks using the Nonhomogeneous Poisson process as opposed to the presently considered \textsc{\modelname}. (iii) \textsc{Node2Vec} \citep{grover2016node2vec} is very well-known random-walk based node embedding and we here report best obtained result considering both unweighted and weighted graphs. (iv) \textsc{CTDNE} \citep{nguyen2018continuous} also perform random walks but it concentrates on the temporal networks. (v) \textsc{HTNE} \citep{zuo2018embedding} relies on the Hawkes process to learn node embeddings and to model the neighbor arrival events. Finally, (vi) \textsc{M$^2$DNE} \citep{lu2019temporal} embeds the dynamic graphs by capturing the micro and macro network properties using case-control inference with cross-entropy loss. For all embedding methods we set $D=2$.

\subsection{Link Prediction}
We assess performances on three different tasks with hyperparameters set to  best performance on the validation sets.

\subsubsection{Network Reconstruction.} Our goal is to see how effectively the models can capture the temporal structural changes within the network over time. In pursuit of this objective, we seek to reconstruct both link and non-link periods. As depicted in Table \ref{tab:reconstruction}, our method, $\textsc{\modelname}$, exhibits notably superior performance compared to the baseline approaches. This marked improvement is attributed to the incapability of the other models to represent intermittent persistent linkage structures accurately.

\subsubsection{Network Completion.} Many real networks often contain noisy or missing links for various reasons, such as problems in the data collection processes or privacy concerns preventing the full disclosure of ties. In this regard, our aim is to evaluate the models' capacity to generalize the linkage structure in the training set. Table \ref{tab:completion} illustrates that our model once more demonstrates a substantial performance advantage over the baselines, except for the \textsl{NeurIPS} dataset. Given the network's yearly time resolution, the event-based approach, \textsc{PiVeM}, effectively captures its temporal structure. Importantly, our approach also displays a comparable level of performance in this scenario.

\subsubsection{Future Link Prediction.} The absence of a predictable periodic linkage pattern in the networks poses a significant challenge in forecasting future connections, particularly those at more distant time points. This complexity is evident in Table \ref{tab:future_prediction}, where our model consistently surpasses all baselines on the \textsl{Synthetic-$\alpha$} dataset, but its performance for \textsl{Synthetic-$\beta$} is not optimal. It can be explained by the fact that the links in the \textsl{Synthetic-$\alpha$} network are sampled from the \textit{Sequential Survival} process, but the temporal clusters in the \textsl{Synthetic-$\beta$} network are randomly formed (please see the dataset section for details). For specific network structures, static embedding models also showcase satisfactory performance since they are able to capture the global network information due to the aggregation of the links over time. Similarly, by choosing small values for the covariance factor, $\lambda$, we can restrict the dynamics of our approach. 

\begin{figure}
\centering
\subfigure[Impact of dimension size]{\includegraphics[width=0.49\columnwidth]{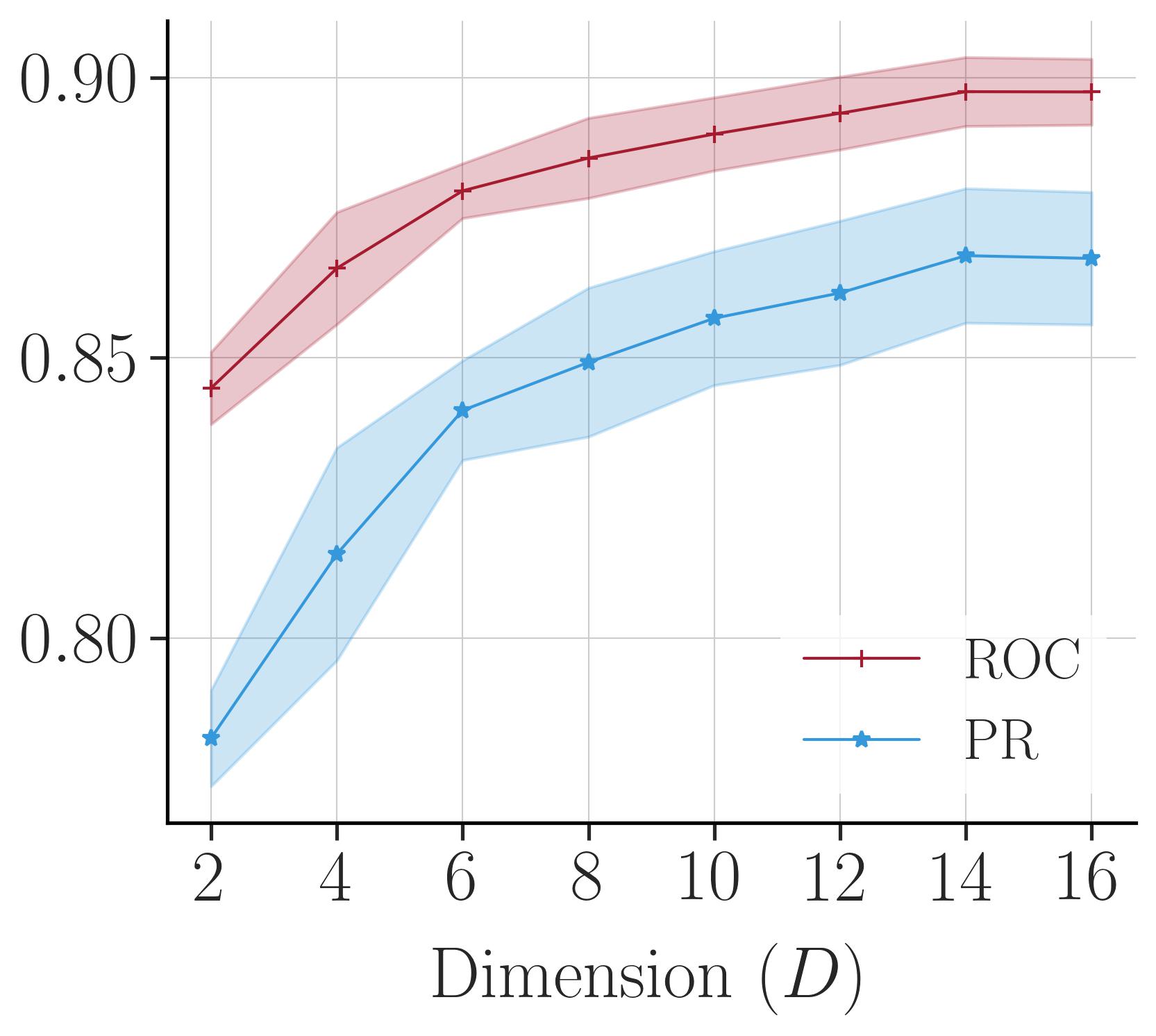}\label{fig:dim_influence}}
\hfill
\subfigure[Impact of bin count]{\includegraphics[width=0.49\columnwidth]{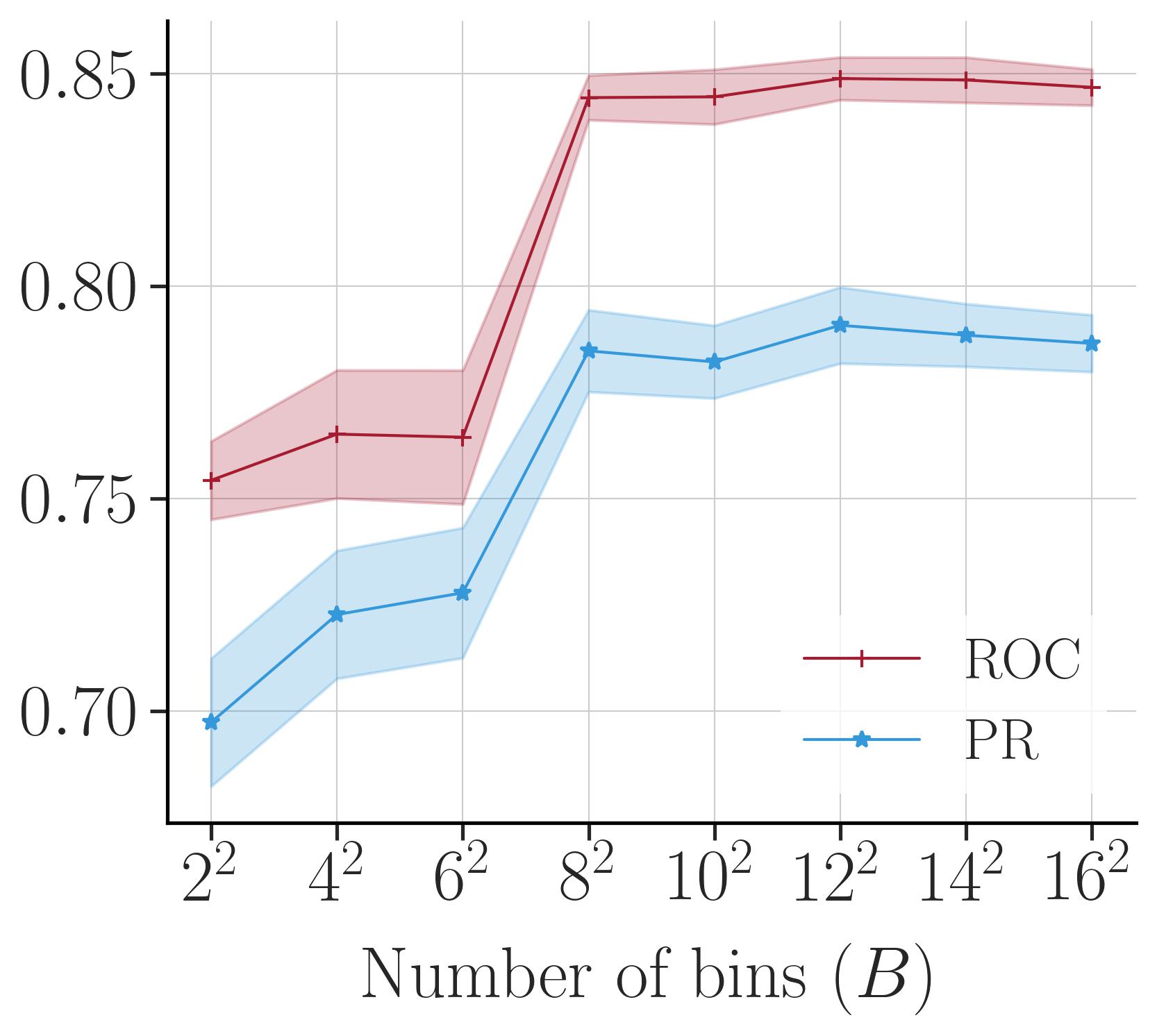}\label{fig:bin_influence}}
\caption{Influence of the hyper-parameters for the network reconstruction experiment over the \textsl{Synthetic-$\alpha$} dataset.}
\end{figure}

\subsubsection{Impact of dimension size.} Figure \ref{fig:dim_influence} shows a clear correlation between the increase in dimension size and performance improvement over the \textsl{synthetic-$\alpha$} network. With the introduction of each new dimension, the model's capacity augments, and the model becomes more adept at capturing intricate patterns within the network. For smaller dimensions, the model demonstrates performance on par with more high dimensions.  It is important to highlight that the two-dimensional representations sustain competitive performance enabling easy visualization and extraction of insights into the complex and dynamic nature of networks.

\subsubsection{Impact of bin count.} The impact of the number of bins on performance improvements is evident in Figure \ref{fig:bin_influence}, as the model's capacity to capture subtle temporal changes increases at finer granularity levels. Notably, the model's performance nearly reaches near-optimal performance around $B=64$, after which it demonstrates saturation.

\subsubsection{Continuous-time Dynamic Visualization.} Network visualization offers valuable insights for practitioners into the intricate architectures of complex networks. Nonetheless, numerous methods necessitate high-dimensional spaces to achieve satisfactory results for downstream tasks. Therefore, practitioners must utilize dimension-reduction techniques to generate visualizations conducive to human comprehension. Furthermore, many temporal models yield static embeddings, lacking the capacity to produce continuous-time node representations. In this regard, our model proves to be a versatile tool, effectively balancing the tradeoff between performance and dimension sizes. 

Figure \ref{fig:visualization} showcases the acquired embeddings across various selected time instances over the \textsl{Synthetic-$\beta$} dataset. The network takes an entirely new structural form for each time point $t=100k$ ($k\in\{4,\ldots,6\}$), including $10$ new clusters appearing randomly. Our model learns these temporal structures, particularly when interconnections between clusters remain relatively sparse. 
As time progresses, nodes within the latent space gradually adjust their positions to align with the evolving new random connections, making the clusters indistinguishable ($t=100k-50$ where $k\in\{4,\ldots,6\}$). 
Additional visualizations are included on the project page.

\section{Conclusion}\label{sec:conclusion}

In this study, we introduced a novel representation learning model, \textsc{\modelname},  designed specifically for continuous-time networks exhibiting intermittent persistent linkage patterns over time. Our proposed approach characterizes node connections using the proposed Sequential Survival process. Our experimental results clearly demonstrate the superiority of \textsc{\modelname} over established baseline methods across multiple networks of varying properties. Notably, our model effectively balances the trade-off between dimensionality and performance, making it a valuable tool for visualizations. 

We aim to extend the methodology for diverse forms of temporal graphs, including weighted and signed networks. Moreover, we plan to tailor our approach to address large-scale networks while also capturing potentially recurring periodic structures within the networks considering different model specifications beyond piecewise linear dynamics.

\section*{Acknowledgments}
We appreciate the reviewers for their constructive feedback and their insightful comments. We gratefully acknowledge the Independent Research Fund Denmark for supporting this work [grant number: 0136-00315B].

\bibliography{references} 

\clearpage
\appendix

\section{Theoretical Results}\label{sec:appendix_theoretical_results}

\begin{lemma}\label{lemma:appendix_bounds_proof}
Let $e_k$ and $e_{k+1}$ be a consecutive event times following a Sequential Survival process for node pair $(i,j)\in\mathcal{V}^2$. Then, the average squared distance between nodes during interval $[e_k, e_{k+1})$ associated with survival function $S(\cdot)$ and state $s\in\{-1,1\}$ can be bounded by 
\begin{align*}
b(-1) &\leq \frac{1}{(e_{k+1}\!-\!e_k)}\!\!\!\int\limits_{e_k}^{e_{k+1}}\!\!\!\! \| \mathbf{r}_i(t) - \mathbf{r}_j(t) \|^2 \mathrm{d}t \leq b(+1)
\end{align*}
where $b(s)\!:=\!\!-2s\log(e_{k+1}\!-e_k)-\!s\log{ \!S(e_{k+1}\!)}\!-\!s\beta{(s)}$.
\end{lemma}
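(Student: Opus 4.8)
The plan is to prove the bound directly from the hazard specification in Eq.~\eqref{eq:hazard_function} by means of Jensen's inequality; notably, nothing about the piecewise-linear parametrization of $\mathbf{r}_i(t)$ is needed, so the statement is a generic consequence of the Sequential Survival process combined with a squared-distance hazard.

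First I would express the cumulative hazard of the interval $[e_k,e_{k+1})$ through its survival function. Since $S(\cdot)$ is the survival function attached to that interval (which carries the fixed state $s$), we have $-\log S(e_{k+1}) = \int_{e_k}^{e_{k+1}}\lambda_{ij}(s,t)\,\mathrm{d}t = \int_{e_k}^{e_{k+1}}\exp\!\big(\beta(s)+s\,\|\mathbf{r}_i(t)-\mathbf{r}_j(t)\|^2\big)\,\mathrm{d}t$. Writing $\Delta:=e_{k+1}-e_k$ and letting $\bar D := \tfrac{1}{\Delta}\int_{e_k}^{e_{k+1}}\|\mathbf{r}_i(t)-\mathbf{r}_j(t)\|^2\,\mathrm{d}t$ be the average squared distance to be bounded, I would divide by $\Delta$ and apply Jensen's inequality to the convex map $x\mapsto e^{x}$ with respect to the uniform probability measure on $[e_k,e_{k+1})$, obtaining $\tfrac{1}{\Delta}\int_{e_k}^{e_{k+1}}\exp(\beta(s)+s\|\mathbf{r}_i-\mathbf{r}_j\|^2)\,\mathrm{d}t \ \ge\ \exp\!\big(\beta(s)+s\bar D\big)$, hence after taking logarithms the clean estimate $\beta(s)+s\bar D \ \le\ \log\big(-\log S(e_{k+1})\big) - \log\Delta$.

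Next I would convert this into the stated form. The elementary inequality $\log x \le x-1 \le x$ for $x>0$, applied with $x=-\log S(e_{k+1})>0$, replaces $\log(-\log S(e_{k+1}))$ by $-\log S(e_{k+1})$; and invoking the normalization $T\le 1$ (equivalently, that every inter-event interval has length at most one), so that $\log\Delta\le 0$, the term $-\log\Delta$ can be further loosened to $-2\log\Delta$. This yields $\beta(s)+s\bar D \le -2\log\Delta - \log S(e_{k+1})$, i.e. $s\bar D \le b(s)$ once $\beta(s)$ is moved across. Reading this off per state closes the argument: $s=+1$ gives $\bar D \le b(+1)$ directly, while for $s=-1$ one multiplies through by $-1$, flipping the inequality and using $-\log(-\log S(e_{k+1}))\ge\log S(e_{k+1})$ together with $\log\Delta\ge 2\log\Delta$, to get $\bar D \ge b(-1)$; combining the two yields $b(-1)\le\bar D\le b(+1)$.

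The step I expect to be the main obstacle is the second one: Jensen's inequality naturally produces $\log(-\log S(e_{k+1}))$ and only a single factor $\log\Delta$, whereas the lemma is stated with $-\log S(e_{k+1})$ and the factor $2\log\Delta$, so one must apply the loosenings ($\log x\le x$, and $-\log\Delta\le -2\log\Delta$ because $\log\Delta\le 0$) with the correct orientation for each of the two states $s=\pm 1$, and keep the normalization $\Delta\le 1$ explicit throughout. If that normalization were dropped, the same argument would still deliver the slightly tighter variant with $-s\log\Delta$ in place of $-2s\log\Delta$.
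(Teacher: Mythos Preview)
Your route differs from the paper's. You start from the exact survival identity $-\log S(e_{k+1})=\int_{e_k}^{e_{k+1}}\lambda_{ij}(s,t)\,\mathrm{d}t$ and apply Jensen once, obtaining the sharp estimate $\beta(s)+s\bar D\le \log\!\big(-\log S(e_{k+1})\big)-\log\Delta$, and then loosen to match the stated form. The paper instead first applies Markov's inequality, $S(e_{k+1})\le \mathbb{E}[T_k]/\Delta$, together with an identification of $\mathbb{E}[T_k]$ as the reciprocal of the cumulative hazard over $[e_k,e_{k+1})$, and only afterwards applies Jensen to $\exp(s\,\cdot)$. In that argument one factor $1/\Delta$ comes from Markov and the second from Jensen, so the coefficient $-2s\log\Delta$ and the raw $S(e_{k+1})$ (rather than $\log(-\log S)$) arise directly, with no post-hoc loosening.

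The gap in your proposal is the hypothesis $\Delta=e_{k+1}-e_k\le 1$, which you need in order to pass from $-\log\Delta$ to $-2\log\Delta$ (and, symmetrically, from $\log\Delta$ to $2\log\Delta$ for $s=-1$). Neither the lemma nor the paper assumes this: $T$ is left arbitrary in Section~\ref{sec:model}, and the experiments use timelines with $T$ far exceeding~$1$. Whenever $\Delta>1$ the inequality $-\log\Delta\le -2\log\Delta$ reverses and your loosening step fails. The paper's Markov-plus-Jensen route sidesteps the issue because the squared interval length is produced by multiplying two independent $1/\Delta$ factors, not by doubling a logarithm. Your intermediate Jensen bound is correct and indeed tighter than the stated $b(s)$; it simply does not reduce to the lemma's exact form without the additional normalization you introduced.
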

\begin{proof}
Let $e_k$ and $e_{k+1}$ be a consecutive event times following a Sequential Survival process for node pair $(i,j)\in\mathcal{V}^2$. Due to the construction of the Sequential Survival process, we have an associated survival function, $S(t)$, for the interval $[e_k, e_{k+1})$ with state $s\in\mathcal{S}:=\{-1, 1\}$. Then, bounds for the survival function, $S(t)$, can be given by using Markov's inequality as:

\begin{align*}
S(t) &= \mathbb{P}(T_k \geq e_{k+1})
\\
&\leq \frac{\mathbb{E}[T_k]}{e_{k+1}-e_k}
\\
& = \!\frac{1}{(e_{k+1}\!-\!e_k)}\! \left\{ \!\!\int\limits_{e_k}^{e_{k+1}}\!\!\!\!\exp\left( \beta{(s)} \!+\! s\| \mathbf{r}_i(t) - \mathbf{r}_j(t) \|^2 \right)\mathrm{d}t  \right\}^{-1}
\end{align*}
where $T_k$ is the random variable showing event time after time $e_k$. The last line implies that
\begin{align}
&\frac{1}{(e_{k+1}\!-\!e_k)S(t)}\nonumber\\ 
&\qquad\geq \int\limits_{e_k}^{e_{k+1}}\!\!\!\exp\left( \beta{(s)} + s\| \mathbf{r}_i(t) - \mathbf{r}_j(t) \|^2 \right)\mathrm{d}t\nonumber
\\
&\qquad=\exp(\beta{(s)})\!\!\!\!\!\int\limits_{e_k}^{e_{k+1}}\!\!\!\!\exp\!\left(\! s\| \mathbf{r}_i(t) - \mathbf{r}_j(t) \|^2 \right)\!\mathrm{d}t\label{eq:appendix_lemma_bounds_proof_eq1}
\end{align}
Furthermore, we can apply Jensen's inequality for $\exp(s\cdot x)$ term since it is a convex function for any $s\in\{-1,1\}$ value. Hence, we can write
\begin{align}
&\frac{1}{(e_{k+1}\!-\!e_k)}\!\!\!\int\limits_{e_k}^{e_{k+1}}\!\!\!\exp\!\left( s\| \mathbf{r}_i(t) - \mathbf{r}_j(t) \|^2 \right)\mathrm{d}t \geq\nonumber
\\
&\qquad\quad\exp\left(\frac{s}{(e_{k+1}\!-\!e_k)}\!\!\!\int\limits_{e_k}^{e_{k+1}}\!\!\! \| \mathbf{r}_i(t) - \mathbf{r}_j(t) \|^2 \mathrm{d}t \right)\label{eq:appendix_lemma_bounds_proof_eq2}
\end{align}
By combining Eq. \eqref{eq:appendix_lemma_bounds_proof_eq1} and Eq. \eqref{eq:appendix_lemma_bounds_proof_eq2}, we can obtain the following inequality:
\begin{align*}
&\frac{1}{(e_{k+1}\!-\!e_{k})^2S(t)} \geq 
\\
&\qquad\quad\exp(\beta{(s)})\exp\!\left(\!\frac{s}{(e_{k+1}\!-\!e_k)}\!\!\!\int\limits_{e_k}^{e_{k+1}}\!\!\!\! \| \mathbf{r}_i(t) - \mathbf{r}_j(t) \|^2 \mathrm{d}t \right).
\end{align*}
The reorganization of the terms after taking the logarithm of the inequality yields:
\begin{align*}
&2\log(e_{k+1}\!-\!e_k)+\log{S(e_{k+1})}+\beta{(s)}
\\
&\qquad\qquad\qquad\qquad\leq-\frac{s}{(e_{k+1}\!-\!e_k)}\!\!\!\!\int\limits_{e_k}^{e_{k+1}}\!\!\!\!\| \mathbf{r}_i(t) - \mathbf{r}_j(t) \|^2 \mathrm{d}t 
\end{align*}
Finally, we can conclude that
\begin{align*}
b_k(-1) &\leq \frac{1}{(e_{k+1}\!-\!e_k)}\!\!\!\int\limits_{e_k}^{e_{k+1}}\!\!\!\! \| \mathbf{r}_i(t) - \mathbf{r}_j(t) \|^2 \mathrm{d}t \leq b_k(+1)
\end{align*}
where $b(s):=-2s\log(e_{k+1}\!-\!e_k)-s\log{ S(e_{k+1})}-s\beta{(s)}$.
\end{proof}

\begin{lemma}[Integral Computation]\label{lemma:appendix_integral}
The integral of the hazard function for a given pair $(i,j)\in\mathcal{V}^2$ having constant velocities and state during the interval $(e_l, e_u)$ is equal to
\begin{align*}
&\int\limits_{e_l}^{e_u}\lambda_{ij}(s,t) =\frac{\sqrt{\pi}}{2\|\Delta\mathbf{v}_{ij}\|} \exp\left(\beta(s) + s\|\Delta\mathbf{x}_{ij}\|^2 - s\rho_{ij}^2 \right) 
\\
&\quad\times E_{ij}(s, \tau(e_l),\tau(e_u))
\end{align*}
where $\rho_{ij} := \langle \Delta\mathbf{v}_{ij}, \Delta\mathbf{x}_{ij}\rangle / \| \Delta\mathbf{v}_{ij}\|$ and $E(\tau(e_l),\tau(e_u),s)$ is defined by 
\begin{align*}
	E_{ij}(s,e_l,e_u) := \begin{cases} \mathrm{erf}\big(\tau(e_u)\big) - \mathrm{erf}\big(\tau(e_l)\big) & \text{$s=+1$} \\ \mathrm{erfi}\big(\tau(e_u)\big) - \mathrm{erfi}\big(\tau(e_l)\big) & \text{$s=-1$}\end{cases}
\end{align*}
for position difference, $\Delta\mathbf{x}_{ij} := \mathbf{r}_i(e_l) - \mathbf{r}_j(e_l)$, at time $e_l$ and velocity difference $\Delta\mathbf{v}_{ij} := \mathbf{v}_i - \mathbf{v}_j$. The function $\tau:(e_l,e_u)\rightarrow \mathbb{R}$ is defined as $\|\Delta\mathbf{v}_{ij}\|t + \rho_{ij}$, and $\mathrm{erf}(\cdot)$, and $\mathrm{erfi}(\cdot)$ represent the error and the imaginary error functions.

\begin{proof}
Since it is supposed that the pair of nodes $(i,j)\in\mathcal{V}^2$ have constant velocities and state over the given interval $(e_l, e_u)$, the integral can be reexpressed in the following way:
\begin{align}
\int\limits_{e_l}^{e_u}\lambda_{ij}(s,t)d\mathrm{t} &= \!\!\int\limits_{e_l}^{e_u}\!\!\exp\!\left( \beta(s) + s\| \mathbf{r}_i(t) \!-\! \mathbf{r}_j(t) \|^2 \right)\!d\mathrm{t}\nonumber
\\
&= \!\!\int\limits_{e_l}^{e_u}\!\!\exp\!\left( \beta(s) + s\| \Delta\mathbf{x}_{ij} \!+\! \Delta\mathbf{v}_{ij}(t-e_l) \|^2 \right)\!d\mathrm{t}\nonumber
\\
&= \!\!\!\!\!\int\limits_{0}^{e_u-e_l}\!\!\!\!\exp\!\left( \beta(s) \!+\! s\| \Delta\mathbf{x}_{ij} \!+\! \Delta\mathbf{v}_{ij}t\|^2 \right)\!d\mathrm{t}\label{eq:appendix_integral_proof_eq_1}
\end{align}
where $\Delta\mathbf{x}_{ij} := \mathbf{r}_i(e_l) - \mathbf{r}_j(e_l)$ and $\Delta\mathbf{v}_{ij} := \mathbf{v}_i - \mathbf{v}_j$ indicate the differences between the positions and velocities, and the last line, Eq. \eqref{eq:appendix_integral_proof_eq_1}, is obtained by changing the boundaries of the integral.

Since the bias term, $\beta(s)$, does not vary by time, it can be moved outside the integral term: 
\begin{align}
&\int\limits_{0}^{e_u-e_l}\!\!\!\exp\left( \beta(s) + s\| \Delta\mathbf{x}_{ij} + \Delta\mathbf{v}_{ij}t\|^2 \right)d\mathrm{t}\nonumber
\\
&\quad\quad = \exp\left( \beta(s)\right)\!\!\!\!\int\limits_{0}^{e_u-e_l}\!\!\!\!\exp\!\left( s\| \Delta\mathbf{x}_{ij} + \Delta\mathbf{v}_{ij}t\|^2 \right)d\mathrm{t}\label{eq:appendix_integral_proof_eq_2}
\end{align}
Then, we can rewrite the integral term as follows:
\begin{align}
&\int\limits_{0}^{e_u-e_l} \exp\left(s\|\Delta \mathbf{x}_{ij} + \Delta \mathbf{v}_{ij}t \|^2\right)\nonumber 
\\
&\hspace{0.1cm}=\!\!\!\!\int\limits_{0}^{e_u-e_l}\!\!\!\! \exp\!\Big(s\|\Delta\mathbf{x}_{ij}\|^2 \!+\! 2s\!\left\langle\Delta\mathbf{x}_{ij}, \Delta\mathbf{v}_{ij} \right\rangle t \!+\! s\|\Delta\mathbf{v}_{ij}\|^2t^2\Big)\mathrm{d}t\nonumber
\\
&\hspace{0.1cm}=\!\!\!\!\int\limits_{0}^{e_u-e_l}\!\!\!\!\exp\!\Big( s\|\Delta\mathbf{x}_{ij}\|^2 -\!s\rho_{ij}^2 + s\left( \|\Delta\mathbf{v}_{ij}\|t +\! \rho_{ij} \right)^2 \Big)\mathrm{d}t\nonumber
\\
&\hspace{0.1cm}=\!\exp\!\Big(\!s\|\Delta\mathbf{x}_{ij}\|^2 \!\!-\!s\rho_{ij}^2 \Big)\!\!\!\!\!\!\int\limits_{0}^{e_u-e_l}\!\!\!\!\!\exp\!\Big( \!s\!\left( \|\Delta\mathbf{v}_{ij}\|t \!+\!\! \rho_{ij} \right)^2\!\Big)\mathrm{d}t\label{eq:appendix_integral_proof_eq_3}
\end{align}
where $\rho_{ij} := \langle \Delta\mathbf{v}_{ij}, \Delta\mathbf{x}_{ij} \rangle / \|\Delta\mathbf{v}_{ij}\|$.  A substitution $y = \|\Delta\mathbf{v}_{ij}\|t + \rho_{ij}$ gives us $\mathrm{d}y = \|\Delta\mathbf{v}_{ij}\|\mathrm{d}t$, so we can write
\begin{align}
&\int\limits_{0}^{e_u-e_l}\!\!\!\exp\Big(s\left( \|\Delta\mathbf{v}_{ij}\|t \!+\! \rho_{ij} \right)^2 \Big)\mathrm{d}t\nonumber 
\\
&\qquad\qquad =\frac{1}{\|\Delta\mathbf{v}_{ij}\|}\int_{\tau(e_l)}^{\tau(e_u)}\exp\left( sy^2 \right)\!\mathrm{d}y\nonumber
\\
&\qquad\qquad= \frac{1}{\|\Delta\mathbf{v}_{ij}\|}\frac{\sqrt{\pi}}{2} \left(\frac{2}{\sqrt{\pi}} \int_{\tau(e_l)}^{\tau(e_u)} \exp\left( sy^2 \right)\mathrm{d}y \right)\nonumber
\\
&\qquad\qquad= \frac{\sqrt{\pi}}{2\|\Delta\mathbf{v}_{ij}\|} E_{ij}(s,\tau(e_l), \tau(e_u))\label{eq:appendix_integral_proof_eq_4}
\end{align}
where $E_{ij}(s, \tau(e_l),\tau(e_u))$ is given by 
\begin{align*}
E_{ij}(s,e_l,e_u) := \begin{cases} \mathrm{erf}\big(\tau(e_u)\big) - \mathrm{erf}\big(\tau(e_l)\big) & \text{$s=+1$} \\ \mathrm{erfi}\big(\tau(e_u)\big) - \mathrm{erfi}\big(\tau(e_l)\big) & \text{$s=-1$}\end{cases}
\end{align*}
for $\tau(t) := \|\Delta\mathbf{v}_{ij}\|t + \rho_{ij}$. By combining all the results acquired in \Cref{eq:appendix_integral_proof_eq_1,eq:appendix_integral_proof_eq_2,eq:appendix_integral_proof_eq_3,eq:appendix_integral_proof_eq_4}, we can conclude that
\begin{align*}
&\int\limits_{e_l}^{e_u}\!\exp\!\left( \beta(s) + s\| \mathbf{r}_{ij} - \mathbf{r}_{ij}\|^2 \right)d\mathrm{t}\nonumber
\\
& \hspace{0.1cm}=\!\! \frac{\sqrt{\pi}}{2\|\Delta\mathbf{v}_{ij}\|}\! \exp\!\left(\!\beta(s) \!+\! s\|\Delta\mathbf{x}_{ij}\|^2 \!\!-\! s\rho_{ij}^2 \!\right) \!\!E_{ij}(s,\!\tau(e_l),\!\tau(e_u))
\end{align*}
\end{proof}
\end{lemma}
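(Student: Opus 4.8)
The plan is to compute the integral directly, reducing it to a one-dimensional Gaussian-type integral over a finite interval and then reading off the answer in terms of the error functions. First I would substitute the definition of the hazard function from Eq.~\eqref{eq:hazard_function}, $\lambda_{ij}(s,t) = \exp(\beta(s) + s\|\mathbf{r}_i(t) - \mathbf{r}_j(t)\|^2)$, and use the hypothesis that over $(e_l,e_u)$ the two nodes move with constant velocity, so that $\mathbf{r}_i(t) - \mathbf{r}_j(t) = \Delta\mathbf{x}_{ij} + \Delta\mathbf{v}_{ij}\,t$ with $\Delta\mathbf{x}_{ij}$ and $\Delta\mathbf{v}_{ij}$ as in the statement (shifting the integration variable so the interval starts at the origin if convenient). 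Since $\beta(s)$ does not depend on $t$, the factor $\exp(\beta(s))$ pulls out of the integral immediately.

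Next I would expand the squared norm, $\|\Delta\mathbf{x}_{ij} + \Delta\mathbf{v}_{ij}t\|^2 = \|\Delta\mathbf{x}_{ij}\|^2 + 2\langle\Delta\mathbf{x}_{ij},\Delta\mathbf{v}_{ij}\rangle t + \|\Delta\mathbf{v}_{ij}\|^2 t^2$, and complete the square in $t$: multiplying through by $s$ gives $s\|\Delta\mathbf{x}_{ij}\|^2 - s\rho_{ij}^2 + s(\|\Delta\mathbf{v}_{ij}\|t + \rho_{ij})^2$ with $\rho_{ij} := \langle\Delta\mathbf{v}_{ij},\Delta\mathbf{x}_{ij}\rangle/\|\Delta\mathbf{v}_{ij}\|$. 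The $t$-independent factor $\exp(\beta(s) + s\|\Delta\mathbf{x}_{ij}\|^2 - s\rho_{ij}^2)$ then comes out, leaving $\int \exp\!\big(s(\|\Delta\mathbf{v}_{ij}\|t + \rho_{ij})^2\big)\,dt$. The affine substitution $y = \|\Delta\mathbf{v}_{ij}\|t + \rho_{ij}$, $dy = \|\Delta\mathbf{v}_{ij}\|\,dt$, converts this into $\tfrac{1}{\|\Delta\mathbf{v}_{ij}\|}\int \exp(sy^2)\,dy$ over the transformed limits $\tau(e_l),\tau(e_u)$, which I would then recognise case by case: for $s=-1$ the integrand is $e^{-y^2}$ with antiderivative $\tfrac{\sqrt\pi}{2}\mathrm{erf}$, and for $s=+1$ it is $e^{y^2}$ with antiderivative $\tfrac{\sqrt\pi}{2}\mathrm{erfi}$. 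Collecting both cases into $E_{ij}(s,\tau(e_l),\tau(e_u)) = \tfrac{2}{\sqrt\pi}\int_{\tau(e_l)}^{\tau(e_u)} e^{sy^2}\,dy$ and multiplying back the extracted constants yields exactly the claimed closed form.

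This is essentially a careful bookkeeping computation rather than a conceptually hard proof, so there is no deep obstacle. The steps that demand attention are completing the square correctly and tracking the constant $\exp(-s\rho_{ij}^2)$ it produces, handling both values of $s$ within a single argument through the $(\mathrm{erf},\mathrm{erfi})$ dichotomy, and carrying the integration limits through the affine change of variables. One should also record the nondegeneracy assumption $\Delta\mathbf{v}_{ij}\neq\mathbf{0}$, which is what makes the division by $\|\Delta\mathbf{v}_{ij}\|$ legitimate; the degenerate constant-distance case collapses to an elementary exponential integral and can be handled separately or as a limit.
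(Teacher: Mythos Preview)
Your proposal is correct and follows exactly the same route as the paper: shift the interval to start at zero, factor out $\exp(\beta(s))$, expand and complete the square to extract $\exp\big(s\|\Delta\mathbf{x}_{ij}\|^2 - s\rho_{ij}^2\big)$, then perform the affine substitution $y=\|\Delta\mathbf{v}_{ij}\|t+\rho_{ij}$ to reduce to $\mathrm{erf}/\mathrm{erfi}$. Your case assignment ($s=-1\leftrightarrow\mathrm{erf}$, $s=+1\leftrightarrow\mathrm{erfi}$) is in fact the mathematically correct one---the paper's statement and proof have these two cases swapped---and your remark on the nondegeneracy hypothesis $\Delta\mathbf{v}_{ij}\neq\mathbf{0}$ is a useful addition the paper omits.
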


\section{Experimental Evaluation}
In this section, we provide further details regarding the experiments and dynamic visualizations.

\begin{table}[t]
\centering
\caption{Network statistics ($|\mathcal{V}|$: Number of nodes, $|\mathcal{E}_{min}|$: Min. number of events that a dyad has, $|\mathcal{E}_{max}|$: Max. number of events that a dyad has, $|\mathcal{E}|$: Total number of events).}
\label{tab:appendix_network_stats}
\resizebox{\columnwidth}{!}{%
\begin{tabular}{rccccc}
\toprule
 & $|\mathcal{V}|$ & $|\mathcal{E}_{min}|$ & $|\mathcal{E}_{max}|$ & $|\mathcal{E}|$ & Resolution \\\cmidrule(rl){2-2}\cmidrule(rl){3-3}\cmidrule(rl){4-4}\cmidrule(rl){5-5}\cmidrule(rl){6-6}
\textsl{Synthetic-$\alpha$} & 100 & 1 & 18 & 4286 & N/A \\
\textsl{Synthetic-$\beta$} & 100 & 2 & 12 & 8300 & N/A \\
\textsl{Contacts} & 92 & 1 & 197 & 5313 & Second \\
\textsl{HyperText} & 113 & 1 & 133 & 10450 & Second \\
\textsl{Infectious} & 410 & 1 & 29 & 9827 & Second \\
\textsl{Facebook} & 461 & 1 & 1 & 10222 & Second \\
\textsl{NeurIPS} & 327 & 1 & 6 & 1940 & Year\\\bottomrule
\end{tabular}%
}
\end{table}
\begin{table*}[t]
\centering
\caption{Performance of \textsc{LDM} for the weighted versions of the datasets in various prediction tasks.}
\label{tab:appendix_weighted_ldm_recontruction}
\begin{tabular}{rcccccc}
\toprule
 & \multicolumn{2}{c}{Reconstruction} & \multicolumn{2}{c}{Completion} & \multicolumn{2}{c}{Future Link Prediction} \\\cmidrule(lr){2-3}\cmidrule(lr){4-5}\cmidrule(lr){6-7}
 & AUC-ROC & AUC-PR & AUC-ROC & AUC-PR & AUC-ROC & AUC-PR \\\cmidrule(lr){2-2}\cmidrule(lr){3-3}\cmidrule(lr){4-4}\cmidrule(lr){5-5}\cmidrule(lr){6-6}\cmidrule(lr){7-7}
\textsl{Synthetic-$\alpha$} & $.701\pm.002$ & $.658\pm.004$ & $.689\pm.008$ & $.613\pm.010$ & $.769\pm.004$ & $.731\pm.005$ \\
\textsl{Synthetic-$\beta$} & $.561\pm.013$ & $.549\pm.015$ & $.485\pm.013$ & $.487\pm.007$ & $.497\pm.016$ & $.500\pm.016$ \\
\textsl{Contacts} & $.585\pm.003$ & $.538\pm.004$ & $.505\pm.010$ & $.490\pm.009$ & $.818\pm.003$ & $.764\pm.005$ \\
\textsl{HyperText} & $.533\pm.005$ & $.501\pm.007$ & $.519\pm.017$ & $.496\pm.009$ & $.605\pm.011$ & $.554\pm.013$ \\
\textsl{Infectious} & $.693\pm.008$ & $.612\pm.011$ & $.686\pm.004$ & $.616\pm.008$ & $.938\pm.005$ & $.903\pm.014$ \\
\textsl{Facebook} & $.681\pm.005$ & $.615\pm.005$ & $.714\pm.005$ & $.656\pm.008$ & $.726\pm.005$ & $.678\pm.005$ \\
\textsl{NeurIPS} & $.740\pm.007$ & $.666\pm.009$ & $.689\pm.011$ & $.620\pm.015$ & $.731\pm.010$ & $.671\pm.013$\\\bottomrule
\end{tabular}%
\end{table*}

\subsubsection{Datasets.} In the experiments, we have considered networks of different types and sizes. More precisely, we have utilized the following four real networks:
\begin{itemize}
    \item \textsl{Facebook} \citep{viswanath2009evolution} is a friendship network signifying one user's presence within another's friend list. We considered users having at least $200$ links. 
    
    \item \textsl{NeurIPS} \citep{globerson2004euclidean} was formed through collaborations among authors whose works were presented at the NeurIPS conference covering the years from $1989$ to $2001$. We focused on authors with at least ten connections, assuming a year-long duration for each work. 
    
    \item \textsl{Contacts} \citep{genois2015data} consists of the interactions among individuals within an office building, encompassing a span of nine days in 2013. (vi) \textsl{HyperText} \citep{isella2011s} was collected during a conference in which participants wore radio badges tracking their face-to-face interactions, covering a period of approximately $2.5$ days. 
    
    \item \textsl{Infectious} \citep{isella2011s} is another interaction network collected during an event in Dublin. In the contact datasets (v-vii), each timestamp associated with a link corresponds to a $20$-second connection. When multiple link events occur within a two-minute window, we aggregate these events and treat them as a single link duration.
\end{itemize}
We also generated two artificial networks to examine the model's behavior in controlled settings:
\begin{itemize}
    \item \textsl{Synthetic-$\alpha$}. We have followed a similar procedure proposed by \cite{pivem} to sample the initial node positions and their velocities. The bias terms are chosen as $\beta(-1) = 3$ and $\beta(+1) = -0.25$ and other hyper-parameters ($\lambda$, $\sigma_{\Sigma}$, $\sigma_B$) are set to $30$, $10^{-2}$, and $10^{-6}$, respectively. Then, the link and non-link event times of the graph are generated from the proposed Sequential Survival process.
    
    \item \textsl{Synthetic-$\beta$}. The timeline was divided into equal-sized $8$ bins and the nodes are randomly grouped into $10$ clusters for each bin. Then, we add links among nodes within the same cluster with a probability of $0.8$, while nodes belonging to different clusters are linked with a $10^{-2}$ probability. These links stay persistent until the next bin event time.
\end{itemize}
We provide a concise overview of the networks in Table \ref{tab:appendix_network_stats}. 

\subsubsection{Continuous-time Dynamic Visualization.} We provide the snapshots of the learned embeddings for various timestamps in \Cref{fig:appendix_visualization_synthetic_mu,fig:appendix_visualization_synthetic_beta,fig:appendix_visualization_contacts,fig:appendix_visualization_hypertext,fig:appendix_visualization_infectious,fig:appendix_visualization_facebook} and the intermittent time-persistent linkage structures of the networks are depicted in Figure \ref{fig:appendix_link_structure}.

\subsubsection{Optimization.} In our experiments, we train all the models for $300$ epochs. The number of walks, walk length, and window size parameters are set to  $80$, $10$, and $10$ for \textsc{Node2Vec} and $10$, $80$, and $10$ for \textsc{CTDNE}. The negative sample sizes for \textsc{HTNE} and \textsc{M$^2$DNE} are selected as $10$. The learning rate and batch size for \textsc{LDM}, \textsc{PiVeM}, and \textsc{\modelname} are set to $0.1$ and $100$, respectively. We tune the scaling factor of the covariance matrix from the set $\{10^1,10^2,\ldots,10^{10}\}$, and the number of bins is chosen as $100$ for both \textsc{PiVeM} and \textsc{\modelname}. For all the other hyperparameters, we employed the default parameters. 

\subsubsection{Weighted-\textsc{LDM}.} In Table \ref{tab:appendix_weighted_ldm_recontruction}, we report the performance of the \textsc{LDM} model in terms of the AUC-ROC and AUC-PR scores for the weighted versions of the datasets.

\section{Table of Symbols}
We provide the list of the symbols used in the manuscript and their explanations  in Table 
\ref{tab:appendix_table_of_symbols}.

\begin{table}[t]
\centering
\caption{Table of symbols}
\label{tab:appendix_table_of_symbols}
\begin{tabular}{rl@{}}
\toprule
\textbf{Symbol} & \textbf{Description} \\ \midrule
$\mathcal{G}$ & Graph \\
$\mathcal{V}$ & Vertex set \\
$\mathcal{E}$ & Edge set \\
$\mathcal{E}_{ij}$ & Edge set of node pair $(i,j)$ \\
$N$ & Number of nodes \\
$D$ & Dimension size \\
$\mathcal{I}_T$ & Time interval \\
$T$ & Time length \\
$B$ & Number of bins \\ 
$\beta_i$ & Bias term of node $i$ \\
$\mathbf{x}$ & Initial position matrix \\
$\mathbf{v}^{(b)}$ & Velocity matrix for bin $b$ \\
$\mathbf{r}_{i}(t)$ & Latent representation of node $i$ at time $t$ \\
$\lambda_{ij}(t)$ & Intensity of node pair $(i,j)$ at time $t$\\
$e_{ij}$ & An event time of node pair $(i,j)$\\
$\mathrm{erf}$ & Error function\\
$\mathrm{erfi}$ & Imaginary Error function\\
\bottomrule
\end{tabular}%
\end{table}

\begin{figure*}[!ht]
\centering
\subfigure[$t=40$]{\includegraphics[trim={5cm 6cm 5cm 6cm},clip,width=0.16\textwidth]{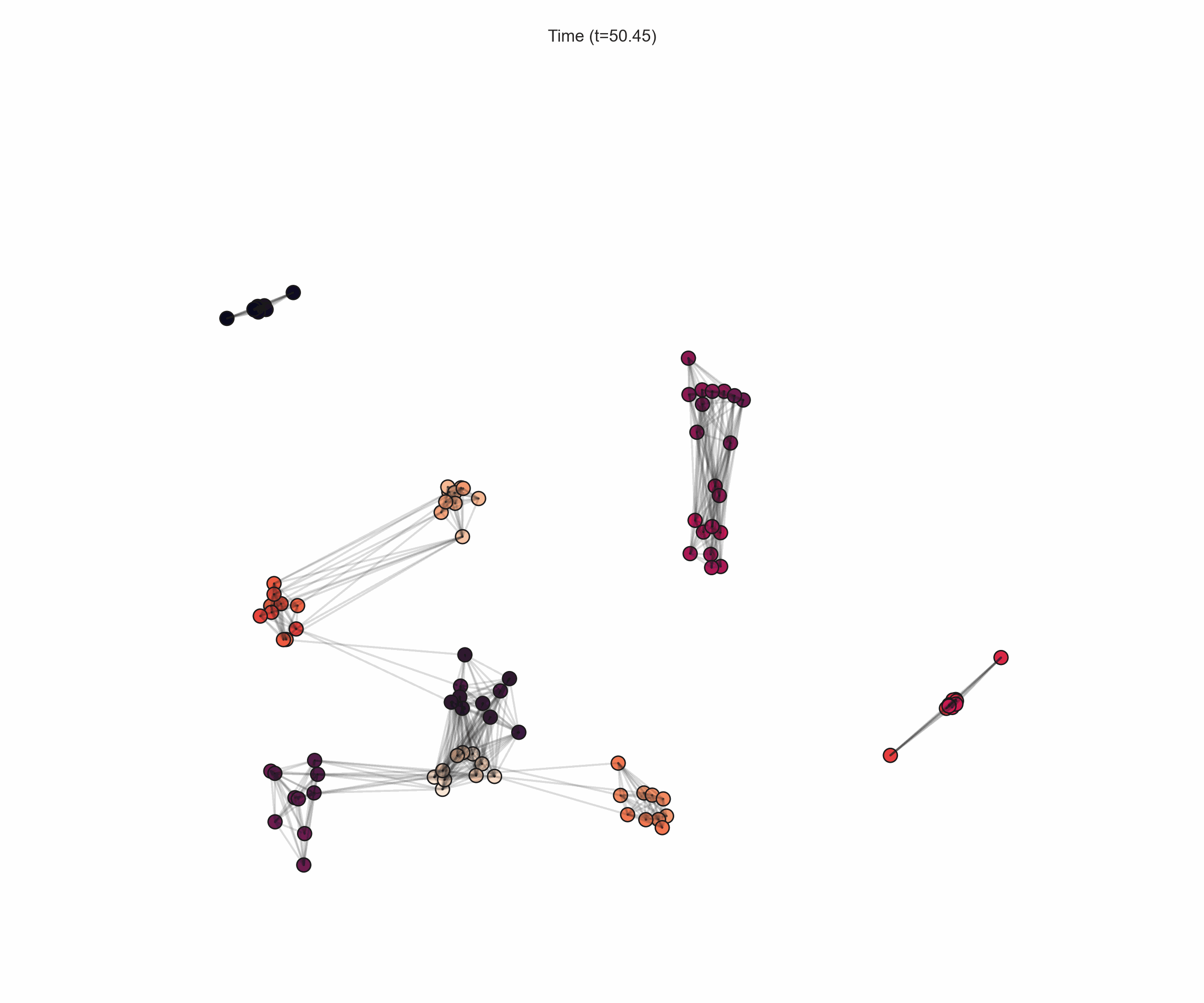}}
\hfill
\subfigure[$t=80$]{\includegraphics[trim={5cm 6cm 5cm 6cm},clip,width=0.16\textwidth]{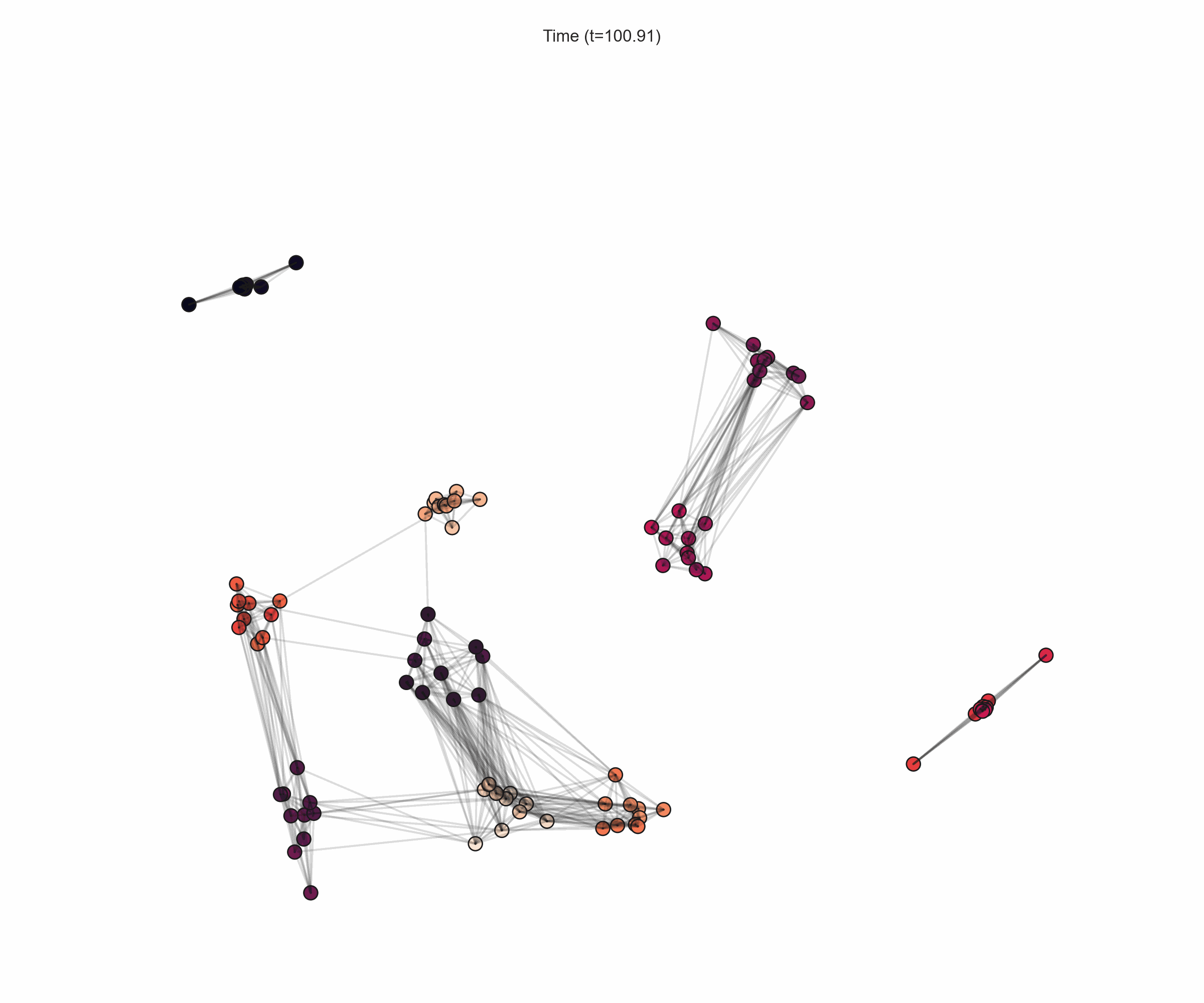}}
\hfill
\subfigure[$t=121$]{\includegraphics[trim={5cm 6cm 5cm 6cm},clip,width=0.16\textwidth]{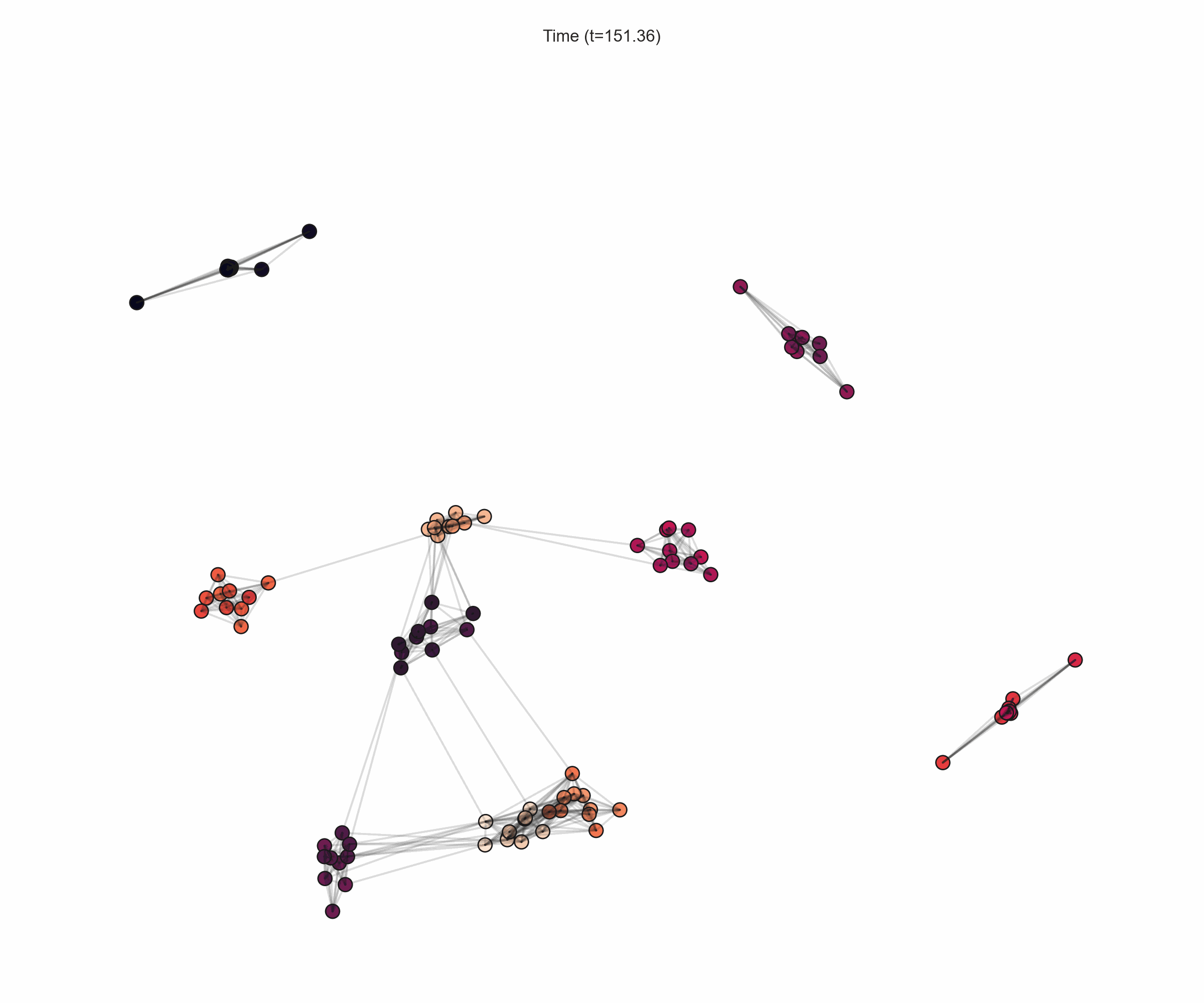}}
\hfill
\subfigure[$t=161$]{\includegraphics[trim={5cm 6cm 5cm 6cm},clip,width=0.16\textwidth]{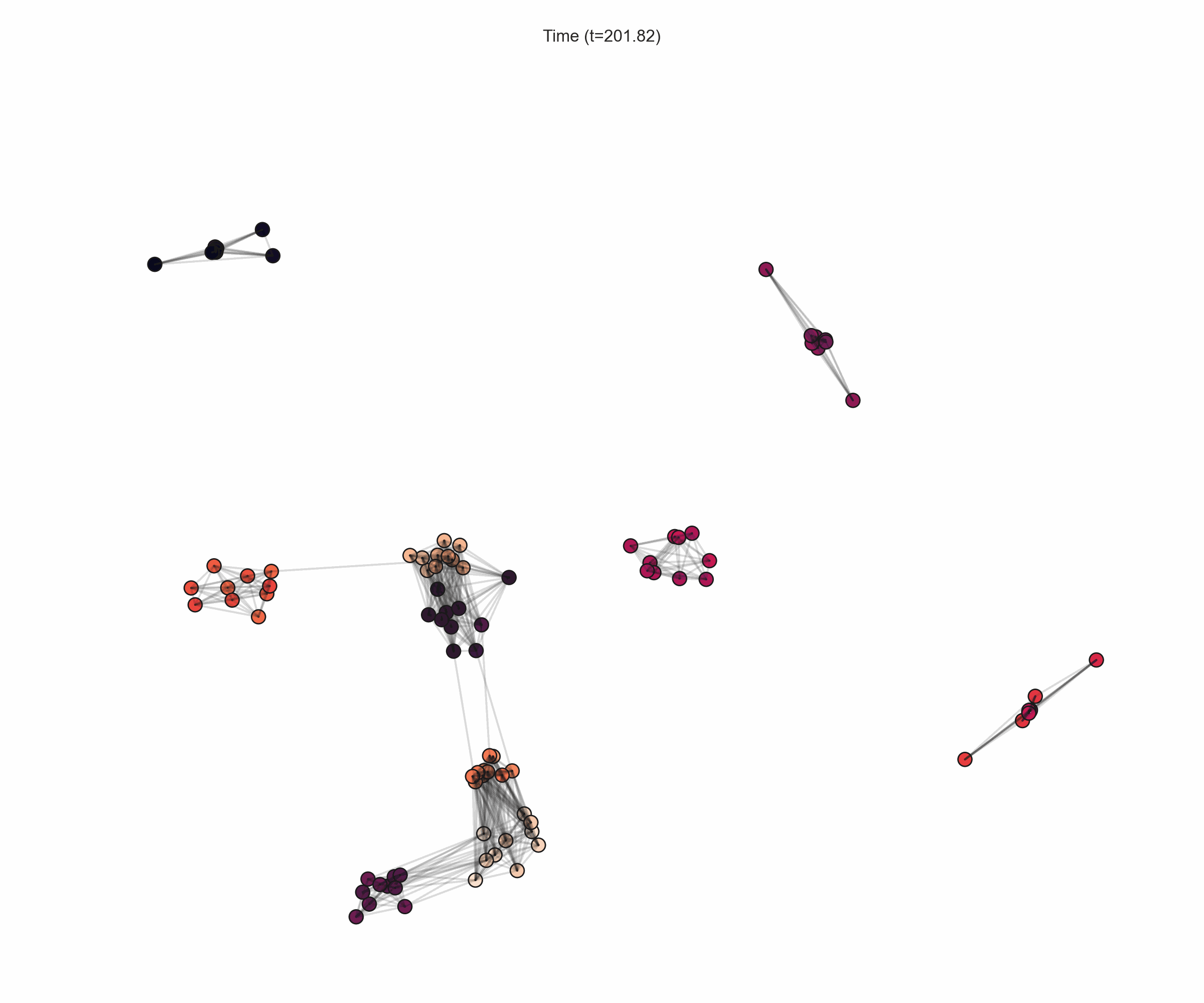}}
\hfill
\subfigure[$t=202$]{\includegraphics[trim={5cm 6cm 5cm 6cm},clip,width=0.16\textwidth]{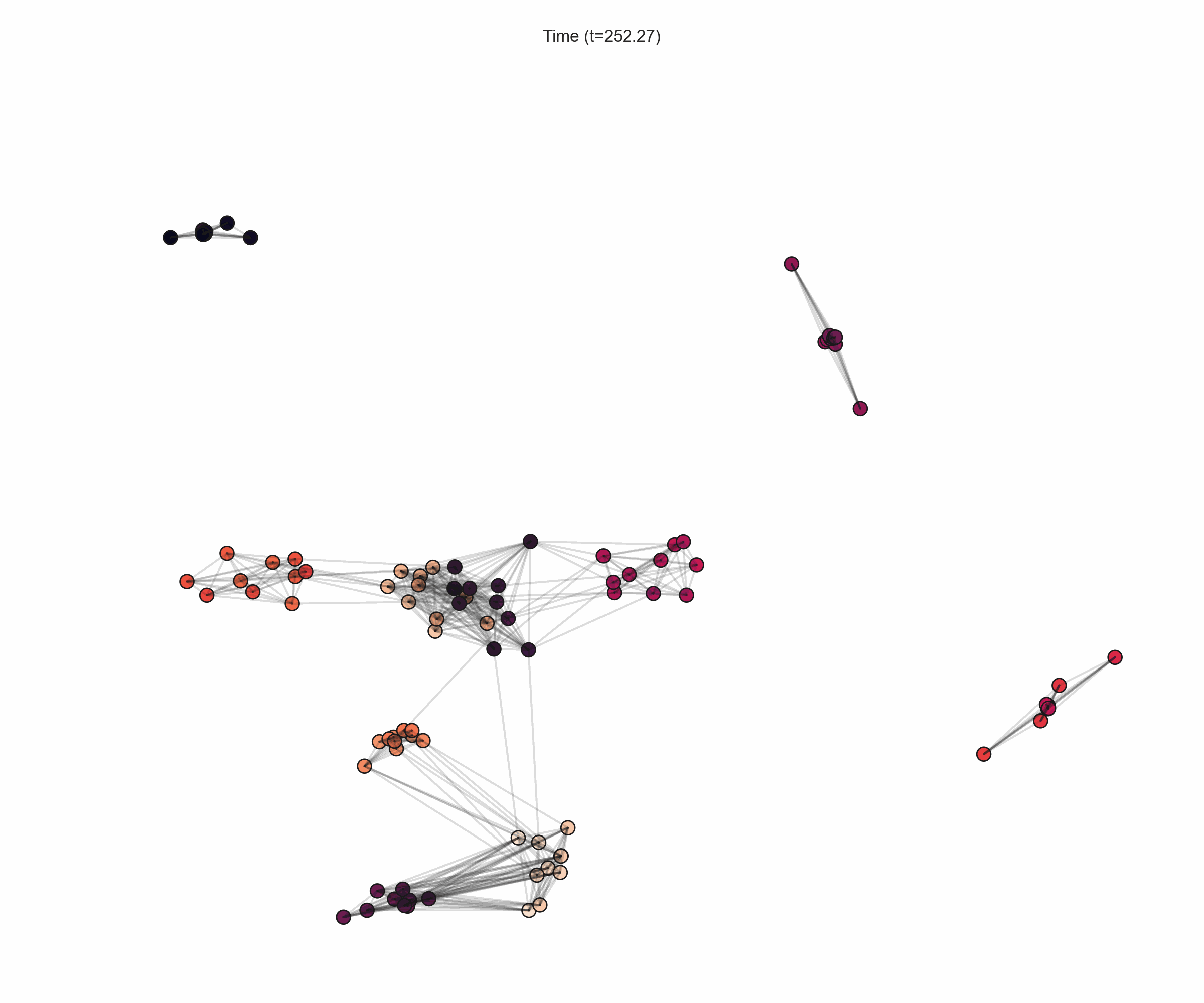}}
\hfill
\subfigure[$t=242$]{\includegraphics[trim={5cm 6cm 5cm 6cm},clip,width=0.16\textwidth]{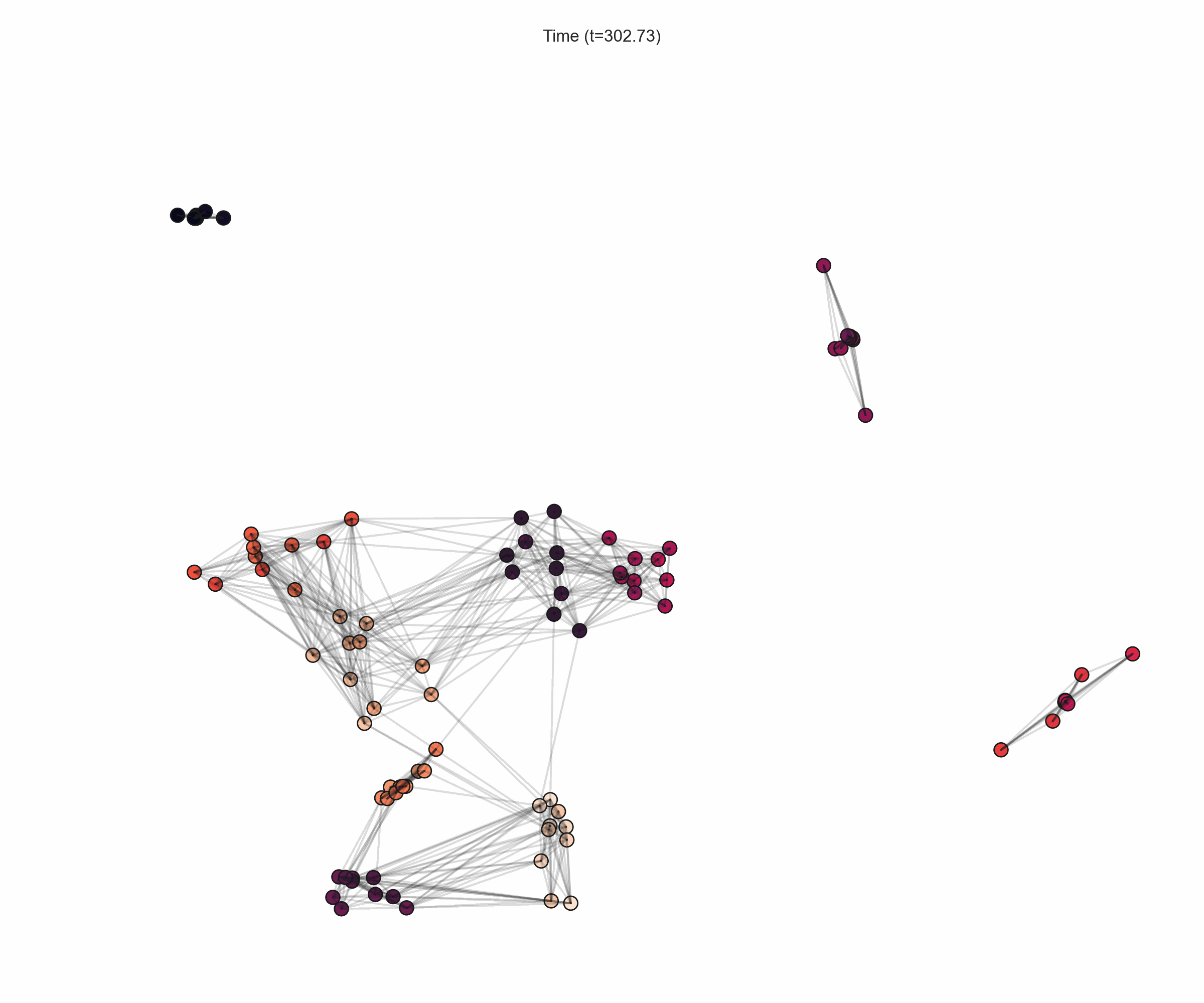}}
\subfigure[$t=282$]{\includegraphics[trim={5cm 6cm 5cm 6cm},clip,width=0.16\textwidth]{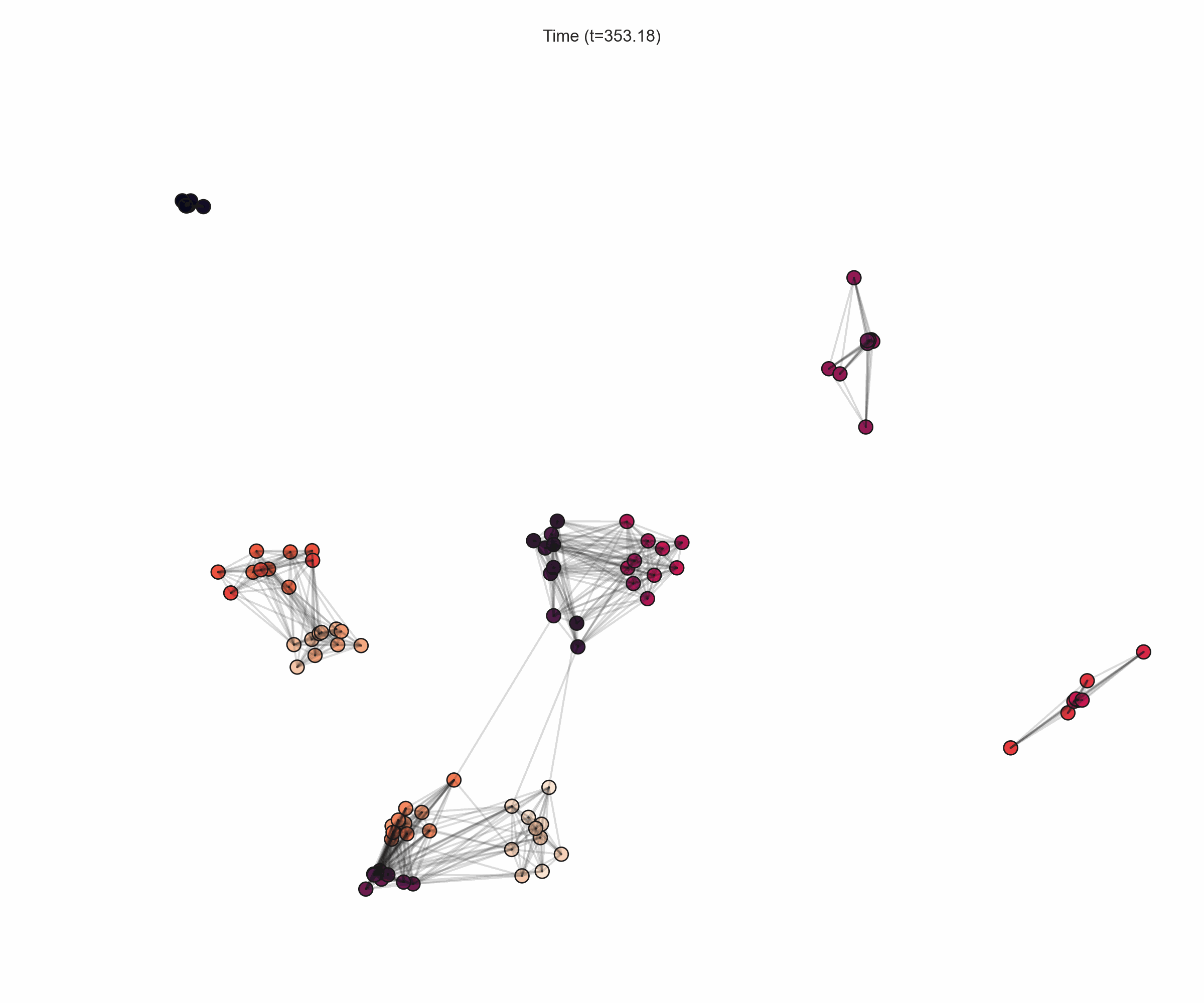}}
\hfill
\subfigure[$t=323$]{\includegraphics[trim={5cm 6cm 5cm 6cm},clip,width=0.16\textwidth]{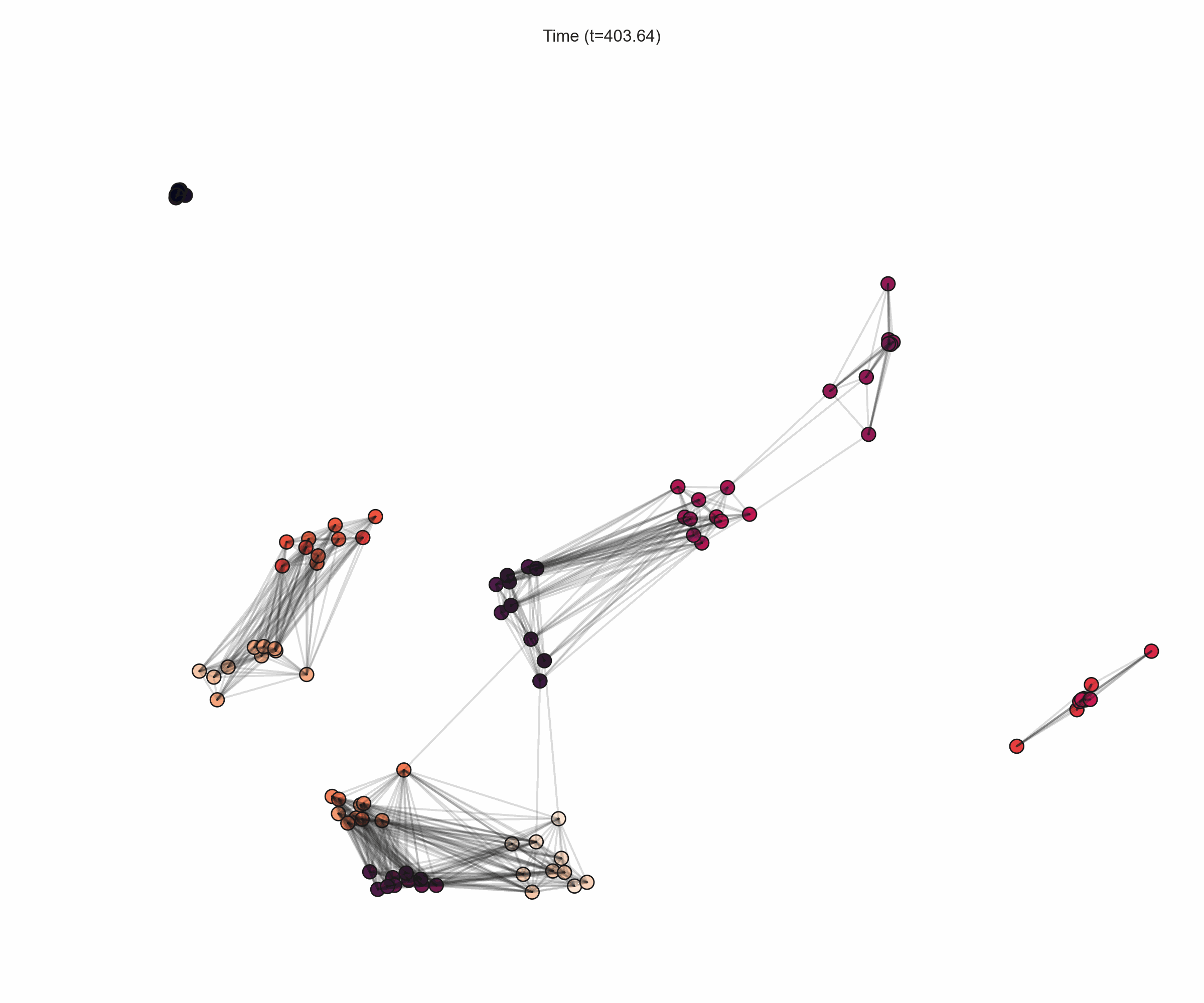}}
\hfill
\subfigure[$t=363$]{\includegraphics[trim={5cm 6cm 5cm 6cm},clip,width=0.16\textwidth]{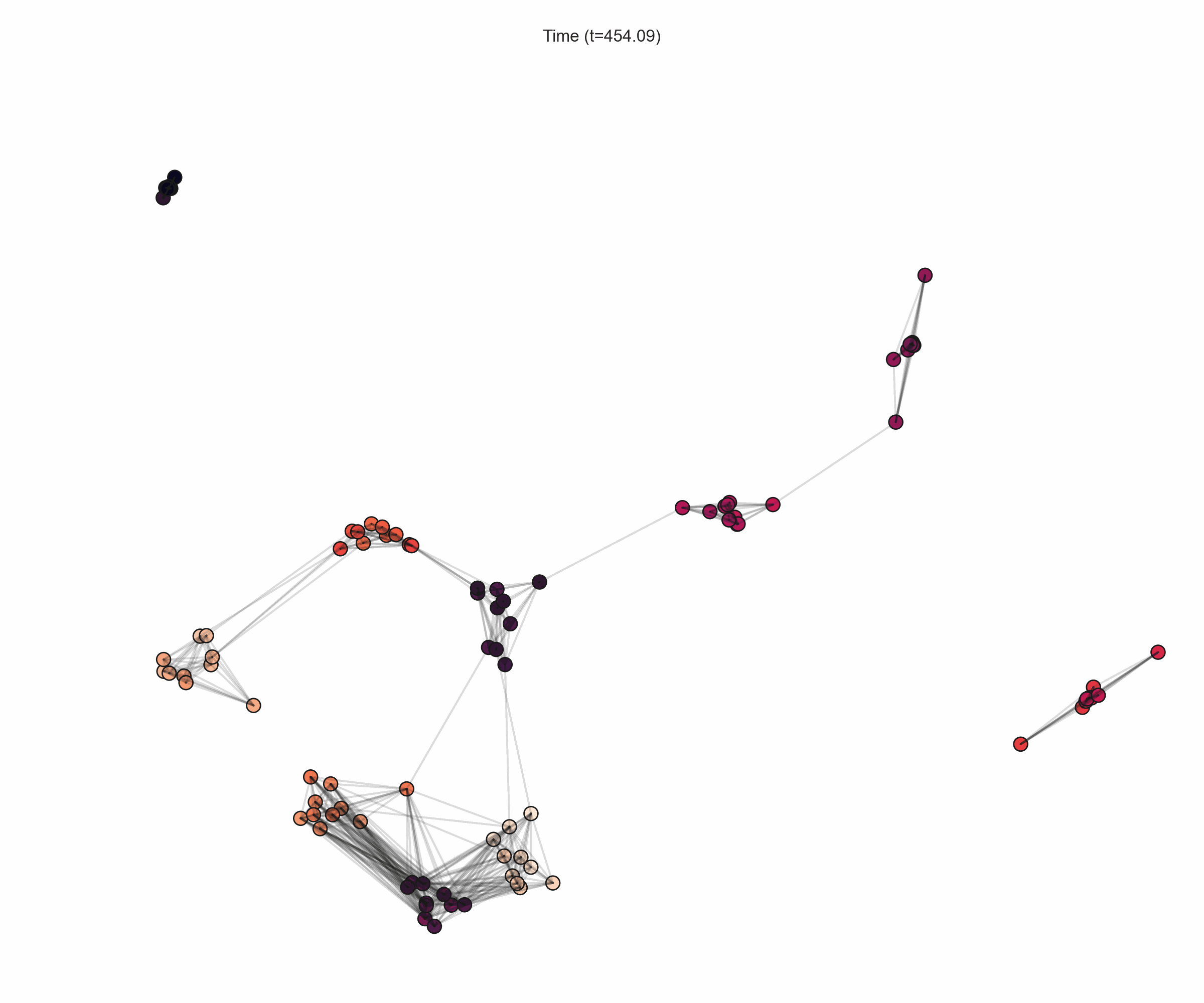}}
\hfill
\subfigure[$t=404$]{\includegraphics[trim={5cm 6cm 5cm 6cm},clip,width=0.16\textwidth]{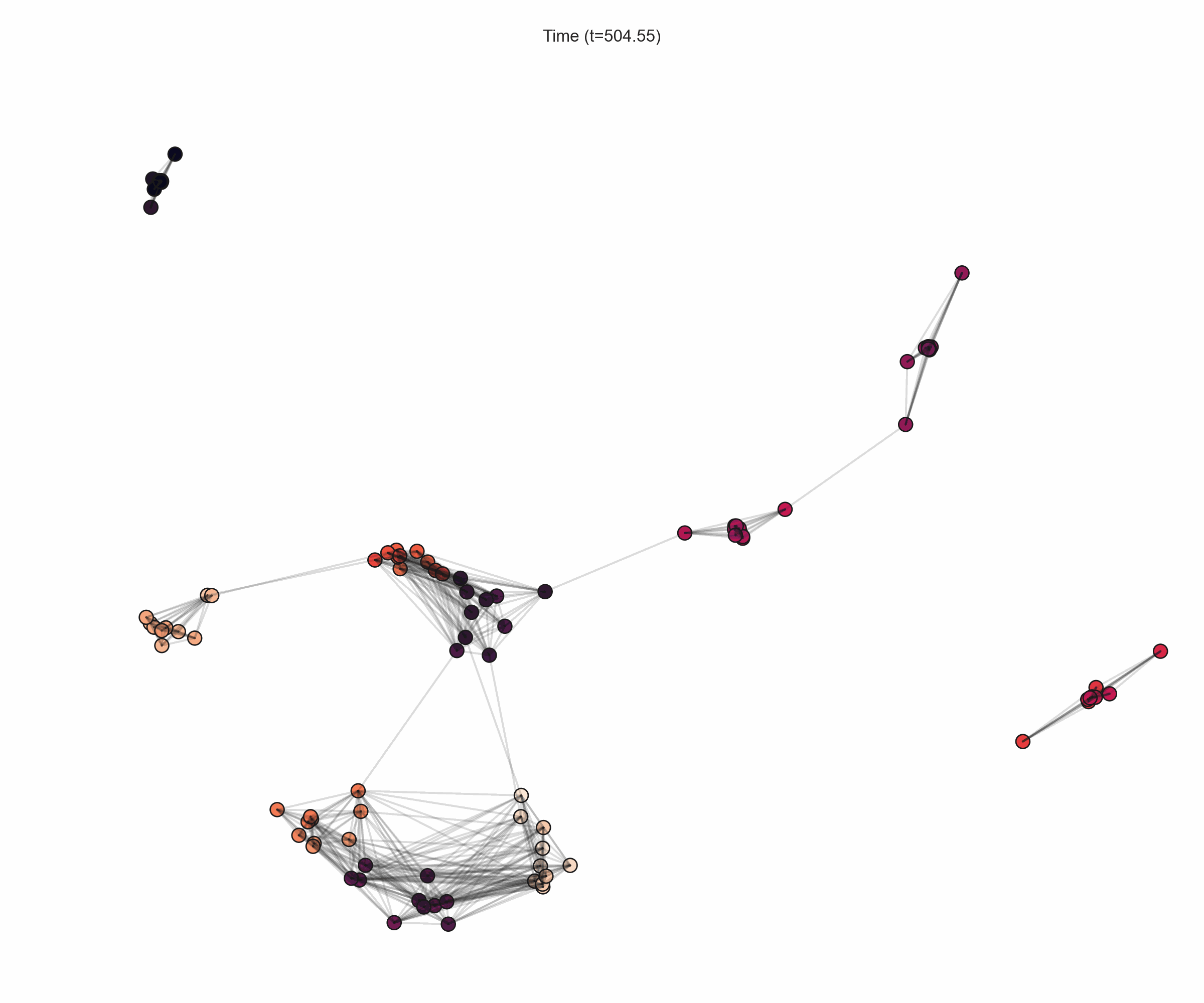}}
\hfill
\subfigure[$t=444$]{\includegraphics[trim={5cm 6cm 5cm 6cm},clip,width=0.16\textwidth]{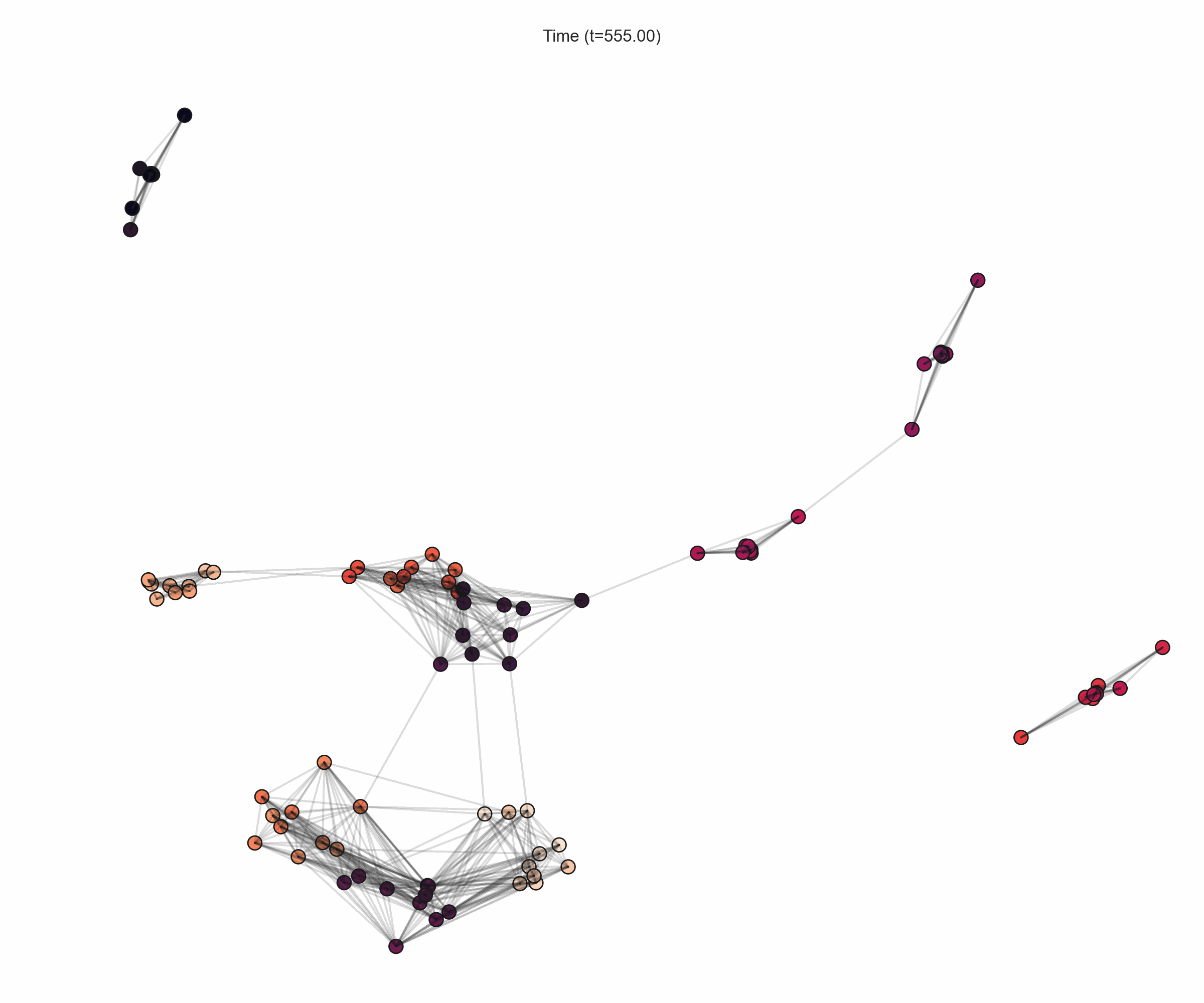}}
\hfill
\subfigure[$t=484$]{\includegraphics[trim={5cm 6cm 5cm 6cm},clip,width=0.16\textwidth]{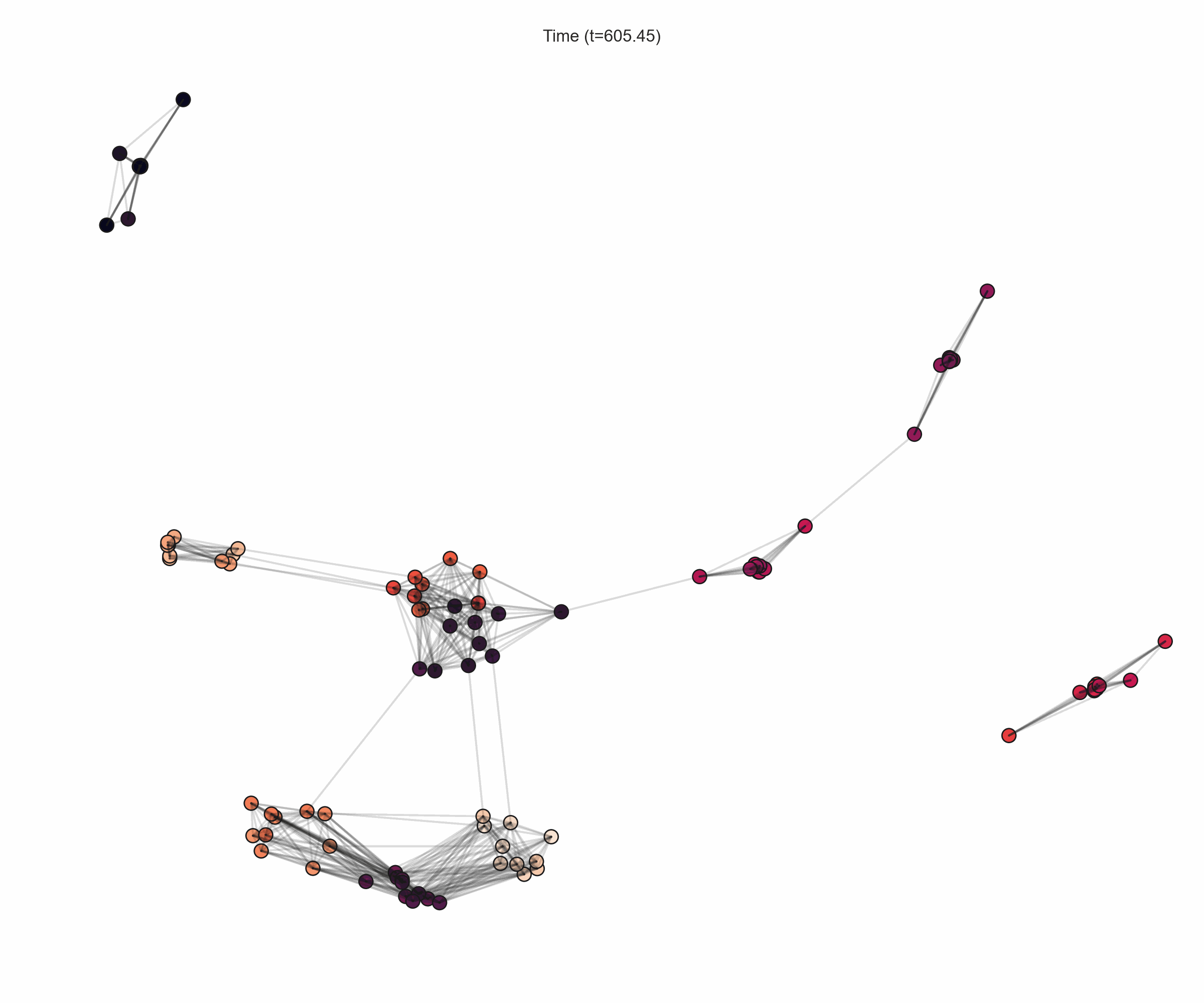}}
\subfigure[$t=525$]{\includegraphics[trim={5cm 6cm 5cm 6cm},clip,width=0.16\textwidth]{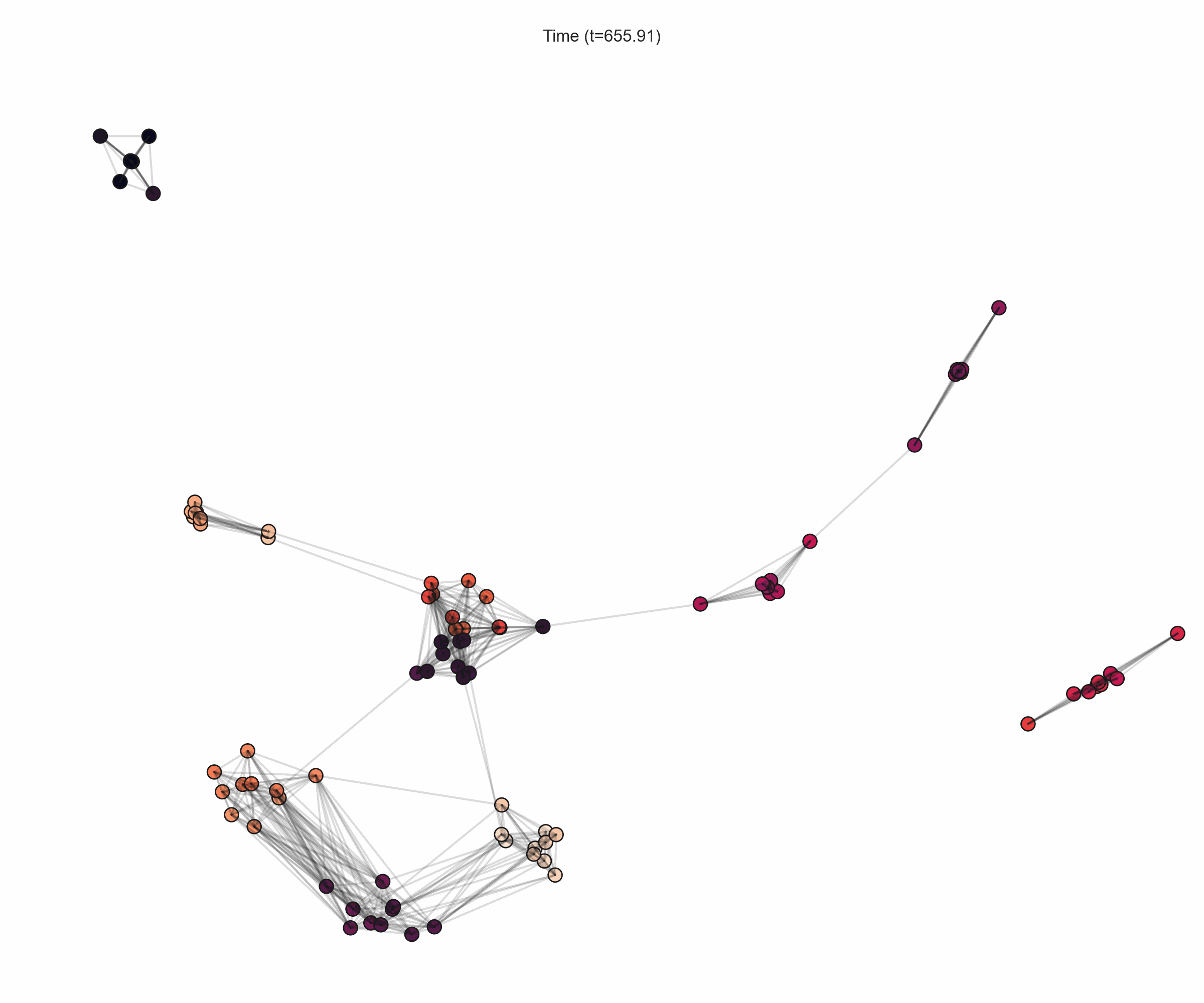}}
\hfill
\subfigure[$t=565$]{\includegraphics[trim={5cm 6cm 5cm 6cm},clip,width=0.16\textwidth]{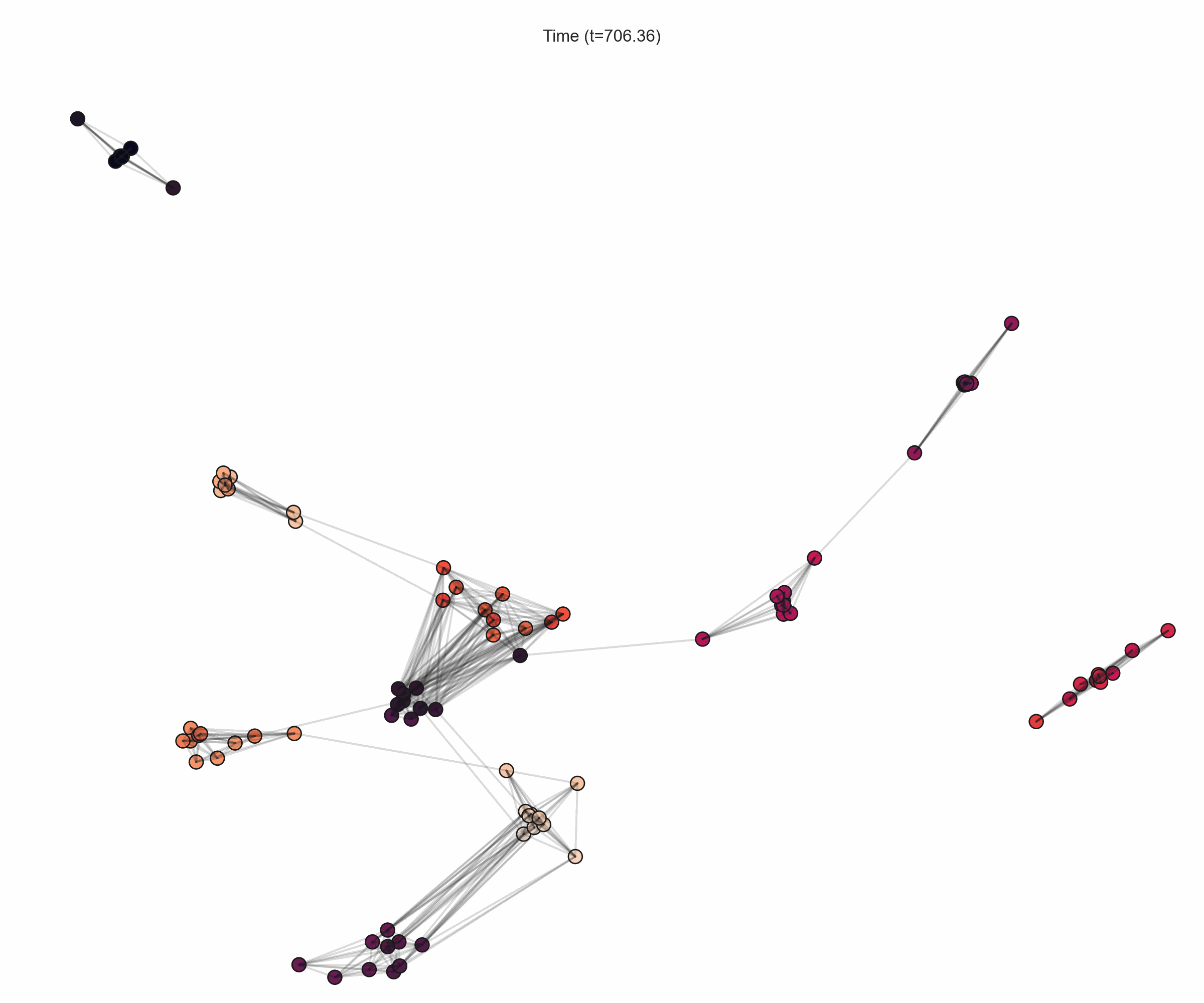}}
\hfill
\subfigure[$t=606$]{\includegraphics[trim={5cm 6cm 5cm 6cm},clip,width=0.16\textwidth]{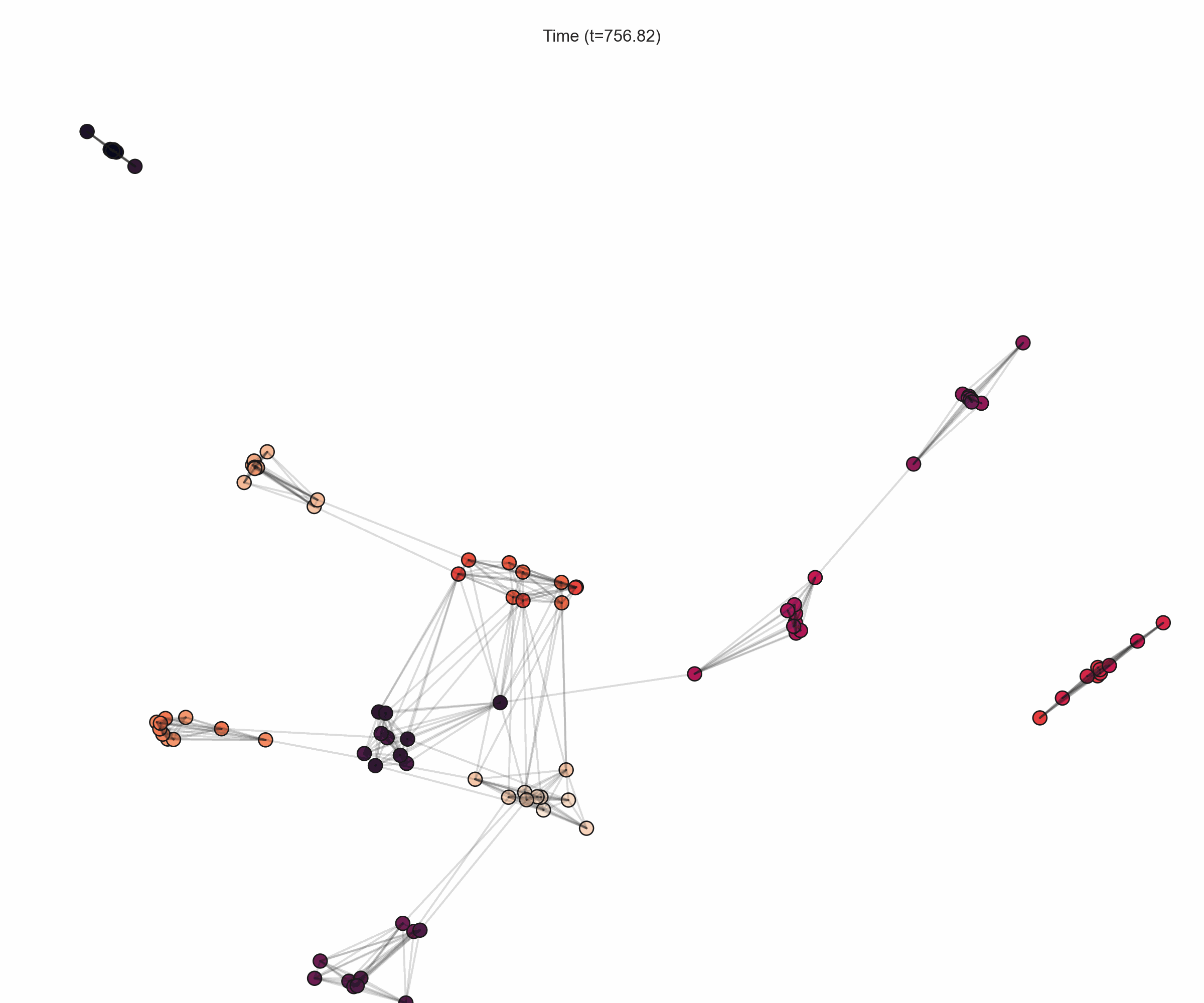}}
\hfill
\subfigure[$t=646$]{\includegraphics[trim={5cm 6cm 5cm 6cm},clip,width=0.16\textwidth]{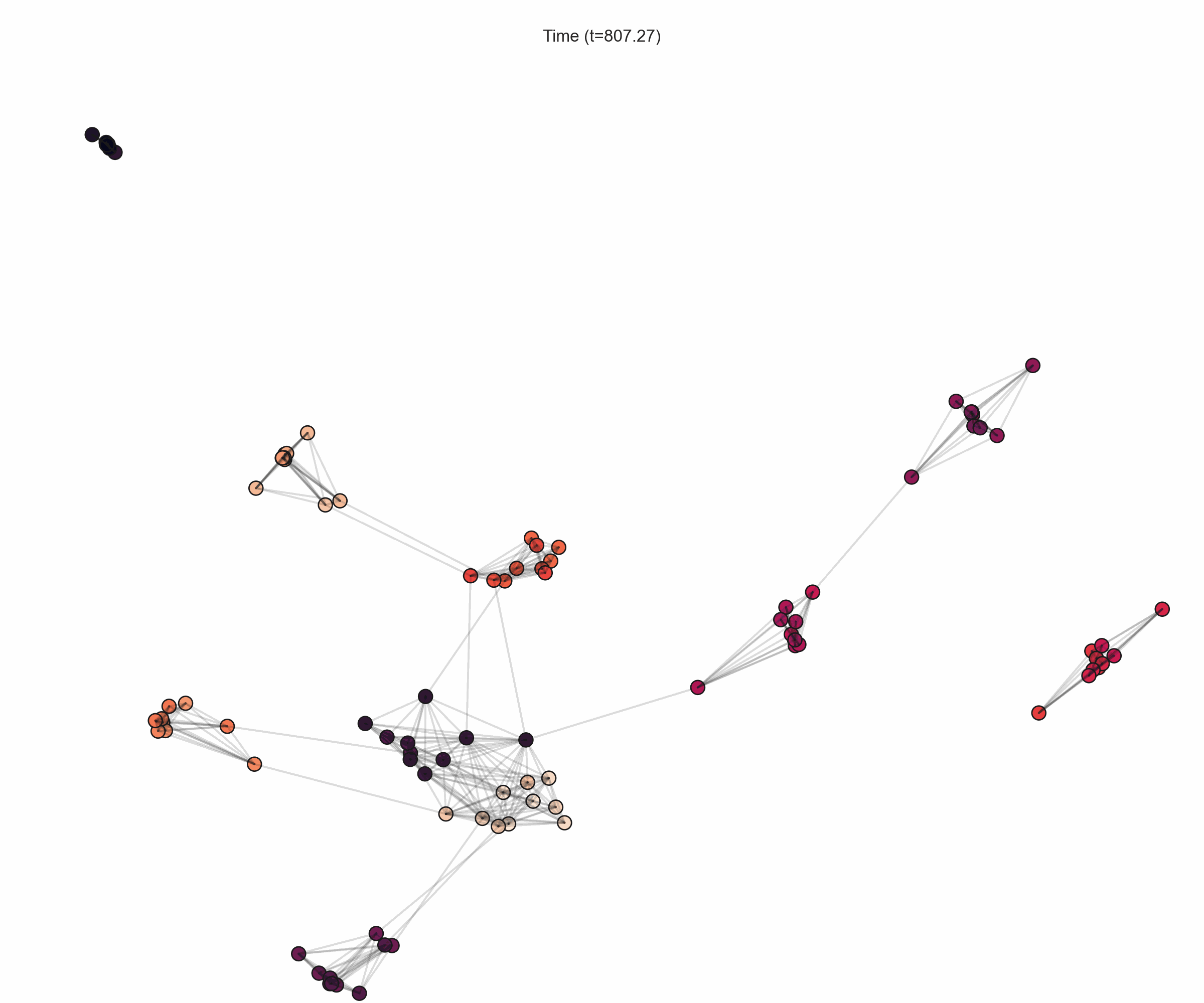}}
\hfill
\subfigure[$t=686$]{\includegraphics[trim={5cm 6cm 5cm 6cm},clip,width=0.16\textwidth]{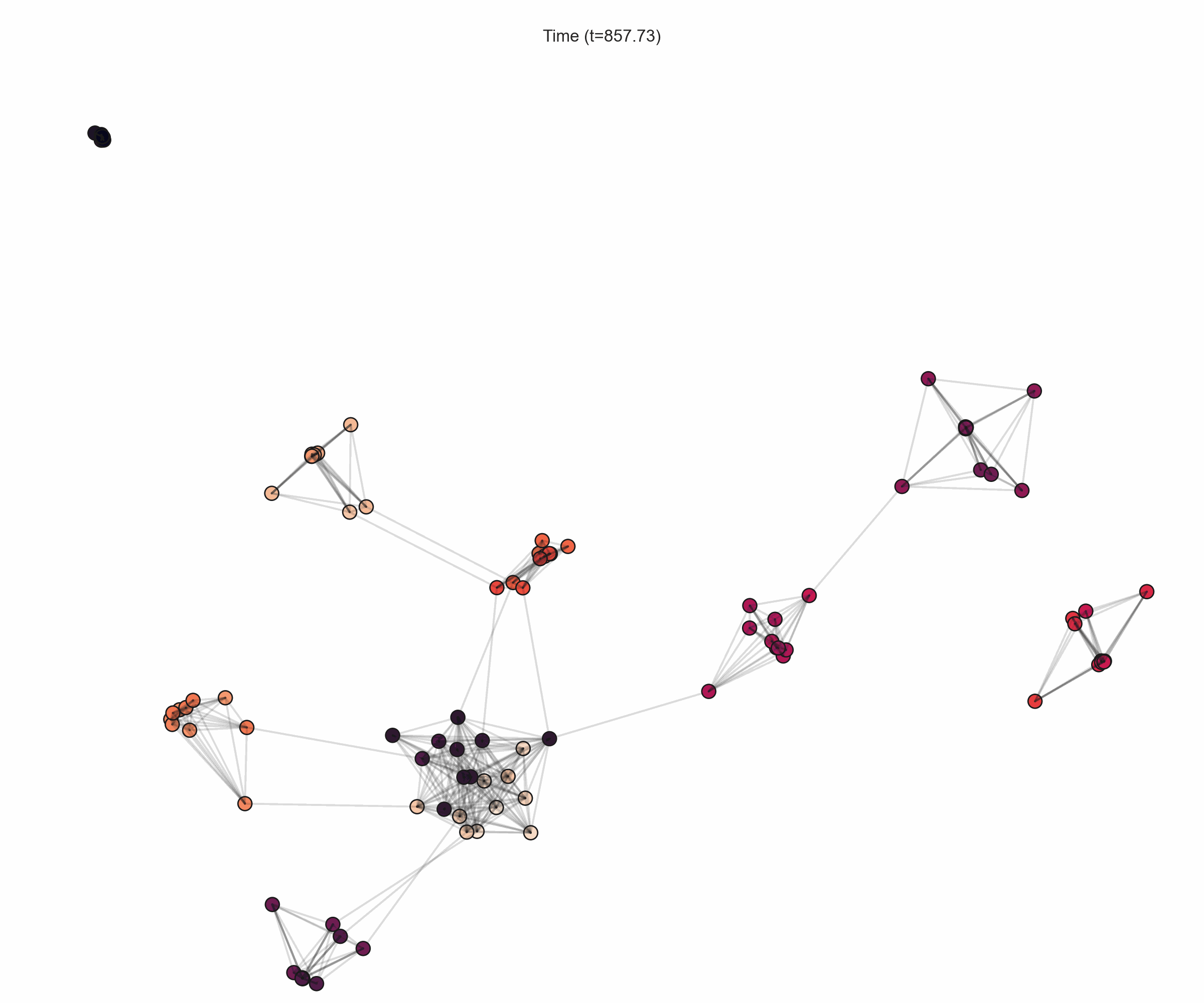}}
\hfill
\subfigure[$t=727$]{\includegraphics[trim={5cm 6cm 5cm 6cm},clip,width=0.16\textwidth]{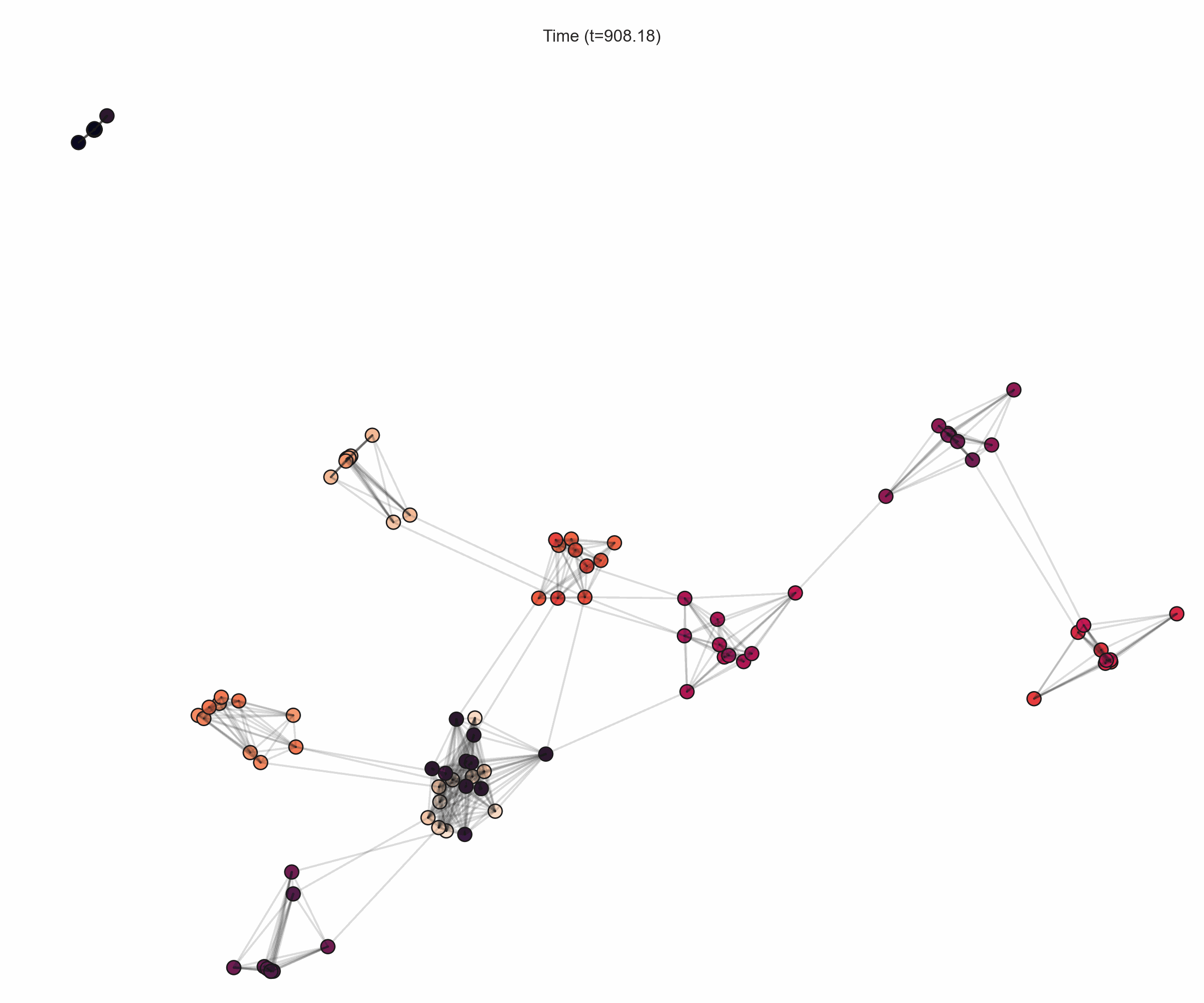}}
\caption{Snapshots of the continuous-time embeddings learned by \textsc{\modelname} for various time points over \textsl{Synthetic-$\alpha$}.}\label{fig:appendix_visualization_synthetic_mu}
\end{figure*}
\begin{figure*}[!ht]
\centering
\subfigure[$t=40$]{\includegraphics[trim={5cm 6cm 5cm 6cm},clip,width=0.16\textwidth]{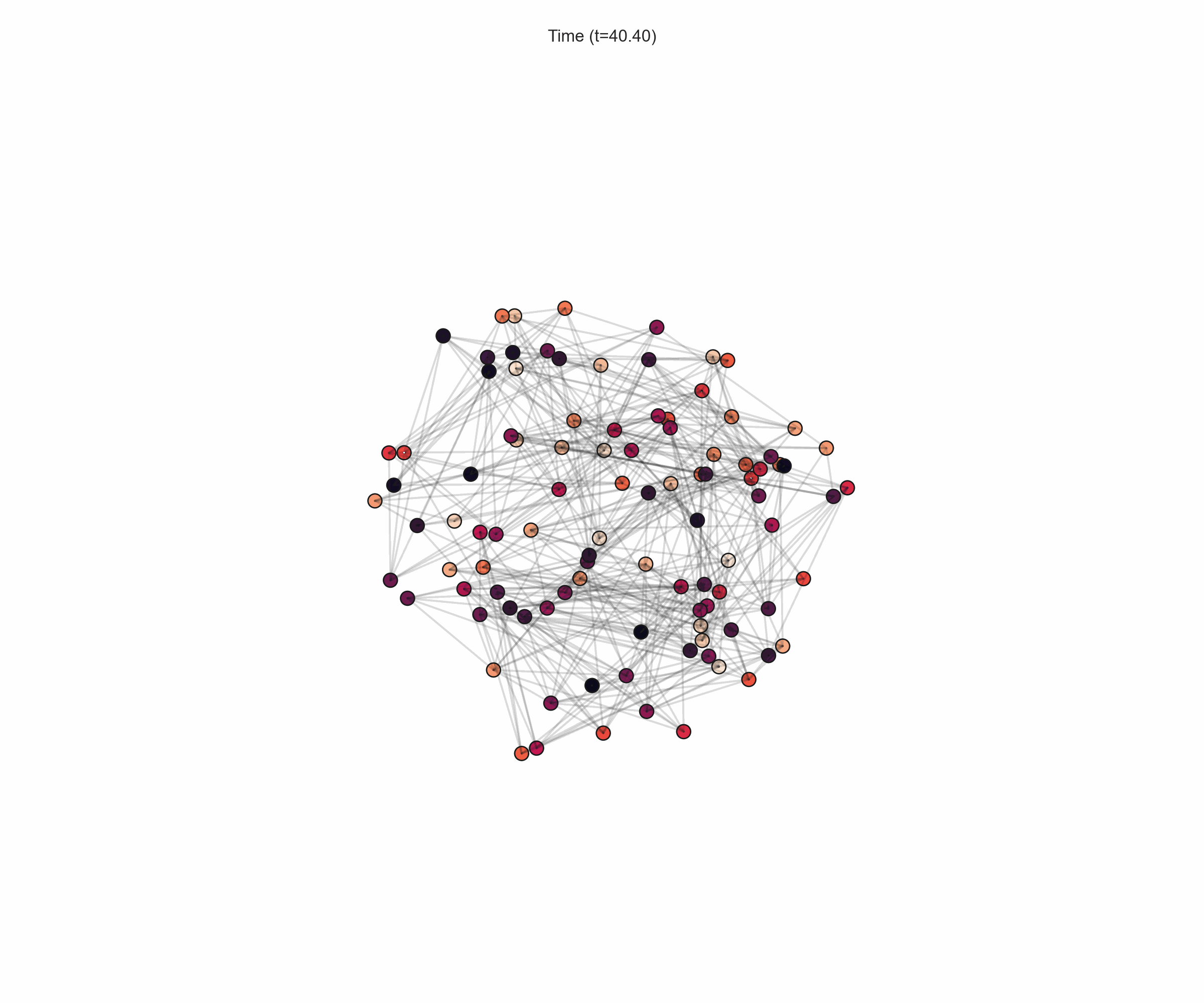}}
\hfill
\subfigure[$t=80$]{\includegraphics[trim={5cm 6cm 5cm 6cm},clip,width=0.16\textwidth]{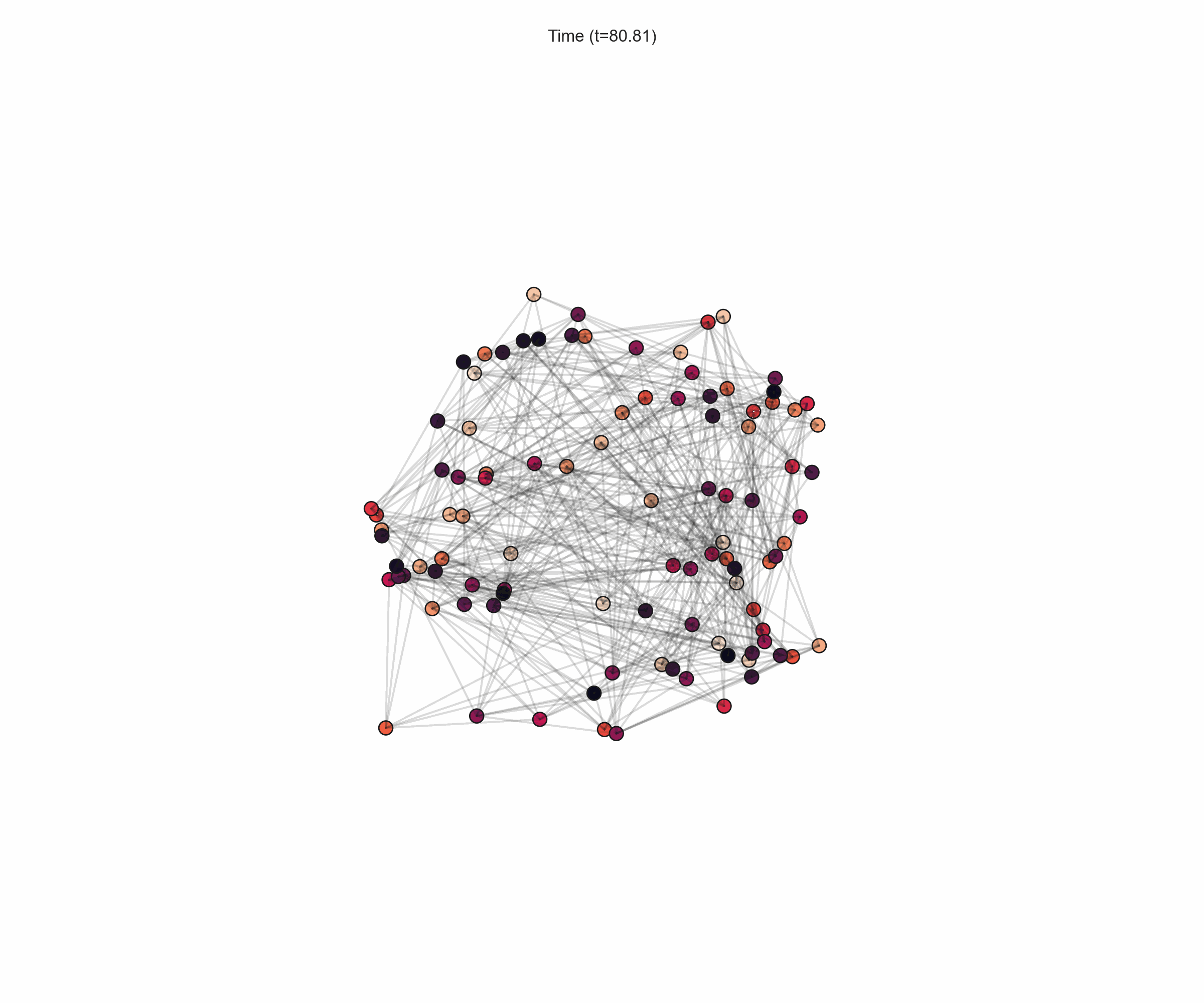}}
\hfill
\subfigure[$t=121$]{\includegraphics[trim={5cm 6cm 5cm 6cm},clip,width=0.16\textwidth]{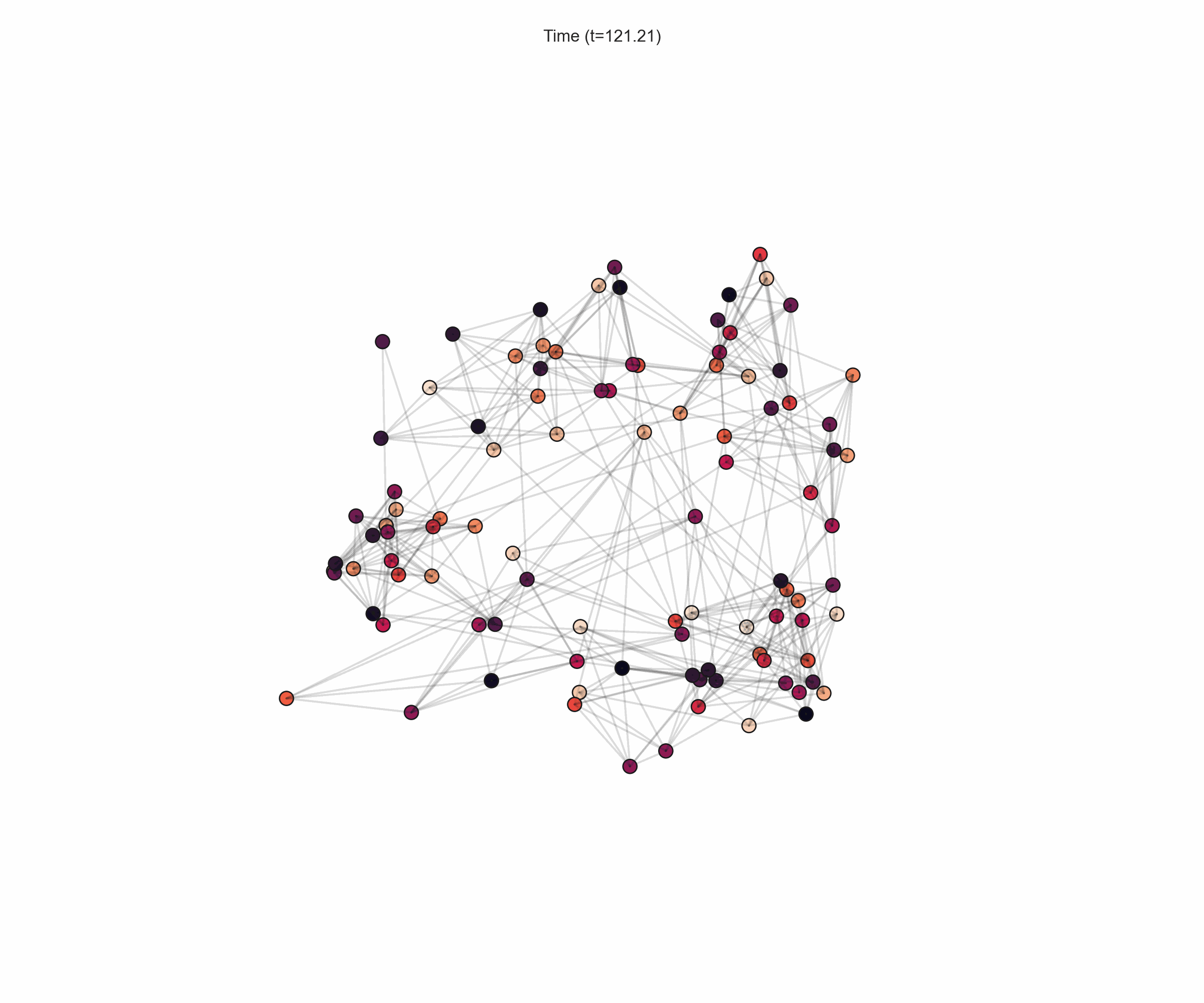}}
\hfill
\subfigure[$t=161$]{\includegraphics[trim={5cm 6cm 5cm 6cm},clip,width=0.16\textwidth]{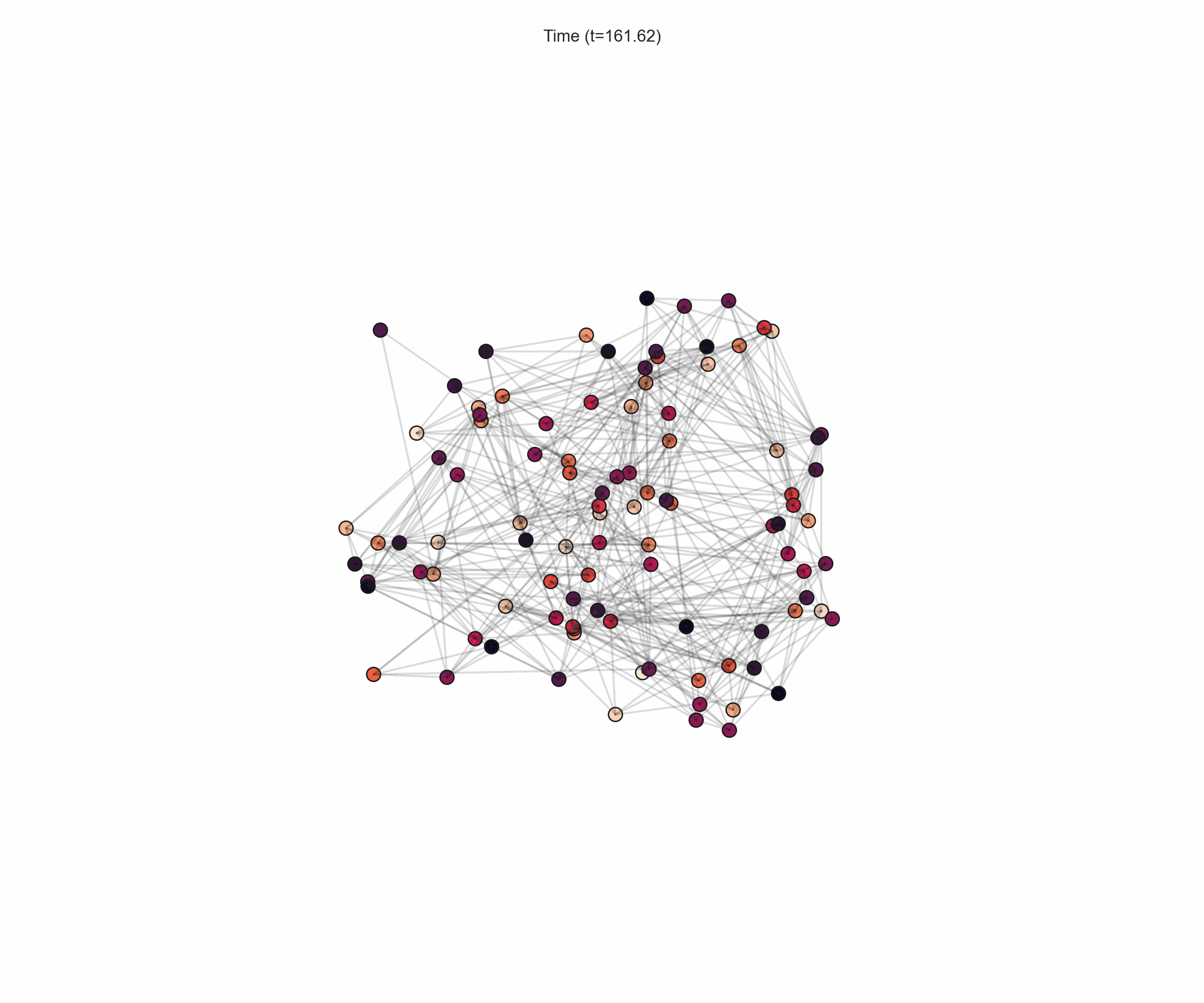}}
\hfill
\subfigure[$t=202$]{\includegraphics[trim={5cm 6cm 5cm 6cm},clip,width=0.16\textwidth]{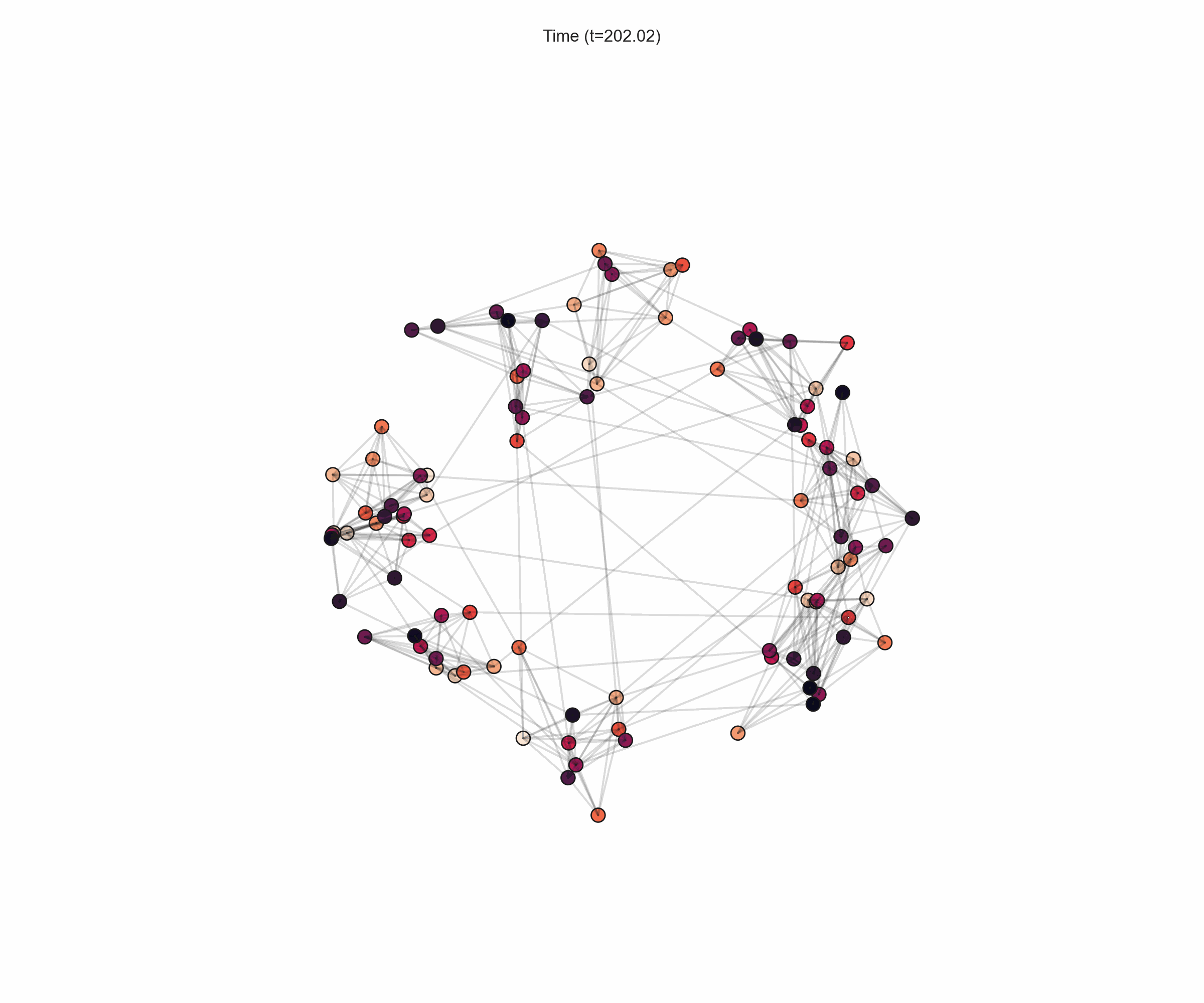}}
\hfill
\subfigure[$t=242$]{\includegraphics[trim={5cm 6cm 5cm 6cm},clip,width=0.16\textwidth]{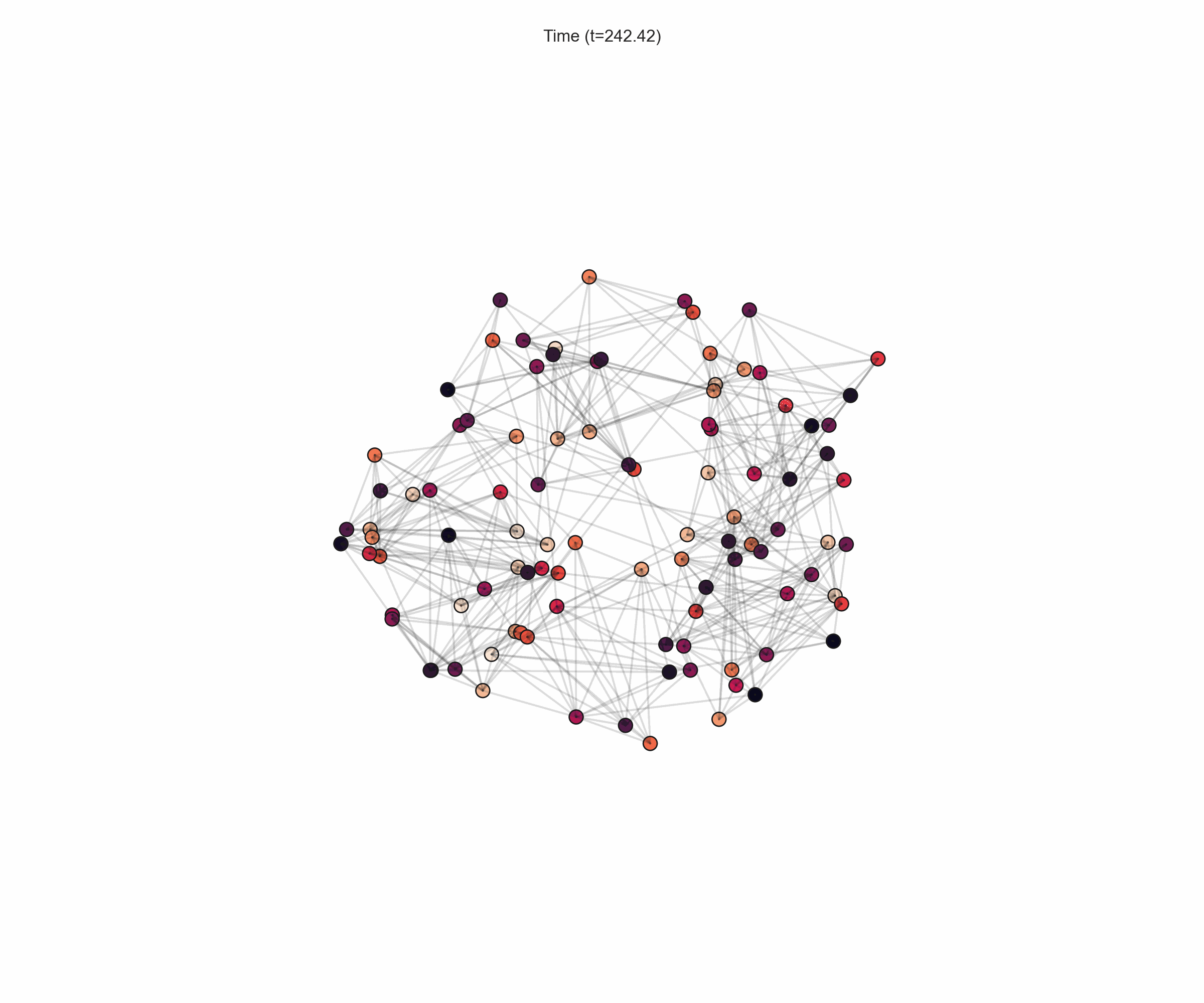}}
\subfigure[$t=282$]{\includegraphics[trim={5cm 6cm 5cm 6cm},clip,width=0.16\textwidth]{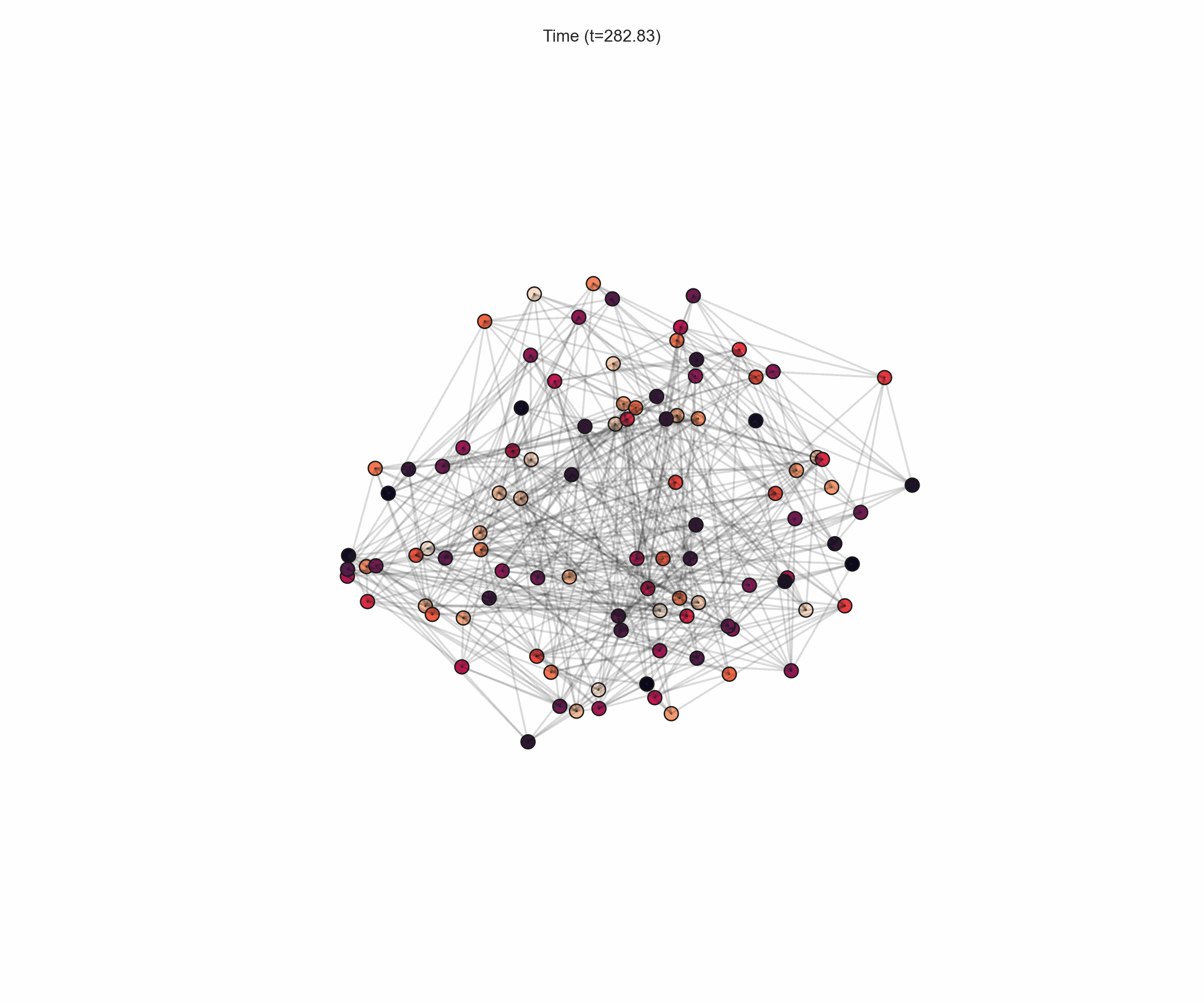}}
\hfill
\subfigure[$t=323$]{\includegraphics[trim={5cm 6cm 5cm 6cm},clip,width=0.16\textwidth]{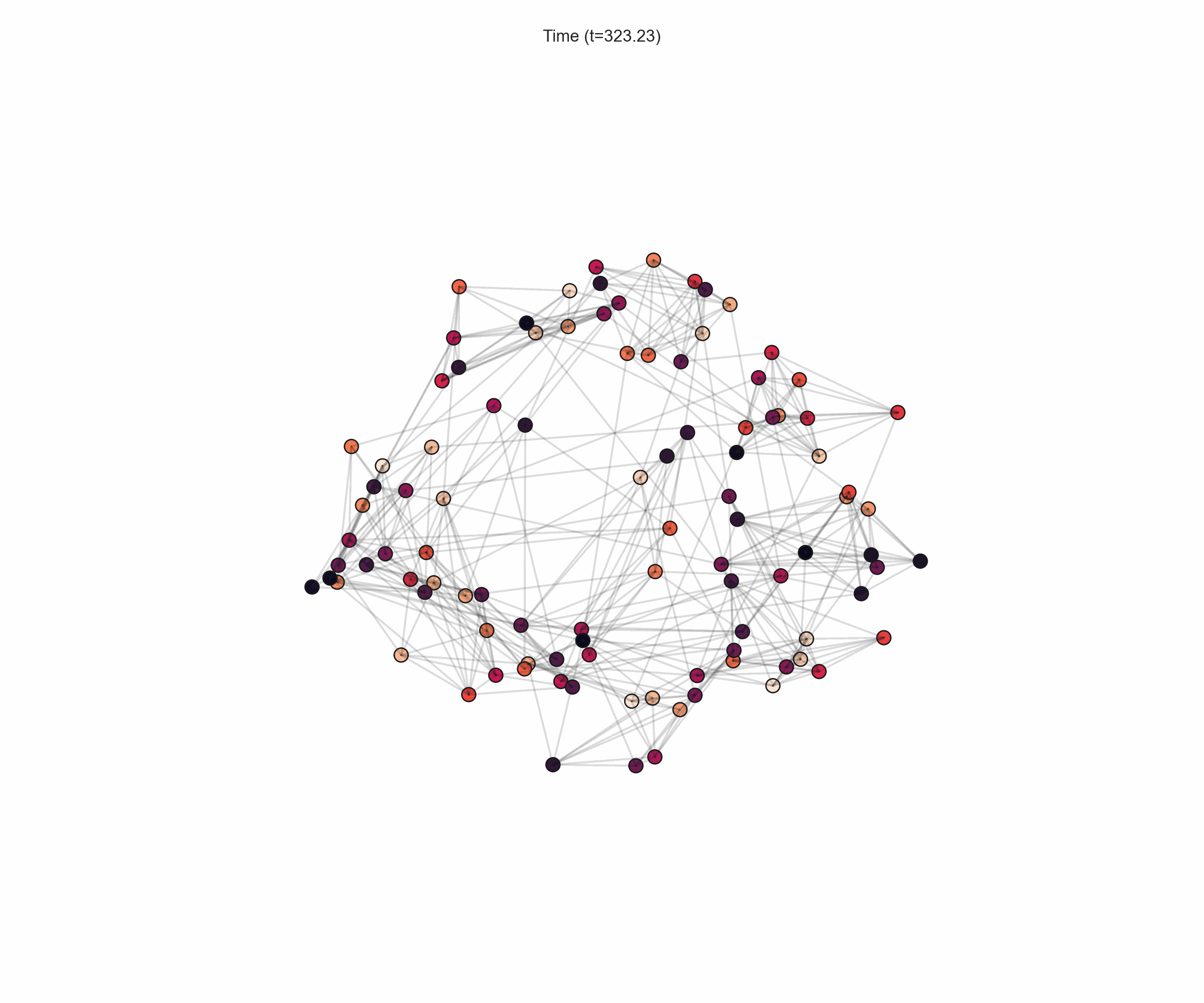}}
\hfill
\subfigure[$t=363$]{\includegraphics[trim={5cm 6cm 5cm 6cm},clip,width=0.16\textwidth]{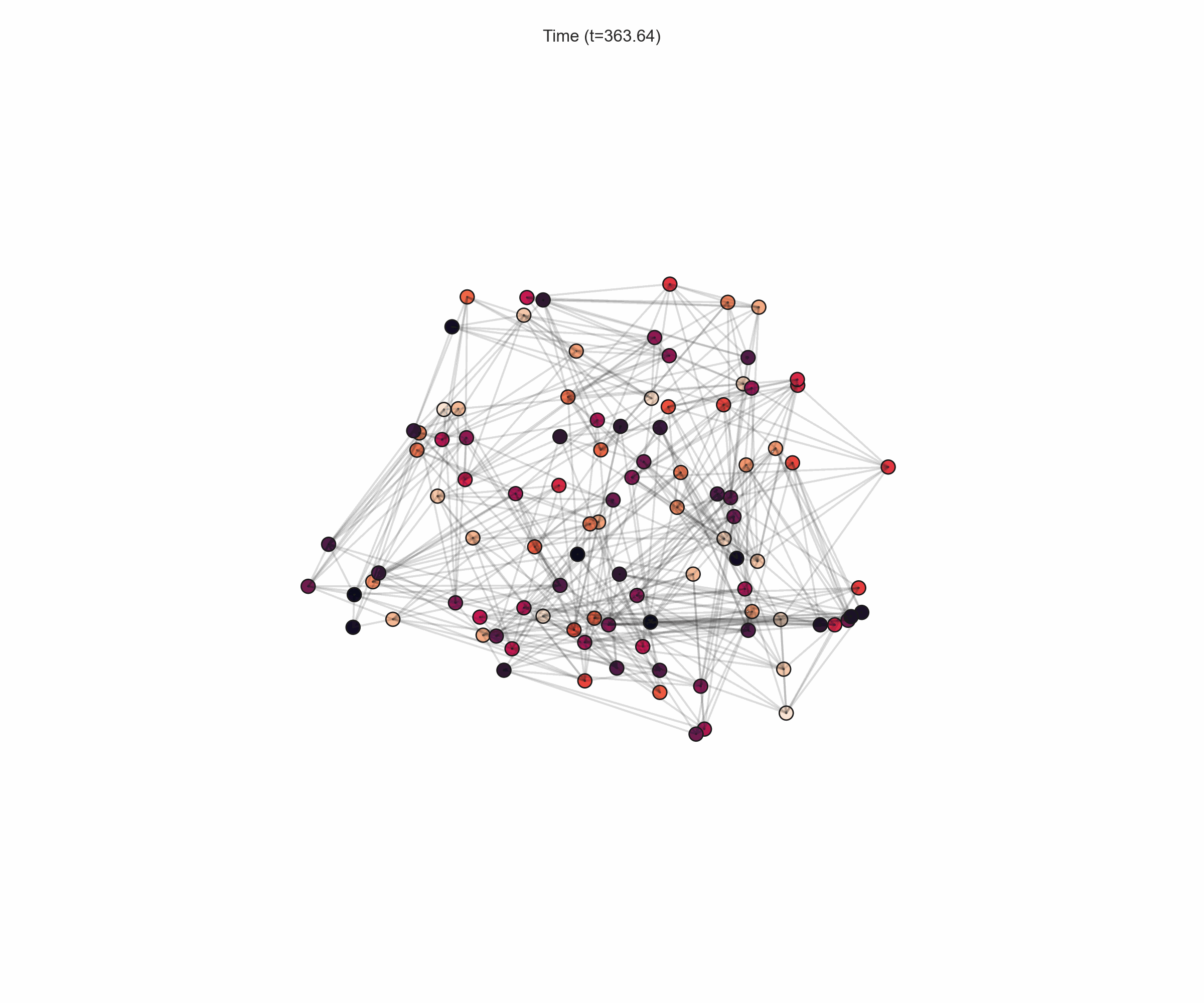}}
\hfill
\subfigure[$t=404$]{\includegraphics[trim={5cm 6cm 5cm 6cm},clip,width=0.16\textwidth]{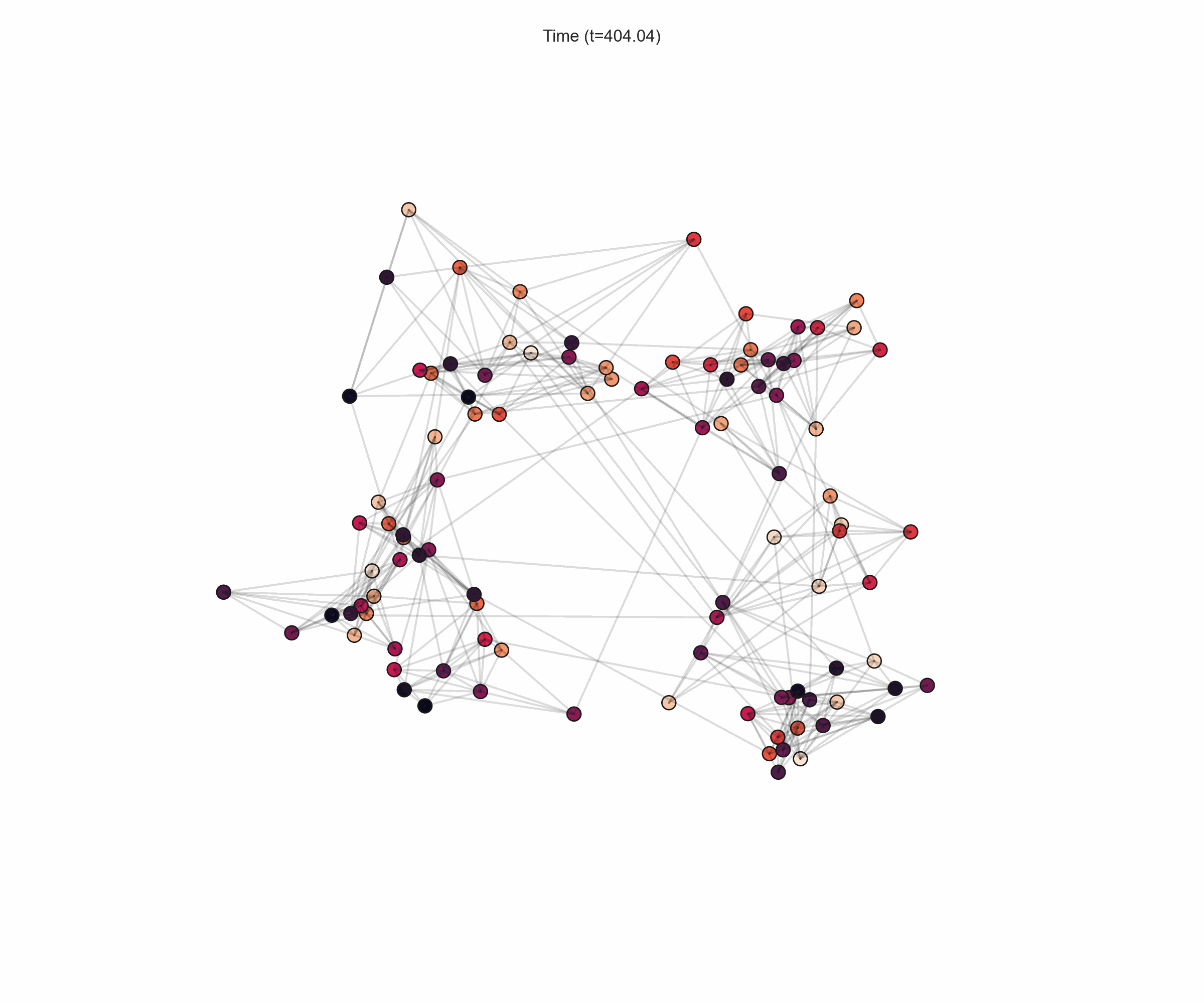}}
\hfill
\subfigure[$t=444$]{\includegraphics[trim={5cm 6cm 5cm 6cm},clip,width=0.16\textwidth]{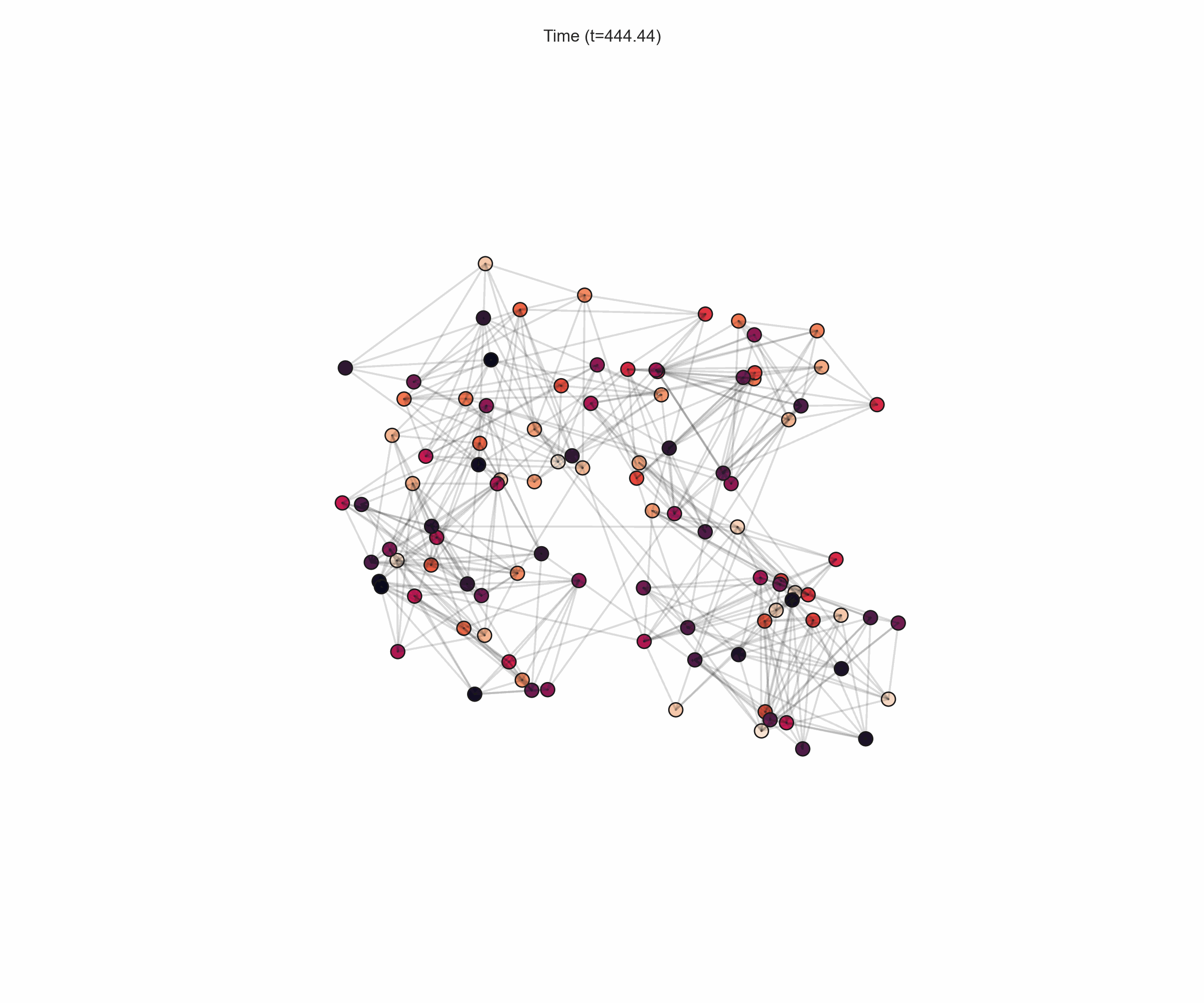}}
\hfill
\subfigure[$t=484$]{\includegraphics[trim={5cm 6cm 5cm 6cm},clip,width=0.16\textwidth]{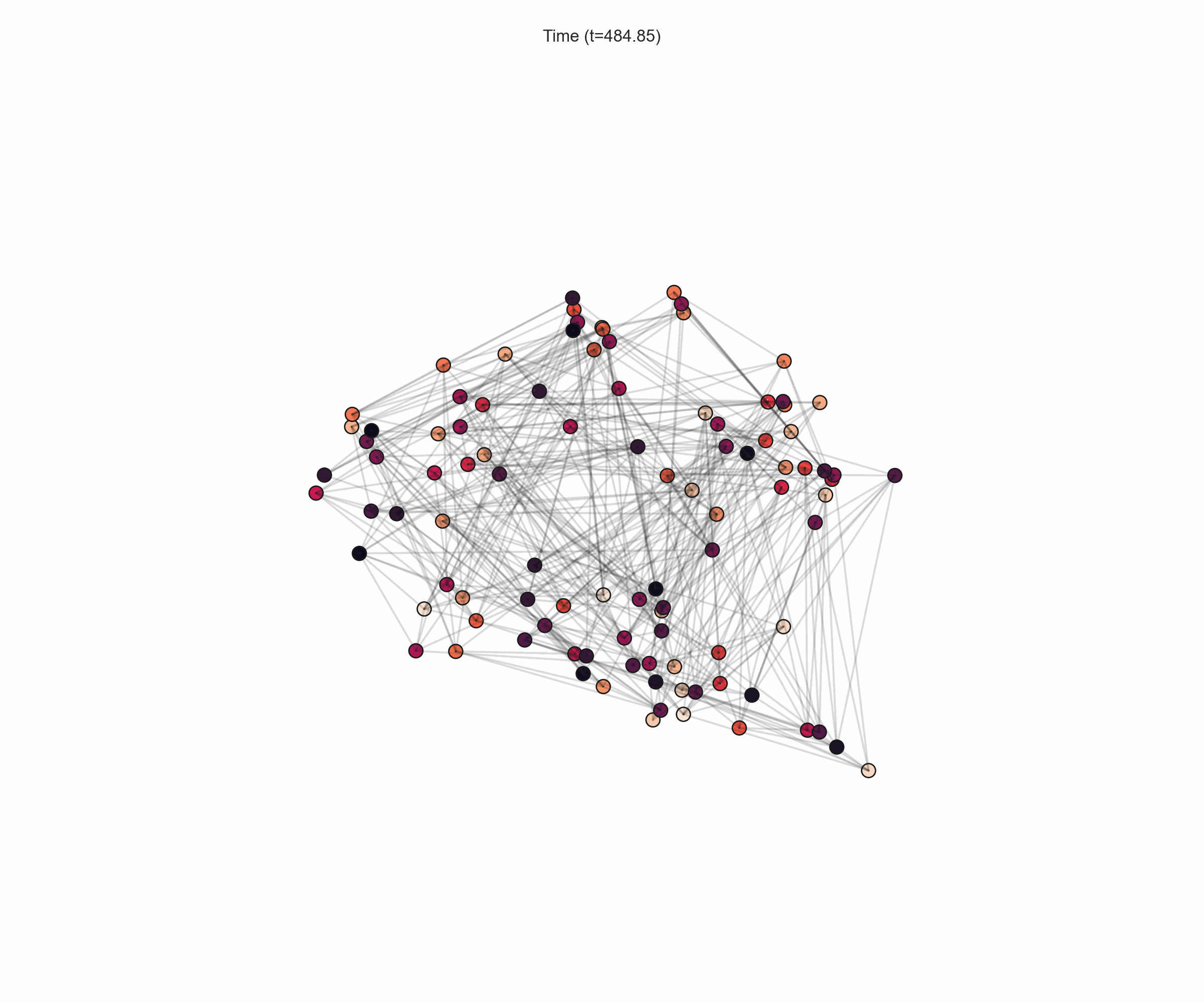}}
\subfigure[$t=525$]{\includegraphics[trim={5cm 6cm 5cm 6cm},clip,width=0.16\textwidth]{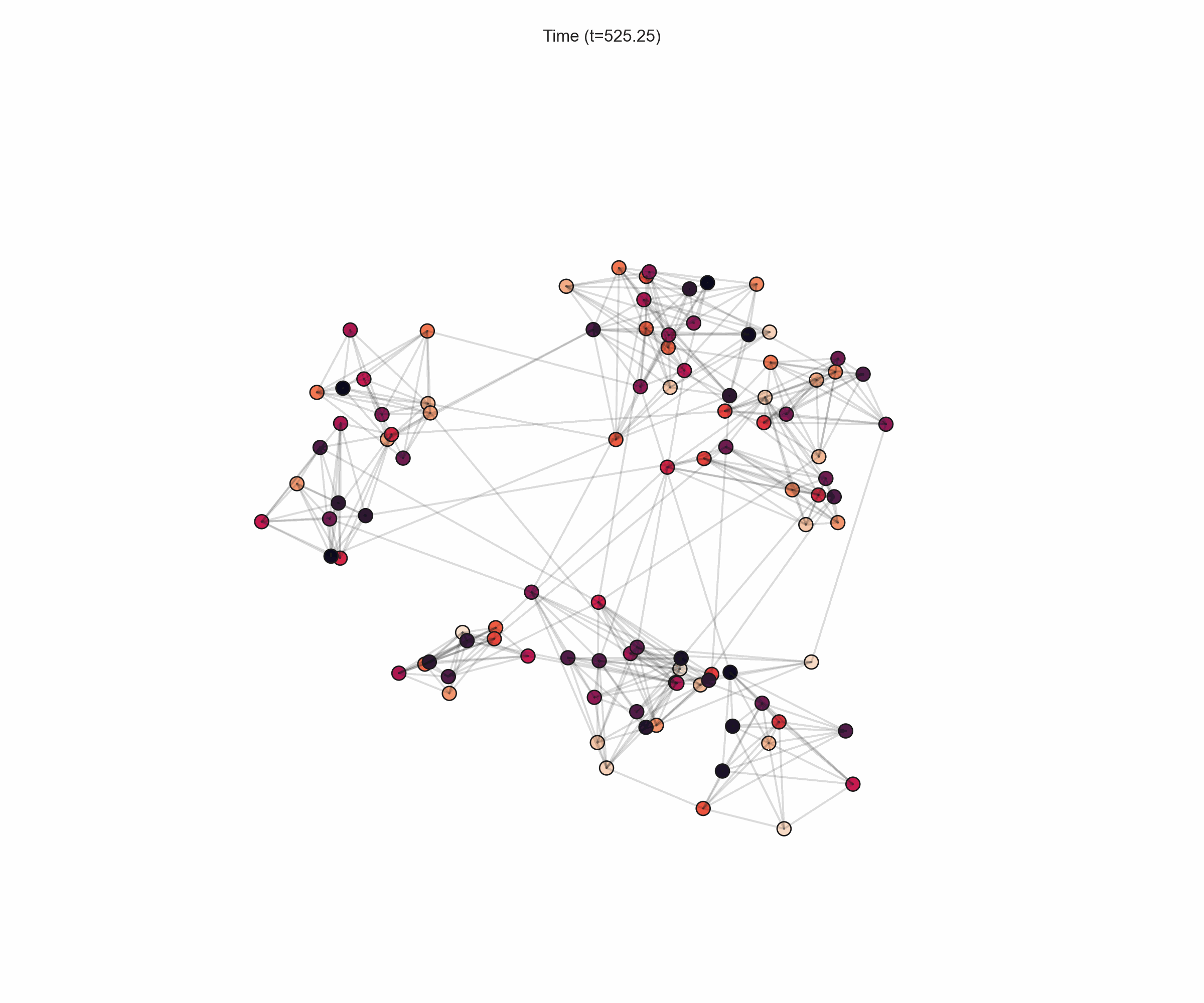}}
\hfill
\subfigure[$t=565$]{\includegraphics[trim={5cm 6cm 5cm 6cm},clip,width=0.16\textwidth]{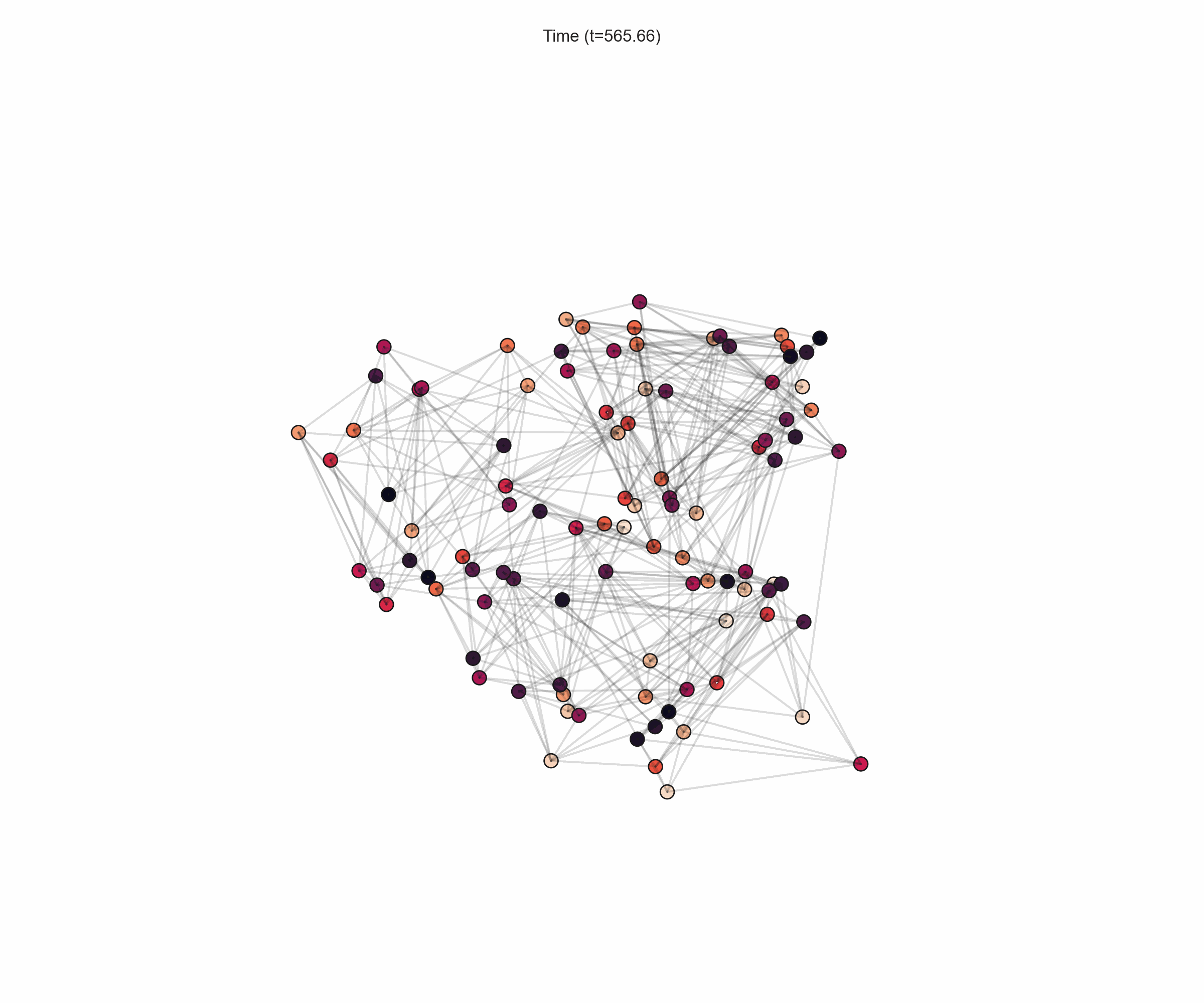}}
\hfill
\subfigure[$t=606$]{\includegraphics[trim={5cm 6cm 5cm 6cm},clip,width=0.16\textwidth]{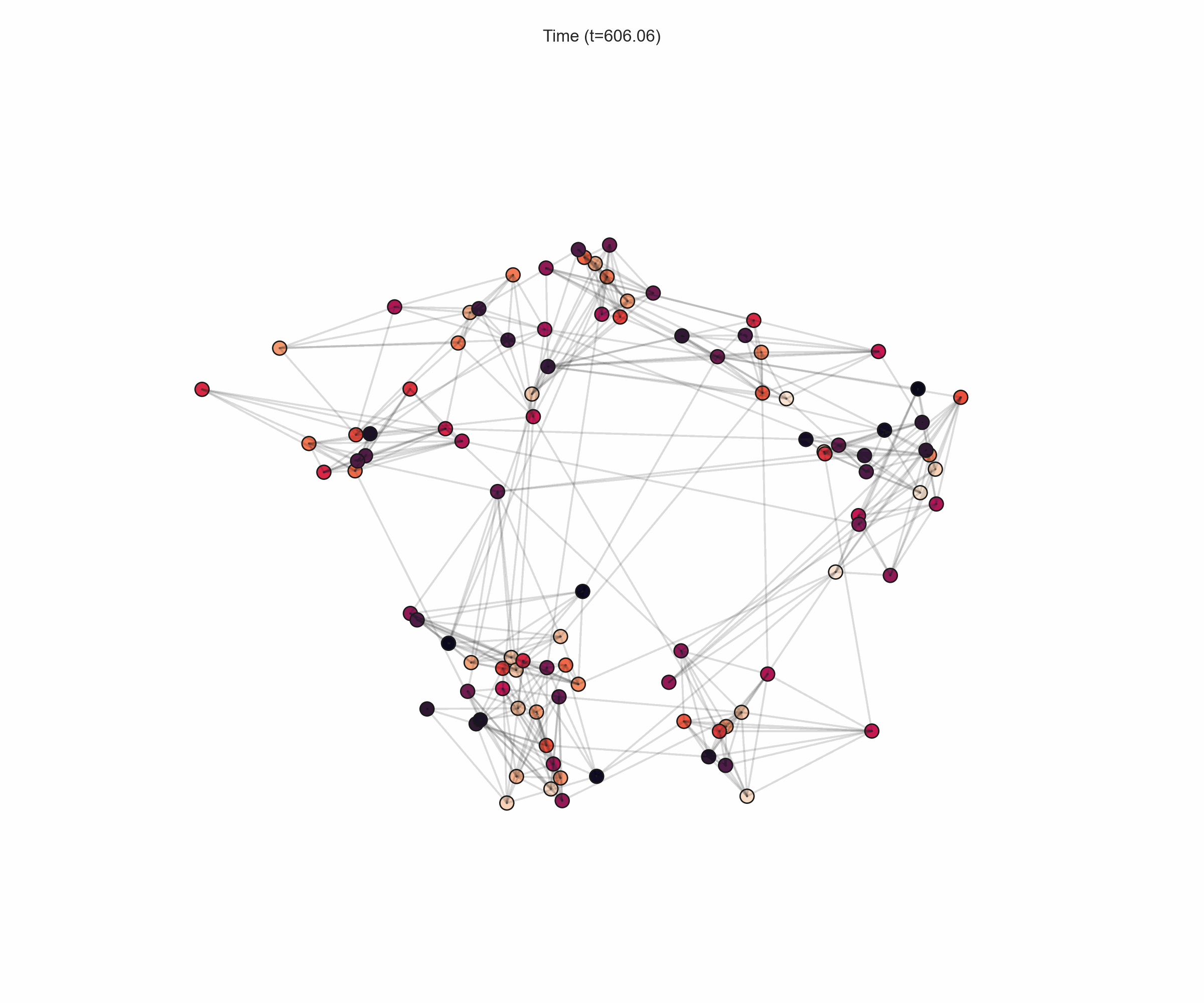}}
\hfill
\subfigure[$t=646$]{\includegraphics[trim={5cm 6cm 5cm 6cm},clip,width=0.16\textwidth]{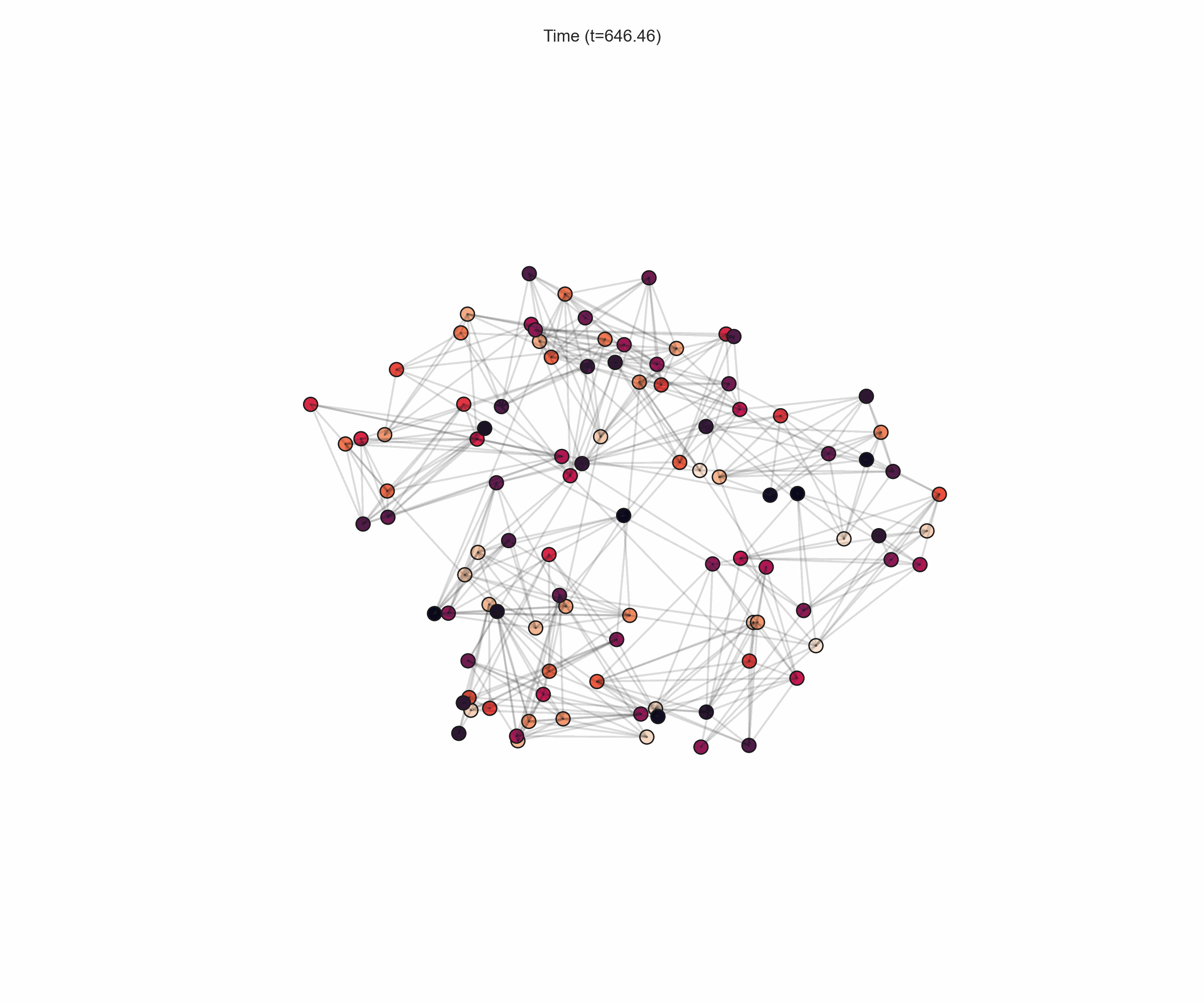}}
\hfill
\subfigure[$t=686$]{\includegraphics[trim={5cm 6cm 5cm 6cm},clip,width=0.16\textwidth]{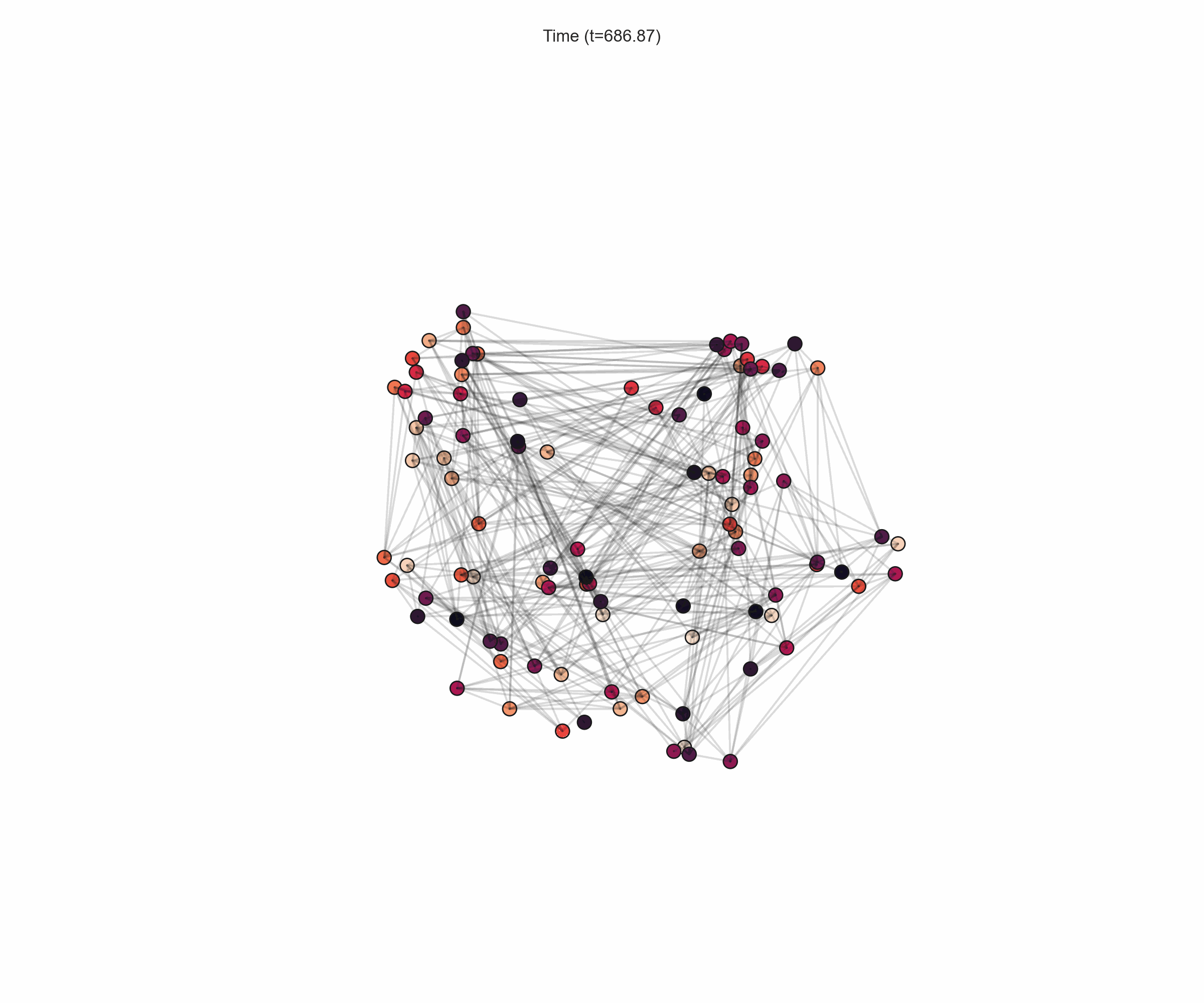}}
\hfill
\subfigure[$t=727$]{\includegraphics[trim={5cm 6cm 5cm 6cm},clip,width=0.16\textwidth]{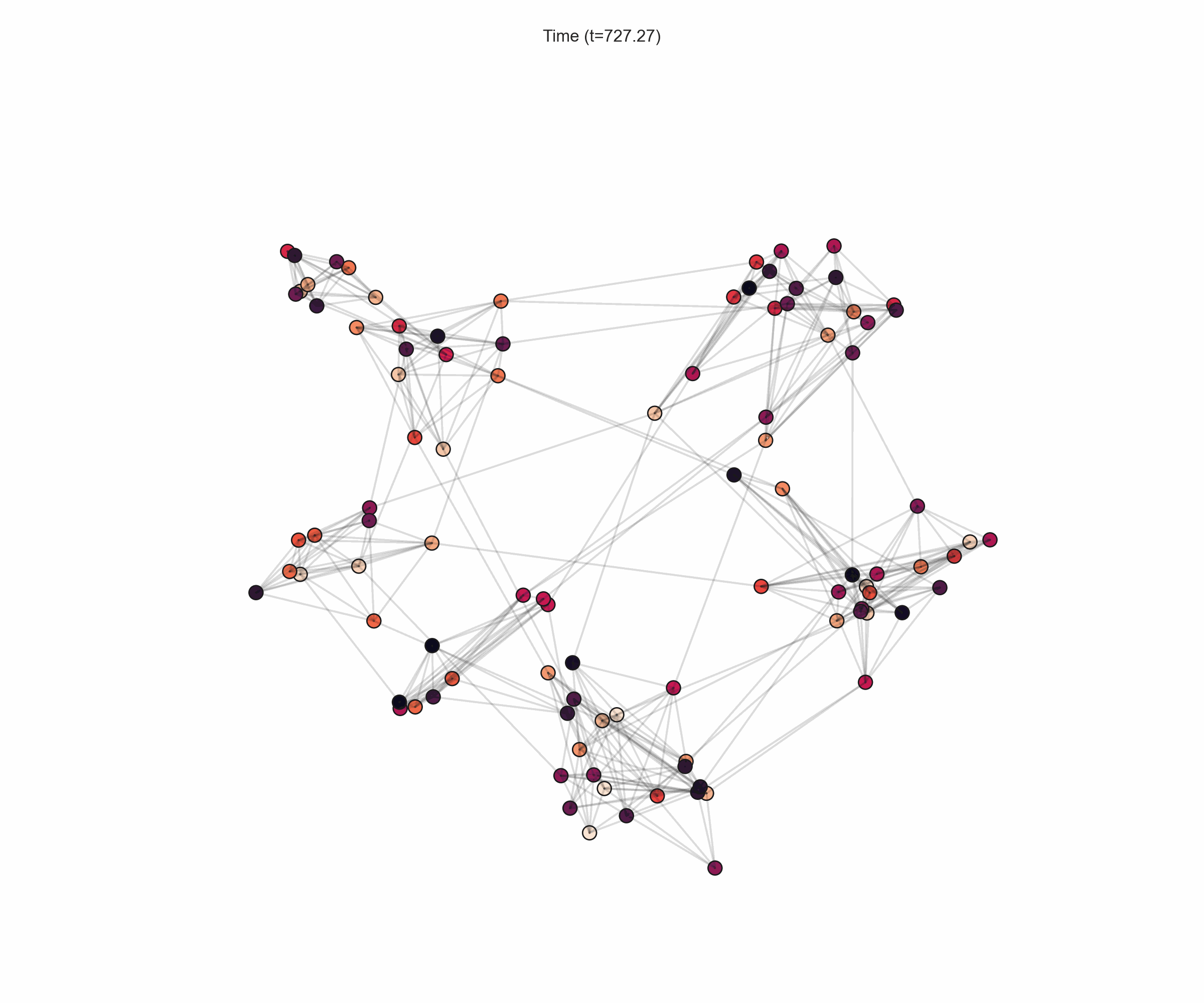}}
\caption{Snapshots of the continuous-time embeddings learned by \textsc{\modelname} for various time points over \textsl{Synthetic-$\beta$}.}\label{fig:appendix_visualization_synthetic_beta}
\end{figure*}
\begin{figure*}[!ht]
\centering
\subfigure[$t=78678$]{\includegraphics[trim={5cm 6cm 5cm 6cm},clip,width=0.16\textwidth]{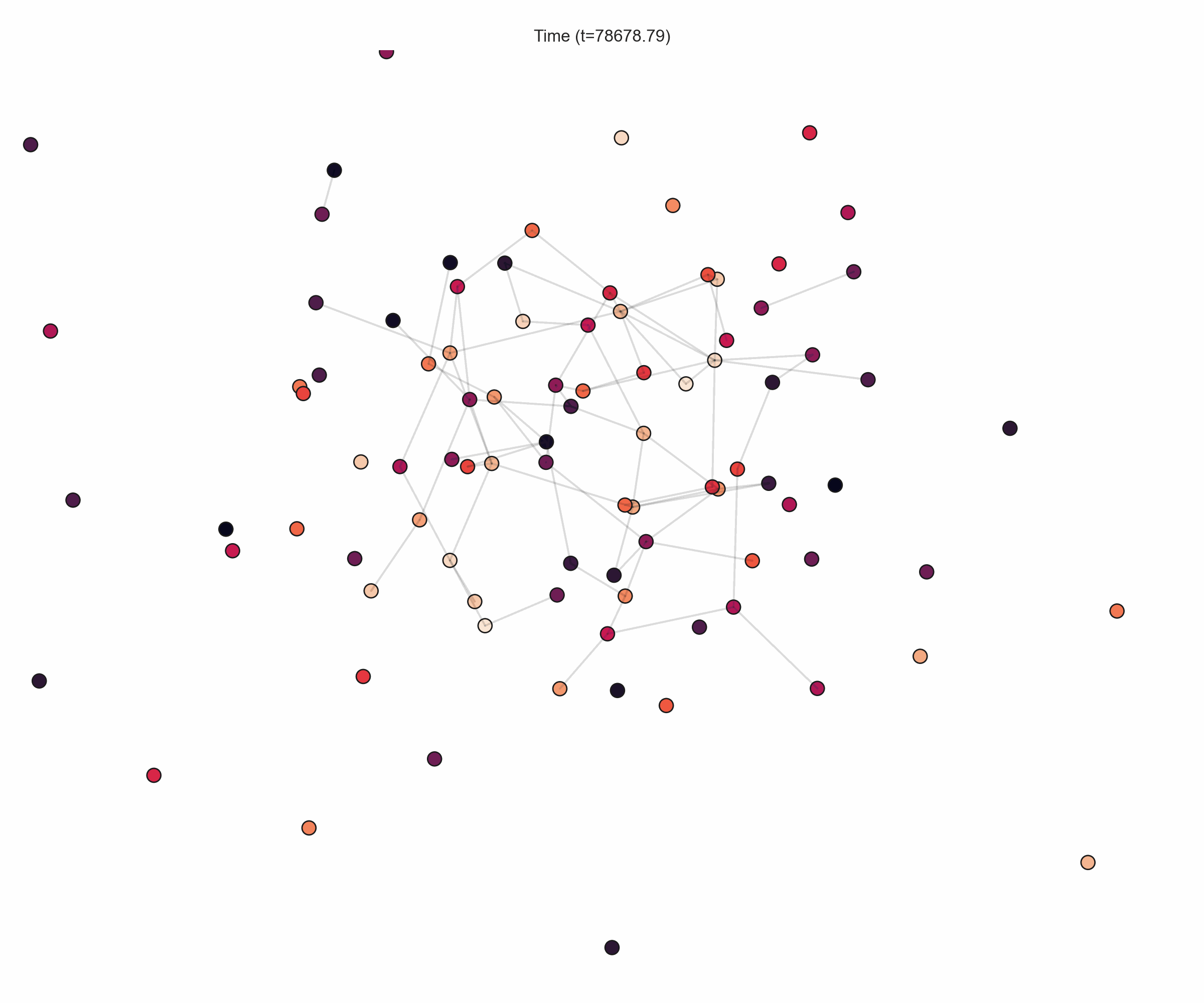}}
\hfill
\subfigure[$t=128557$]{\includegraphics[trim={5cm 6cm 5cm 6cm},clip,width=0.16\textwidth]{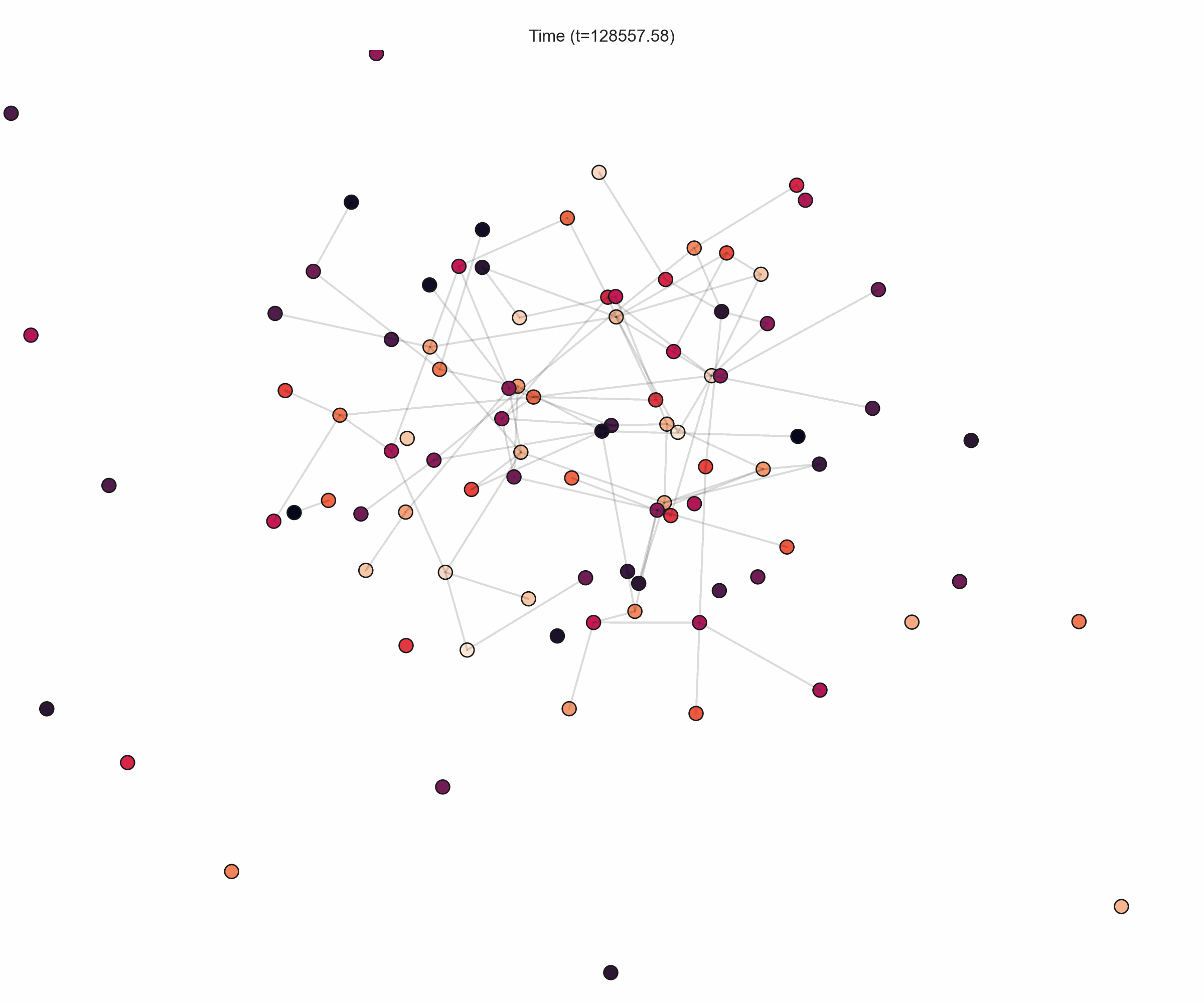}}
\hfill
\subfigure[$t=178436$]{\includegraphics[trim={5cm 6cm 5cm 6cm},clip,width=0.16\textwidth]{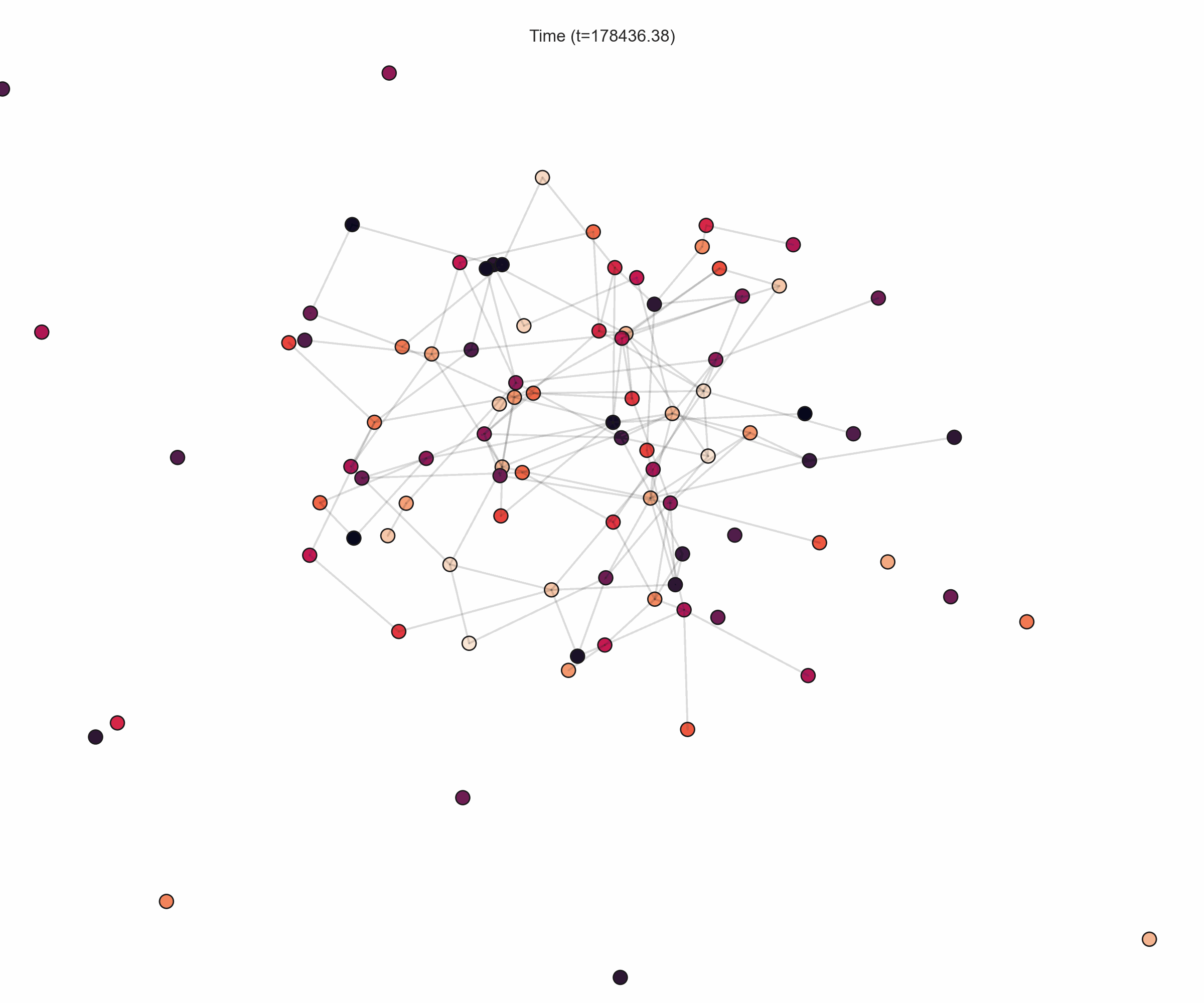}}
\hfill
\subfigure[$t=228315$]{\includegraphics[trim={5cm 6cm 5cm 6cm},clip,width=0.16\textwidth]{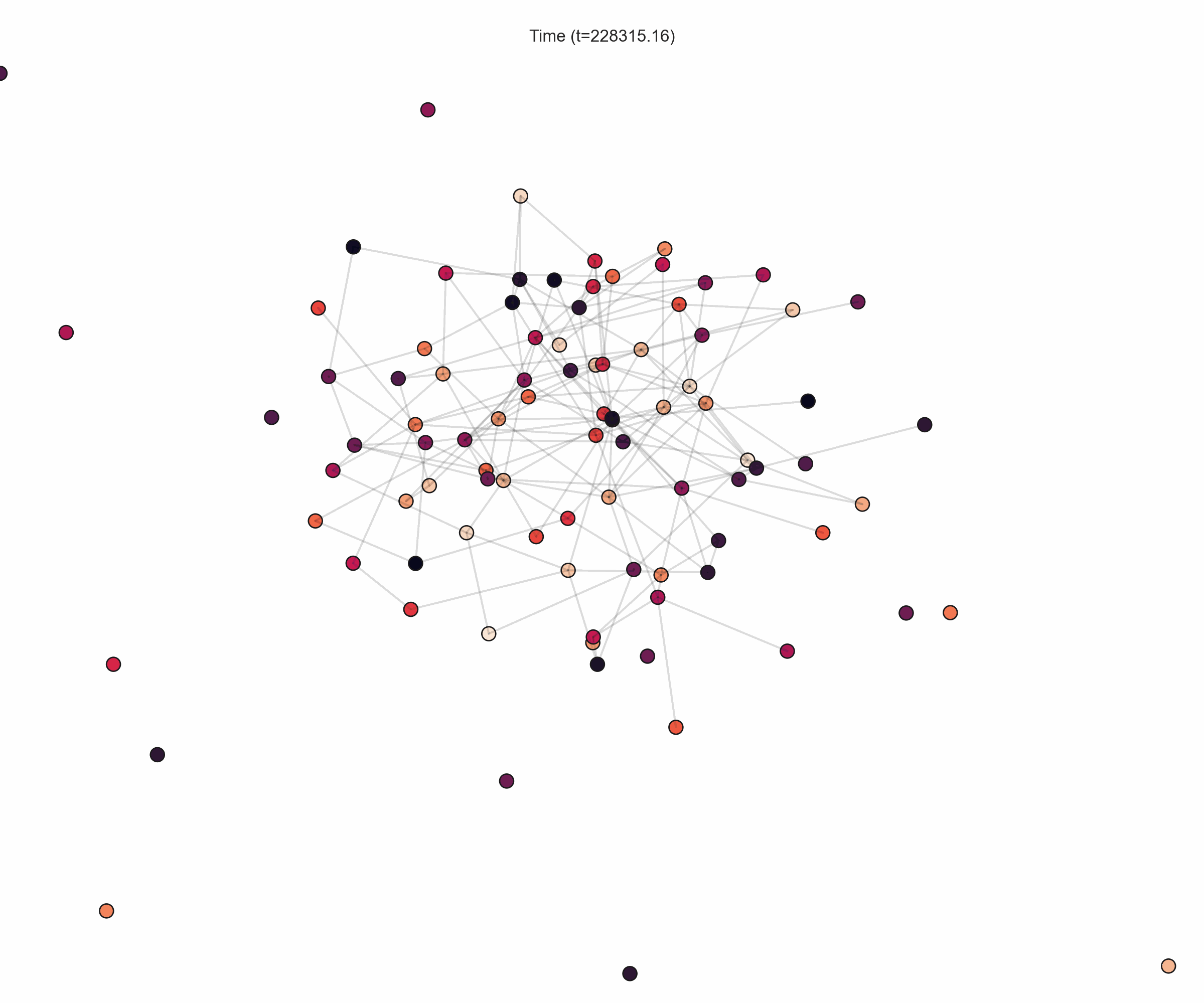}}
\hfill
\subfigure[$t=278193$]{\includegraphics[trim={5cm 6cm 5cm 6cm},clip,width=0.16\textwidth]{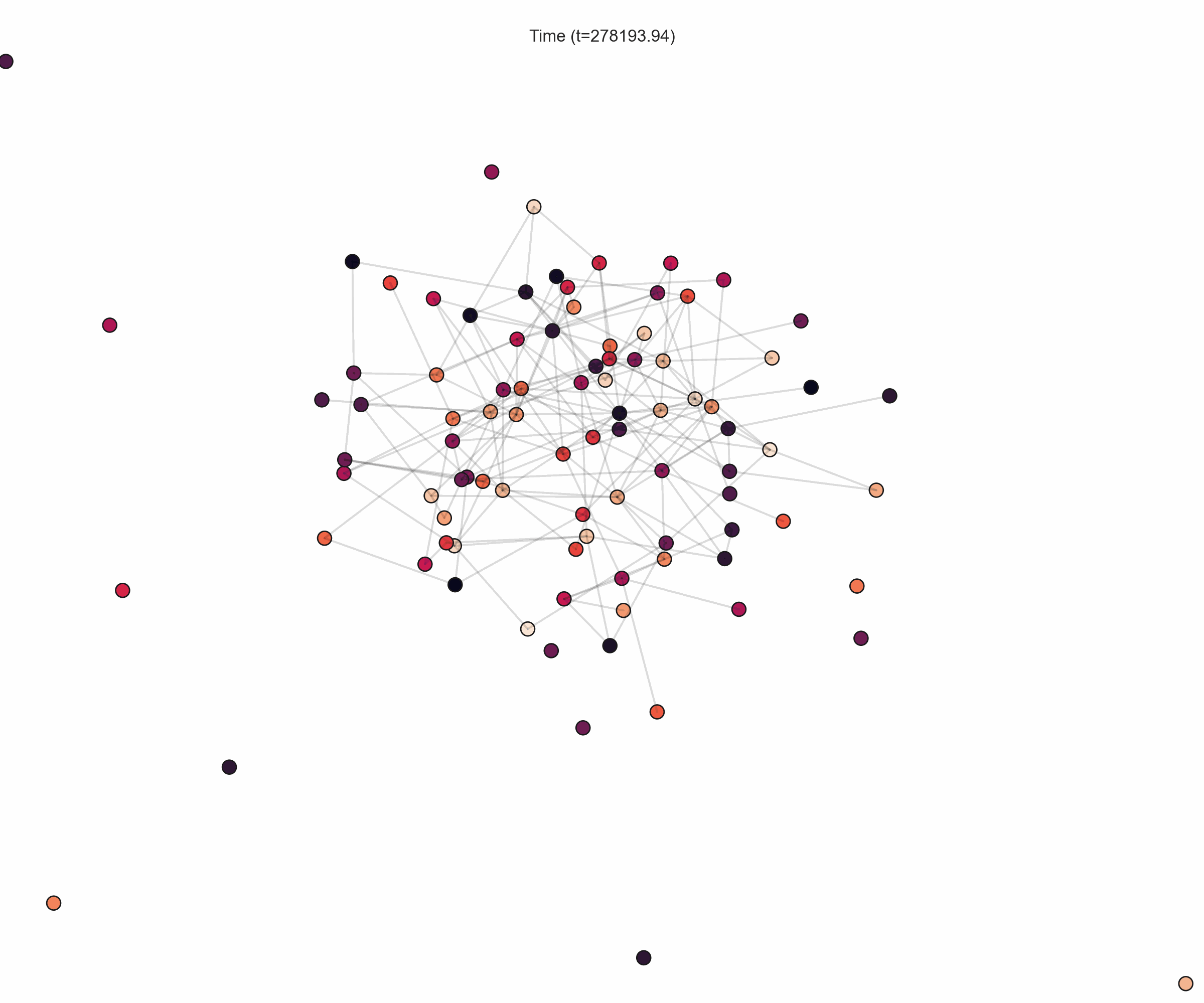}}
\hfill
\subfigure[$t=328072$]{\includegraphics[trim={5cm 6cm 5cm 6cm},clip,width=0.16\textwidth]{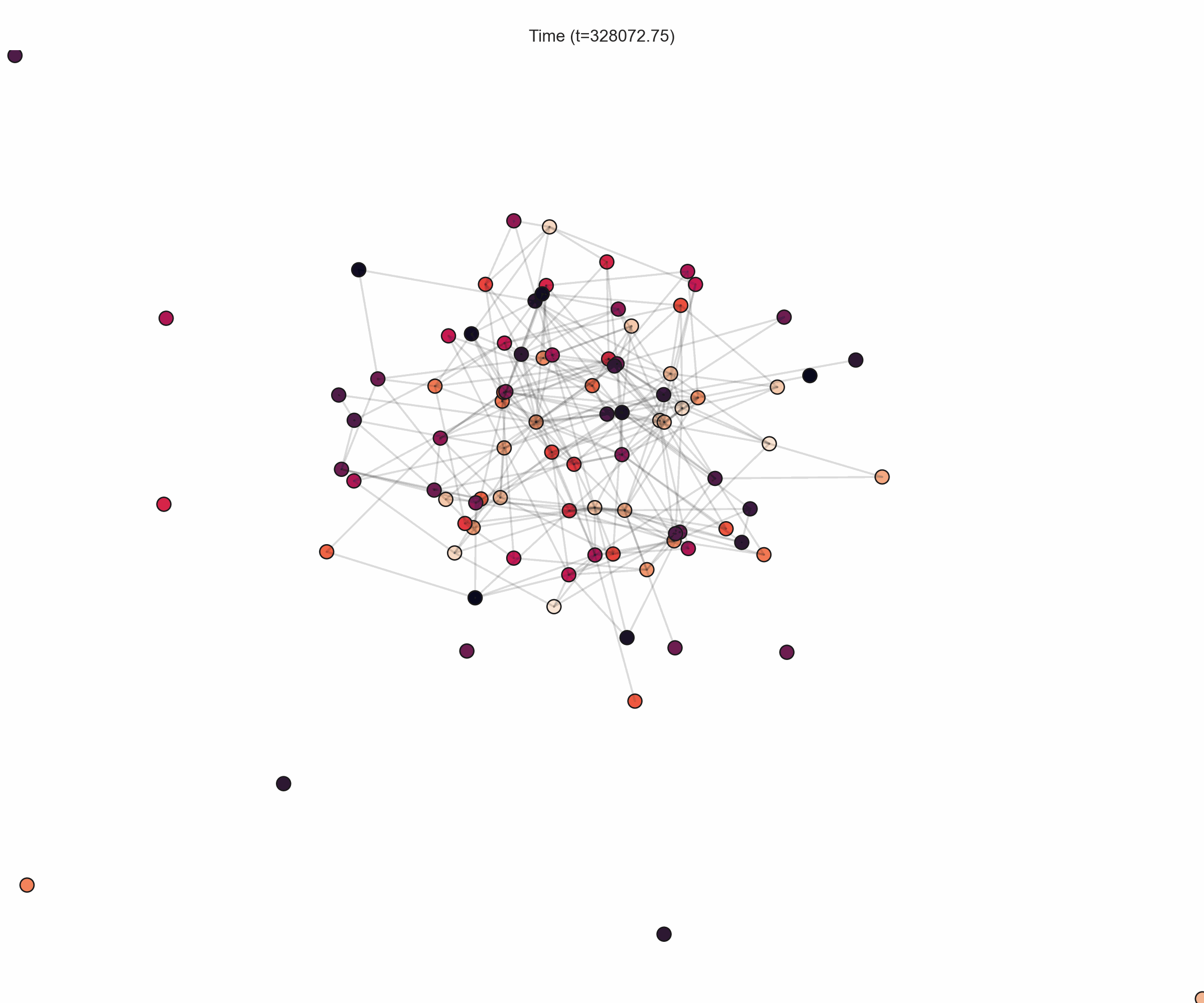}}
\subfigure[$t=377951$]{\includegraphics[trim={5cm 6cm 5cm 6cm},clip,width=0.16\textwidth]{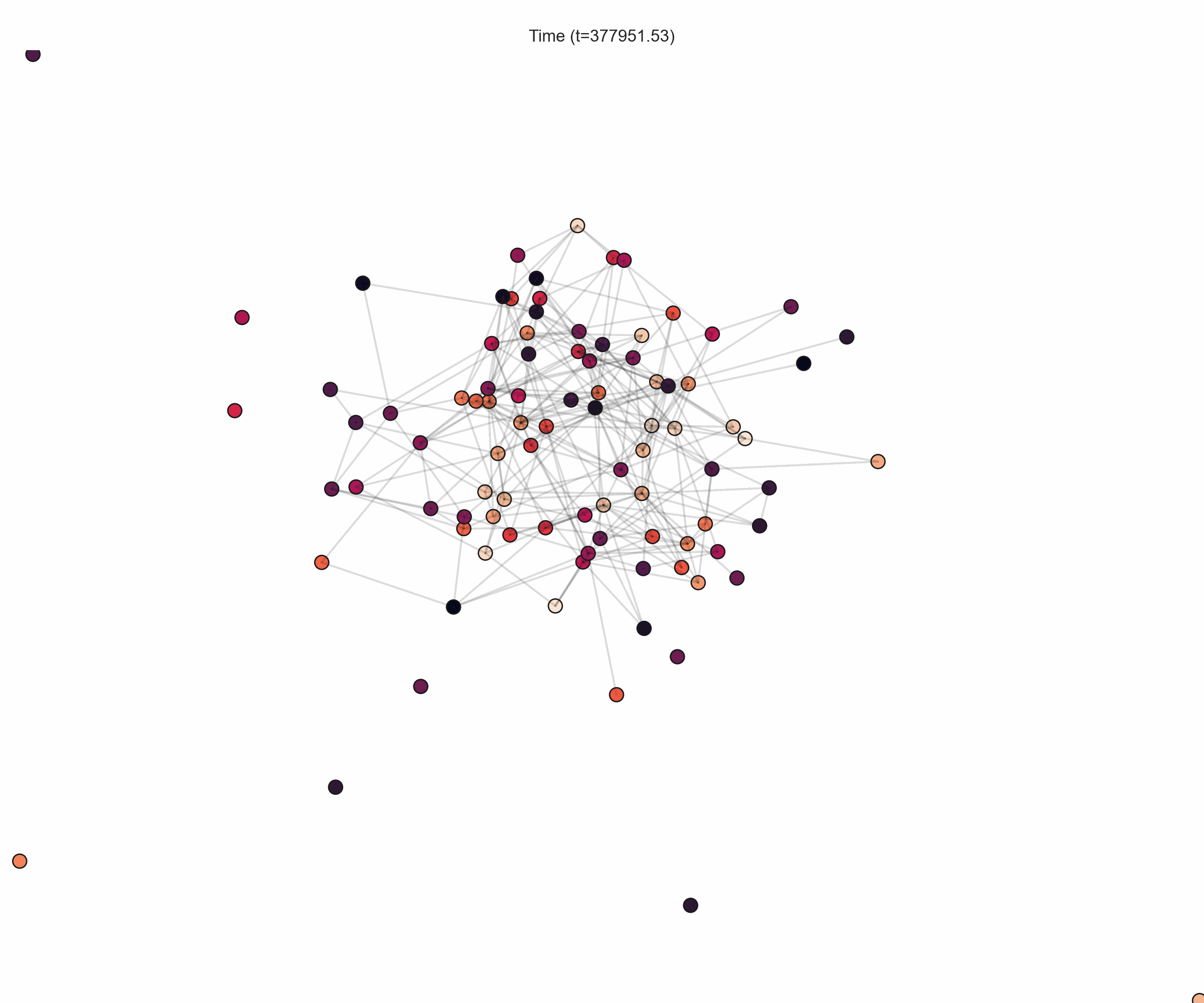}}
\hfill
\subfigure[$t=427830$]{\includegraphics[trim={5cm 6cm 5cm 6cm},clip,width=0.16\textwidth]{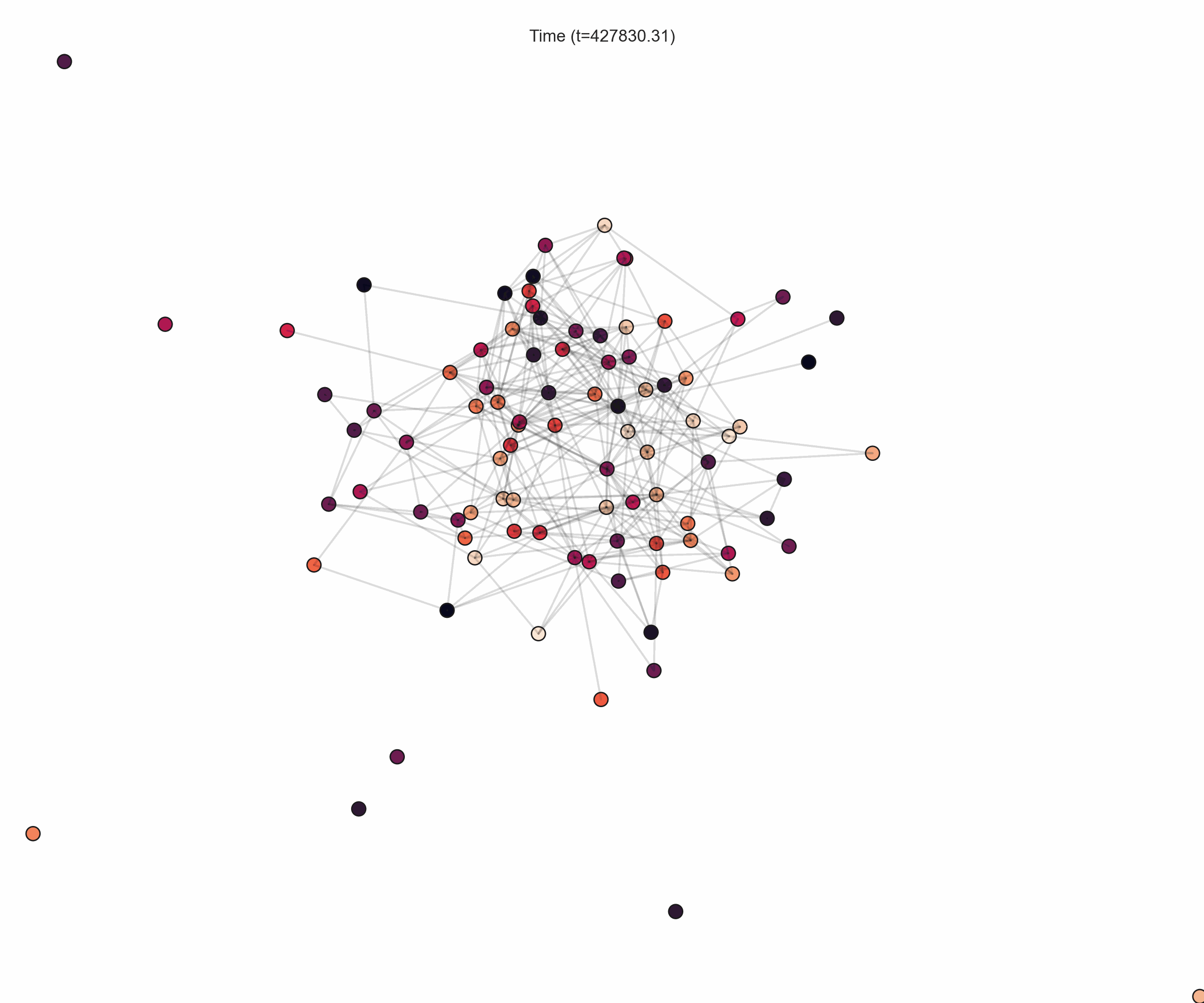}}
\hfill
\subfigure[$t=477709$]{\includegraphics[trim={5cm 6cm 5cm 6cm},clip,width=0.16\textwidth]{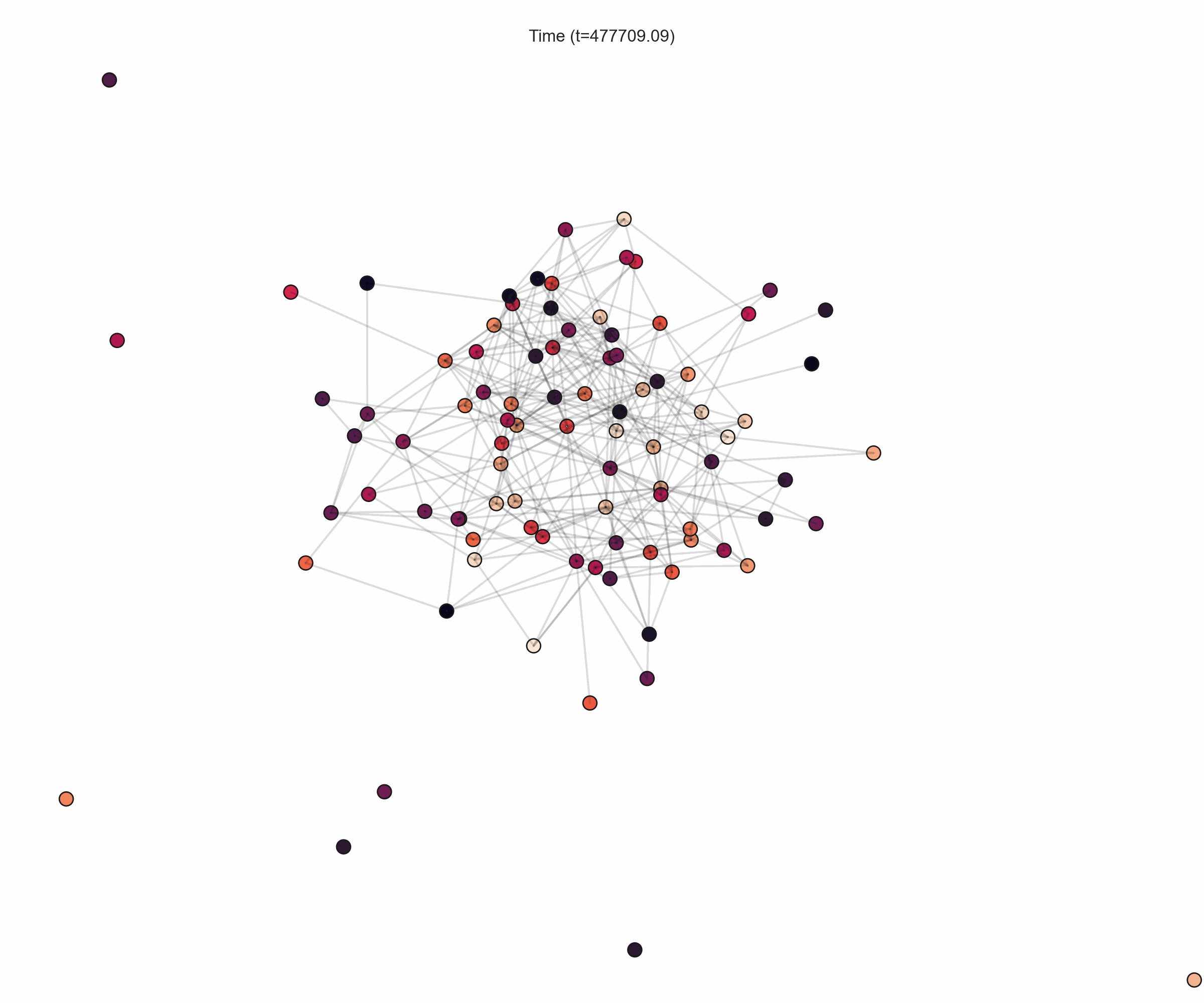}}
\hfill
\subfigure[$t=527587$]{\includegraphics[trim={5cm 6cm 5cm 6cm},clip,width=0.16\textwidth]{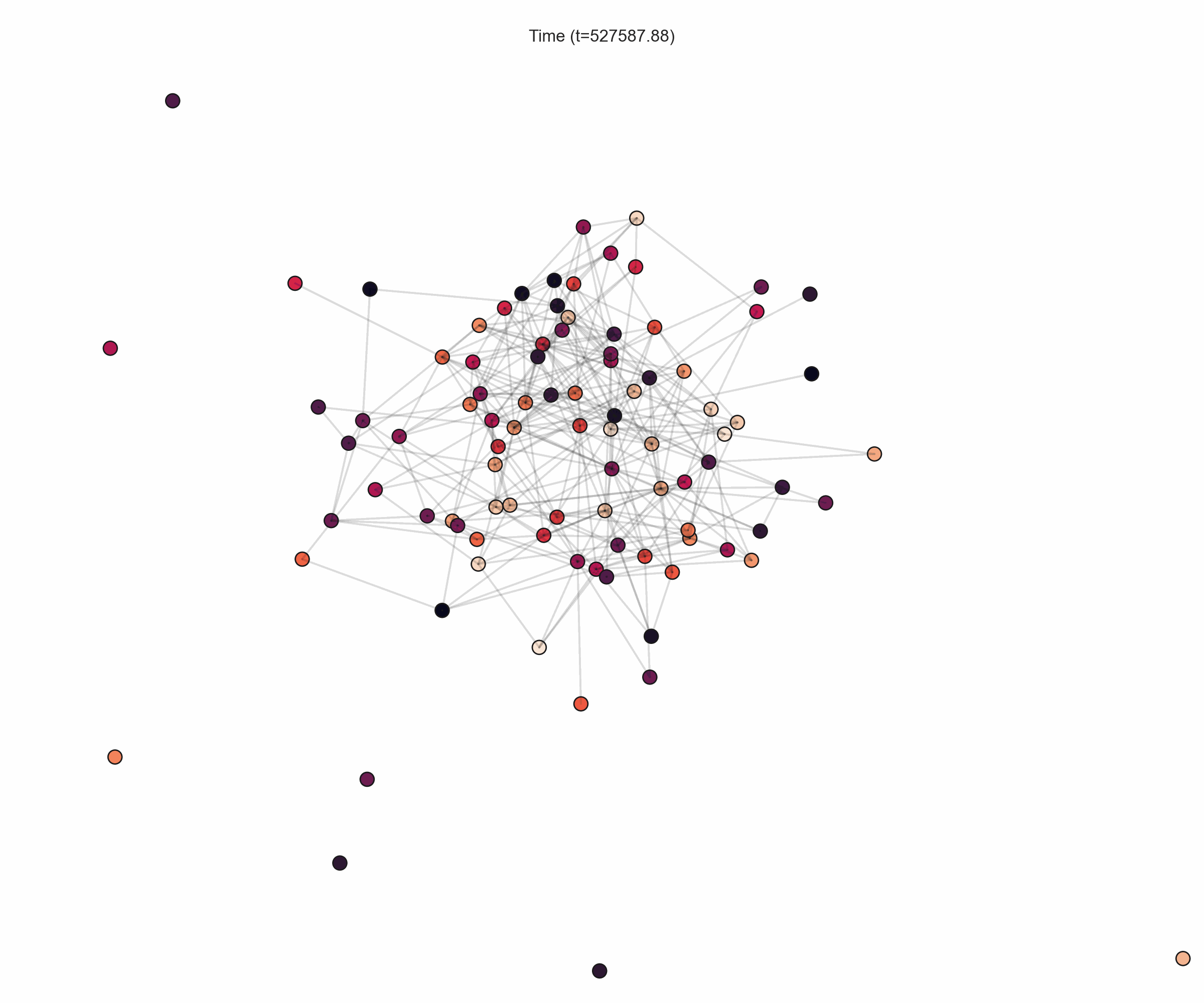}}
\hfill
\subfigure[$t=577466$]{\includegraphics[trim={5cm 6cm 5cm 6cm},clip,width=0.16\textwidth]{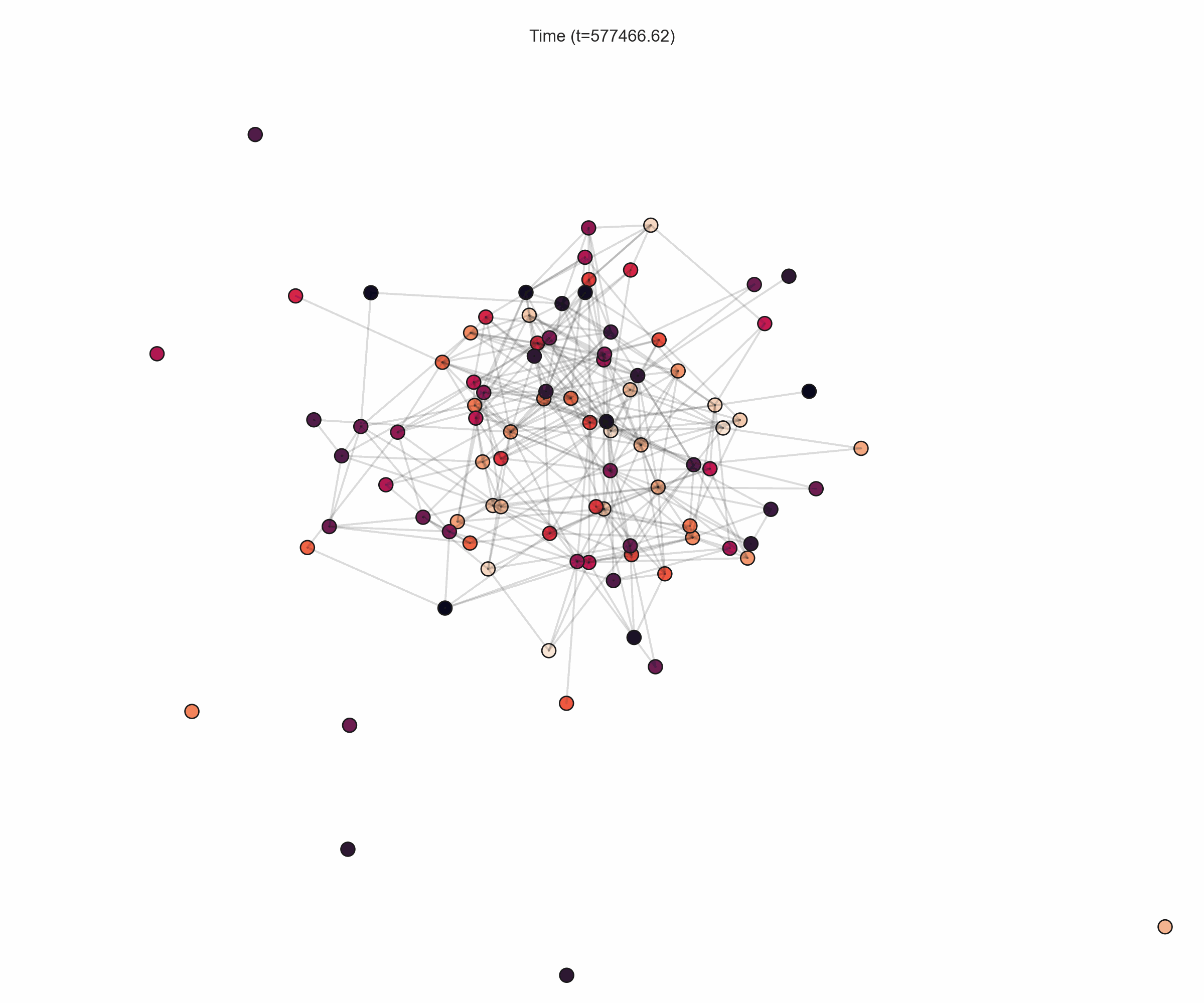}}
\hfill
\subfigure[$t=627345$]{\includegraphics[trim={5cm 6cm 5cm 6cm},clip,width=0.16\textwidth]{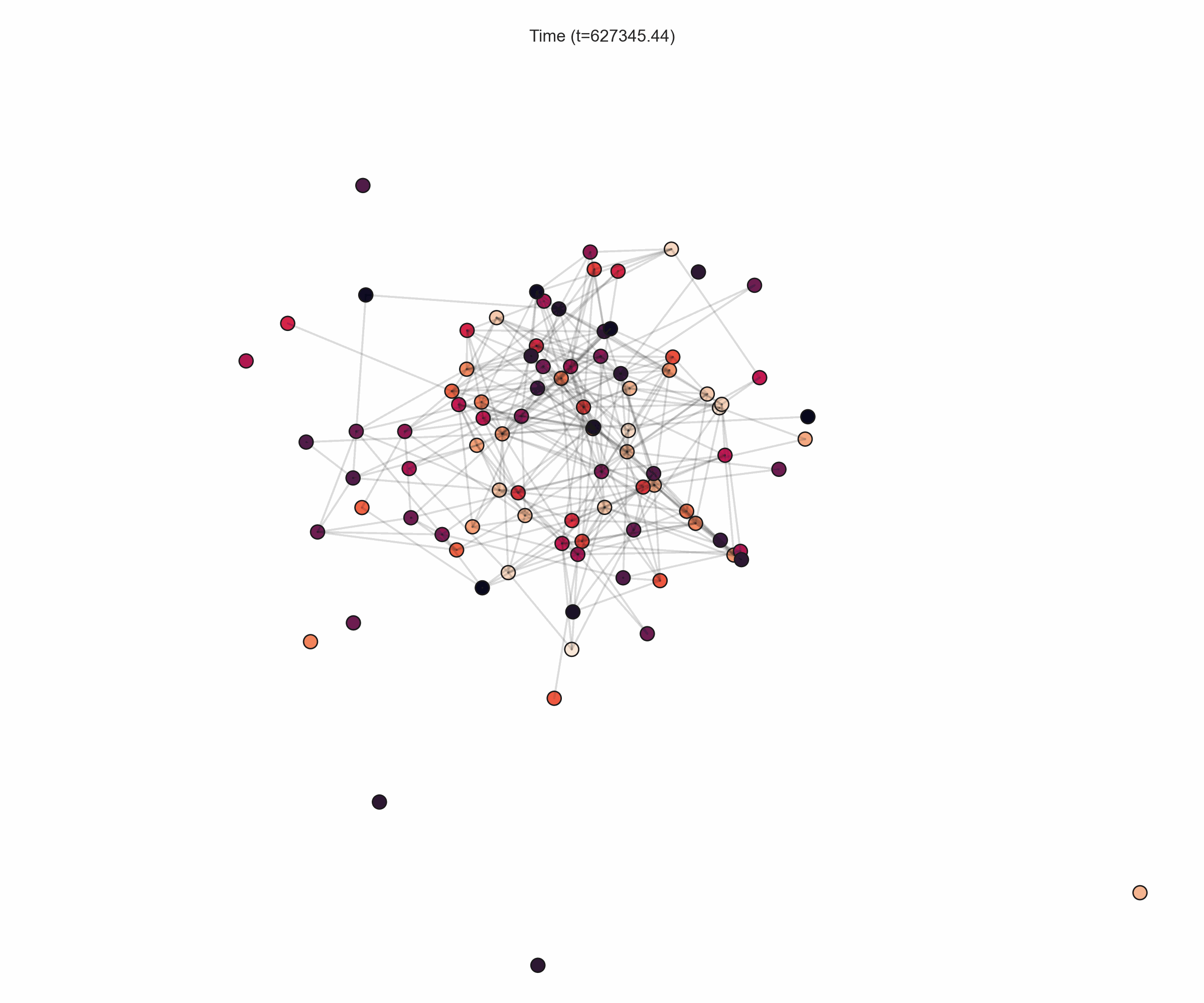}}
\subfigure[$t=677224$]{\includegraphics[trim={5cm 6cm 5cm 6cm},clip,width=0.16\textwidth]{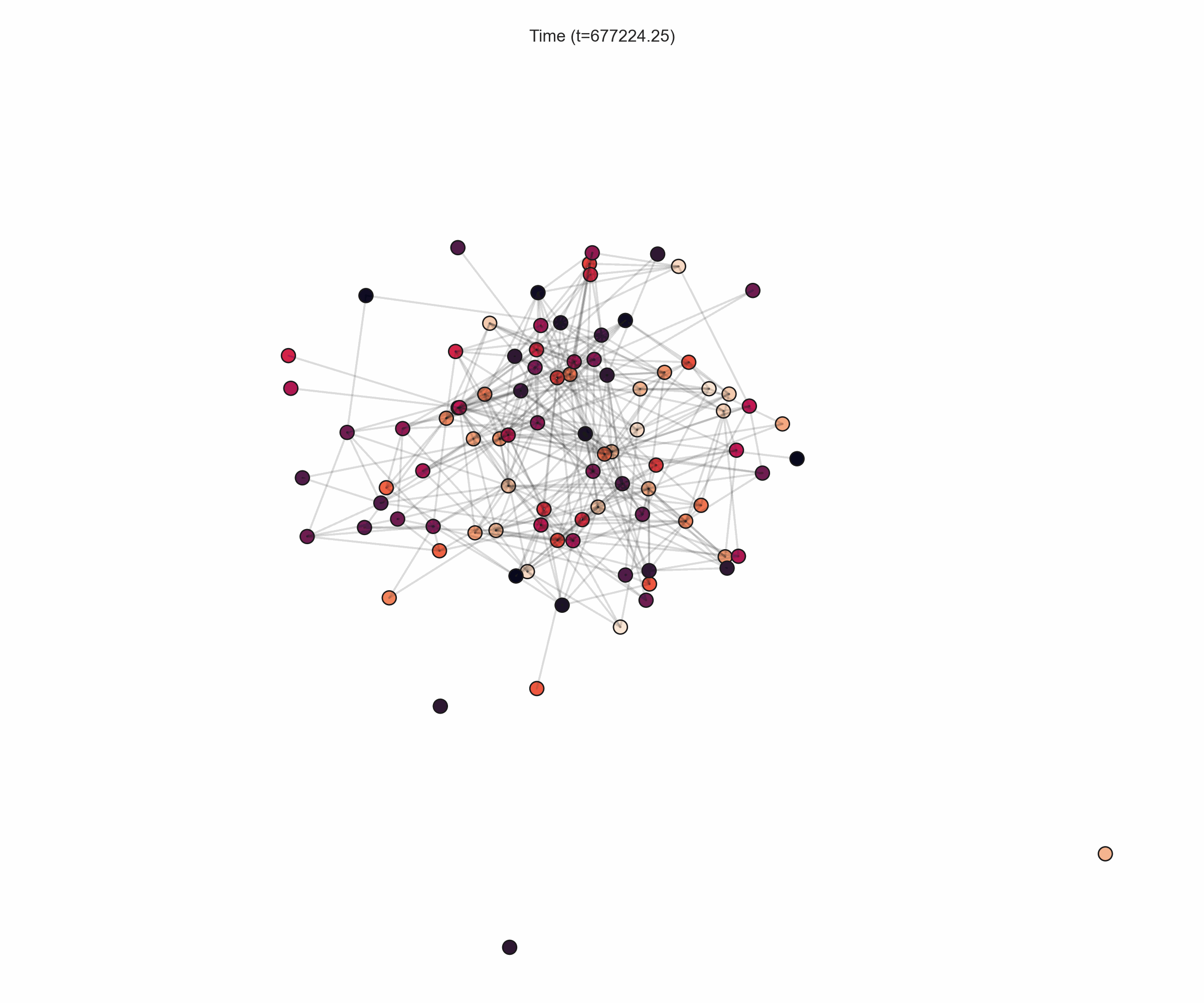}}
\hfill
\subfigure[$t=727103$]{\includegraphics[trim={5cm 6cm 5cm 6cm},clip,width=0.16\textwidth]{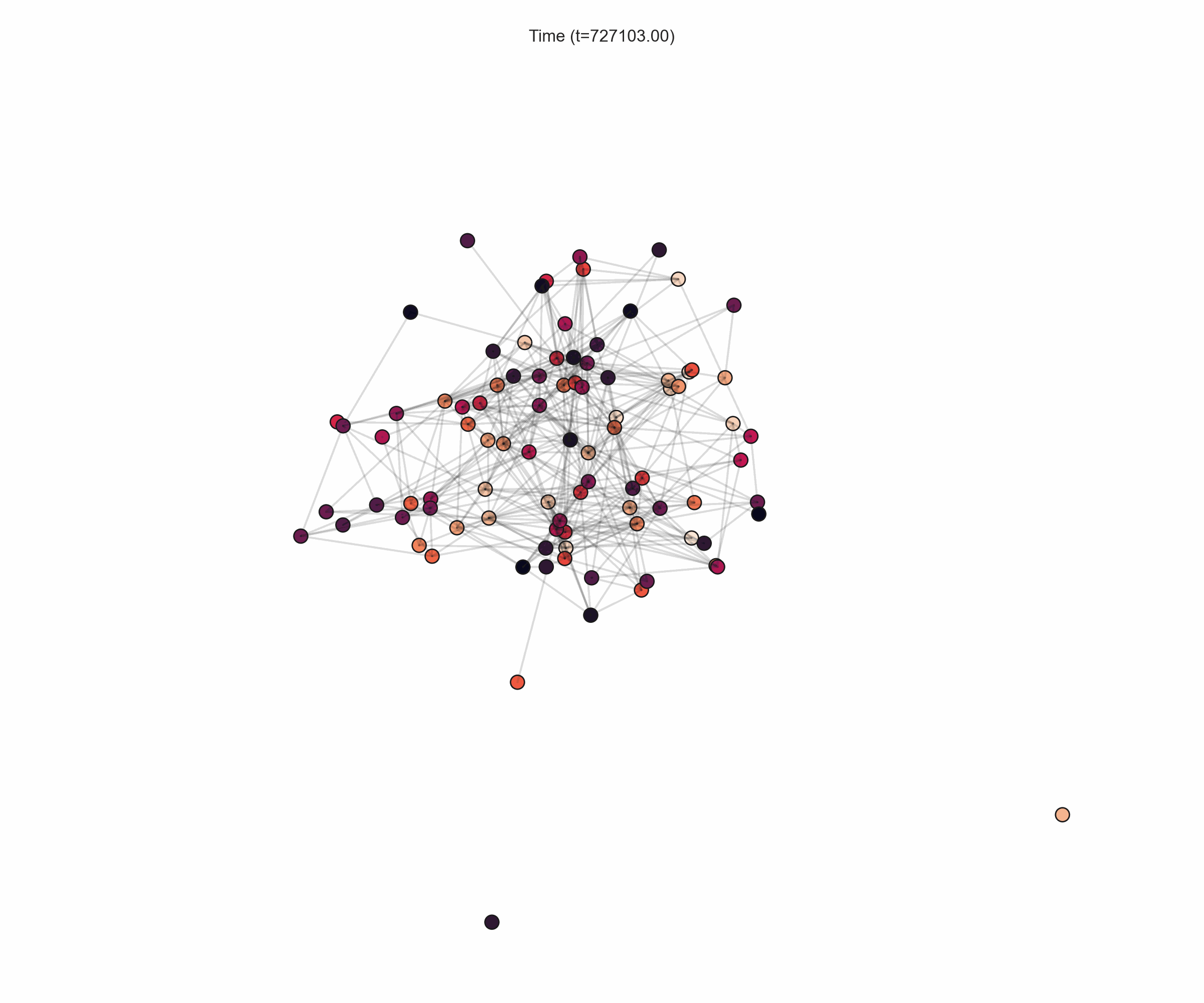}}
\hfill
\subfigure[$t=776981$]{\includegraphics[trim={5cm 6cm 5cm 6cm},clip,width=0.16\textwidth]{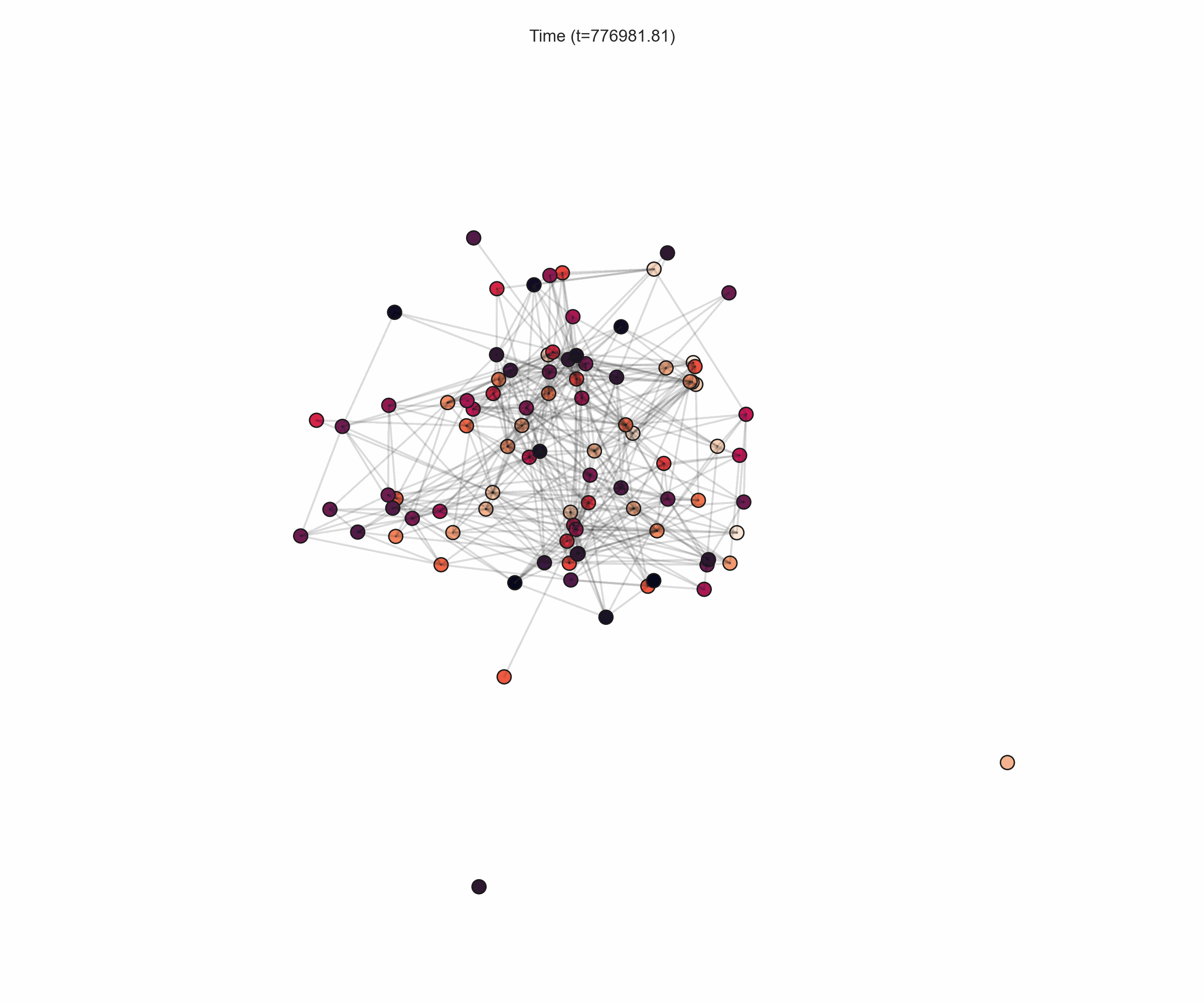}}
\hfill
\subfigure[$t=826860$]{\includegraphics[trim={5cm 6cm 5cm 6cm},clip,width=0.16\textwidth]{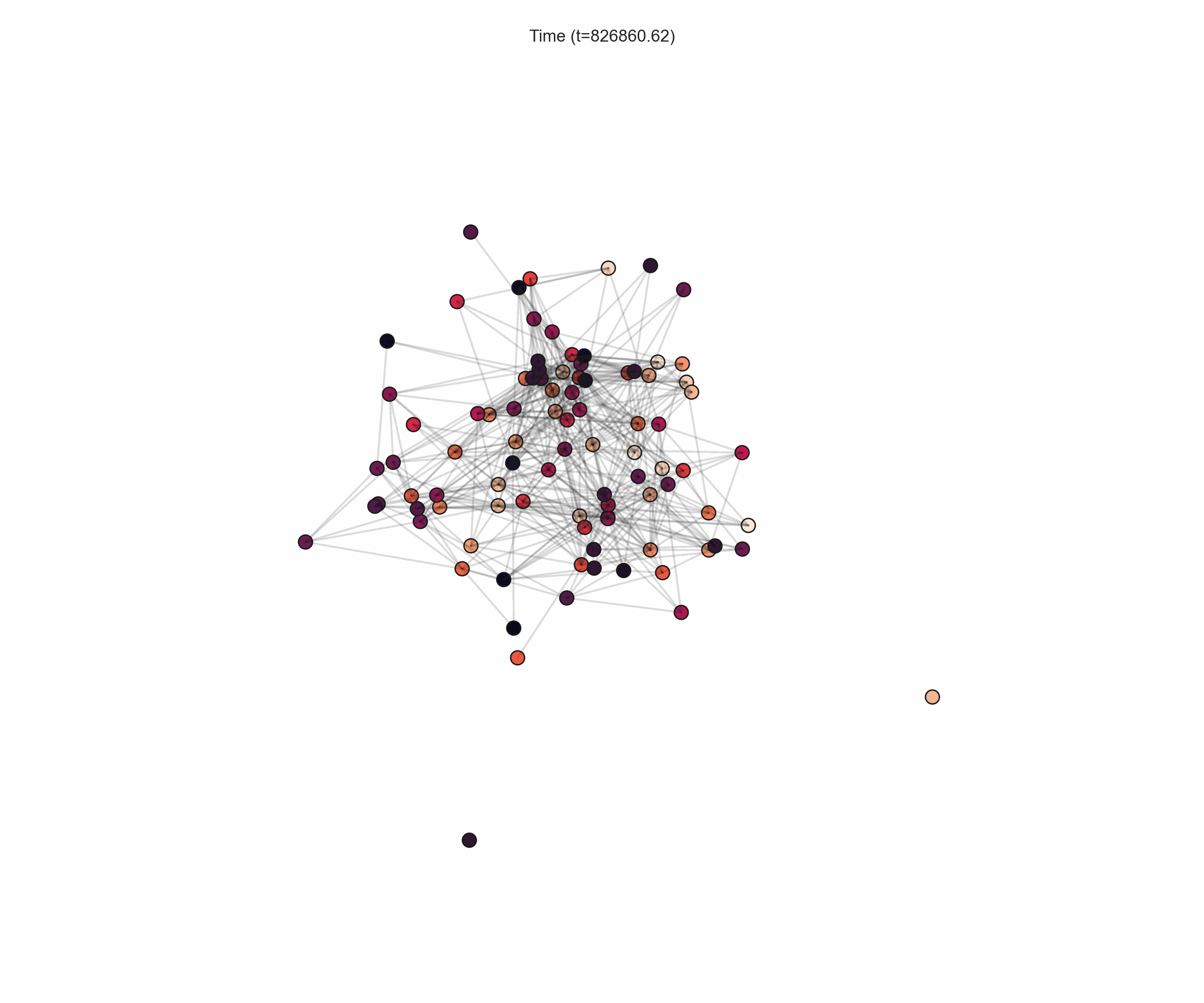}}
\hfill
\subfigure[$t=876739$]{\includegraphics[trim={5cm 6cm 5cm 6cm},clip,width=0.16\textwidth]{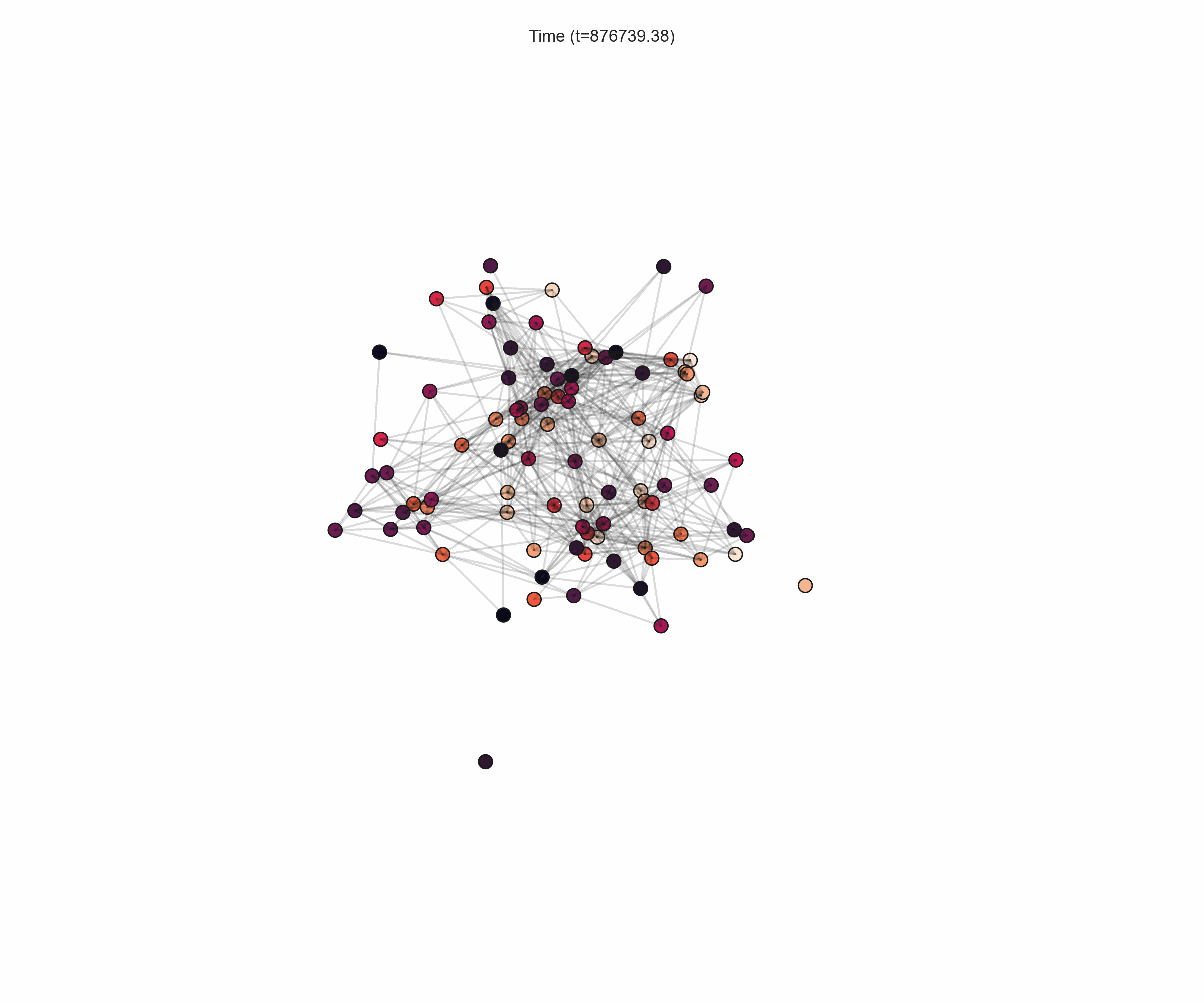}}
\hfill
\subfigure[$t=926618$]{\includegraphics[trim={5cm 6cm 5cm 6cm},clip,width=0.16\textwidth]{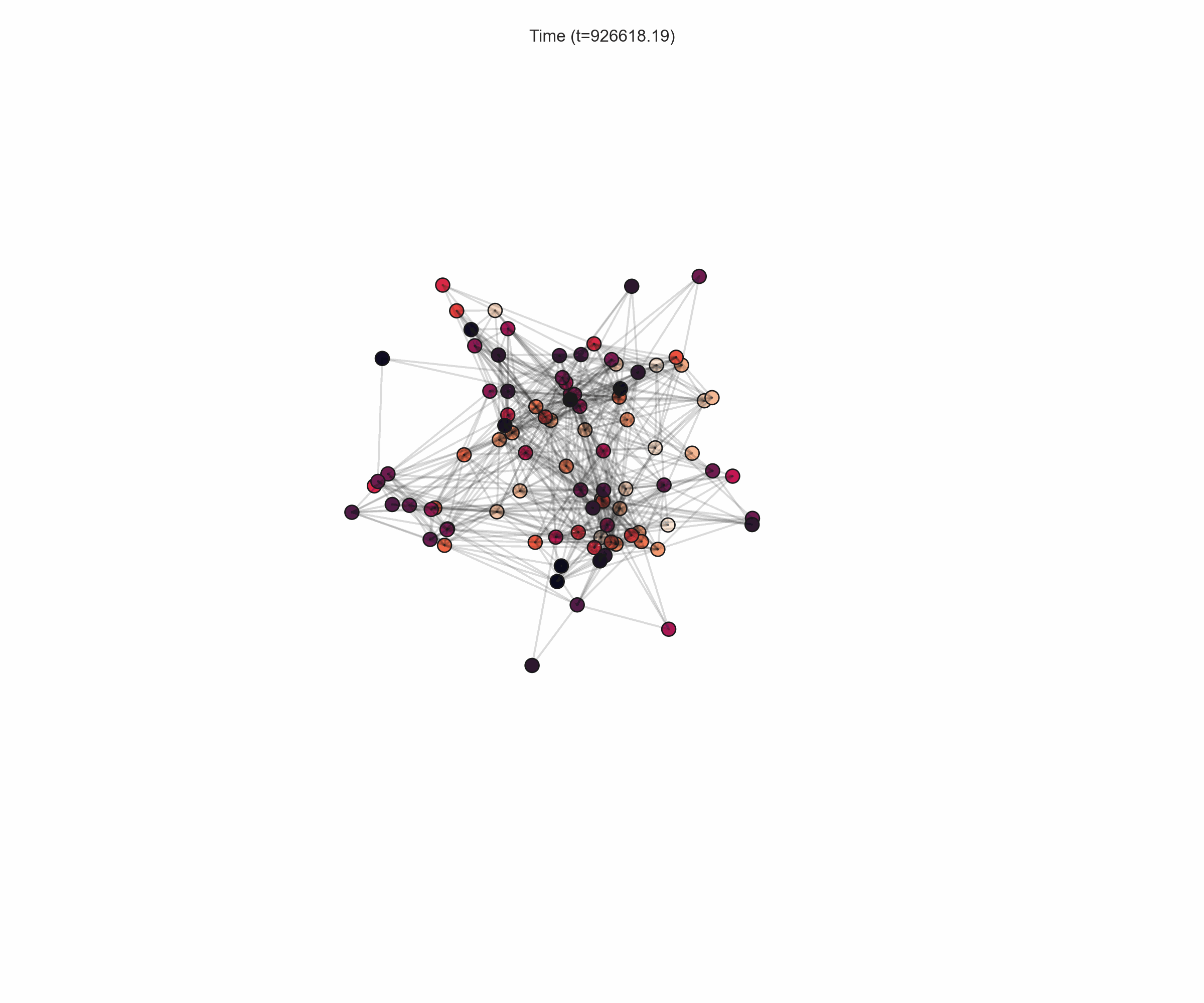}}
\caption{Snapshots of the continuous-time embeddings learned by \textsc{\modelname} for various time points over \textsl{Contacts}.}\label{fig:appendix_visualization_contacts}
\end{figure*}
\begin{figure*}[!ht]
\centering
\subfigure[$t=10719$]{\includegraphics[trim={5cm 6cm 5cm 6cm},clip,width=0.16\textwidth]{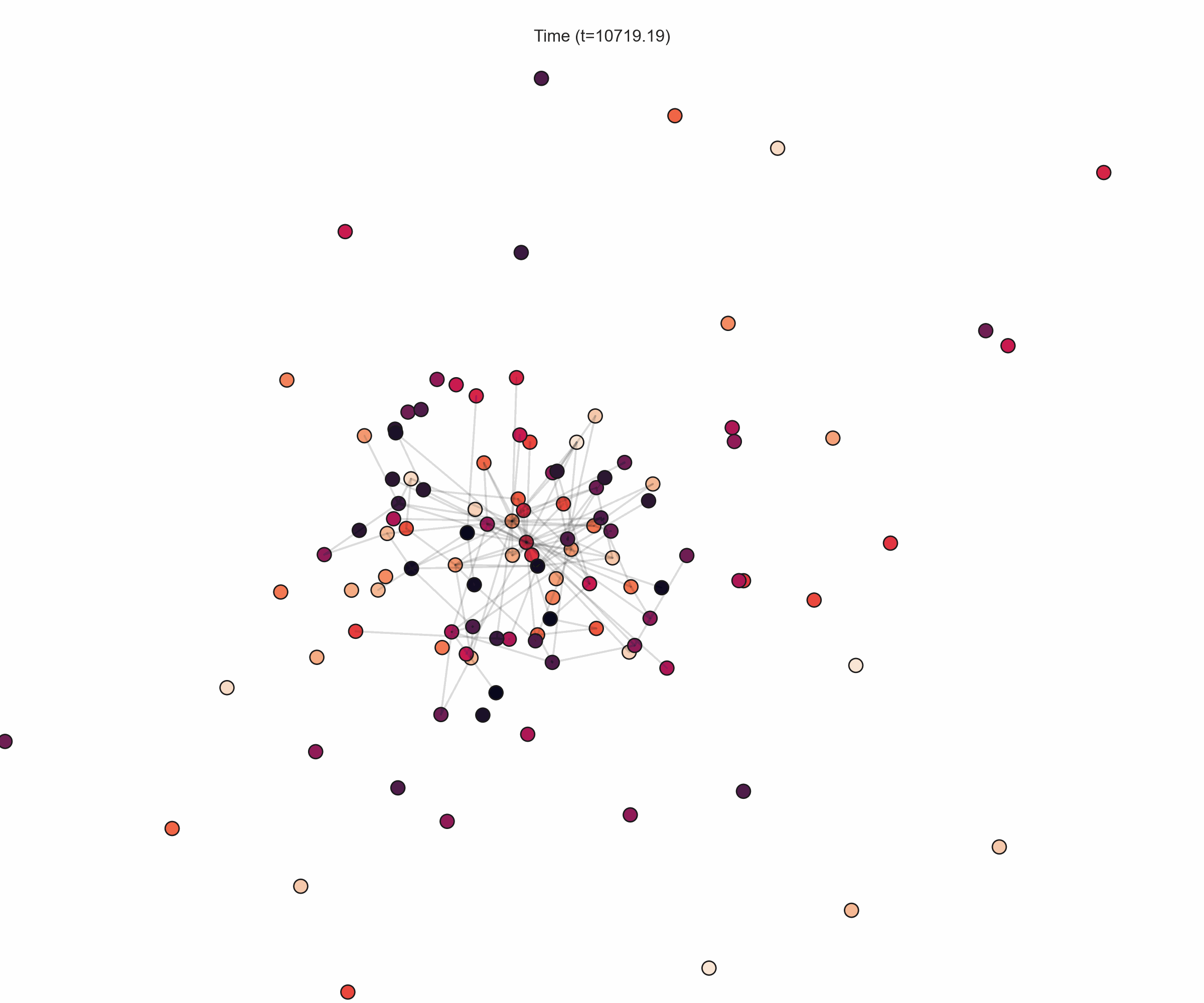}}
\hfill
\subfigure[$t=21438$]{\includegraphics[trim={5cm 6cm 5cm 6cm},clip,width=0.16\textwidth]{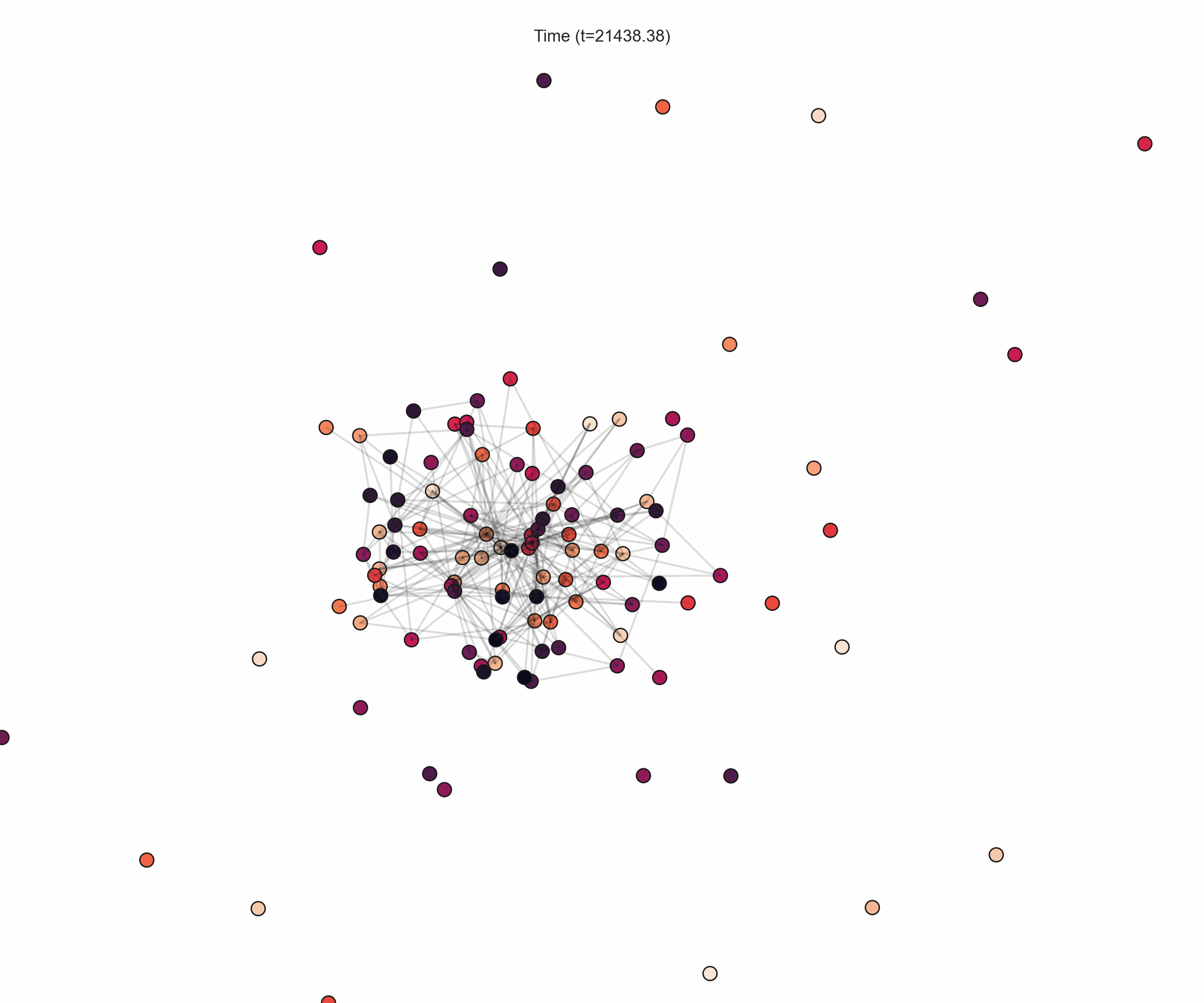}}
\hfill
\subfigure[$t=32157$]{\includegraphics[trim={5cm 6cm 5cm 6cm},clip,width=0.16\textwidth]{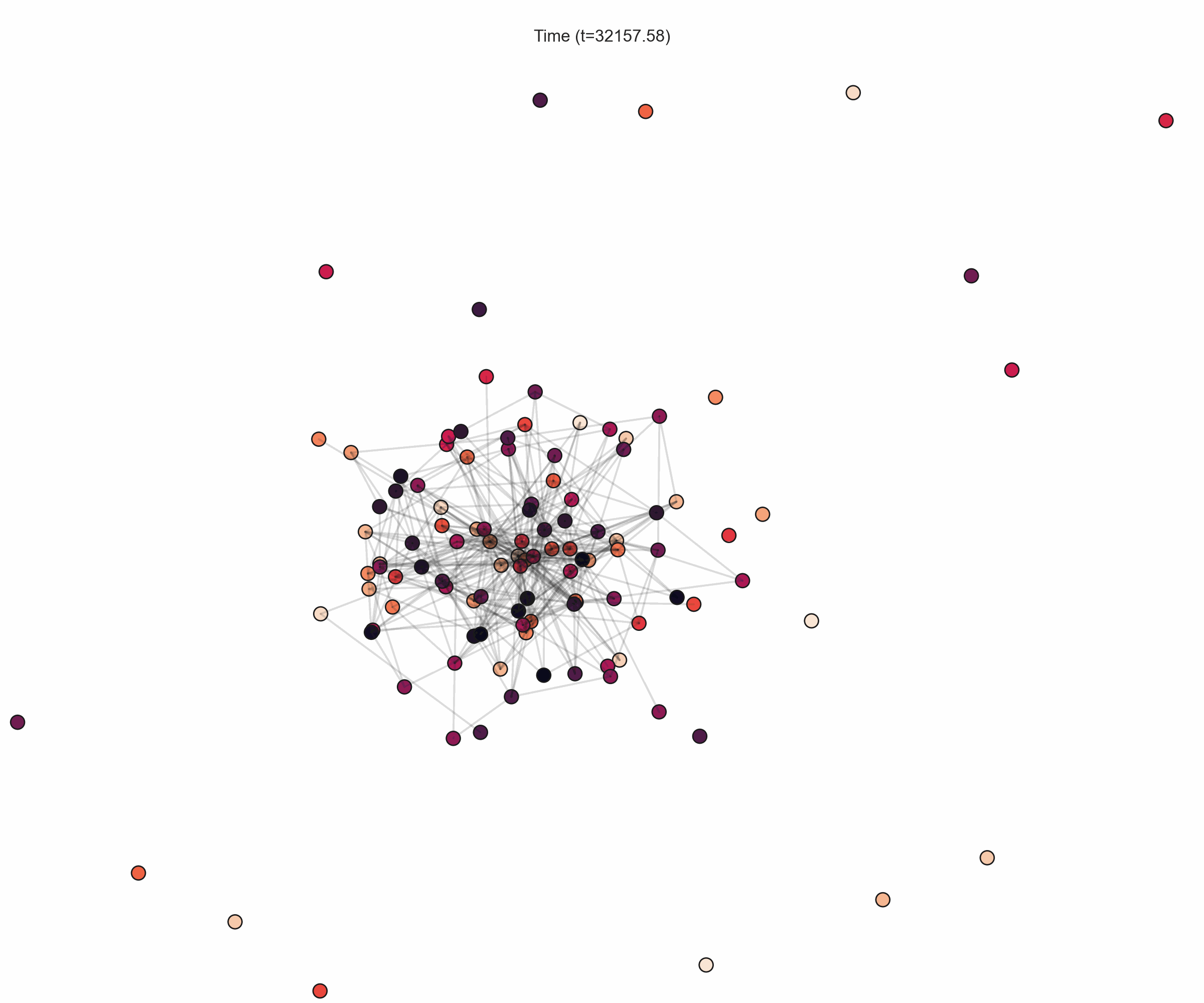}}
\hfill
\subfigure[$t=42876$]{\includegraphics[trim={5cm 6cm 5cm 6cm},clip,width=0.16\textwidth]{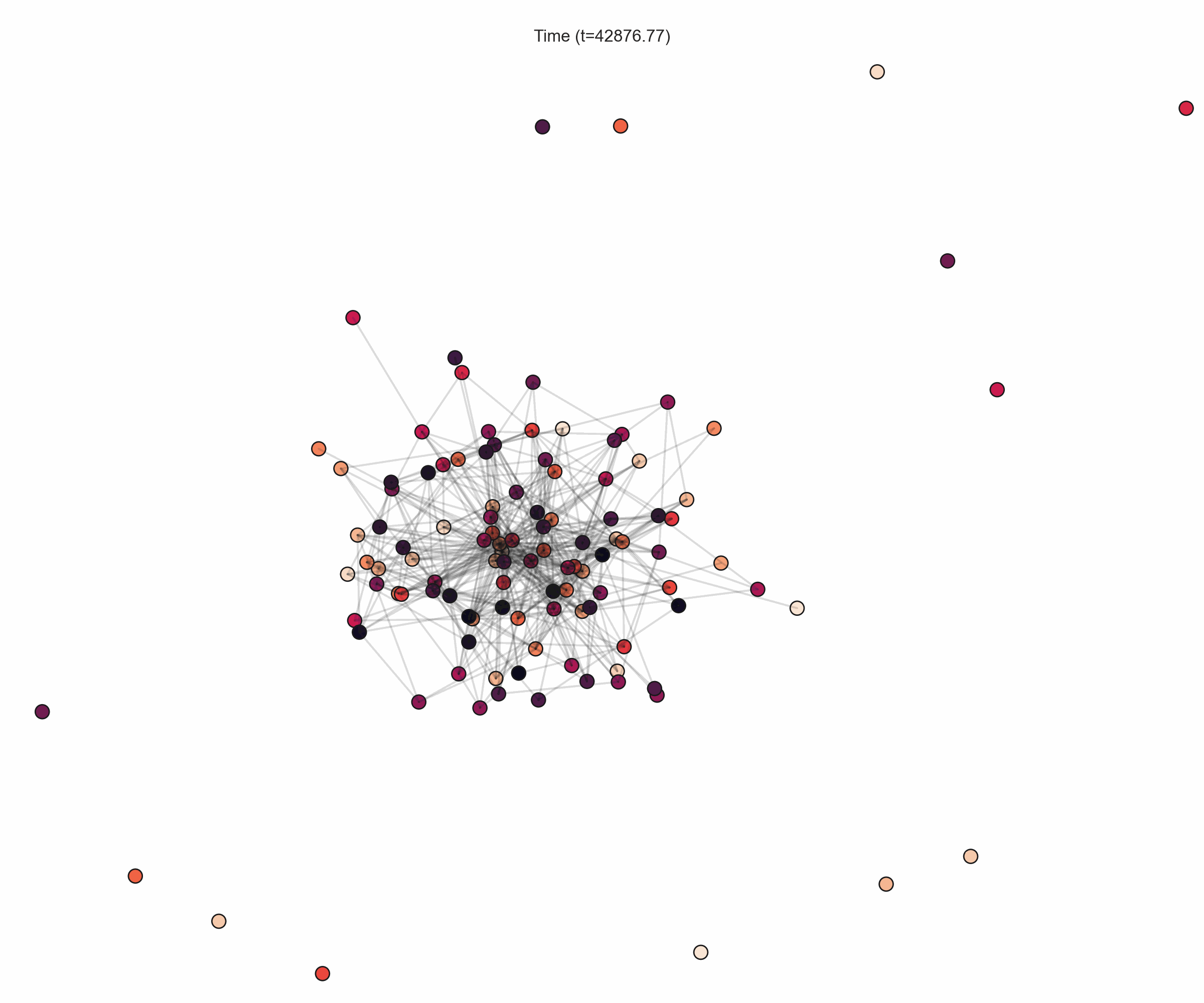}}
\hfill
\subfigure[$t=53595$]{\includegraphics[trim={5cm 6cm 5cm 6cm},clip,width=0.16\textwidth]{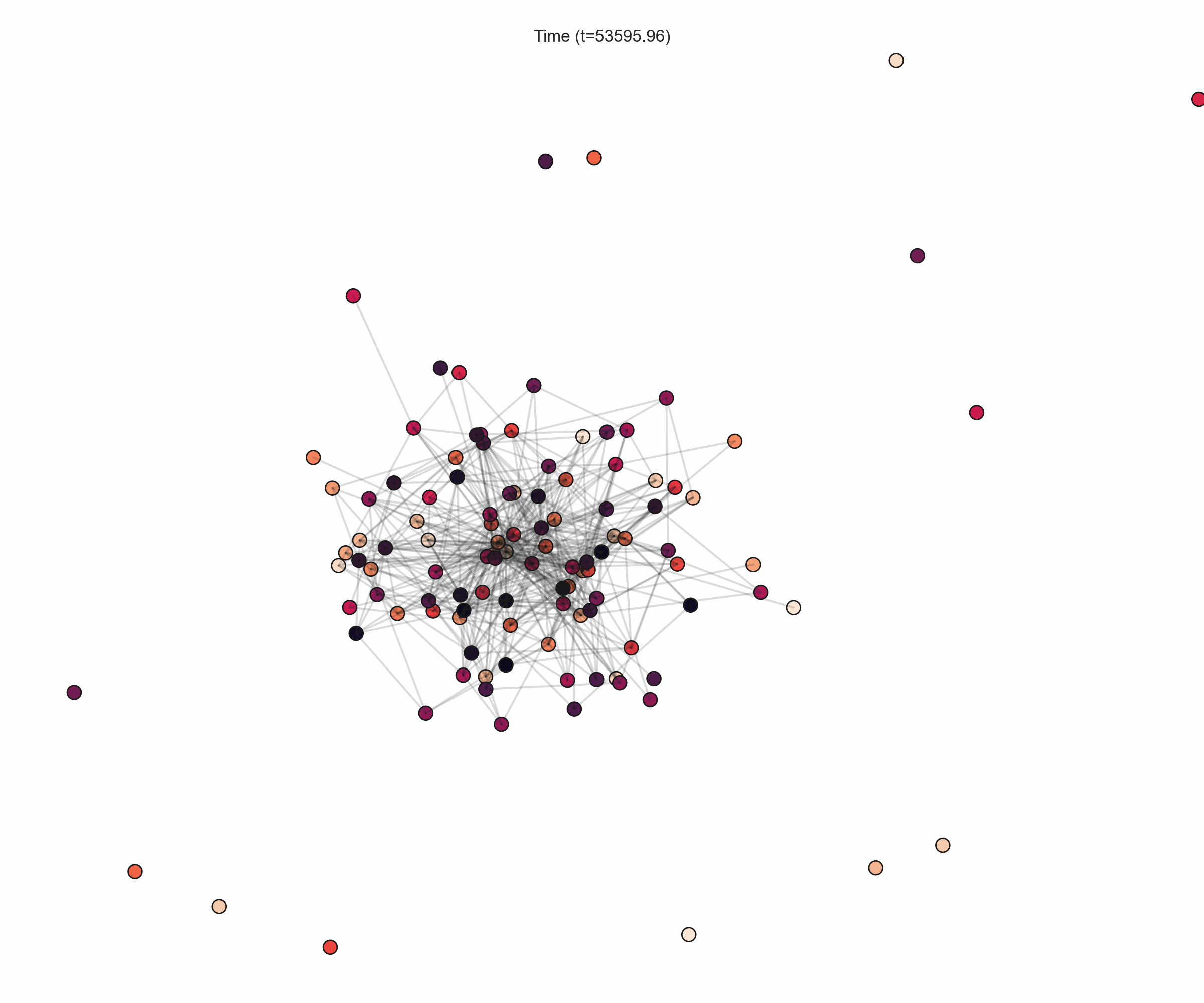}}
\hfill
\subfigure[$t=64315$]{\includegraphics[trim={5cm 6cm 5cm 6cm},clip,width=0.16\textwidth]{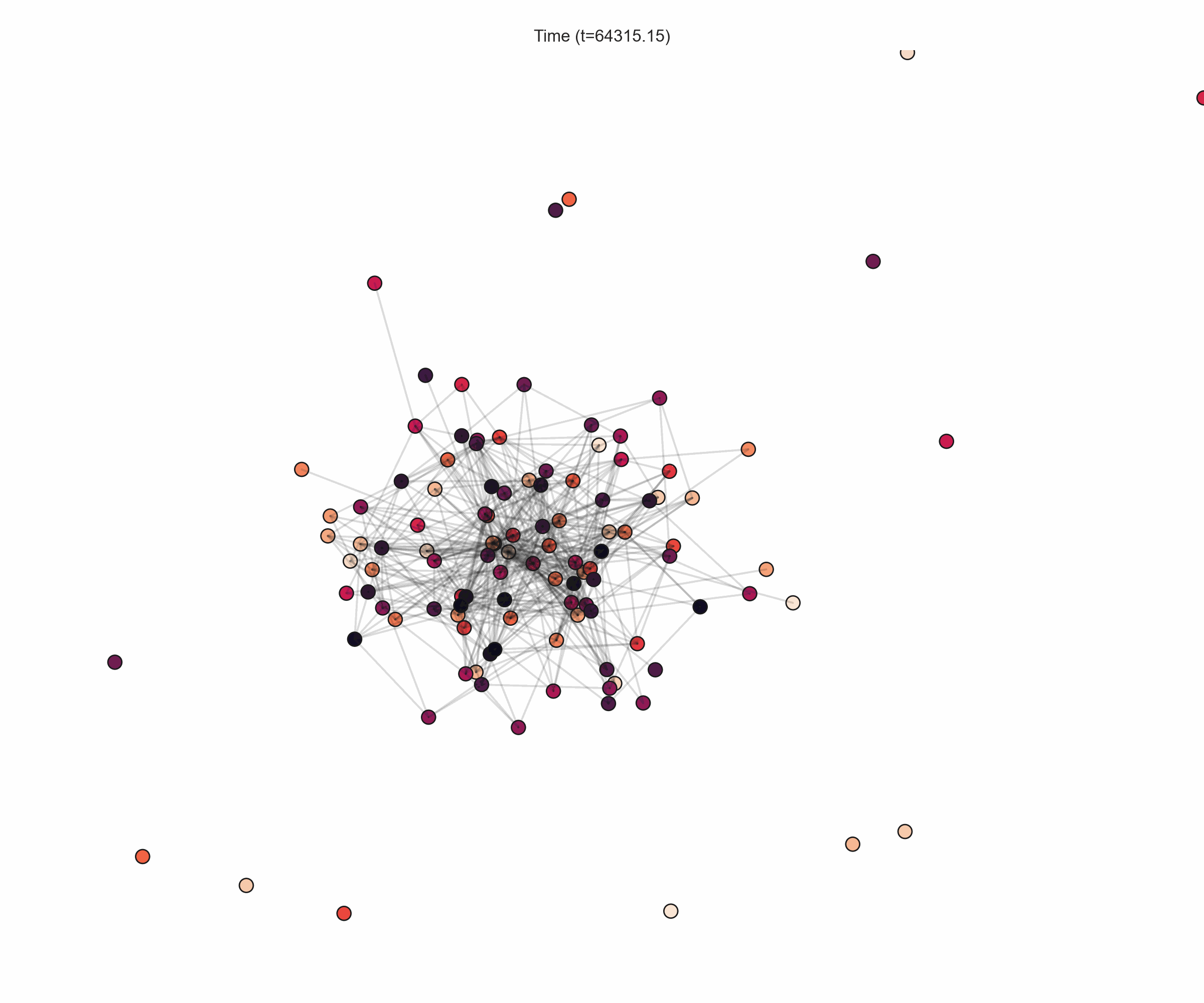}}
\subfigure[$t=75034$]{\includegraphics[trim={5cm 6cm 5cm 6cm},clip,width=0.16\textwidth]{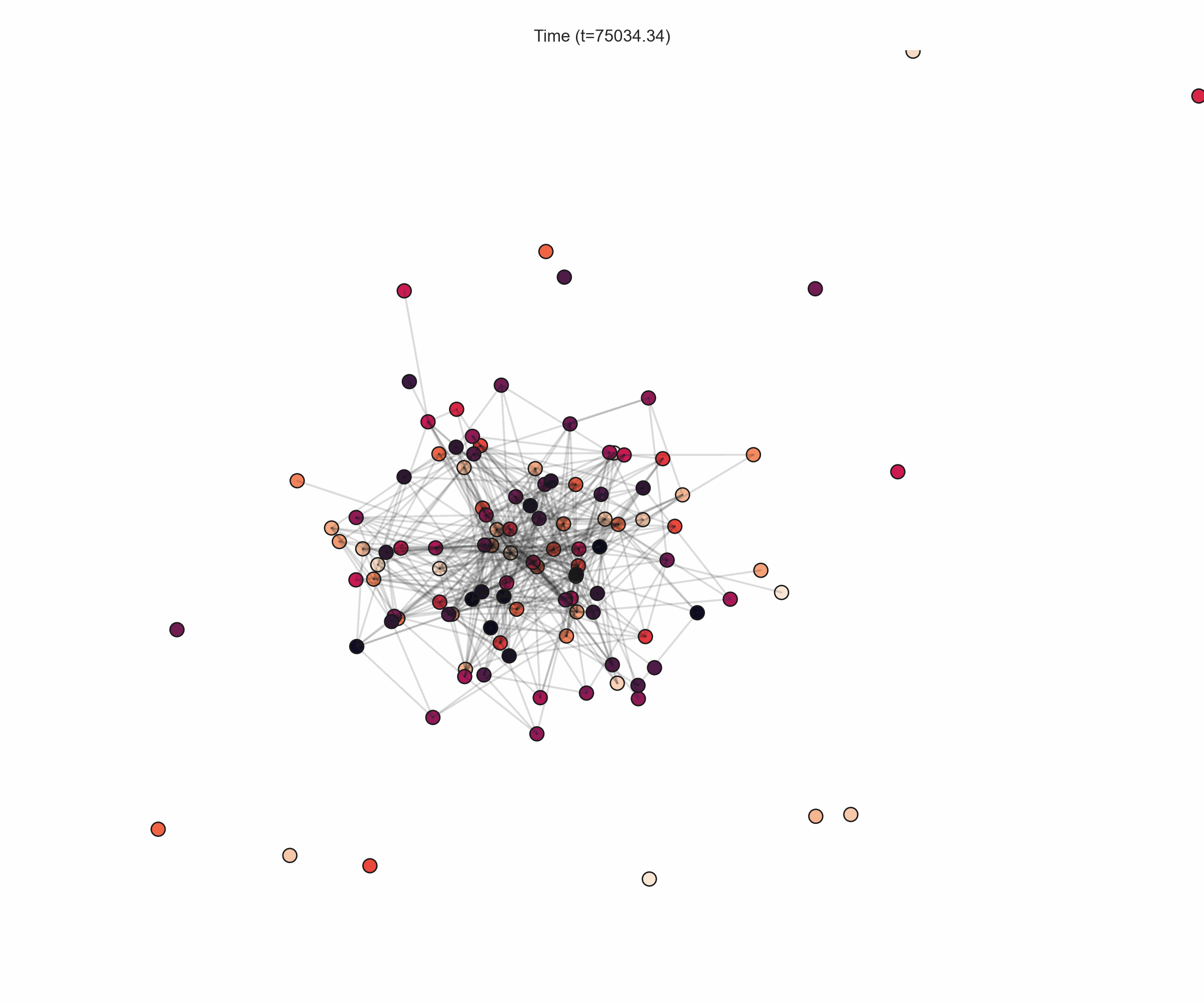}}
\hfill
\subfigure[$t=85753$]{\includegraphics[trim={5cm 6cm 5cm 6cm},clip,width=0.16\textwidth]{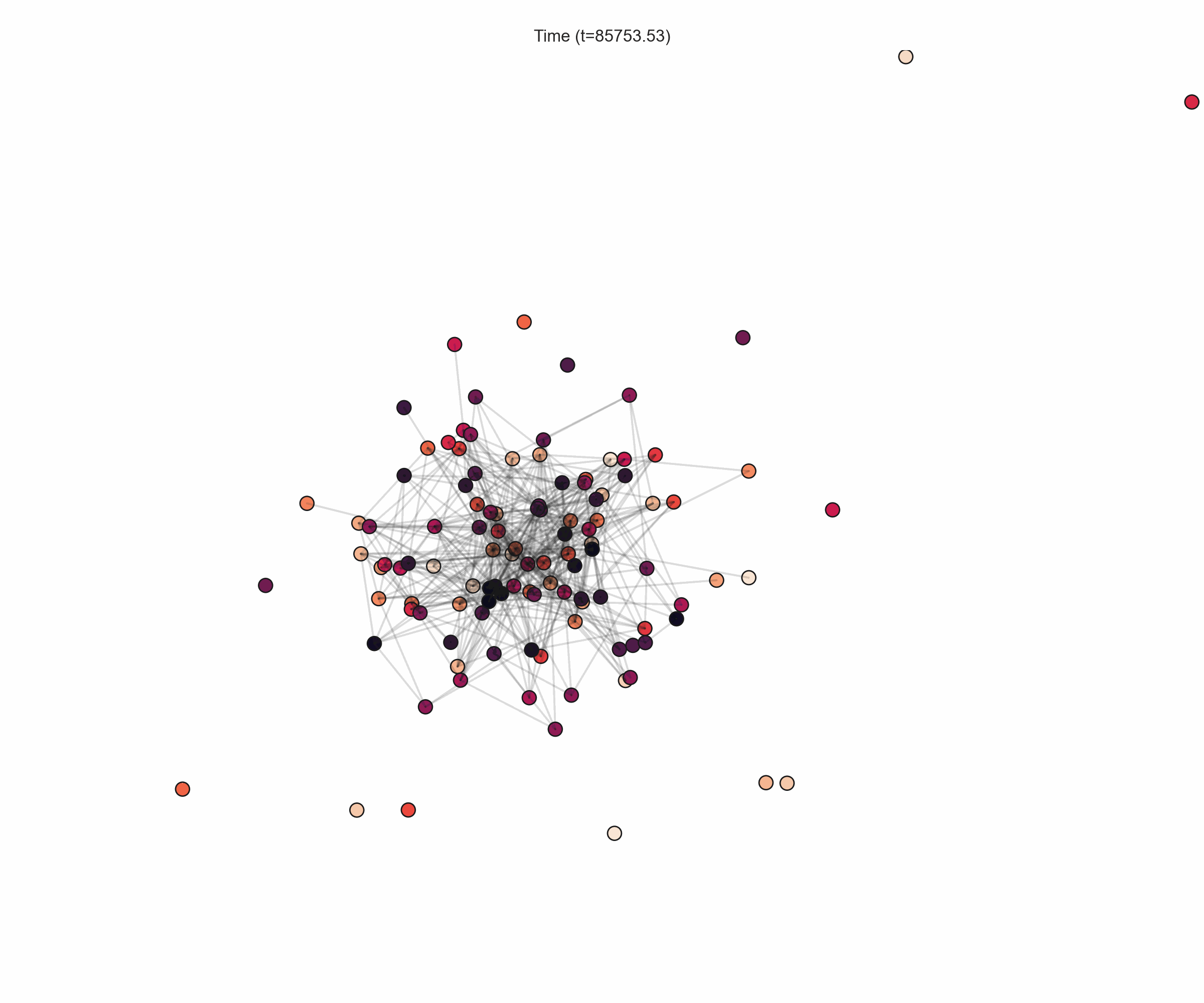}}
\hfill
\subfigure[$t=96472$]{\includegraphics[trim={5cm 6cm 5cm 6cm},clip,width=0.16\textwidth]{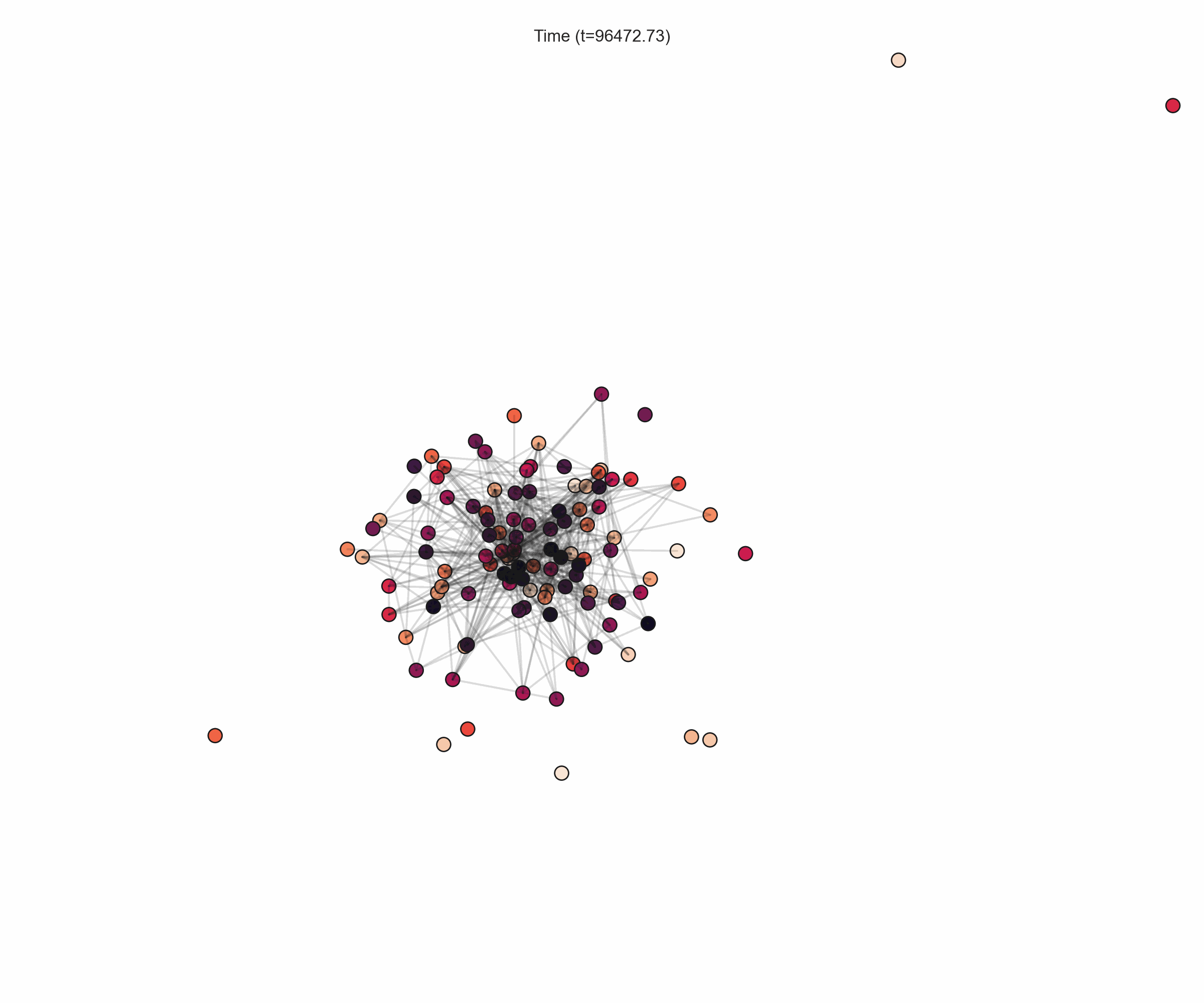}}
\hfill
\subfigure[$t=107191$]{\includegraphics[trim={5cm 6cm 5cm 6cm},clip,width=0.16\textwidth]{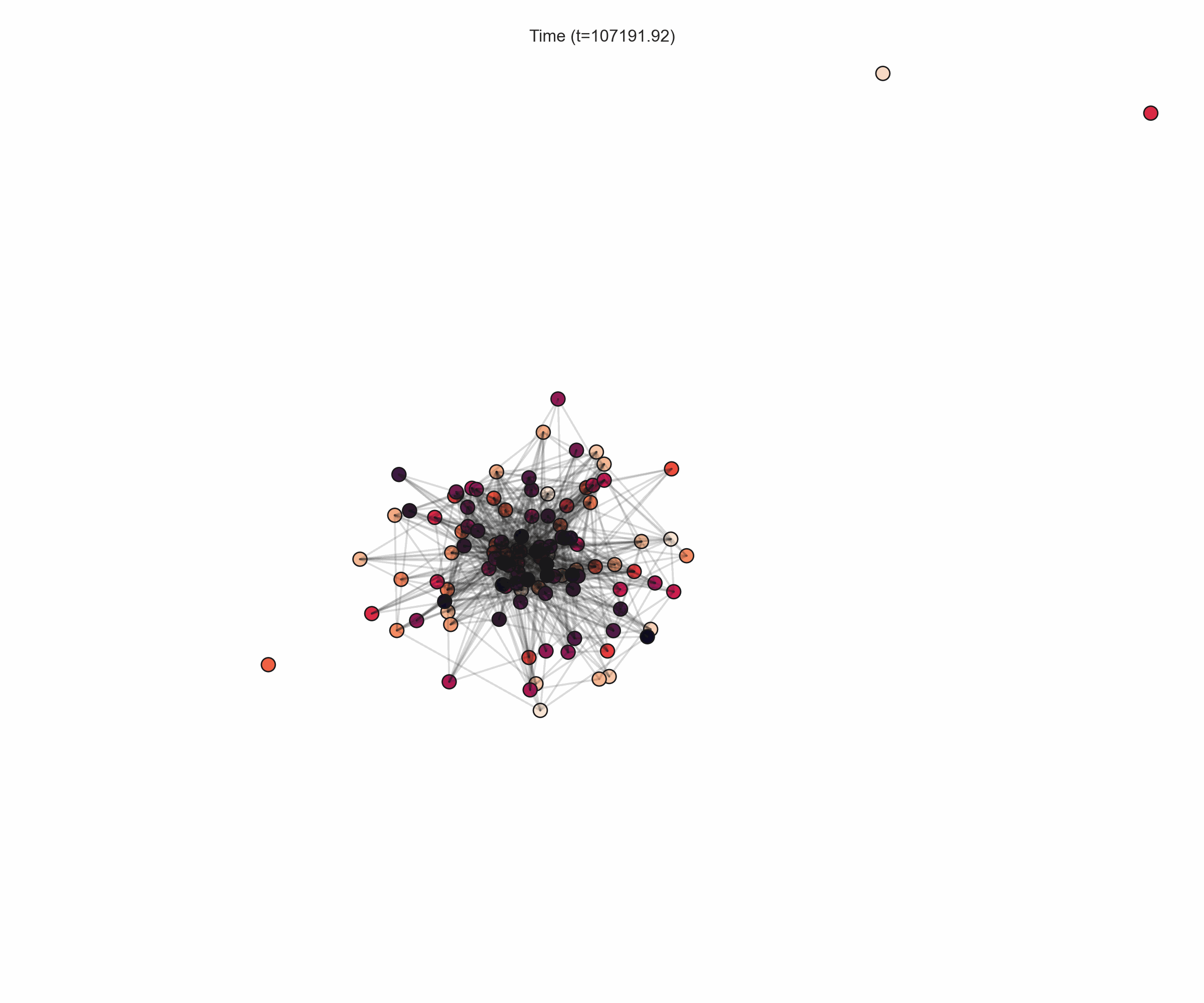}}
\hfill
\subfigure[$t=117911$]{\includegraphics[trim={5cm 6cm 5cm 6cm},clip,width=0.16\textwidth]{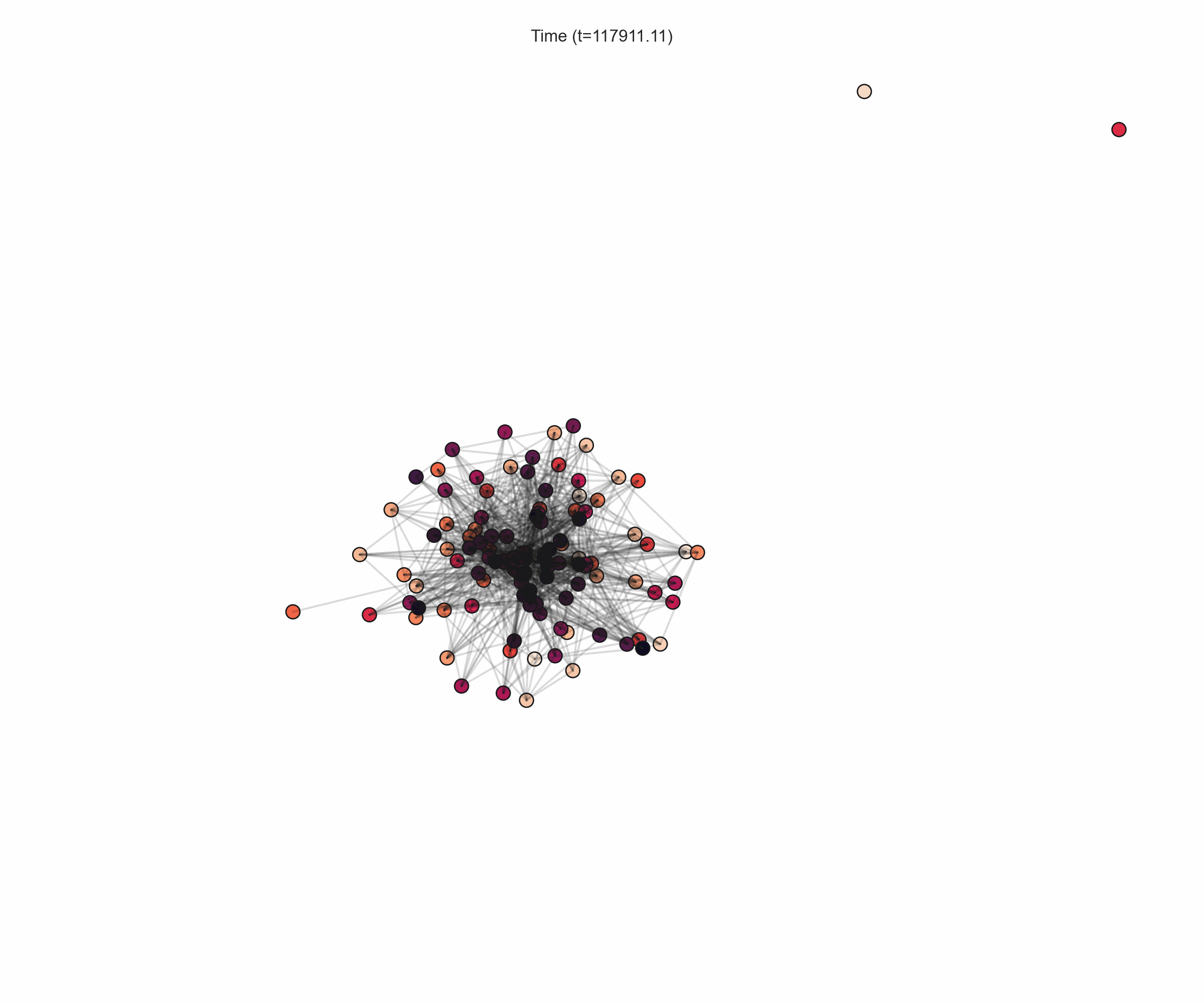}}
\hfill
\subfigure[$t=128630$]{\includegraphics[trim={5cm 6cm 5cm 6cm},clip,width=0.16\textwidth]{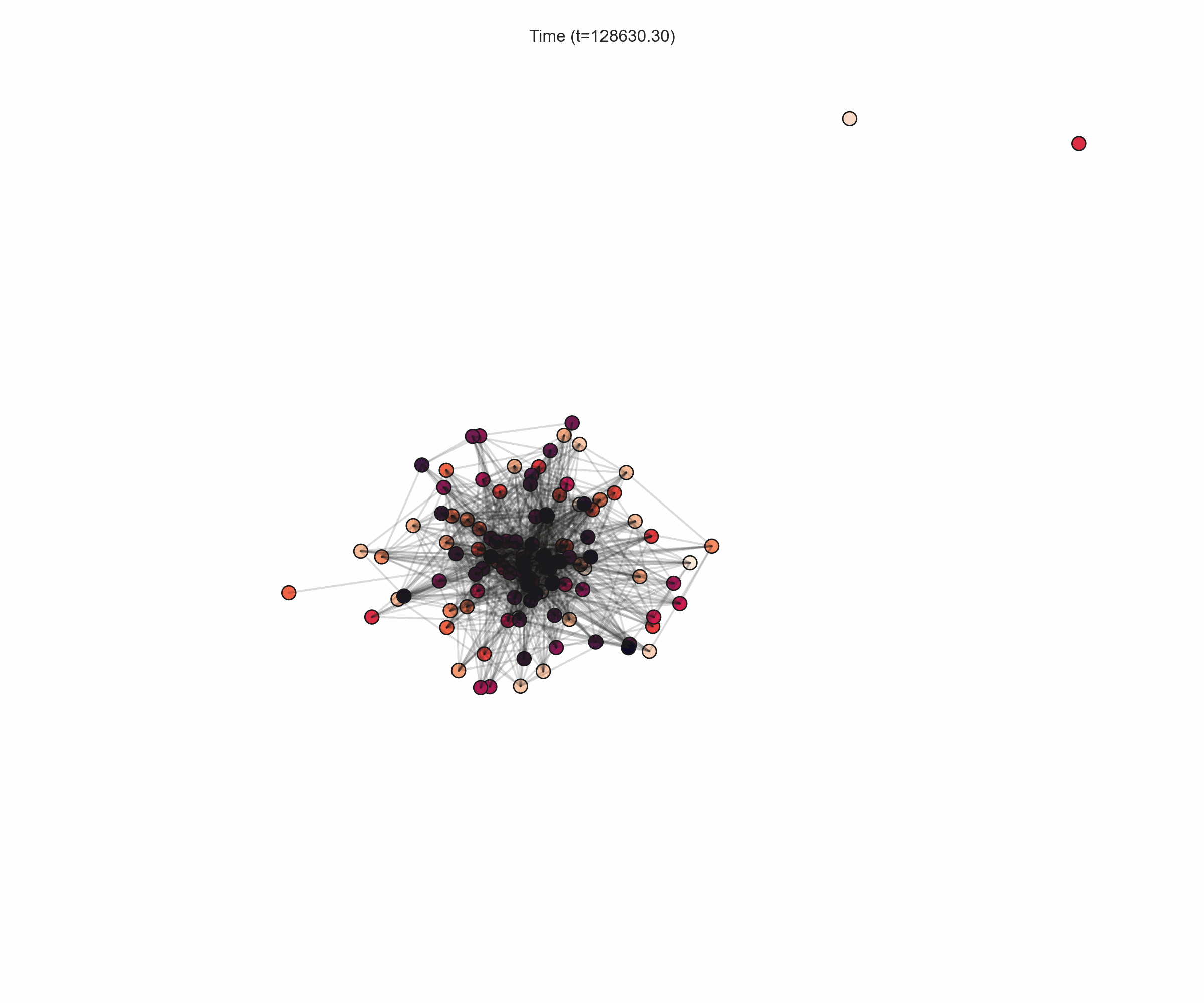}}
\subfigure[$t=139349$]{\includegraphics[trim={5cm 6cm 5cm 6cm},clip,width=0.16\textwidth]{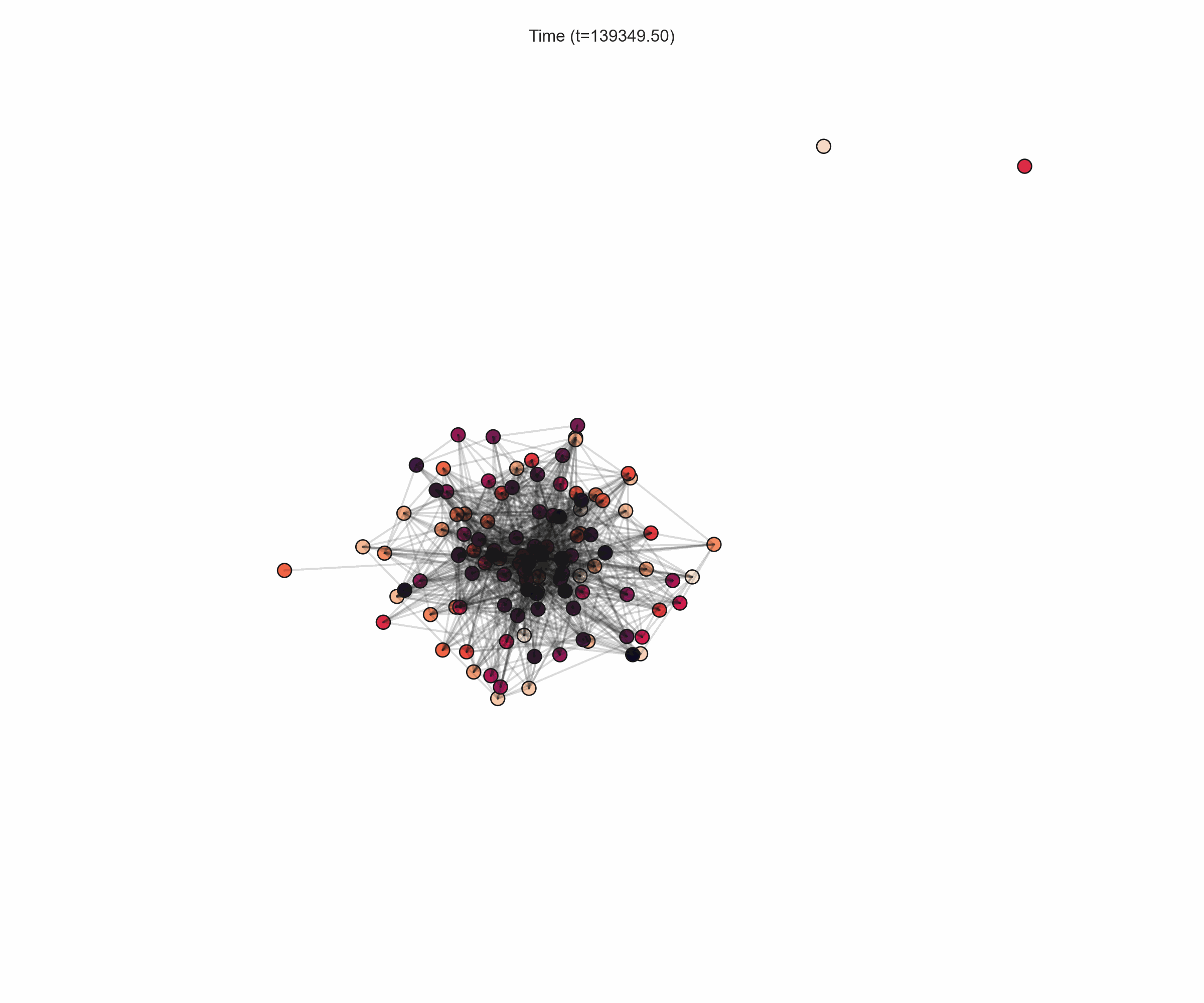}}
\hfill
\subfigure[$t=150068$]{\includegraphics[trim={5cm 6cm 5cm 6cm},clip,width=0.16\textwidth]{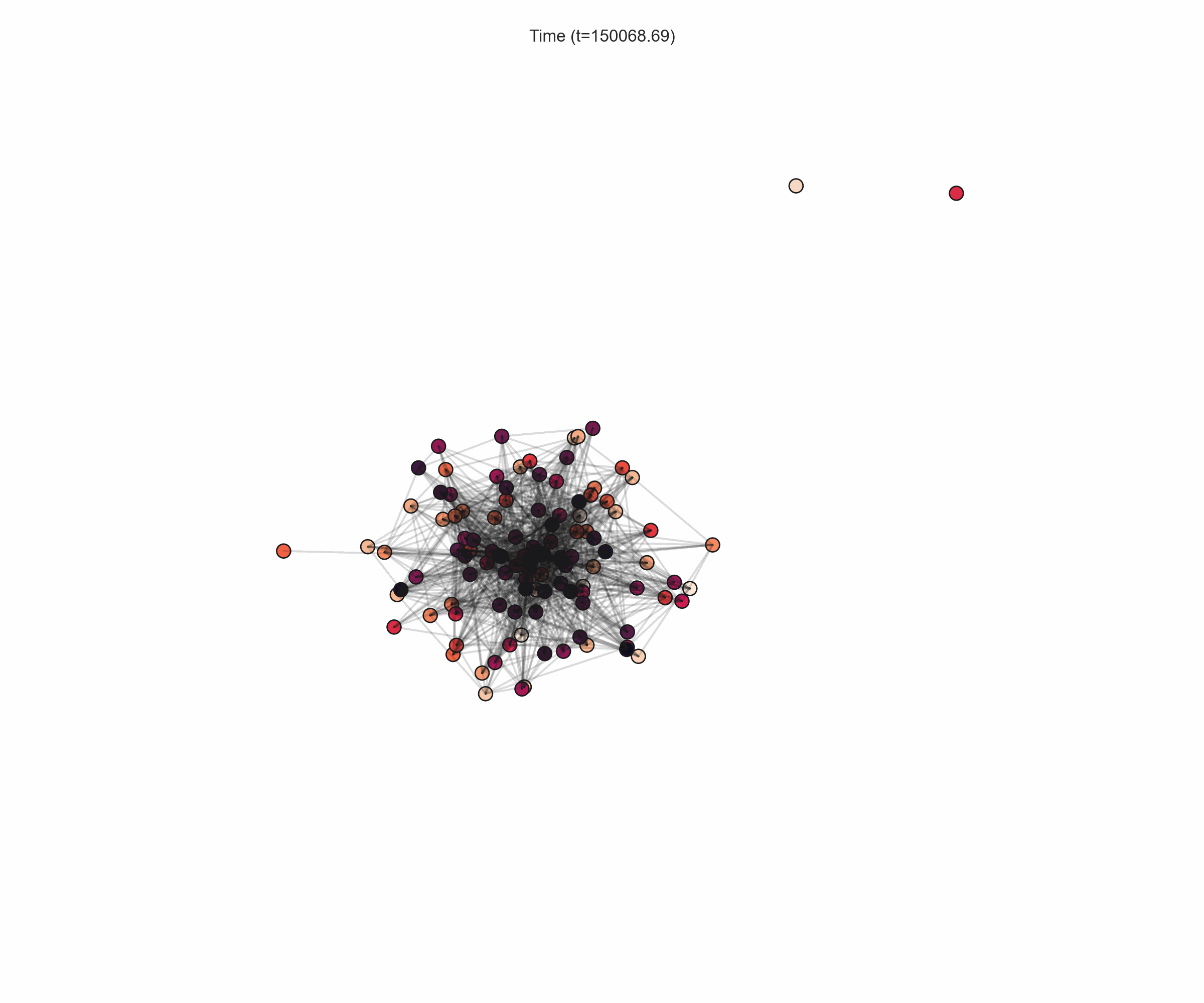}}
\hfill
\subfigure[$t=160787$]{\includegraphics[trim={5cm 6cm 5cm 6cm},clip,width=0.16\textwidth]{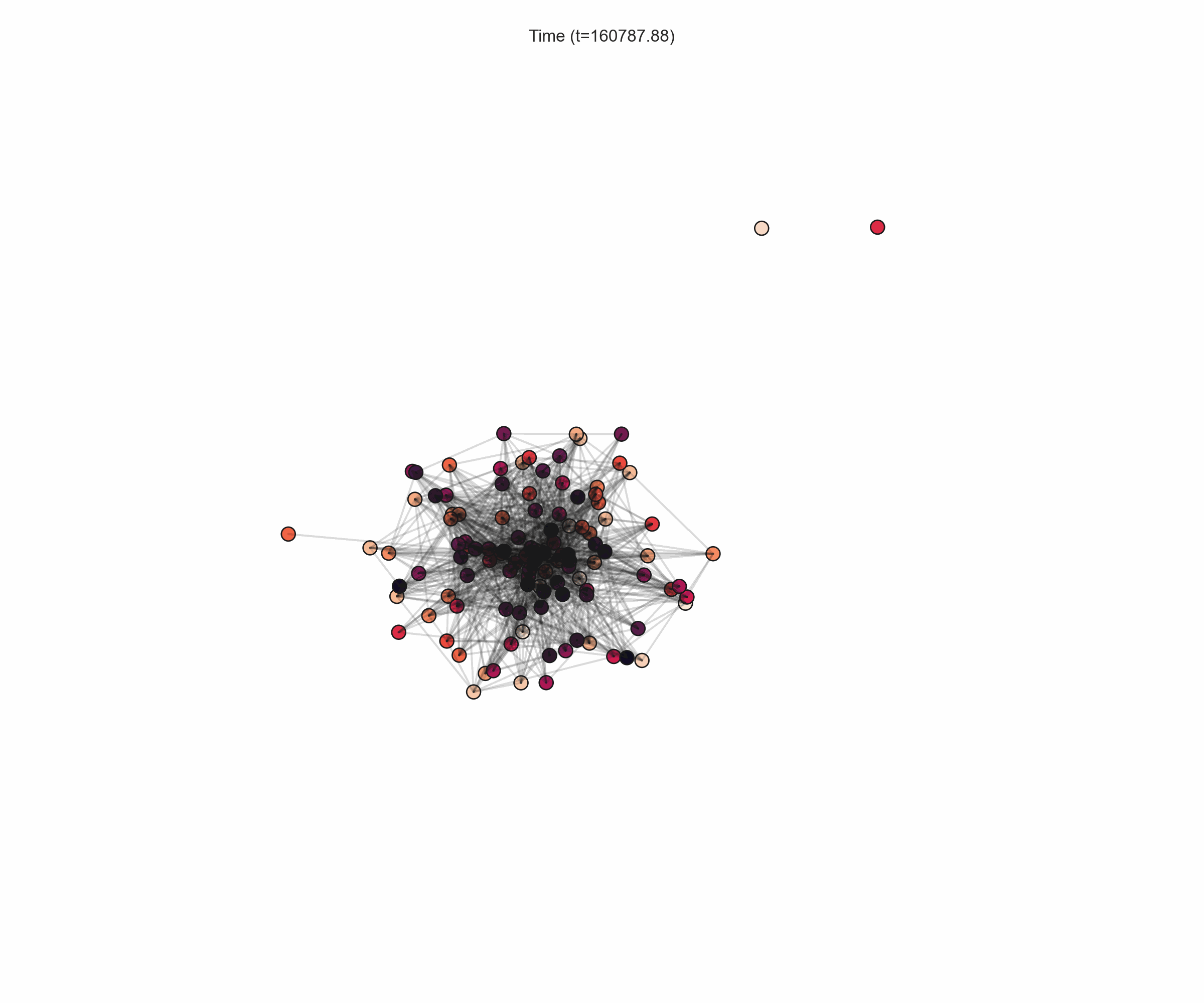}}
\hfill
\subfigure[$t=171507$]{\includegraphics[trim={5cm 6cm 5cm 6cm},clip,width=0.16\textwidth]{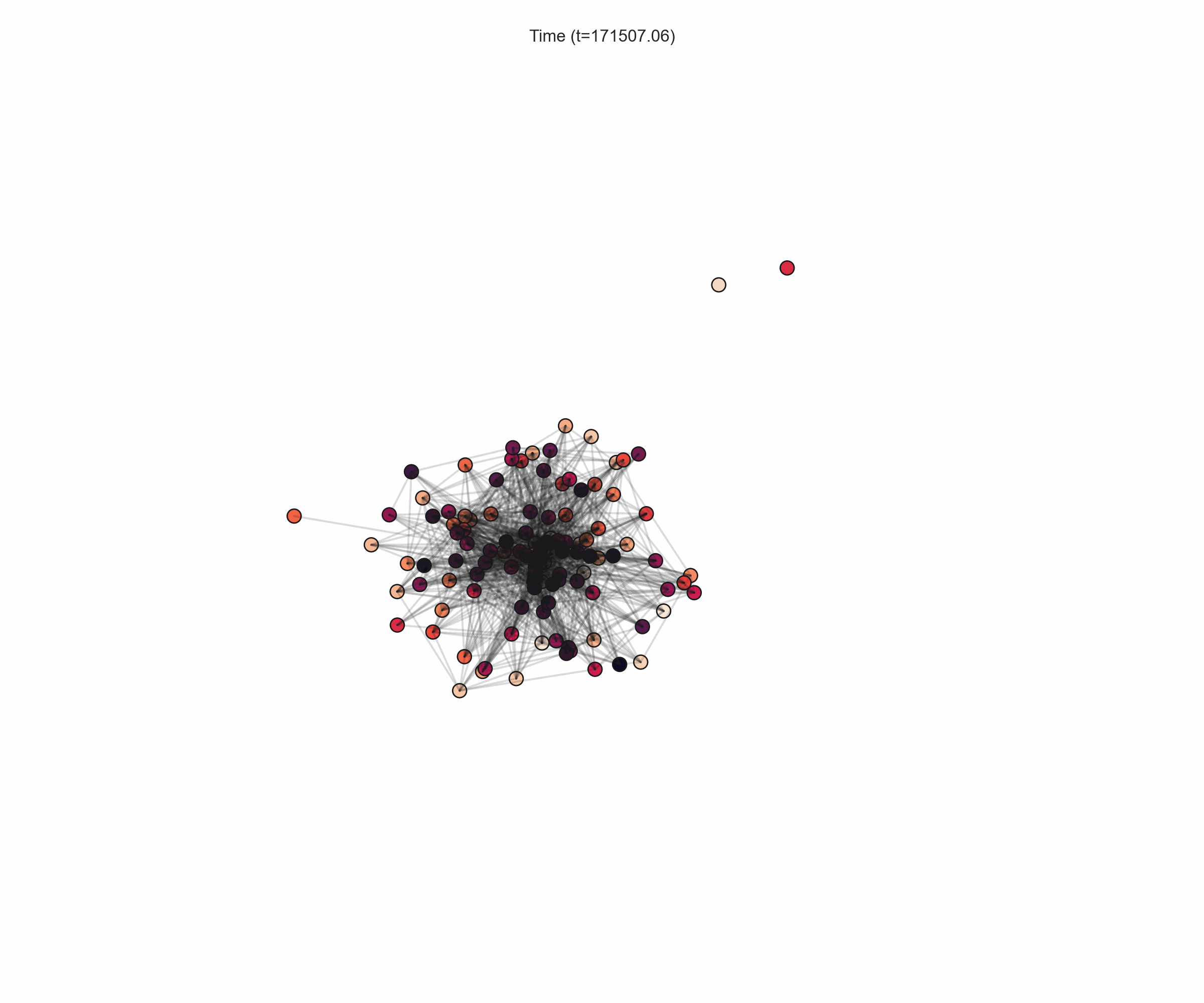}}
\hfill
\subfigure[$t=182226$]{\includegraphics[trim={5cm 6cm 5cm 6cm},clip,width=0.16\textwidth]{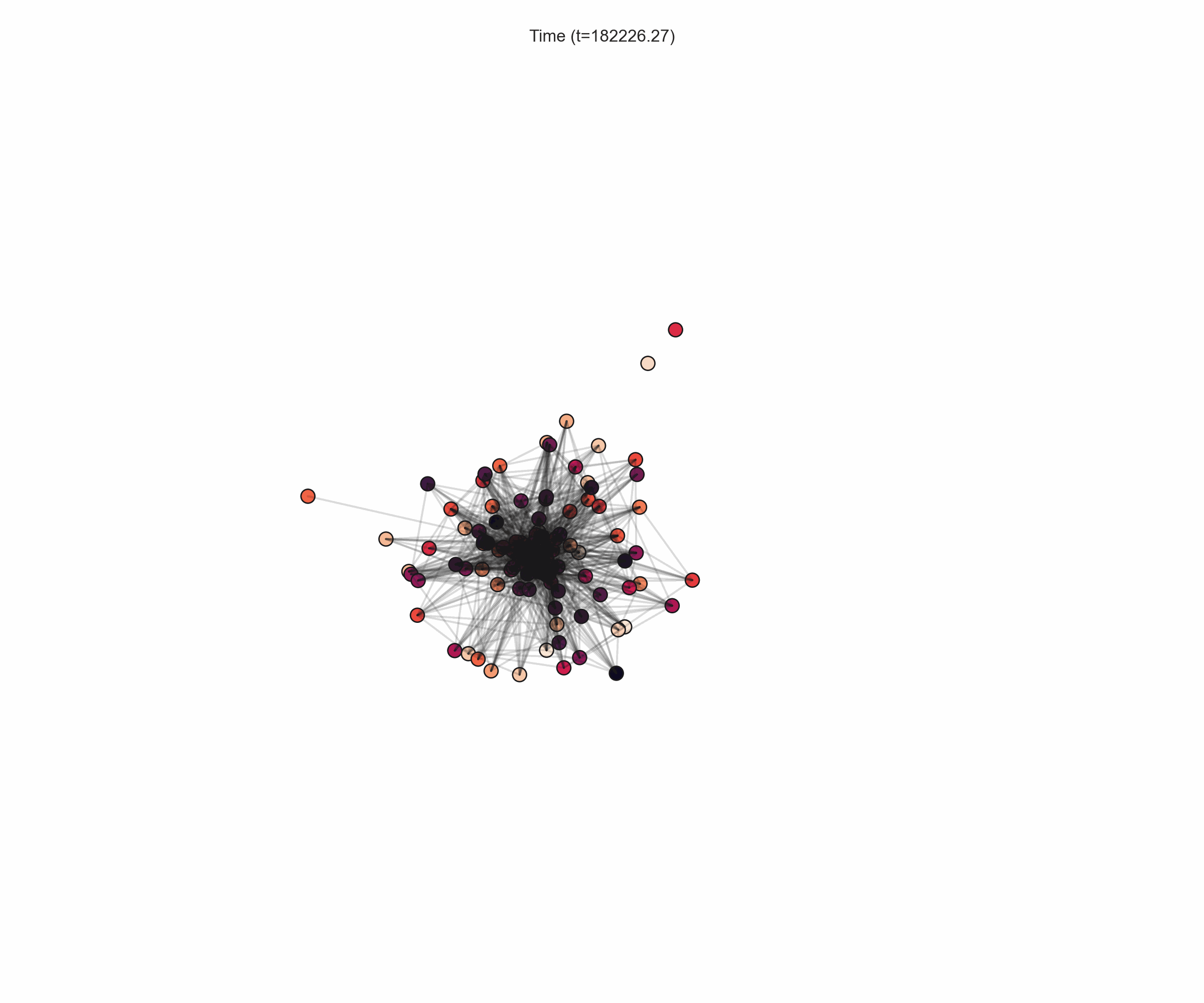}}
\hfill
\subfigure[$t=192945$]{\includegraphics[trim={5cm 6cm 5cm 6cm},clip,width=0.16\textwidth]{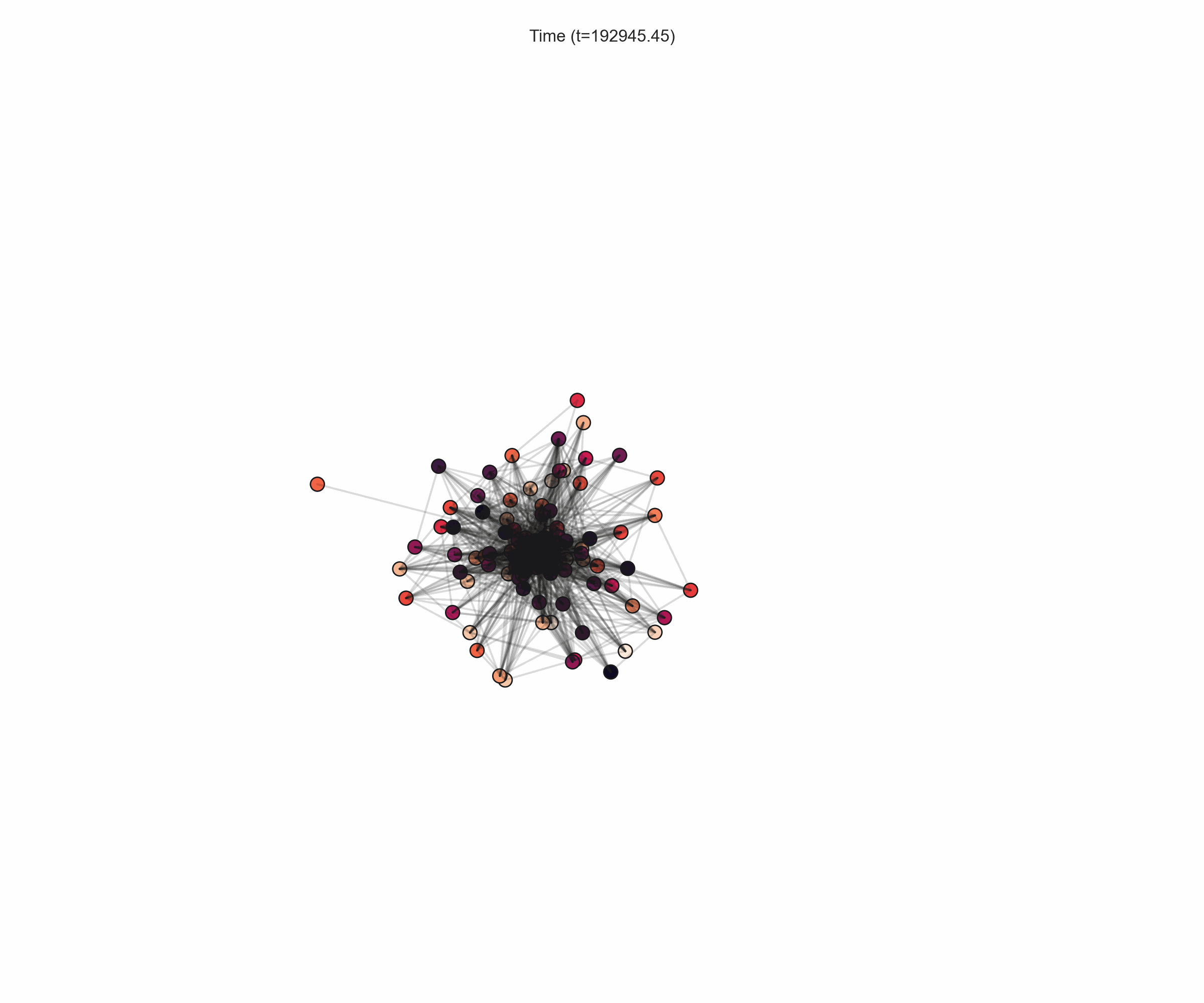}}
\caption{Snapshots of the continuous-time embeddings learned by \textsc{\modelname} for various time points over \textsl{HyperText}.}\label{fig:appendix_visualization_hypertext}
\end{figure*}
\begin{figure*}[!ht]
\centering
\subfigure[$t=3712333$]{\includegraphics[trim={5cm 6cm 5cm 6cm},clip,width=0.16\textwidth]{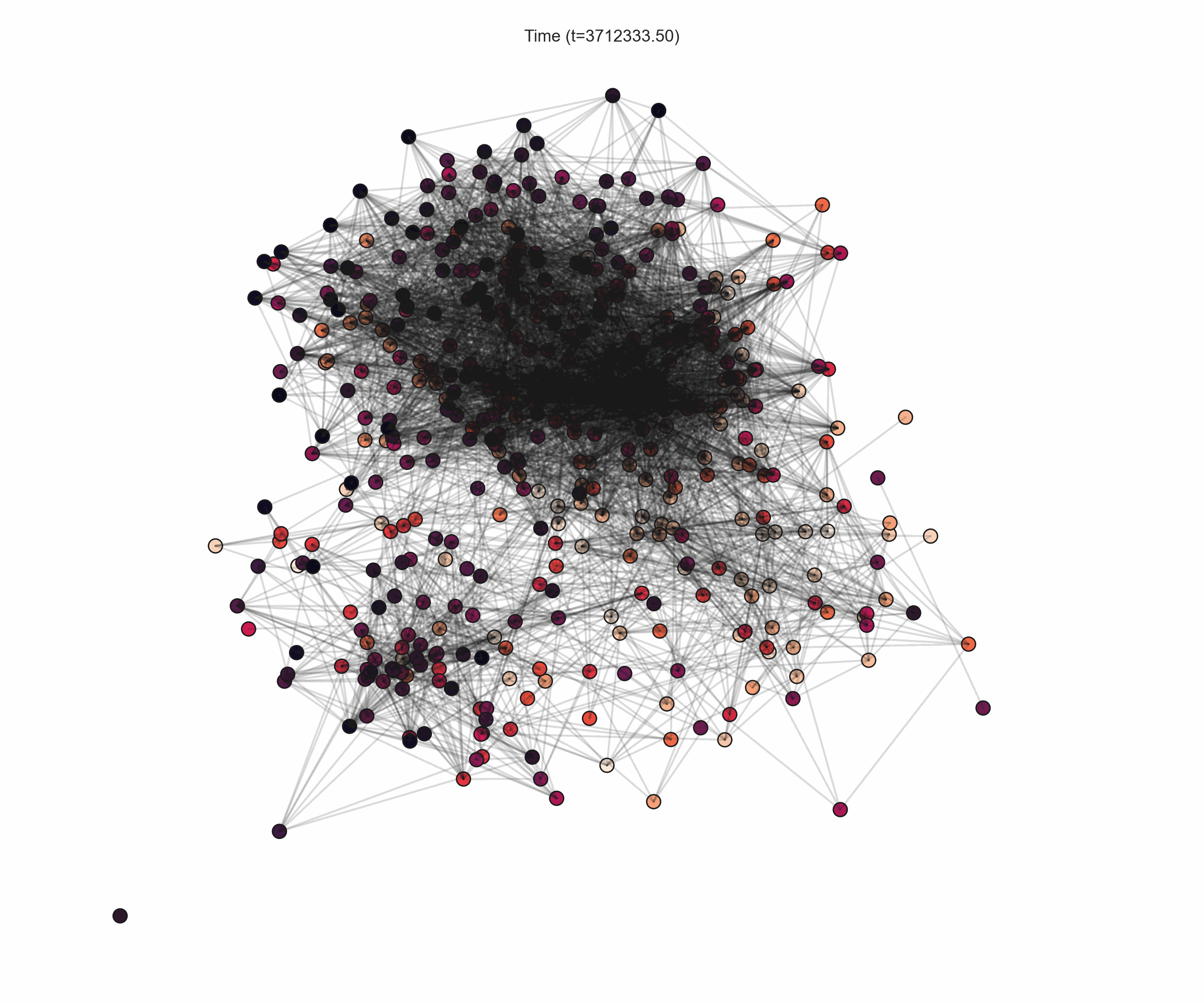}}
\hfill
\subfigure[$t=7424667$]{\includegraphics[trim={5cm 6cm 5cm 6cm},clip,width=0.16\textwidth]{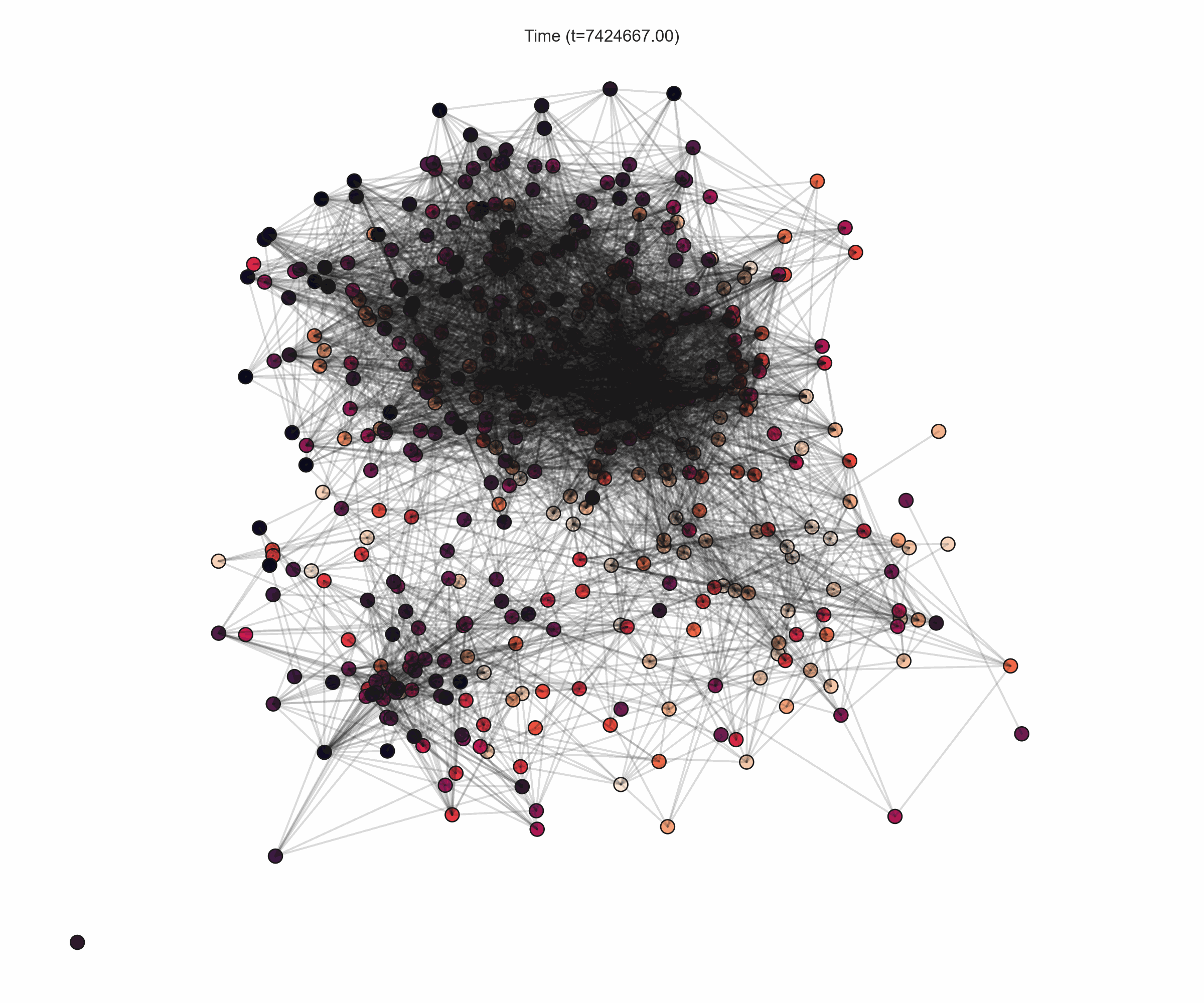}}
\hfill
\subfigure[$t=11137000$]{\includegraphics[trim={5cm 6cm 5cm 6cm},clip,width=0.16\textwidth]{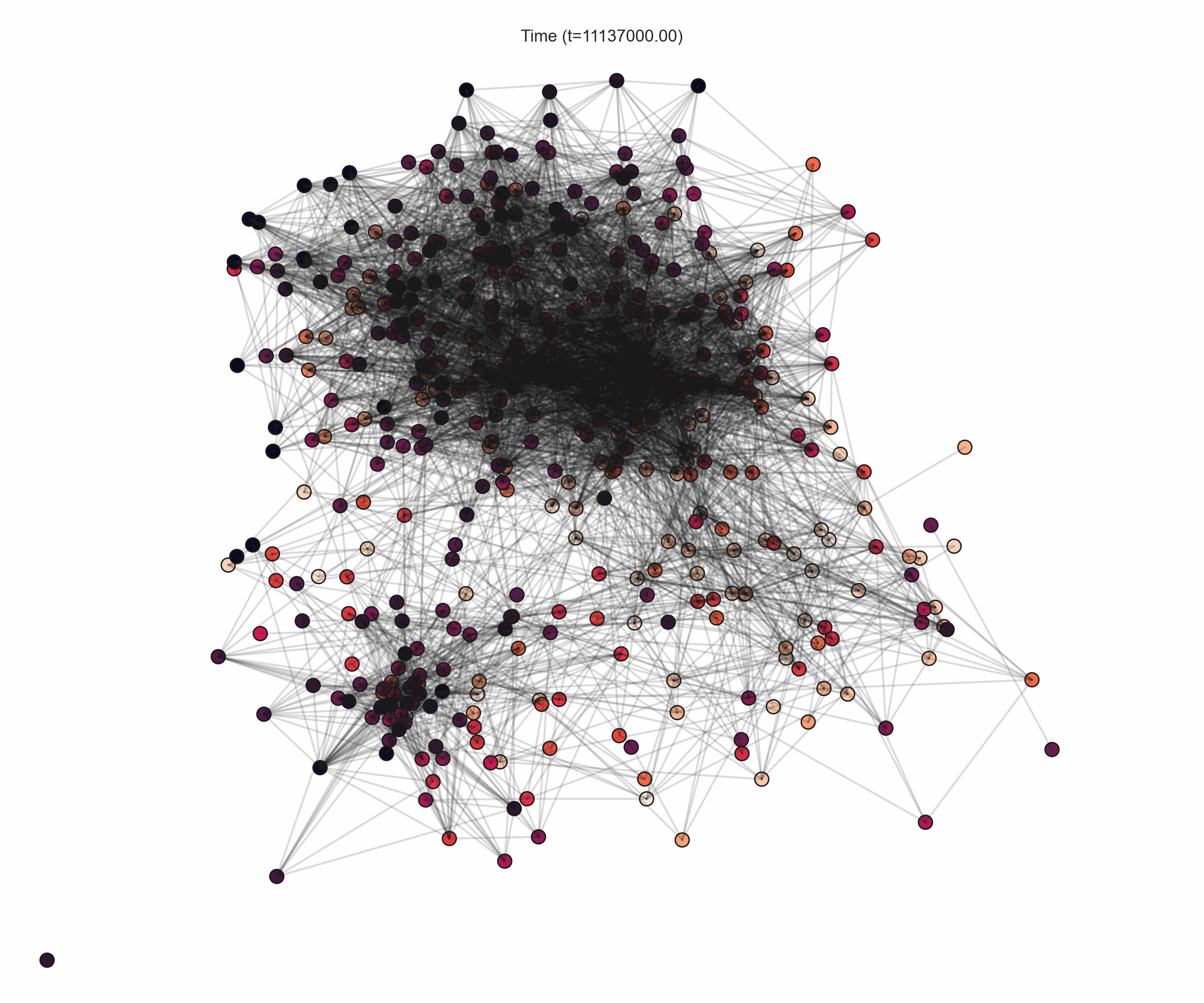}}
\hfill
\subfigure[$t=14849334$]{\includegraphics[trim={5cm 6cm 5cm 6cm},clip,width=0.16\textwidth]{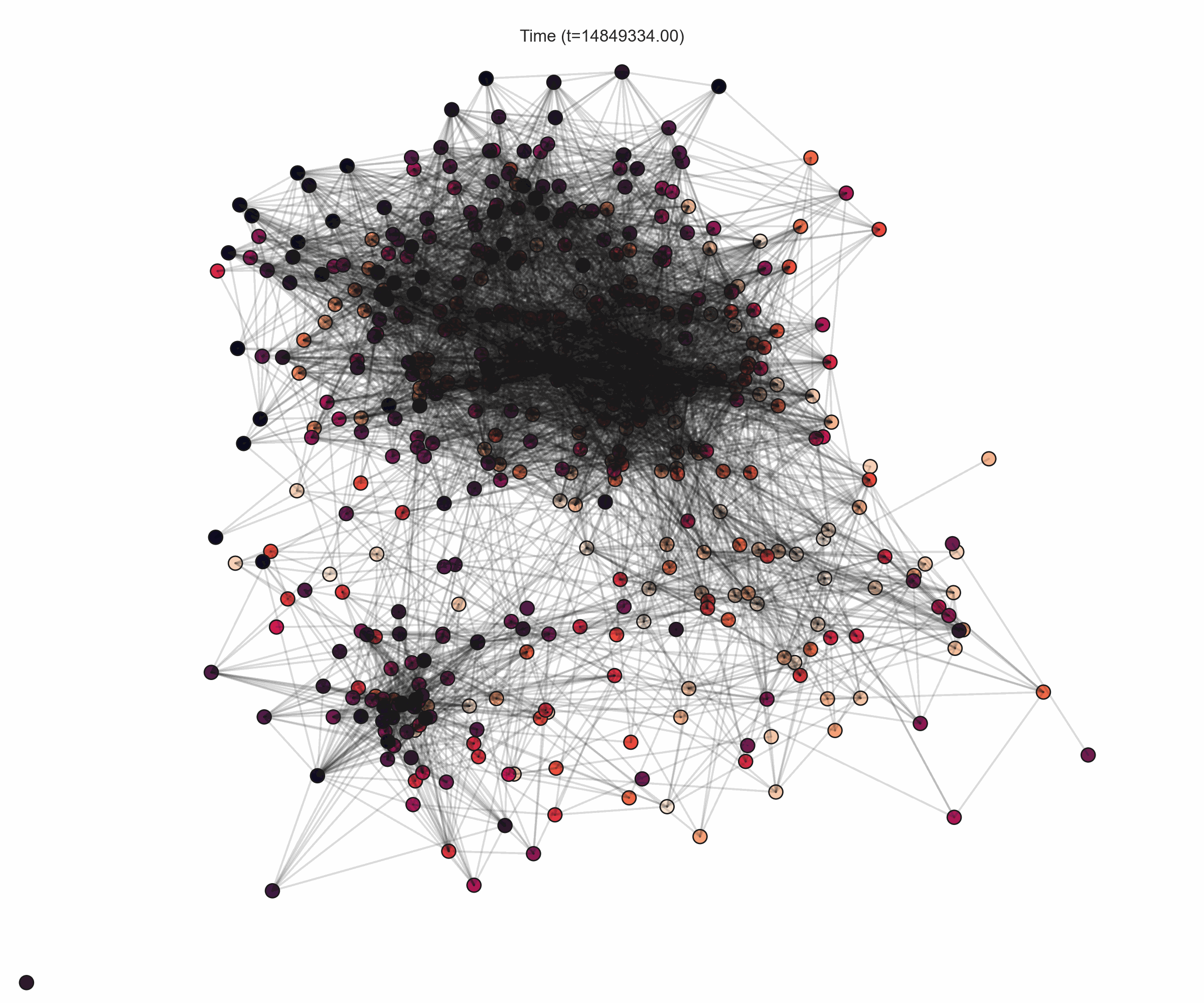}}
\hfill
\subfigure[$t=18561668$]{\includegraphics[trim={5cm 6cm 5cm 6cm},clip,width=0.16\textwidth]{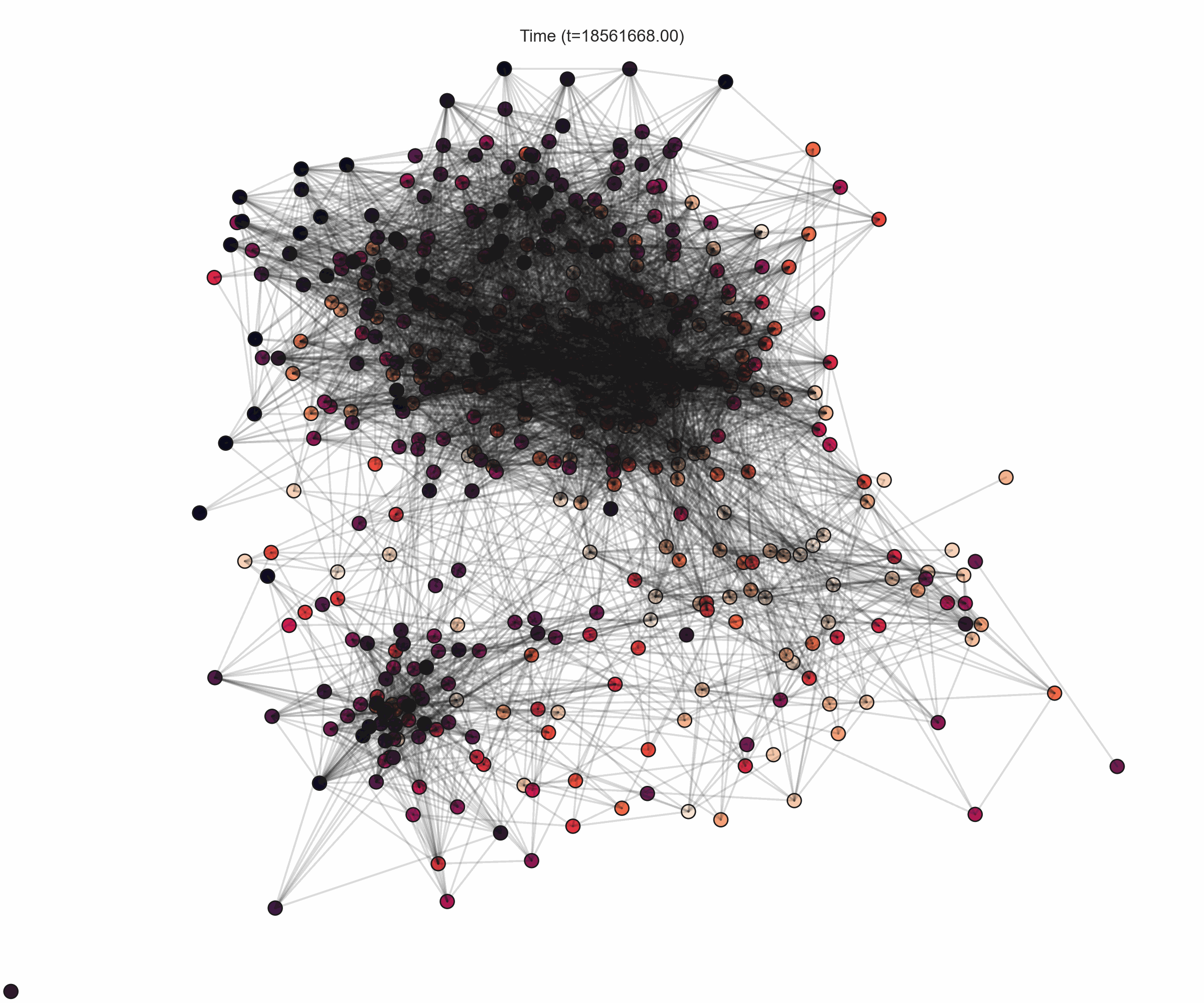}}
\hfill
\subfigure[$t=22274000$]{\includegraphics[trim={5cm 6cm 5cm 6cm},clip,width=0.16\textwidth]{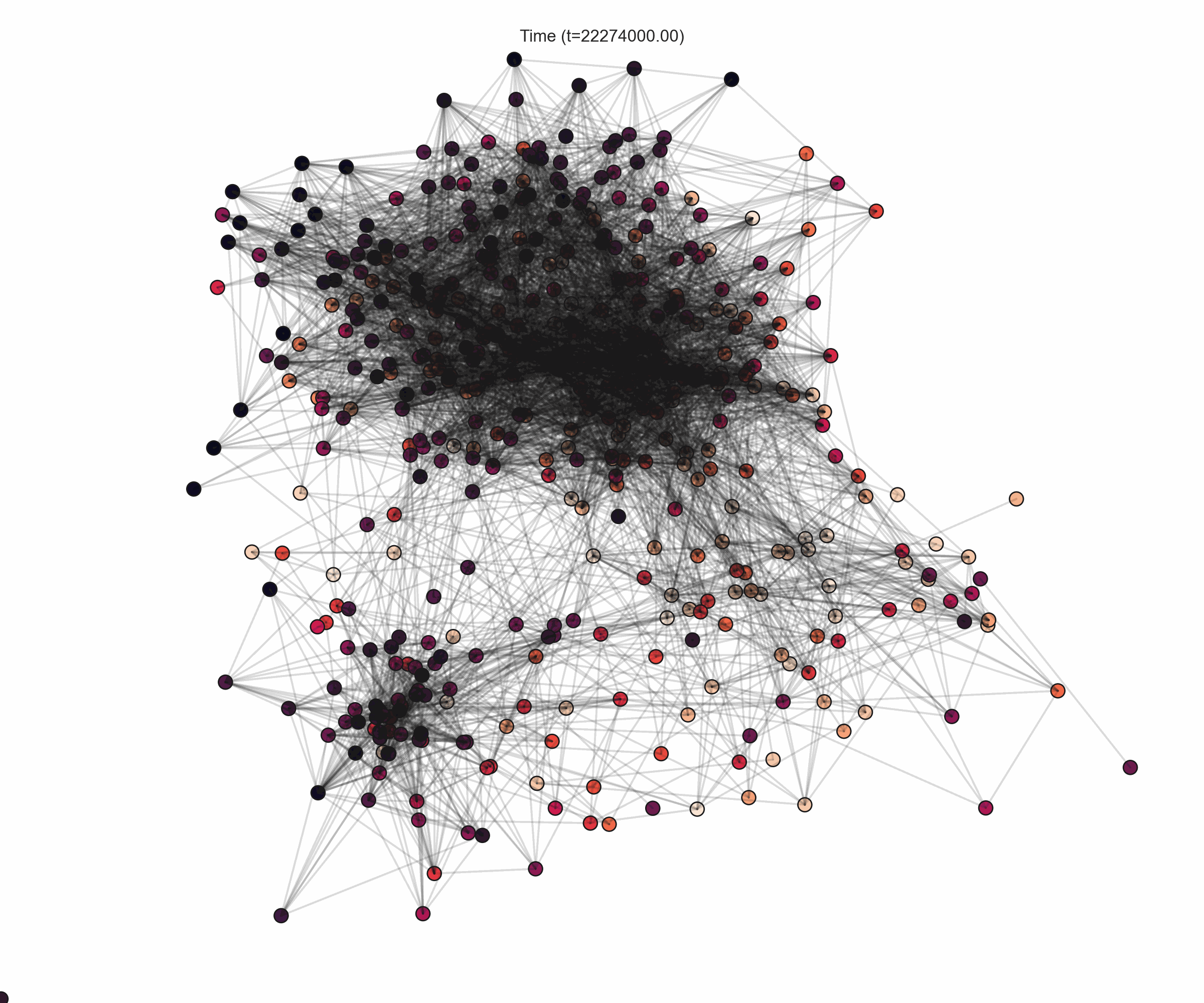}}
\subfigure[$t=25986334$]{\includegraphics[trim={5cm 6cm 5cm 6cm},clip,width=0.16\textwidth]{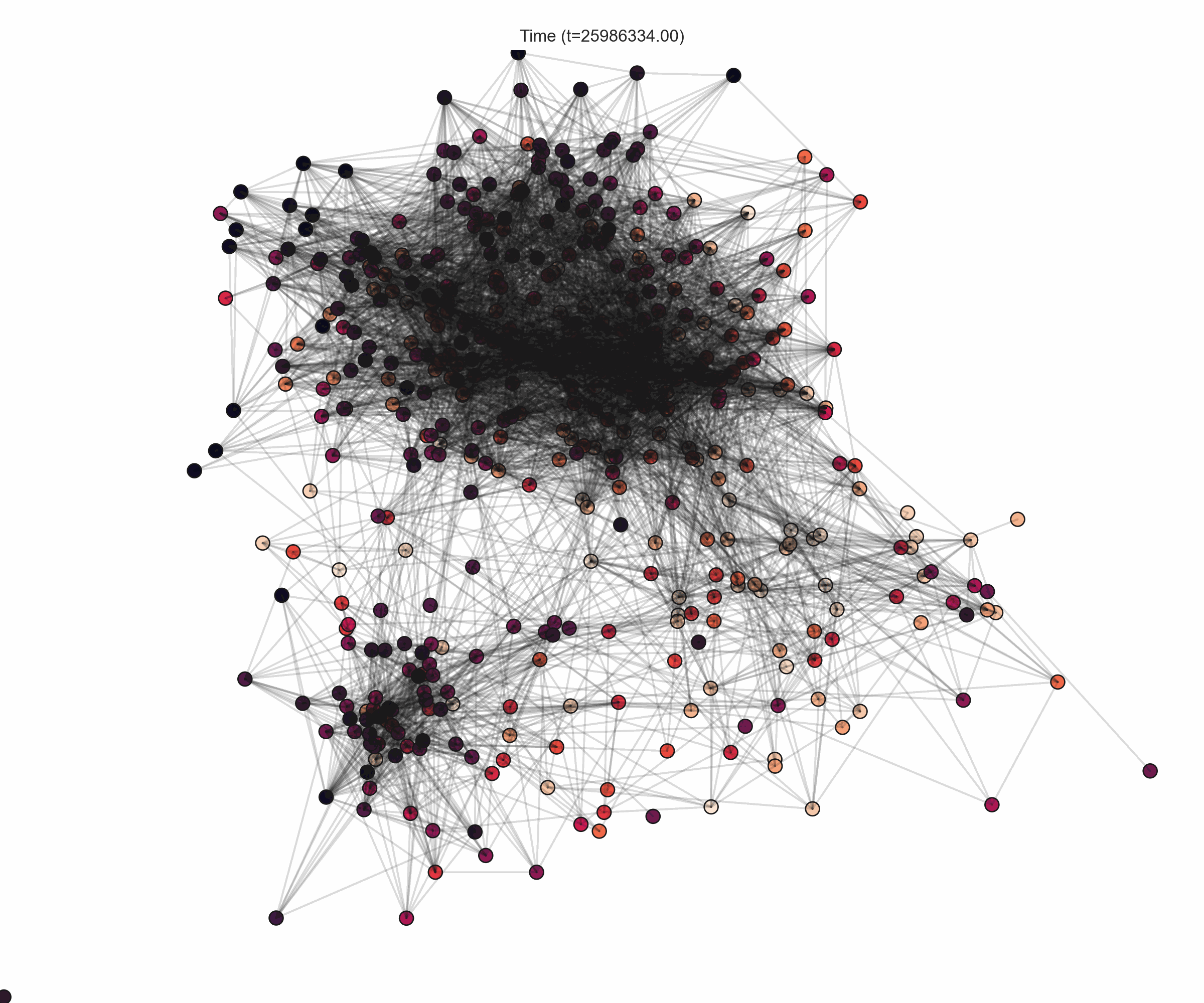}}
\hfill
\subfigure[$t=29698668$]{\includegraphics[trim={5cm 6cm 5cm 6cm},clip,width=0.16\textwidth]{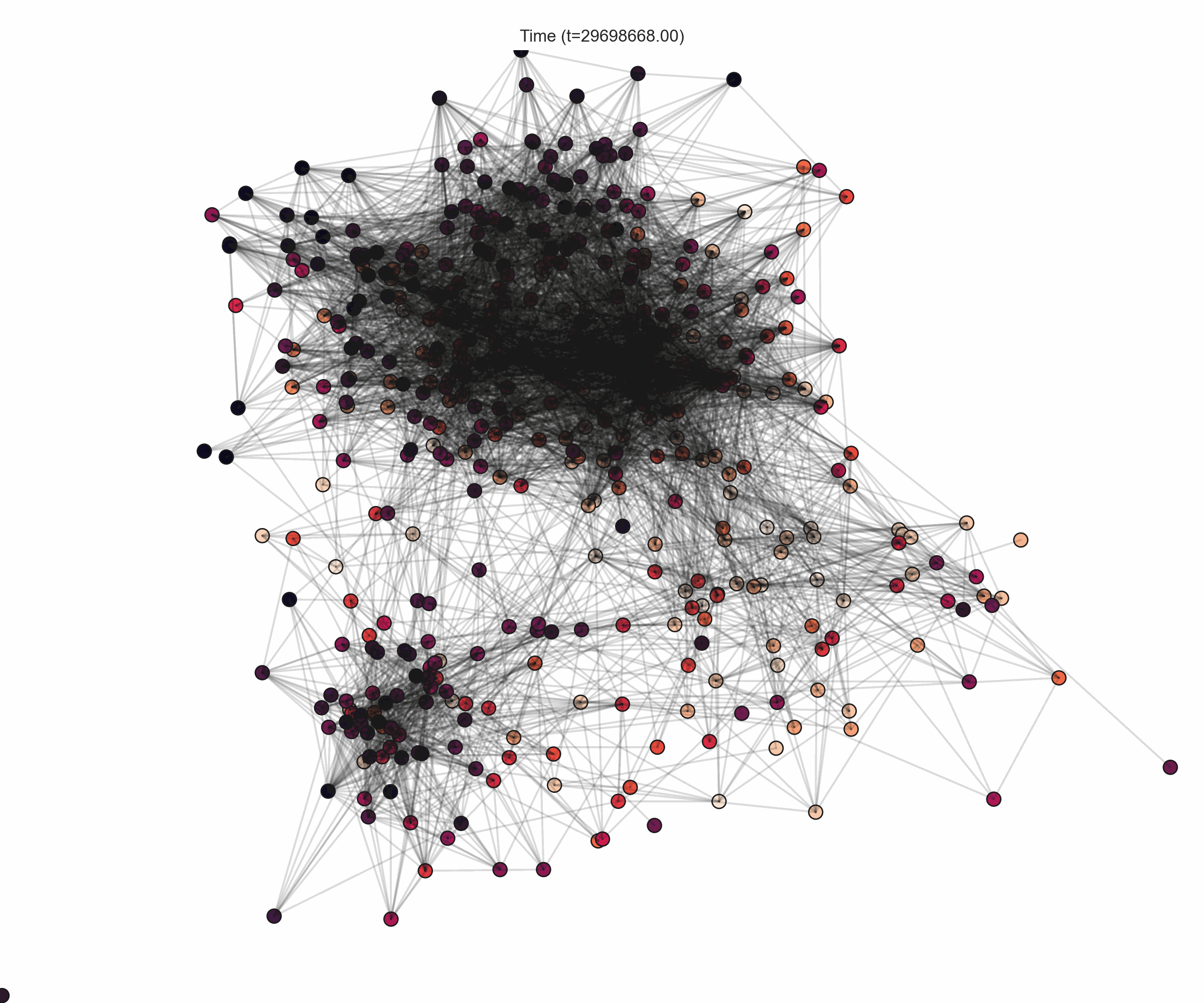}}
\hfill
\subfigure[$t=(t=33411000$]{\includegraphics[trim={5cm 6cm 5cm 6cm},clip,width=0.16\textwidth]{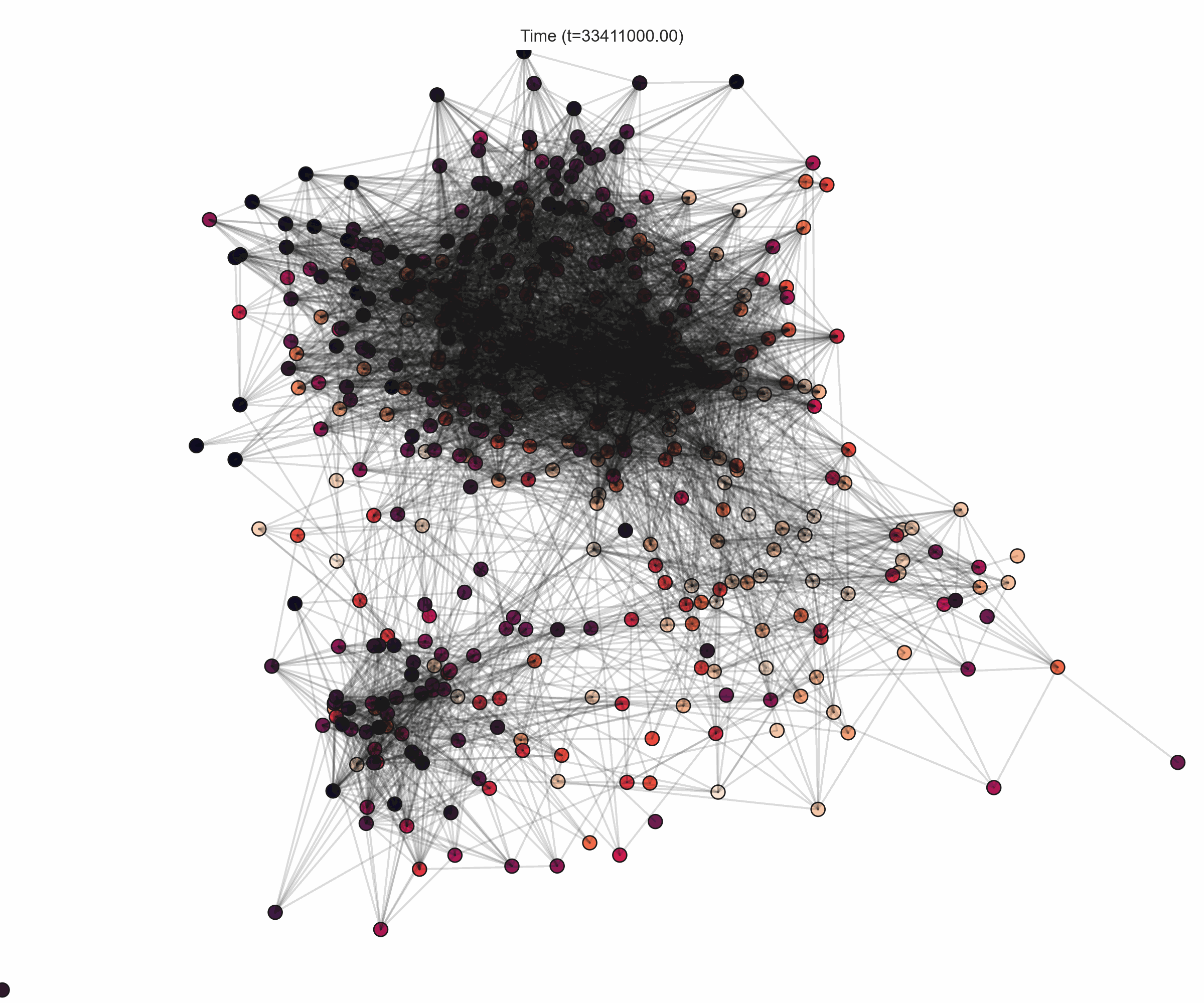}}
\hfill
\subfigure[$t=37123332$]{\includegraphics[trim={5cm 6cm 5cm 6cm},clip,width=0.16\textwidth]{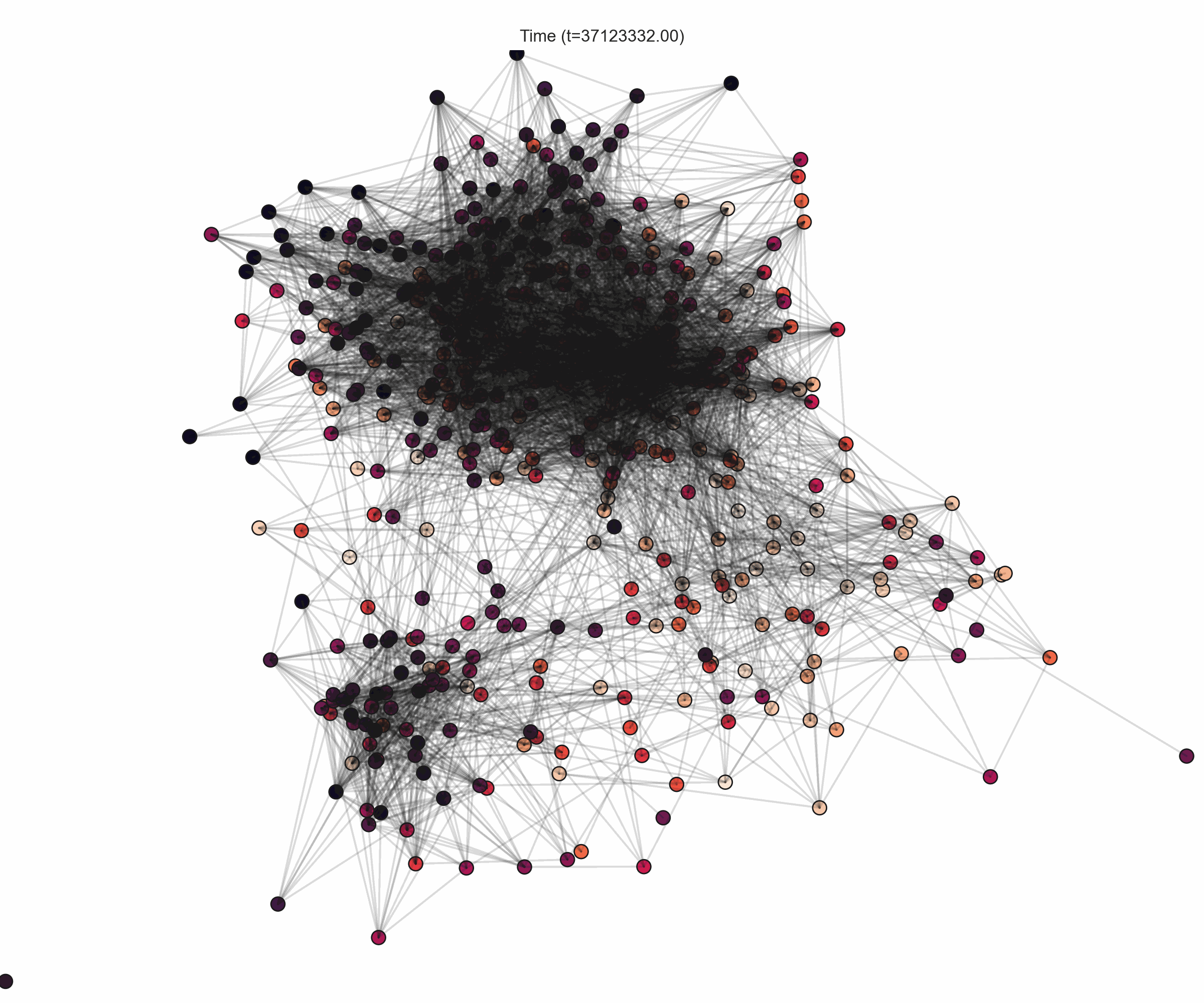}}
\hfill
\subfigure[$t=40835664$]{\includegraphics[trim={5cm 6cm 5cm 6cm},clip,width=0.16\textwidth]{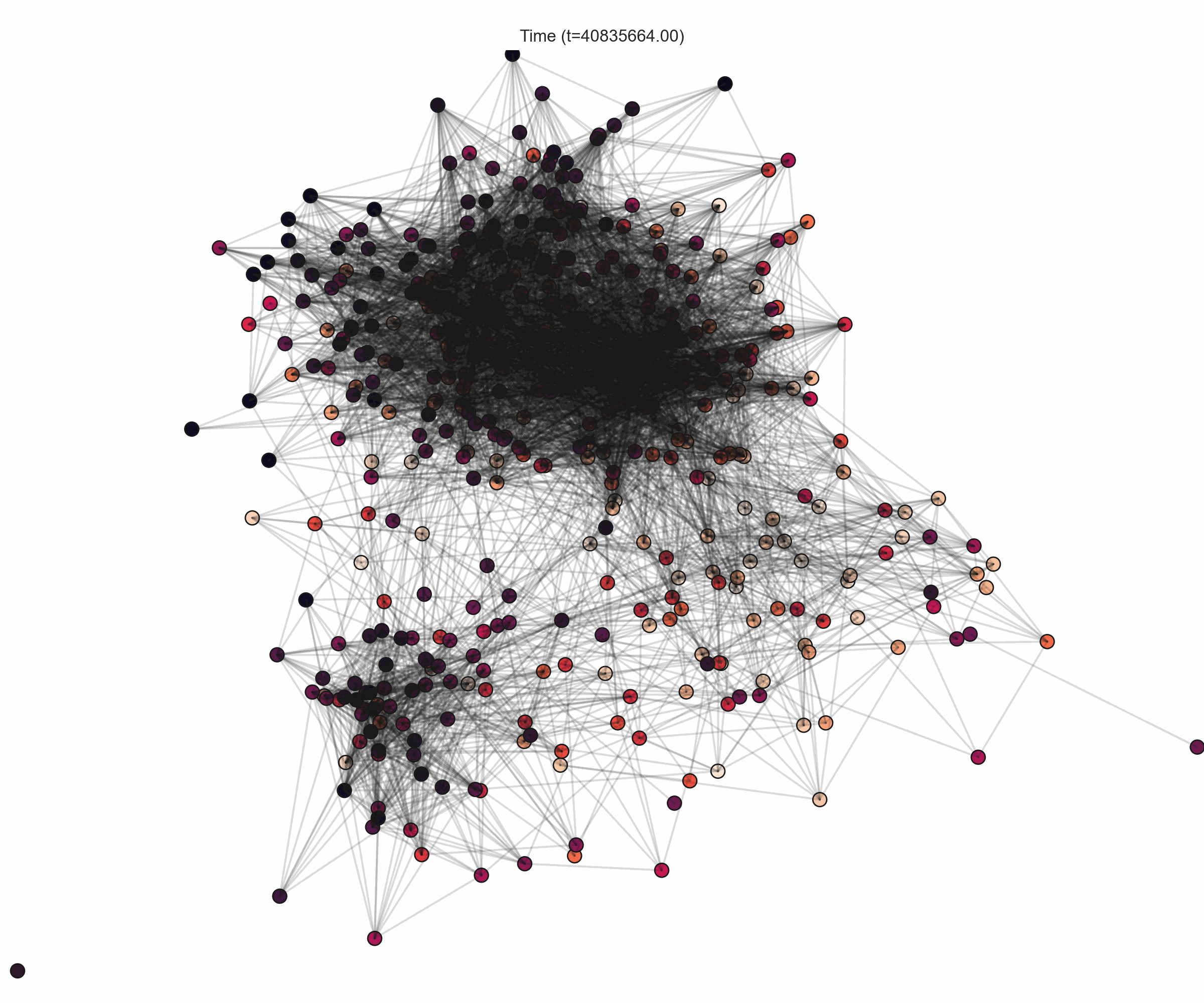}}
\hfill
\subfigure[$t=44548000$]{\includegraphics[trim={5cm 6cm 5cm 6cm},clip,width=0.16\textwidth]{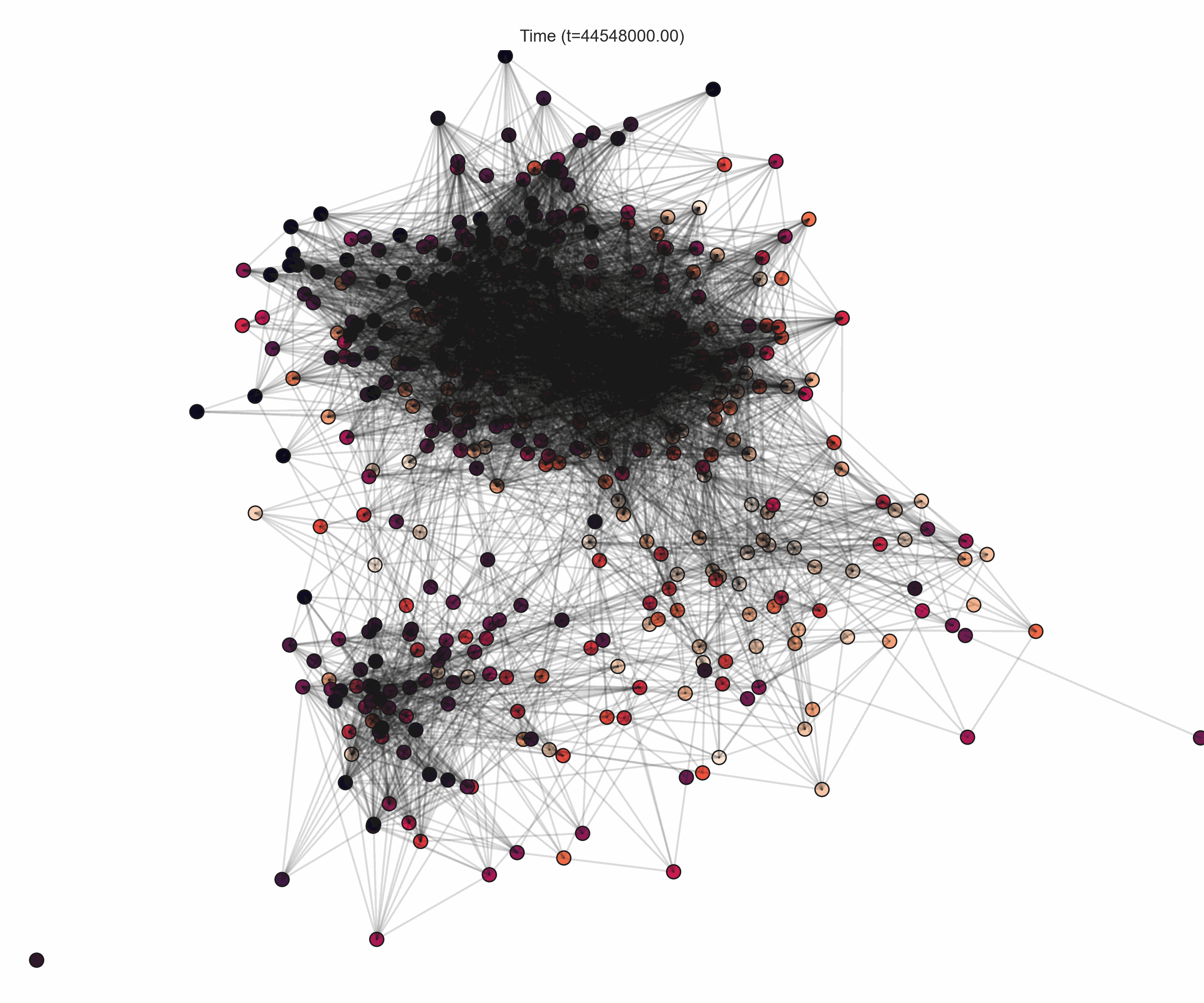}}
\subfigure[$t=48260332$]{\includegraphics[trim={5cm 6cm 5cm 6cm},clip,width=0.16\textwidth]{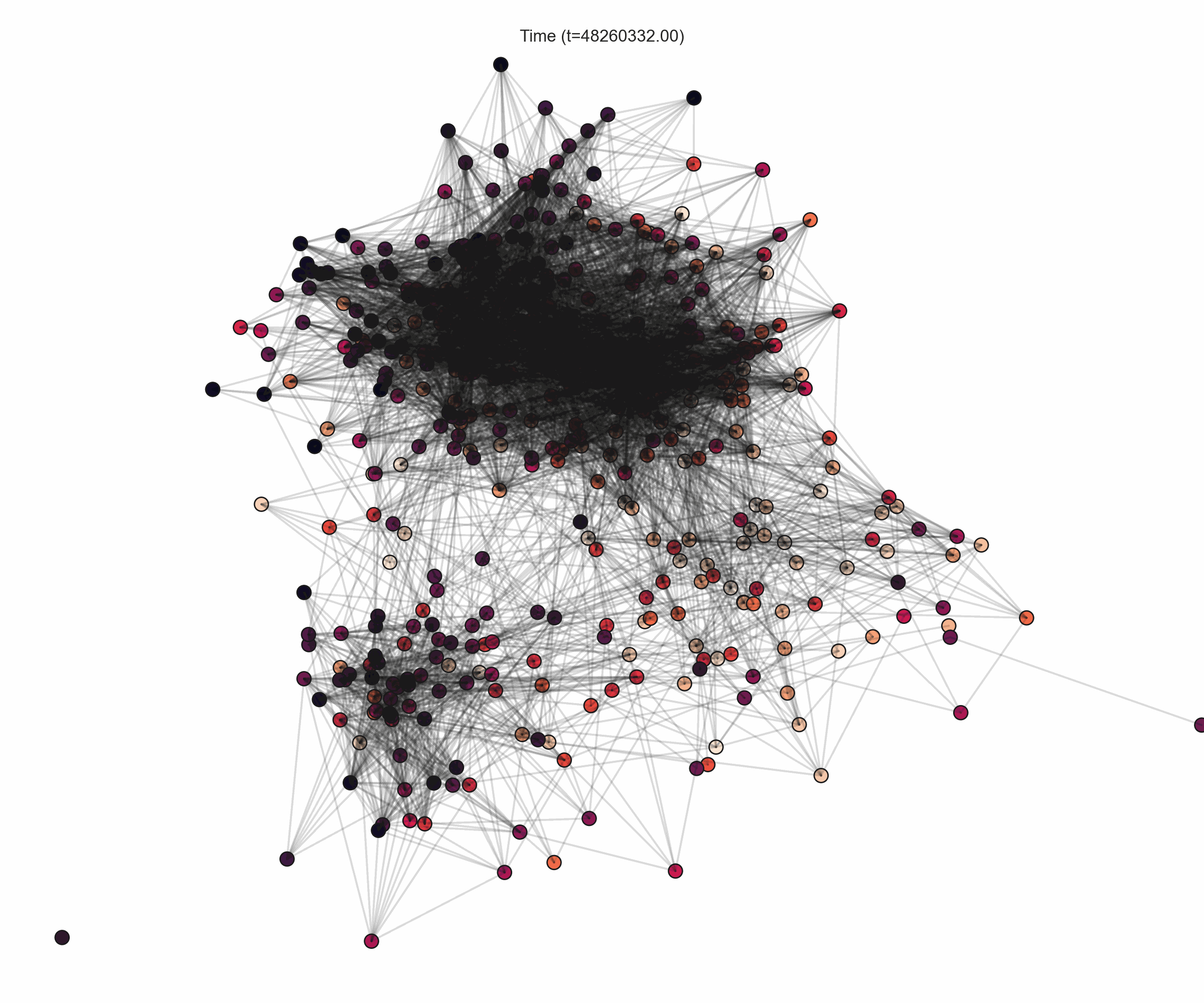}}
\hfill
\subfigure[$t=51972664$]{\includegraphics[trim={5cm 6cm 5cm 6cm},clip,width=0.16\textwidth]{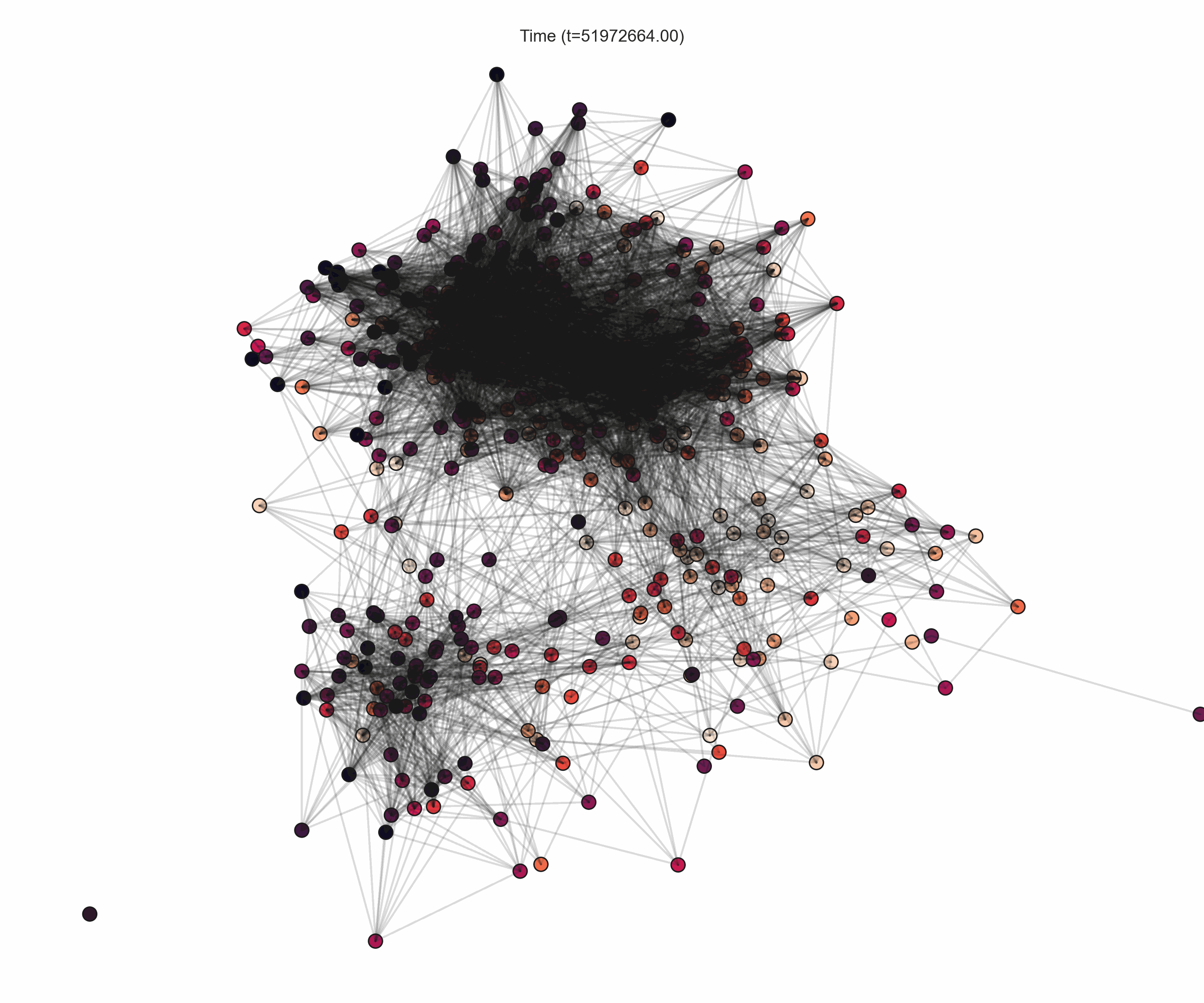}}
\hfill
\subfigure[$t=55685000$]{\includegraphics[trim={5cm 6cm 5cm 6cm},clip,width=0.16\textwidth]{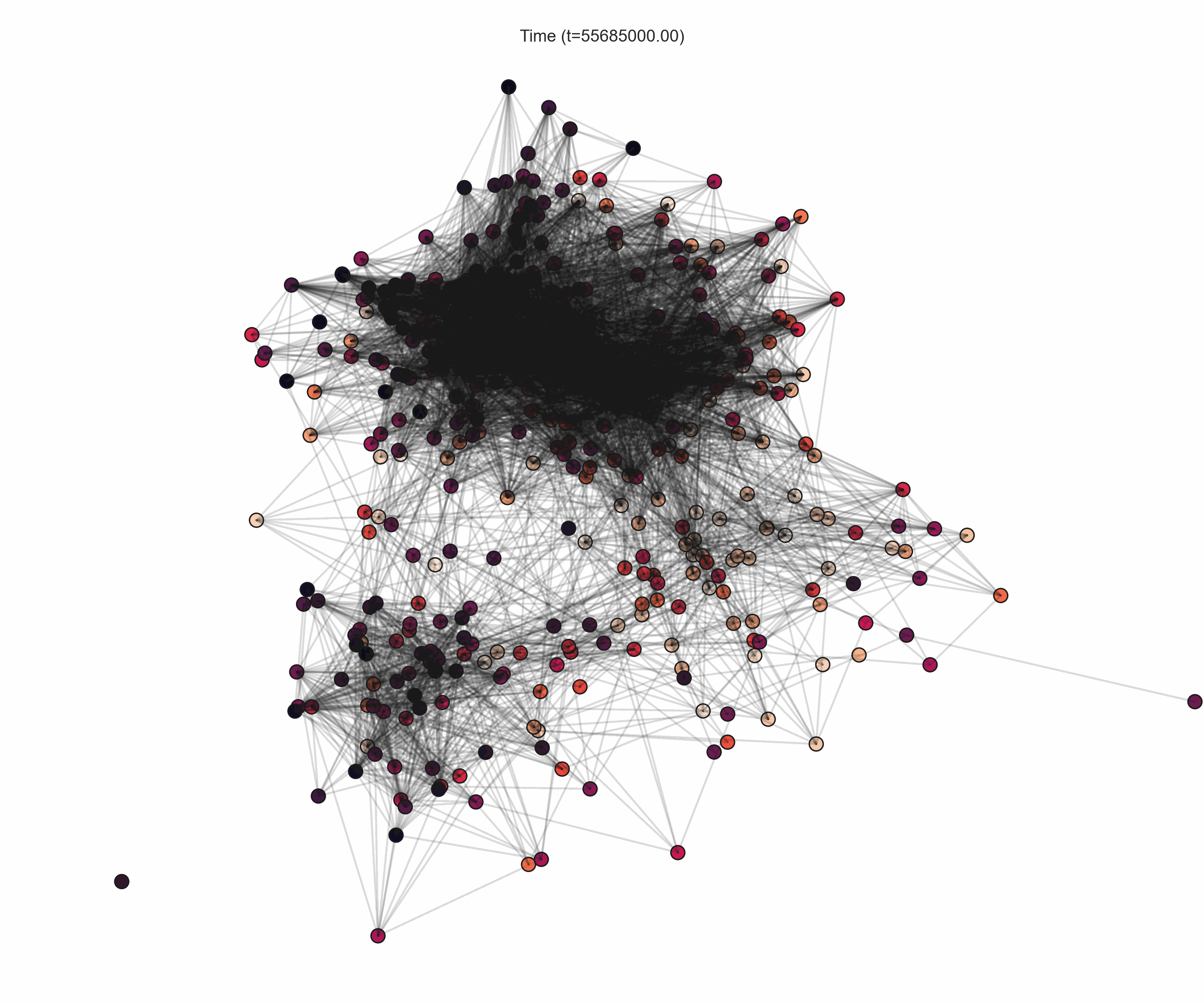}}
\hfill
\subfigure[$t=59397332$]{\includegraphics[trim={5cm 6cm 5cm 6cm},clip,width=0.16\textwidth]{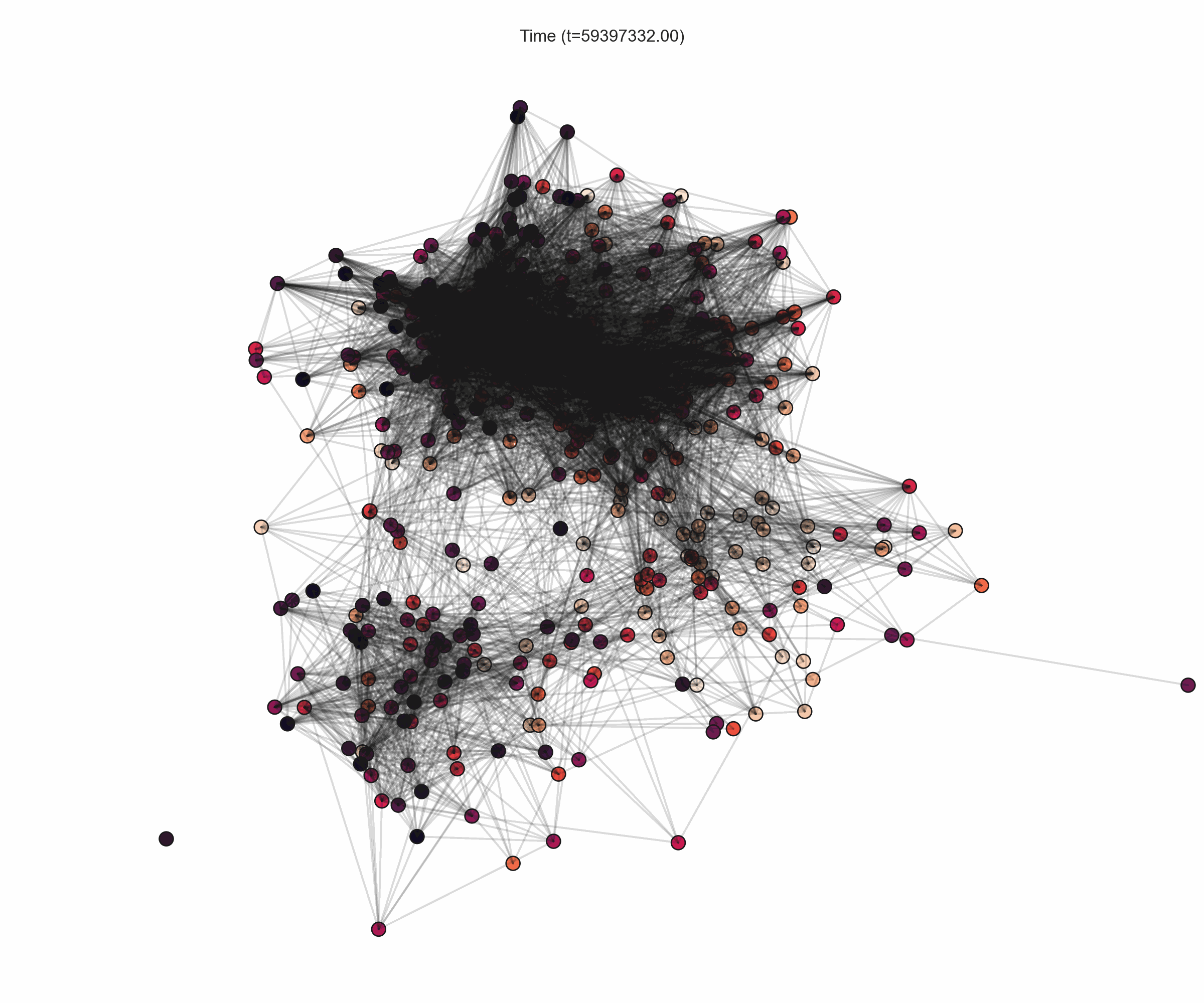}}
\hfill
\subfigure[$t=63109664$]{\includegraphics[trim={5cm 6cm 5cm 6cm},clip,width=0.16\textwidth]{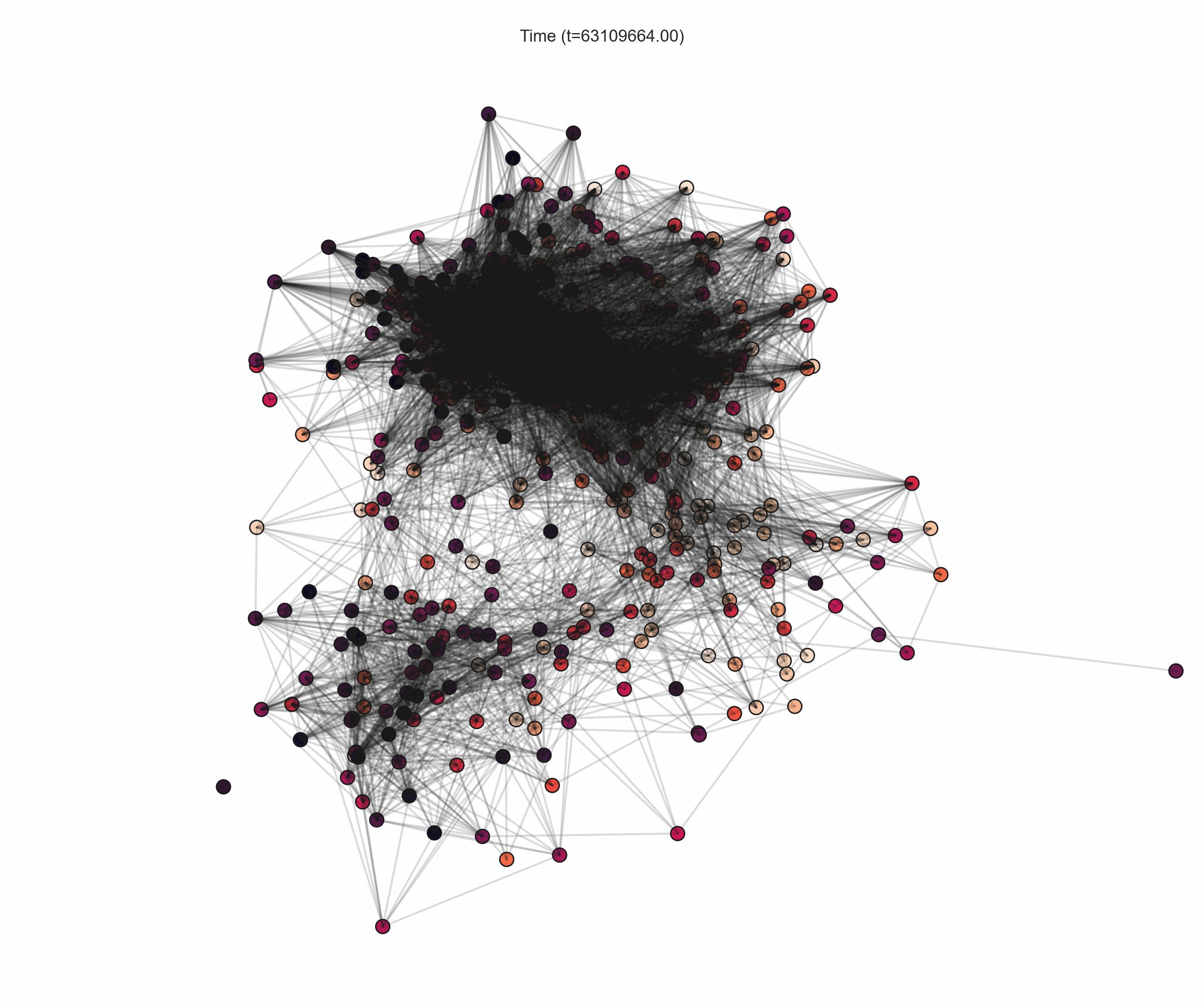}}
\hfill
\subfigure[$t=66822000$]{\includegraphics[trim={5cm 6cm 5cm 6cm},clip,width=0.16\textwidth]{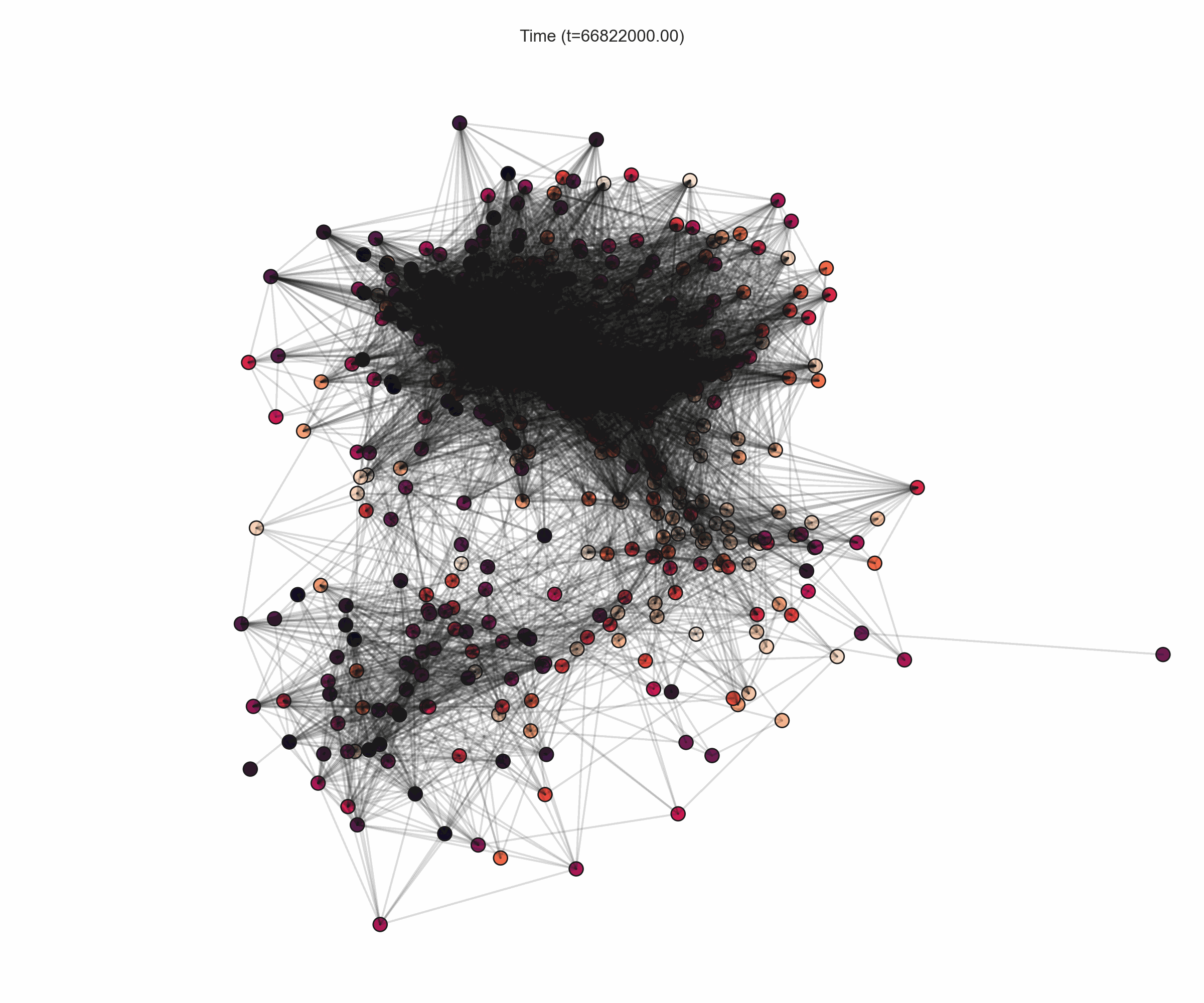}}
\caption{Snapshots of the continuous-time embeddings learned by \textsc{\modelname} for various time points over \textsl{Facebook}.}\label{fig:appendix_visualization_facebook}
\end{figure*}
\begin{figure*}[!ht]
\centering
\subfigure[$t=(t=1989.61$]{\includegraphics[trim={5cm 6cm 5cm 6cm},clip,width=0.16\textwidth]{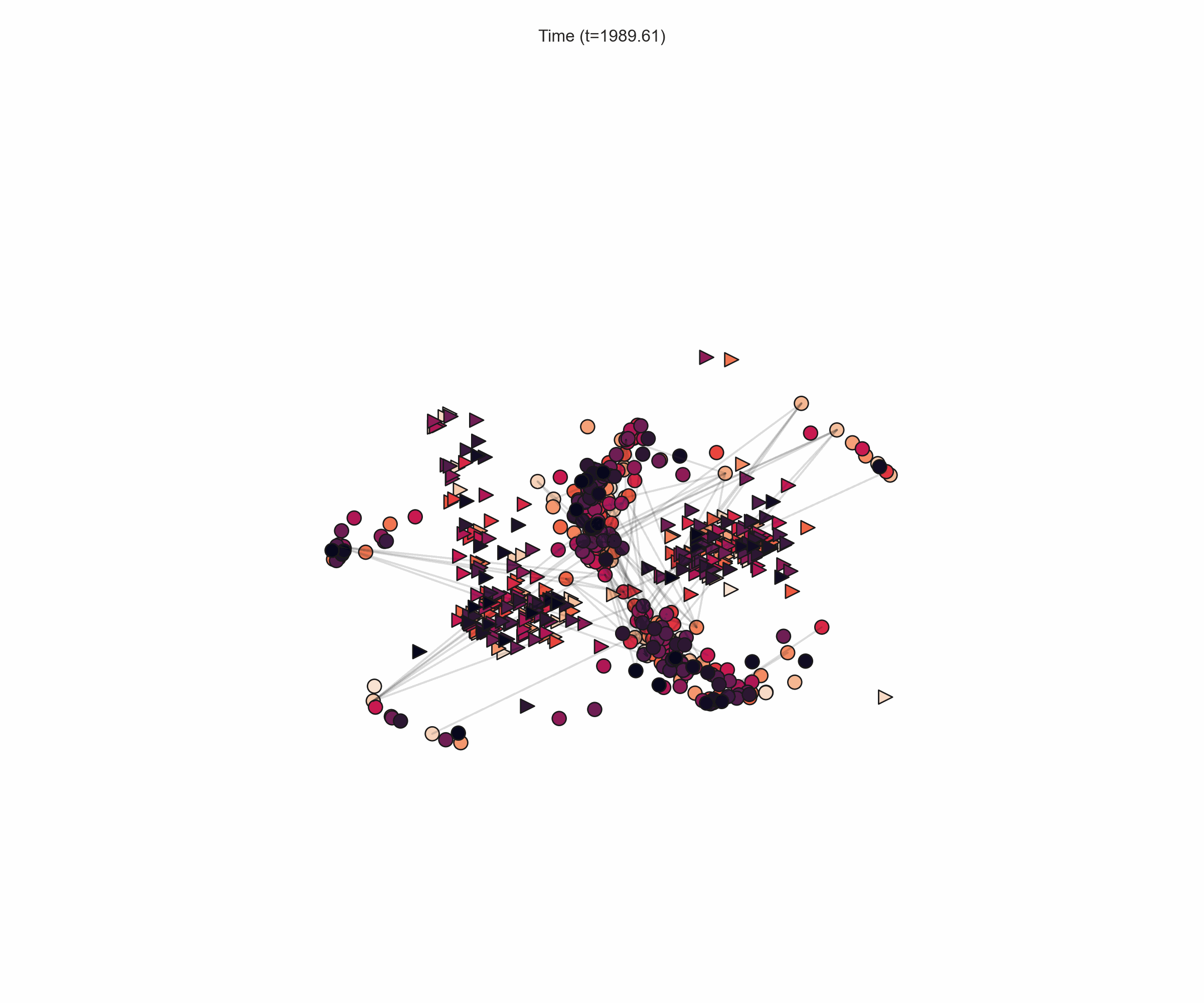}}
\hfill
\subfigure[$t=1990.21$]{\includegraphics[trim={5cm 6cm 5cm 6cm},clip,width=0.16\textwidth]{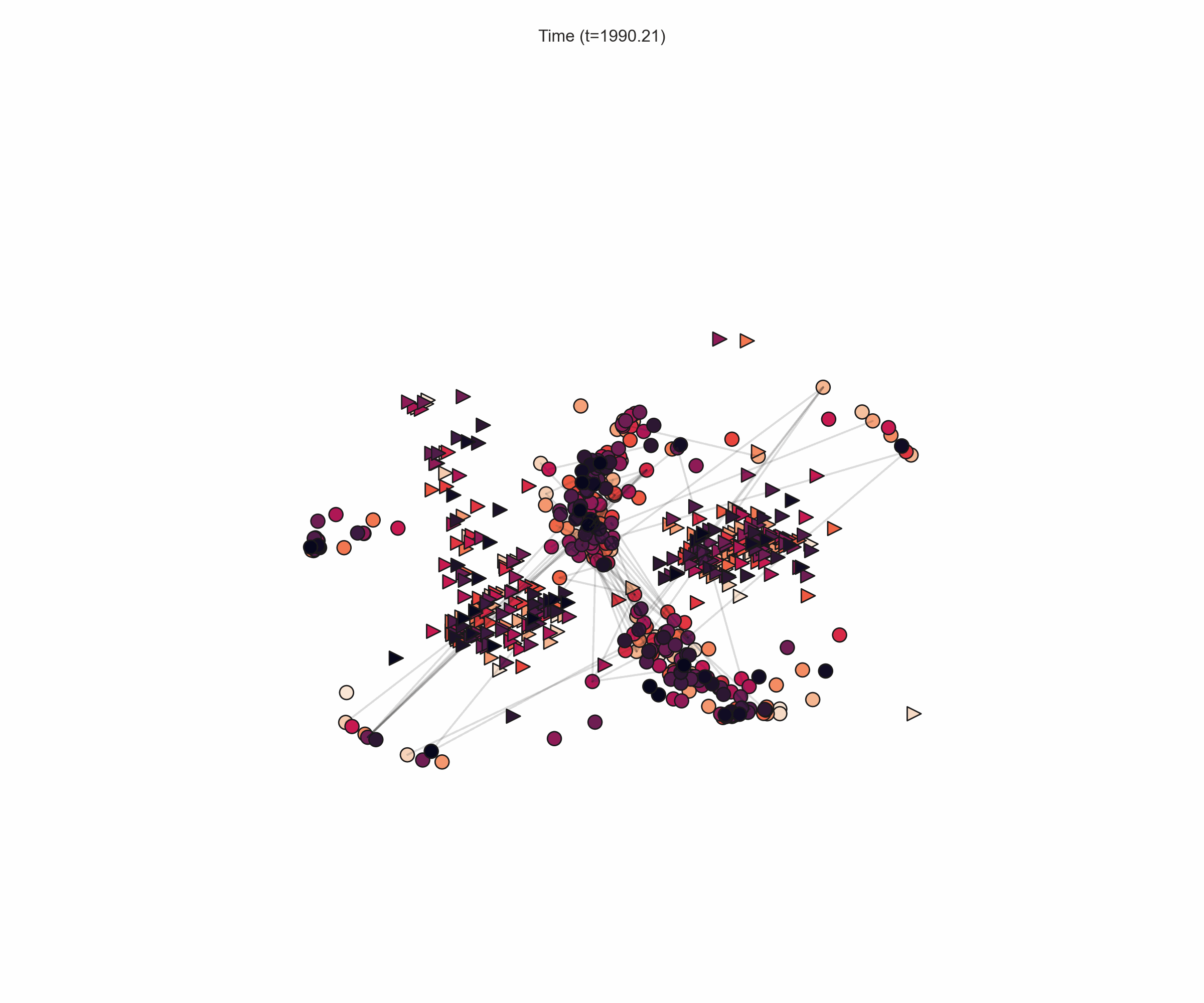}}
\hfill
\subfigure[$t=1990.82$]{\includegraphics[trim={5cm 6cm 5cm 6cm},clip,width=0.16\textwidth]{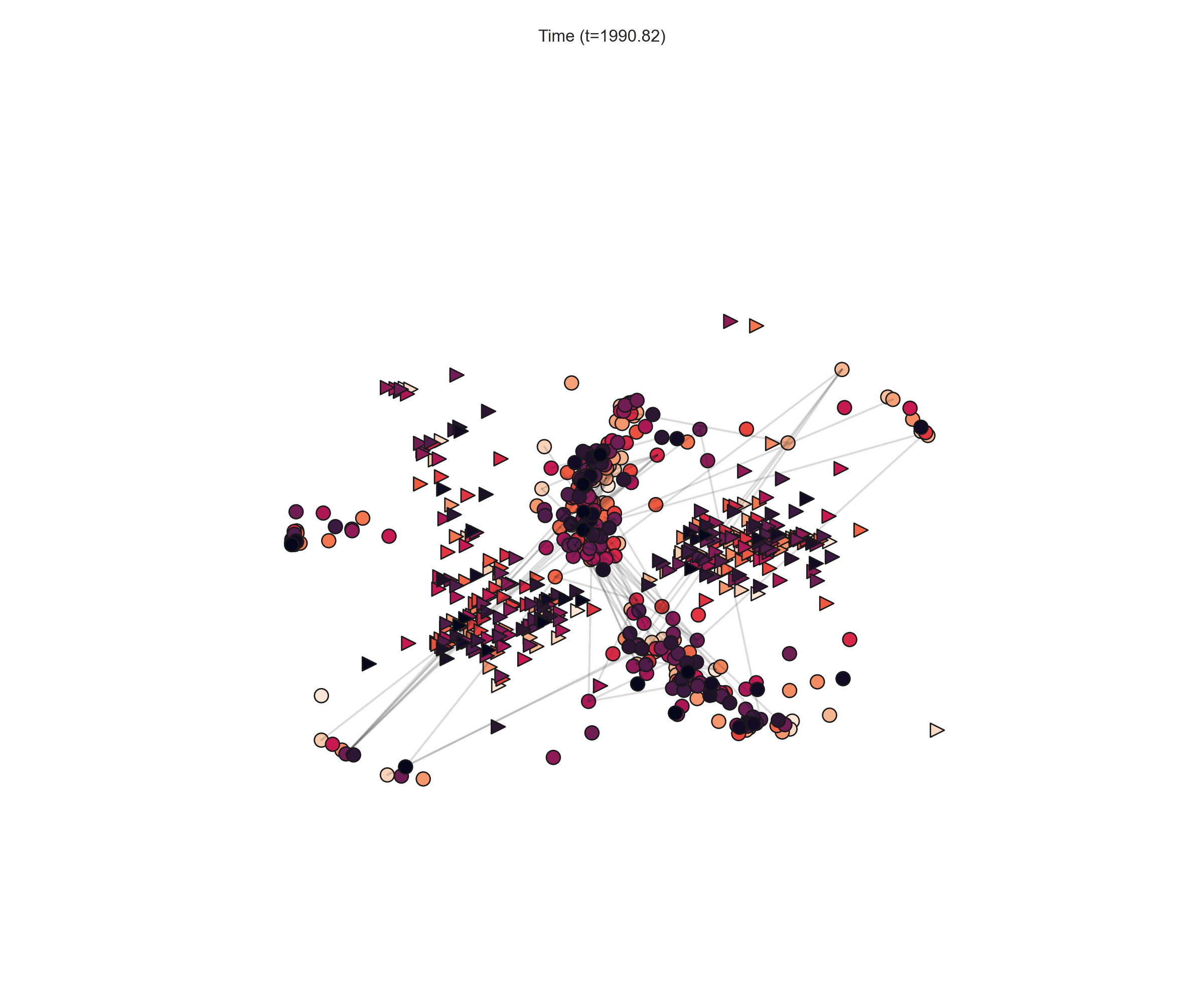}}
\hfill
\subfigure[$t=1991.42$]{\includegraphics[trim={5cm 6cm 5cm 6cm},clip,width=0.16\textwidth]{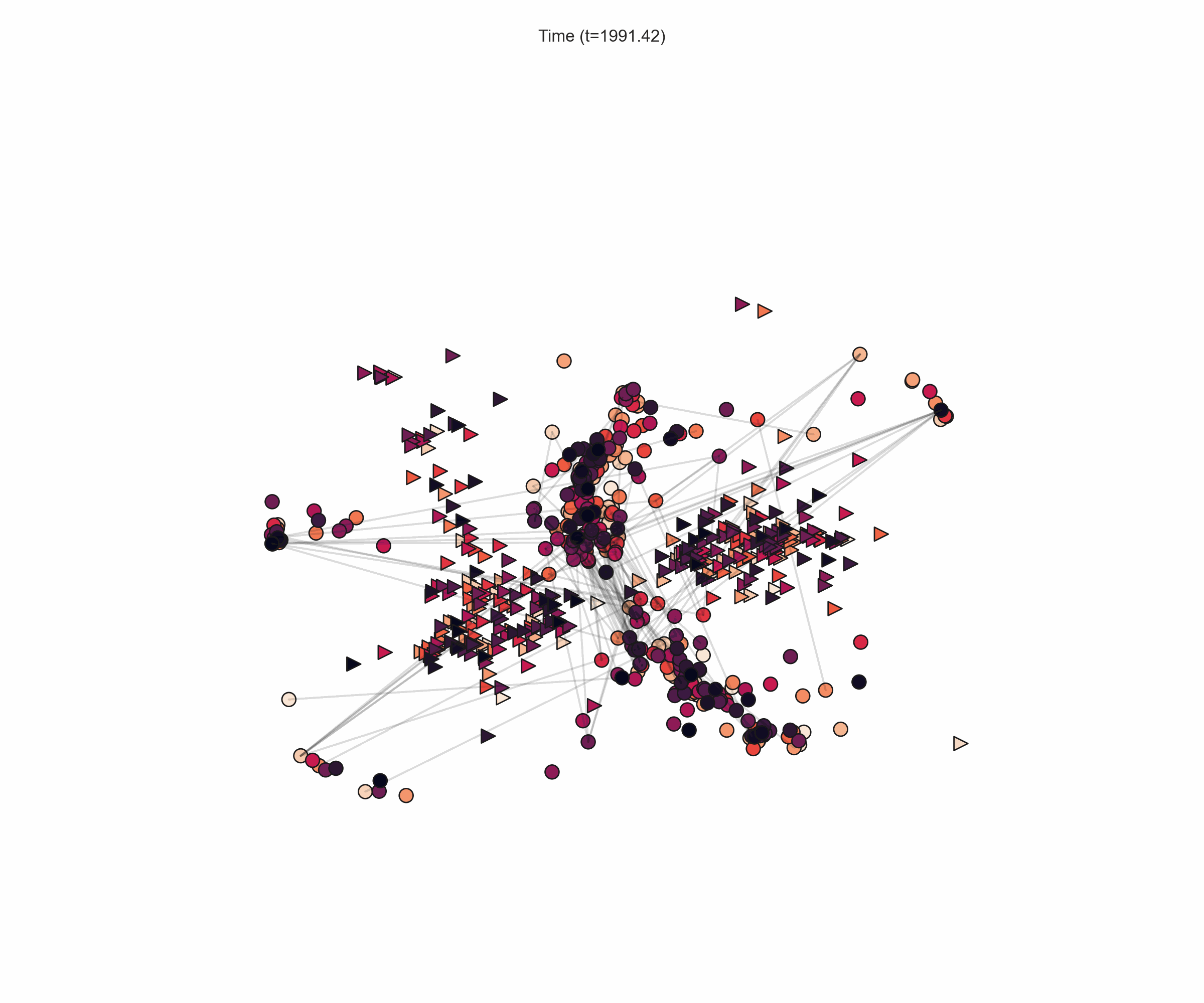}}
\hfill
\subfigure[$t=1992.03$]{\includegraphics[trim={5cm 6cm 5cm 6cm},clip,width=0.16\textwidth]{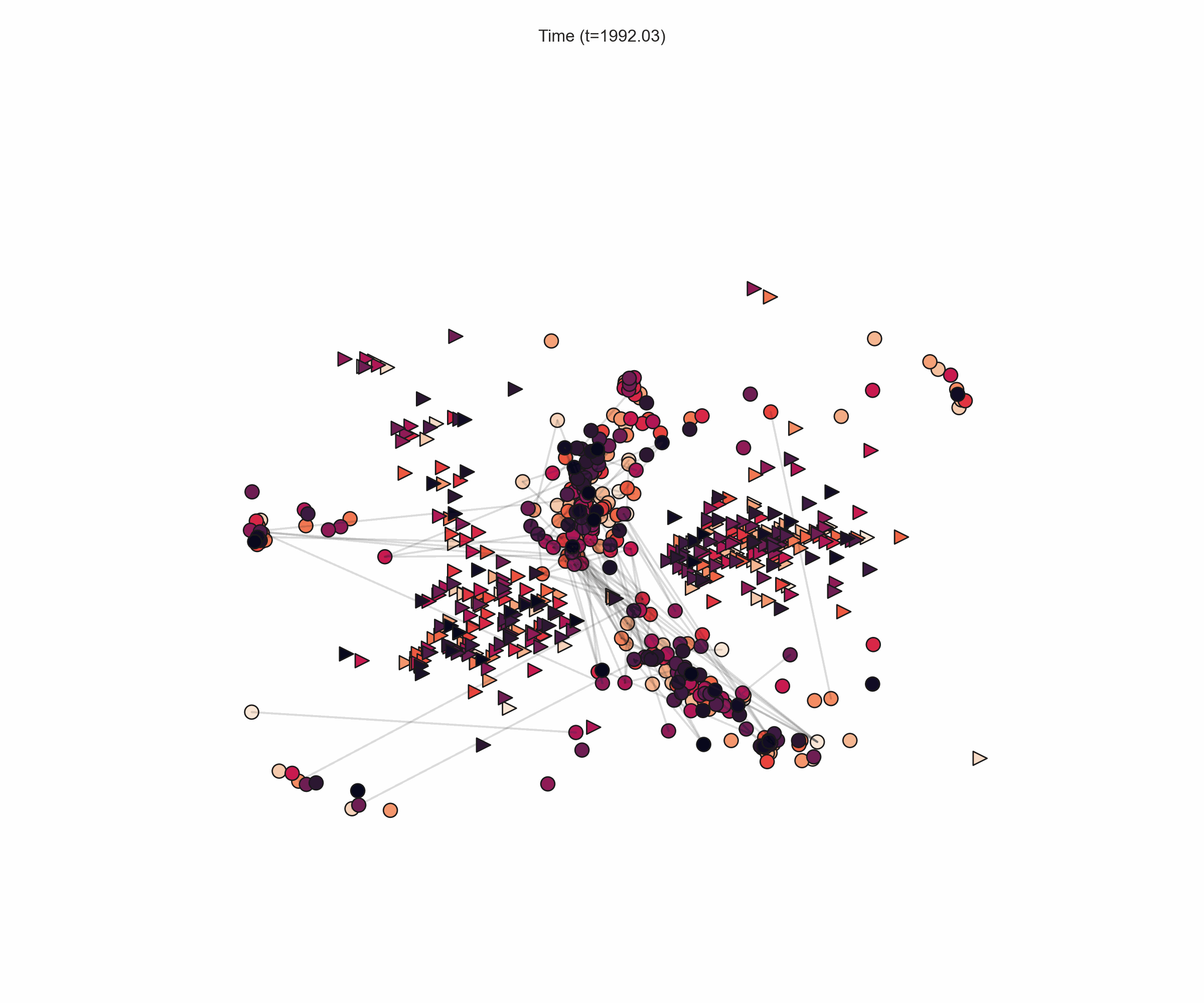}}
\hfill
\subfigure[$t=1992.64$]{\includegraphics[trim={5cm 6cm 5cm 6cm},clip,width=0.16\textwidth]{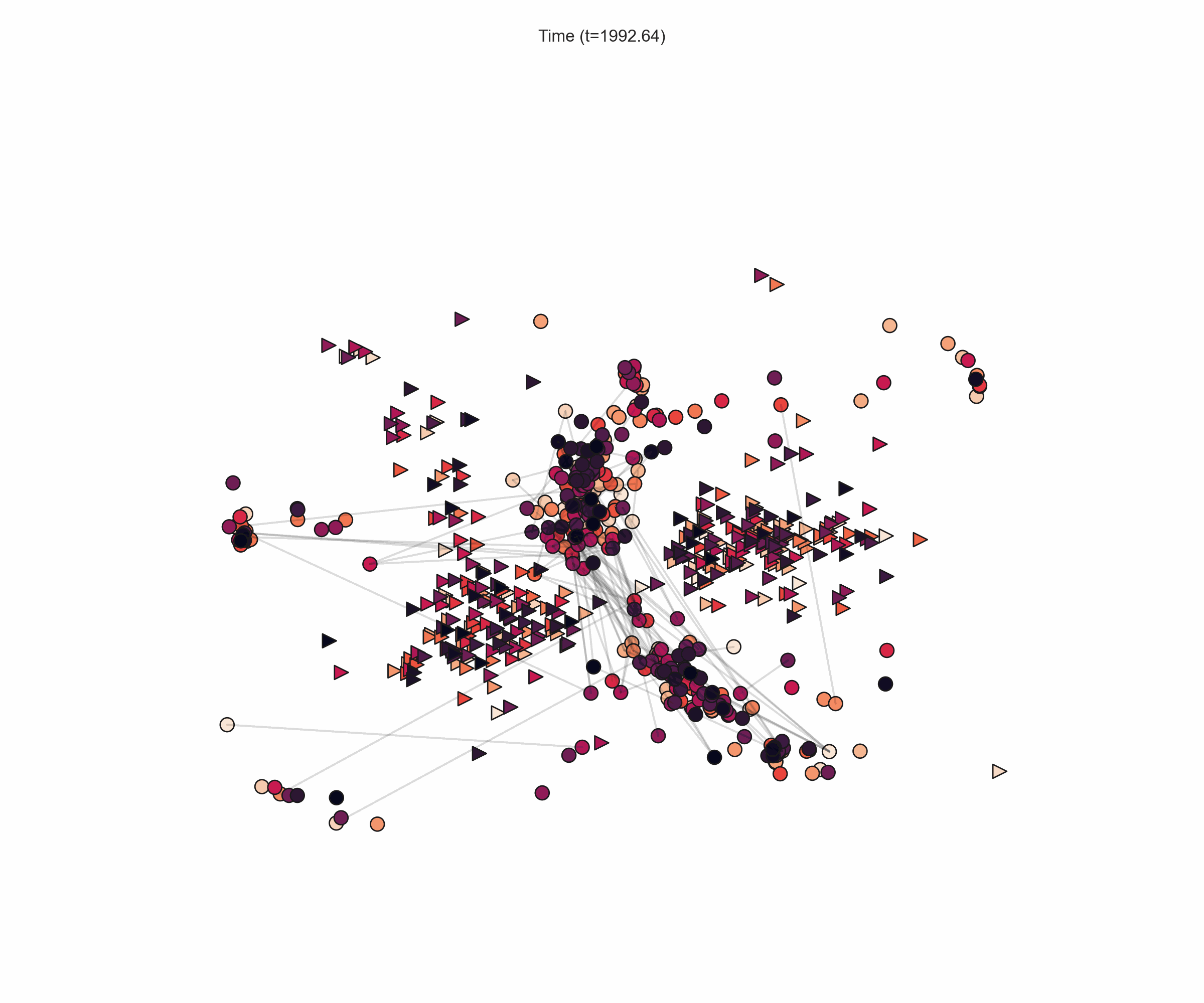}}
\subfigure[$t=1993.24$]{\includegraphics[trim={5cm 6cm 5cm 6cm},clip,width=0.16\textwidth]{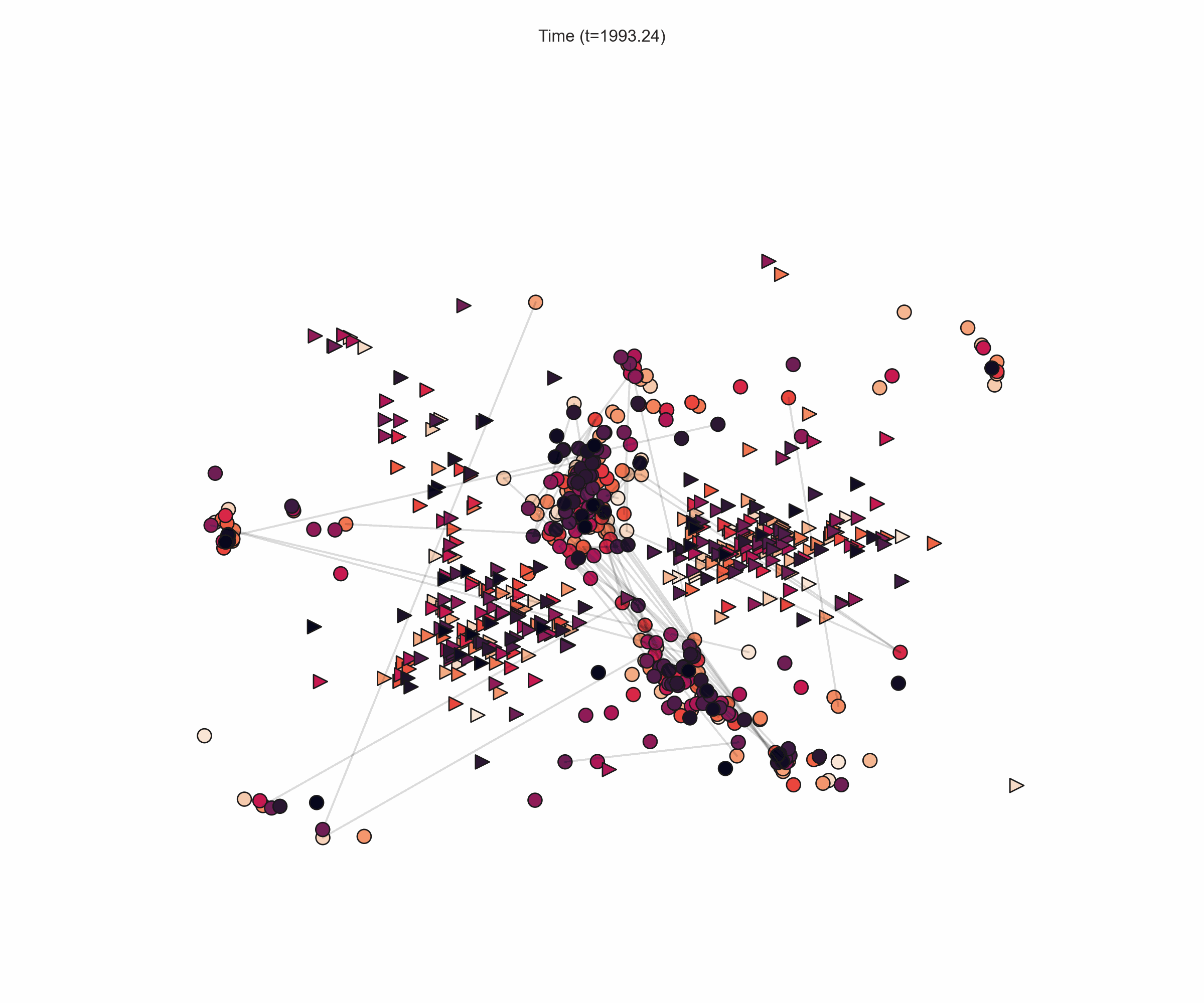}}
\hfill
\subfigure[$t=1993.85$]{\includegraphics[trim={5cm 6cm 5cm 6cm},clip,width=0.16\textwidth]{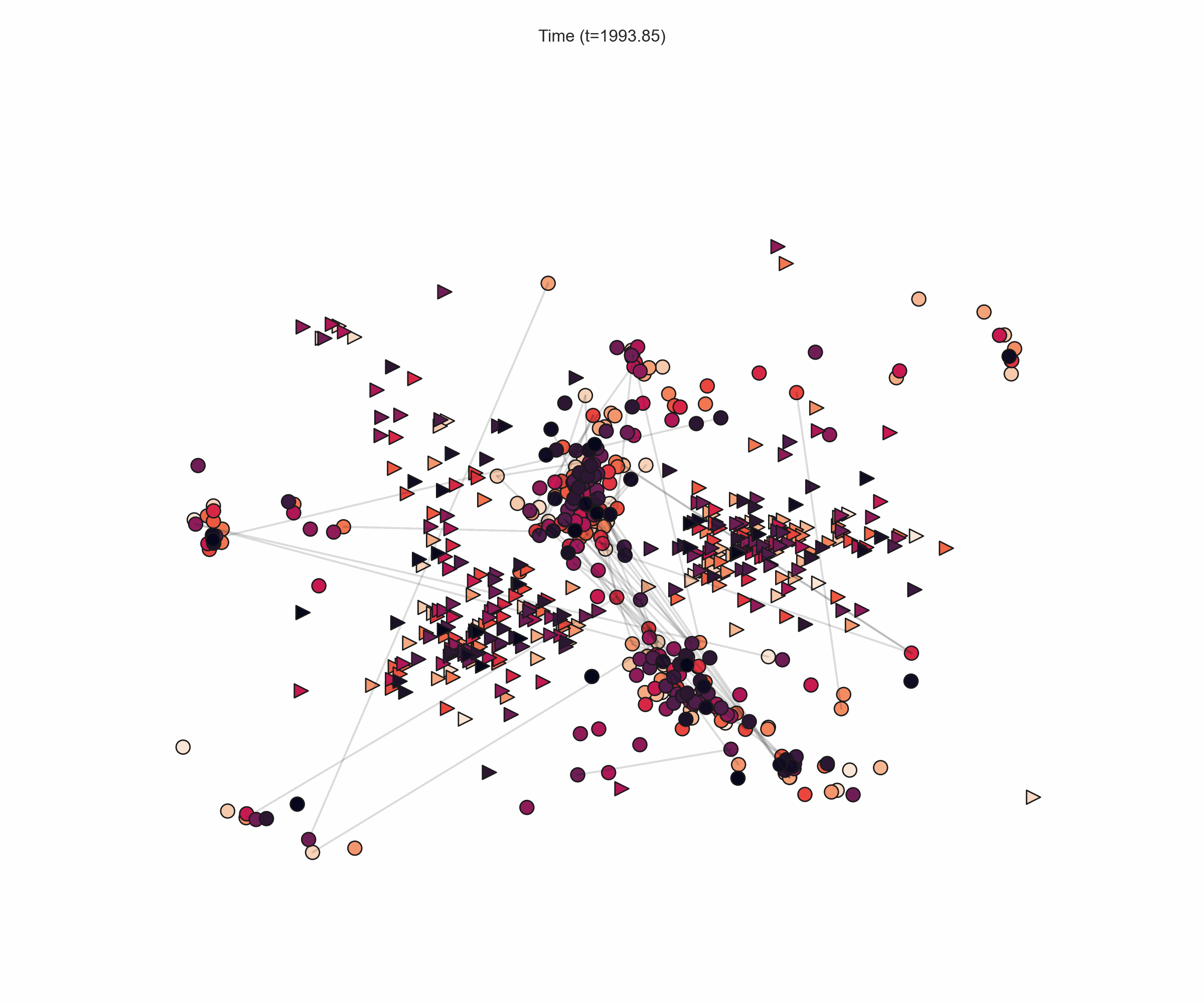}}
\hfill
\subfigure[$t=1994.45$]{\includegraphics[trim={5cm 6cm 5cm 6cm},clip,width=0.16\textwidth]{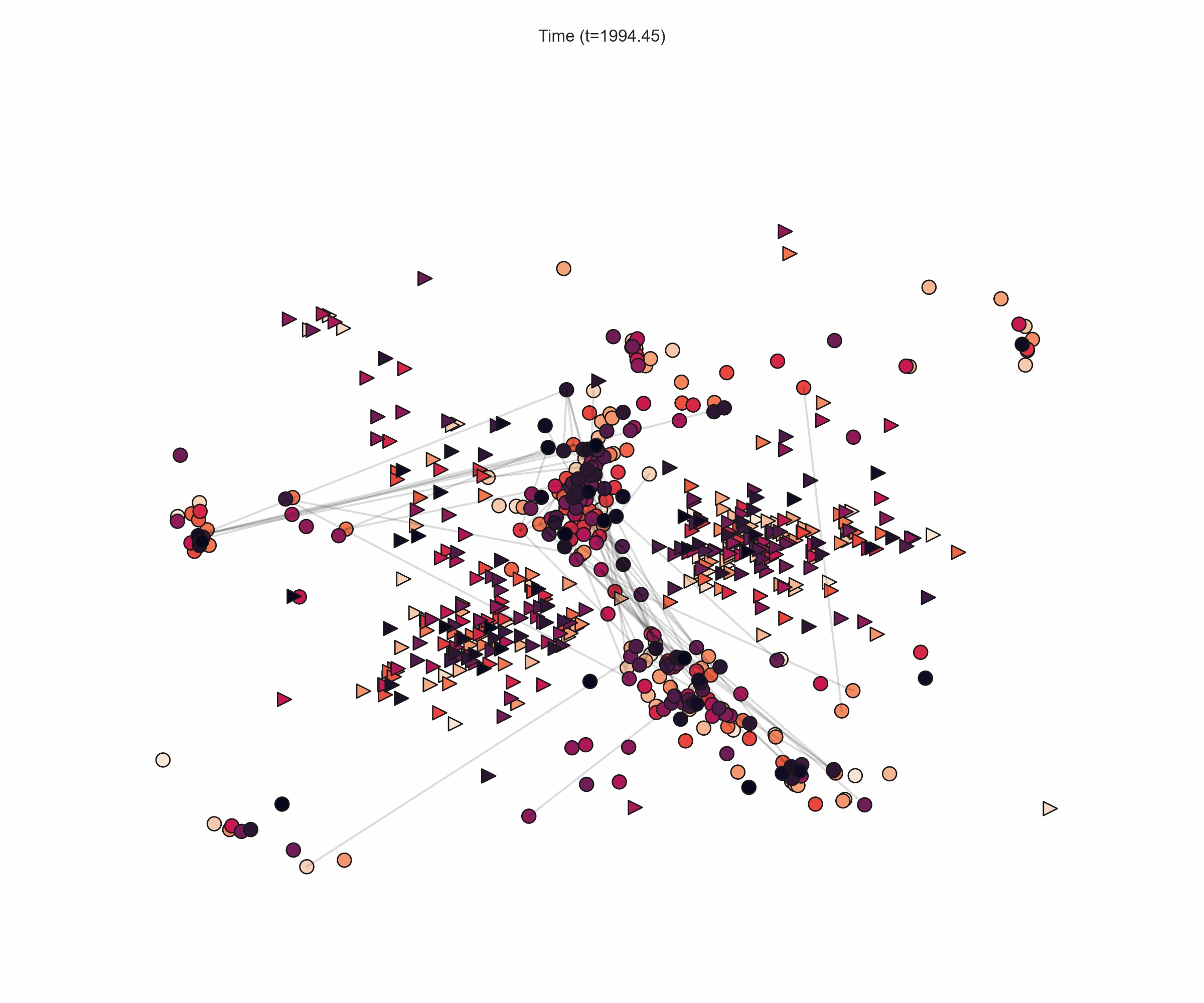}}
\hfill
\subfigure[$t=1995.06$]{\includegraphics[trim={5cm 6cm 5cm 6cm},clip,width=0.16\textwidth]{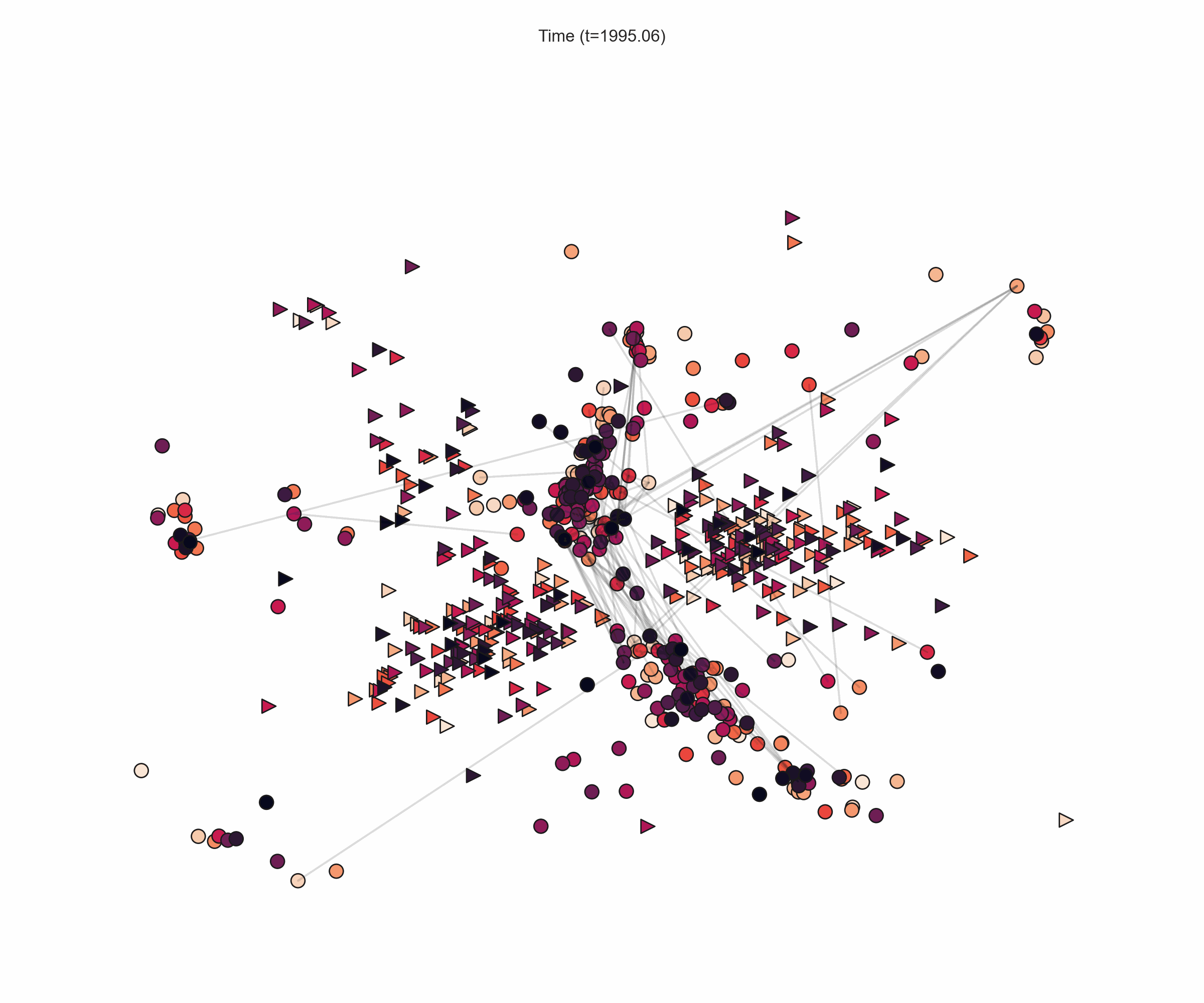}}
\hfill
\subfigure[$t=1995.67$]{\includegraphics[trim={5cm 6cm 5cm 6cm},clip,width=0.16\textwidth]{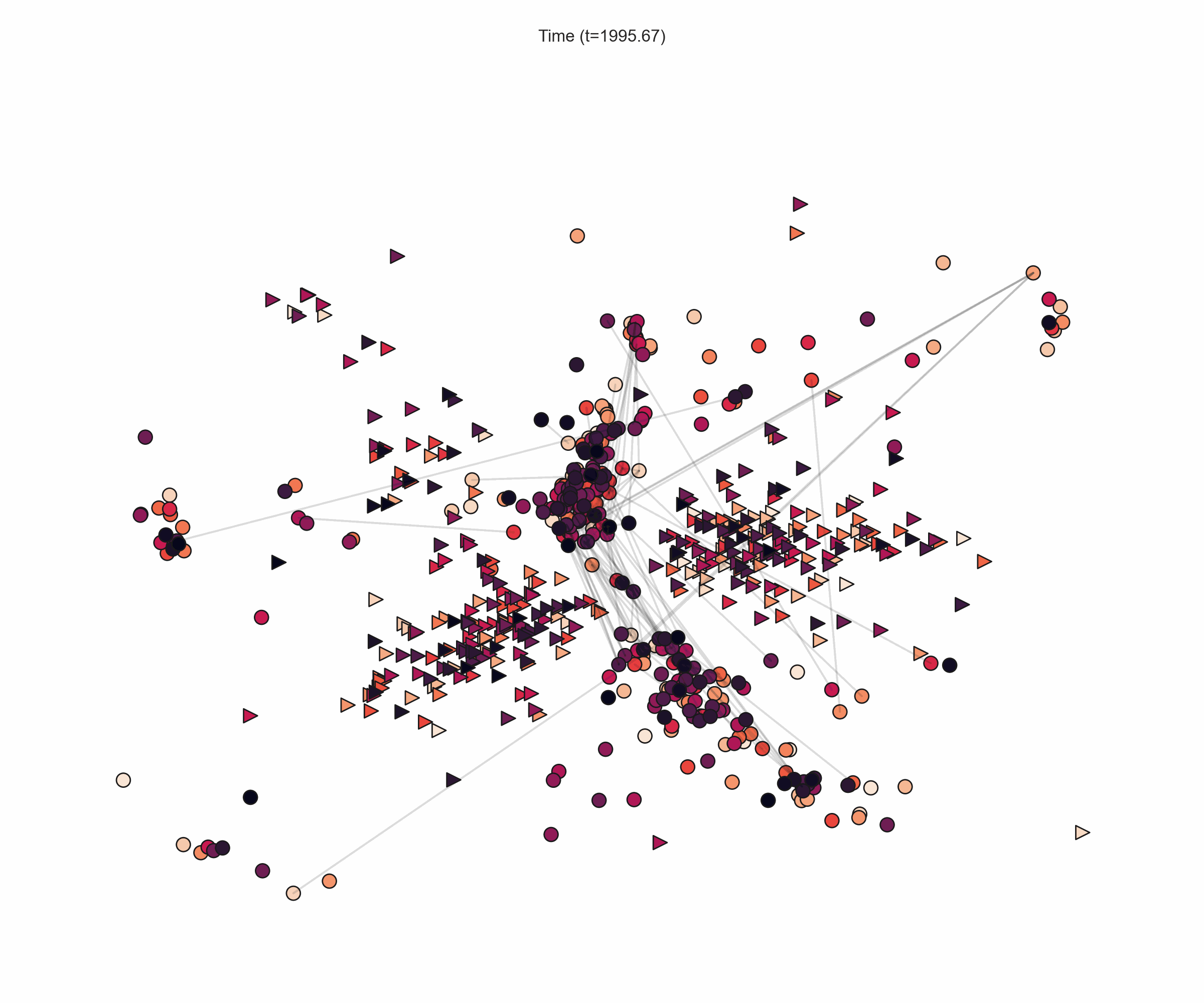}}
\hfill
\subfigure[$t=1996.27$]{\includegraphics[trim={5cm 6cm 5cm 6cm},clip,width=0.16\textwidth]{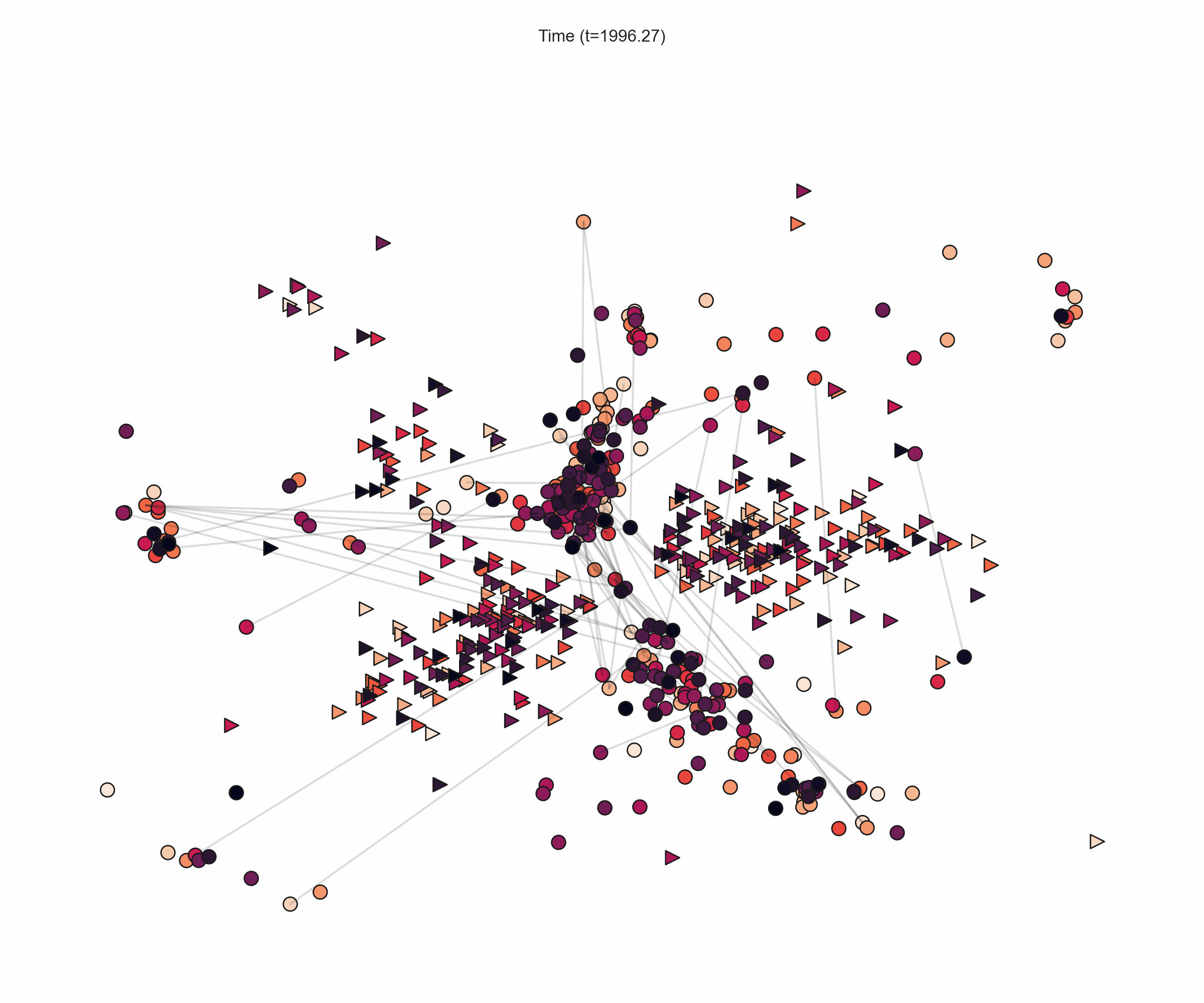}}
\subfigure[$t=1996.88$]{\includegraphics[trim={5cm 6cm 5cm 6cm},clip,width=0.16\textwidth]{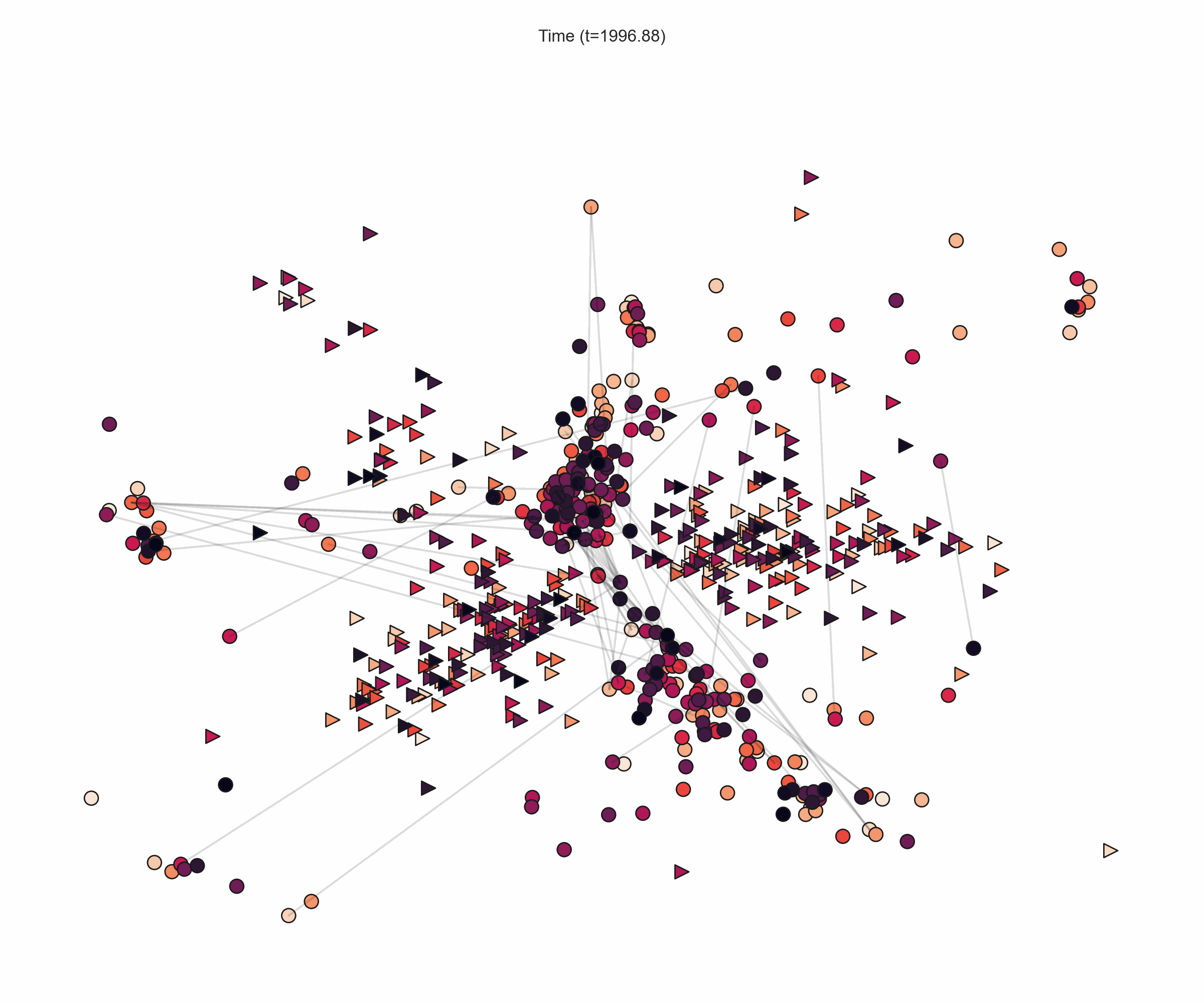}}
\hfill
\subfigure[$t=1997.48$]{\includegraphics[trim={5cm 6cm 5cm 6cm},clip,width=0.16\textwidth]{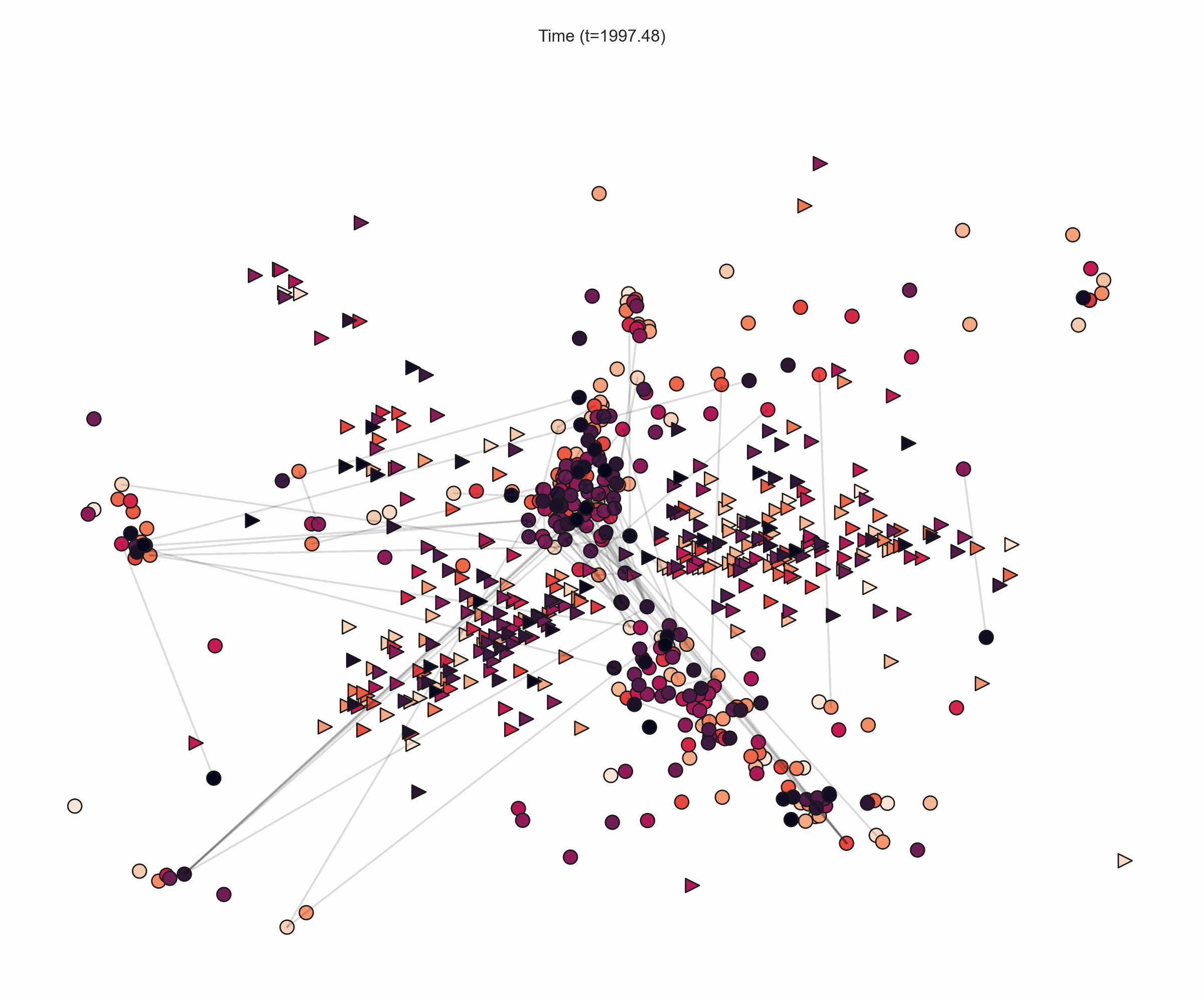}}
\hfill
\subfigure[$t=1998.09$]{\includegraphics[trim={5cm 6cm 5cm 6cm},clip,width=0.16\textwidth]{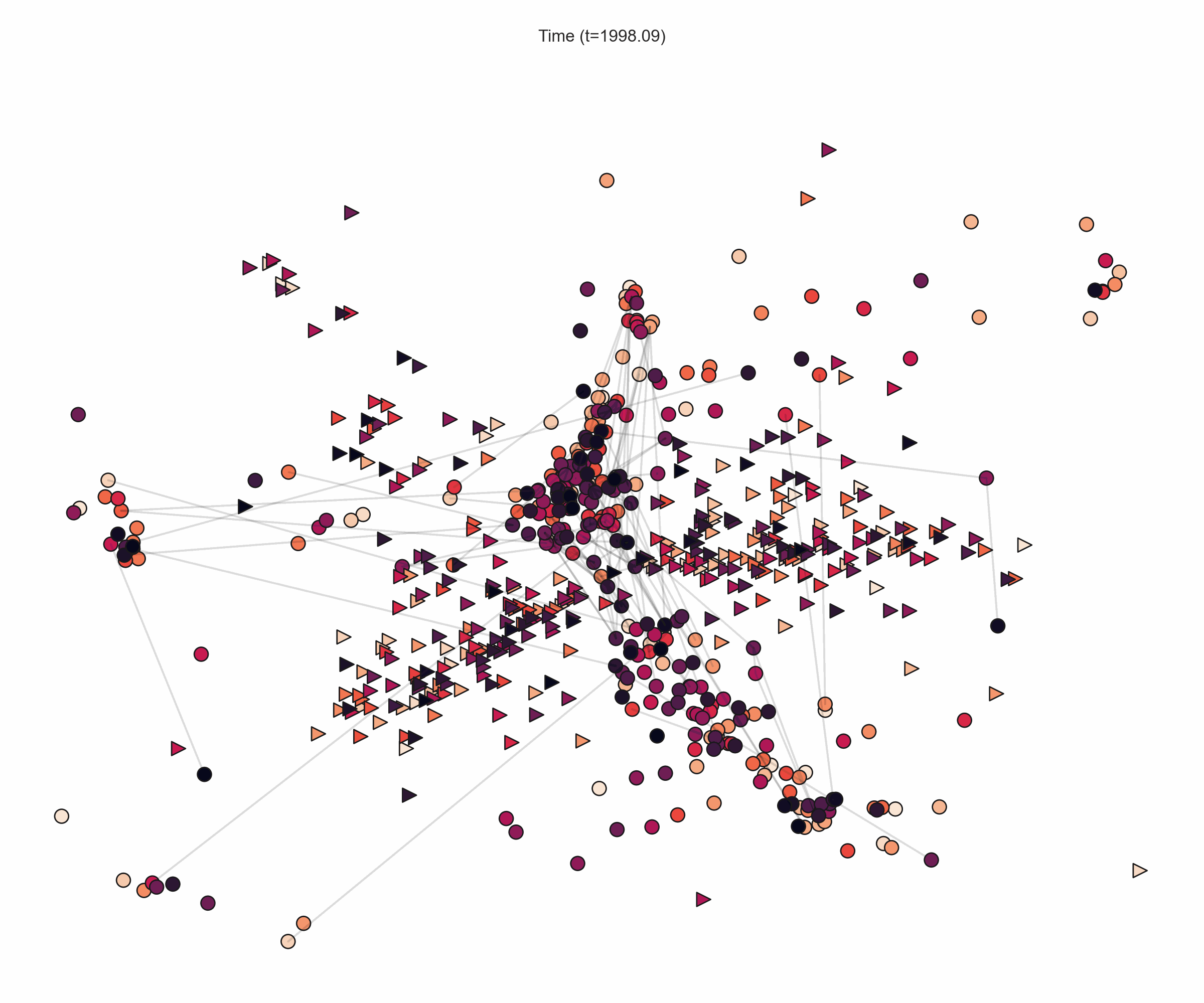}}
\hfill
\subfigure[$t=1998.70$]{\includegraphics[trim={5cm 6cm 5cm 6cm},clip,width=0.16\textwidth]{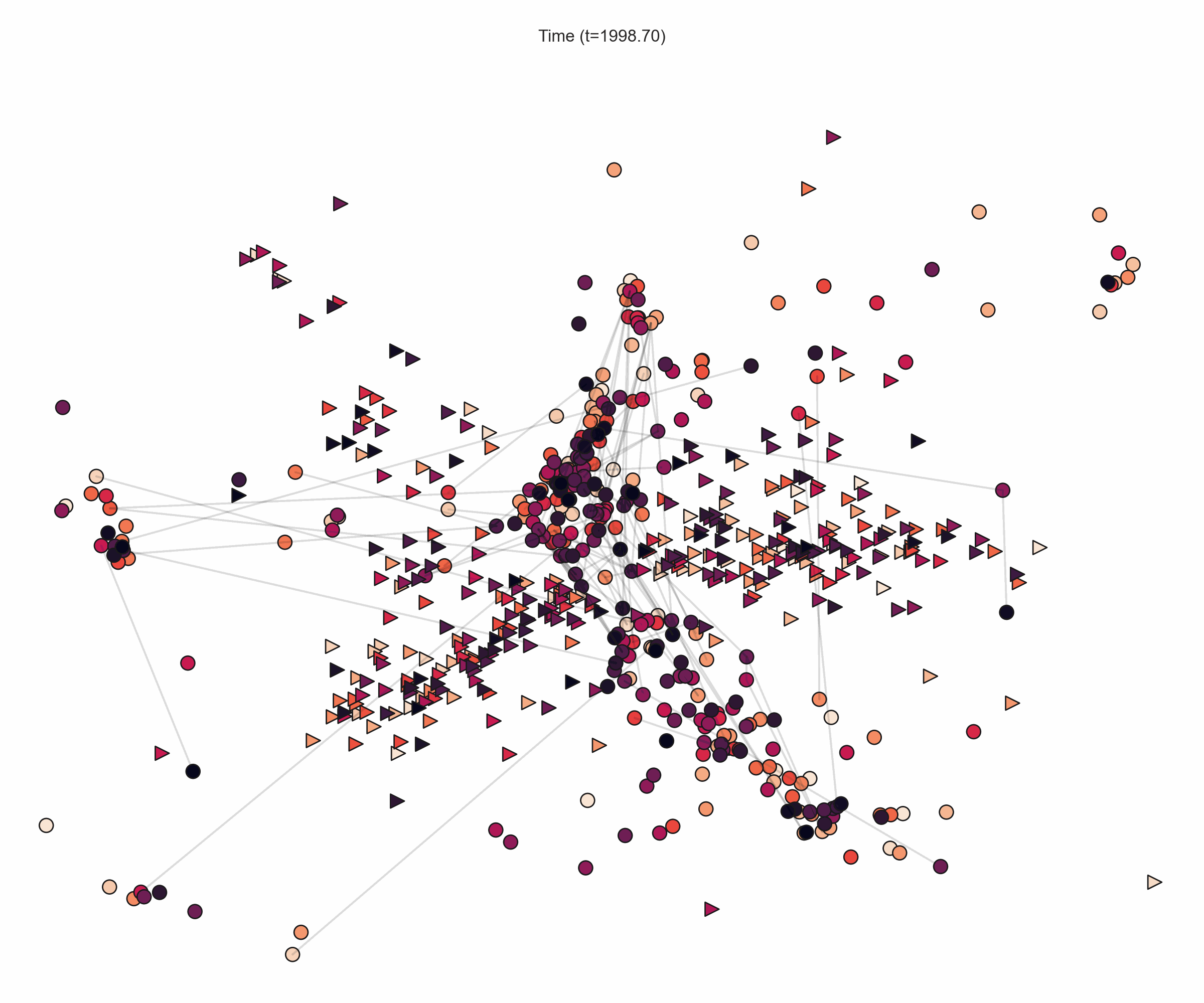}}
\hfill
\subfigure[$t=1999.30$]{\includegraphics[trim={5cm 6cm 5cm 6cm},clip,width=0.16\textwidth]{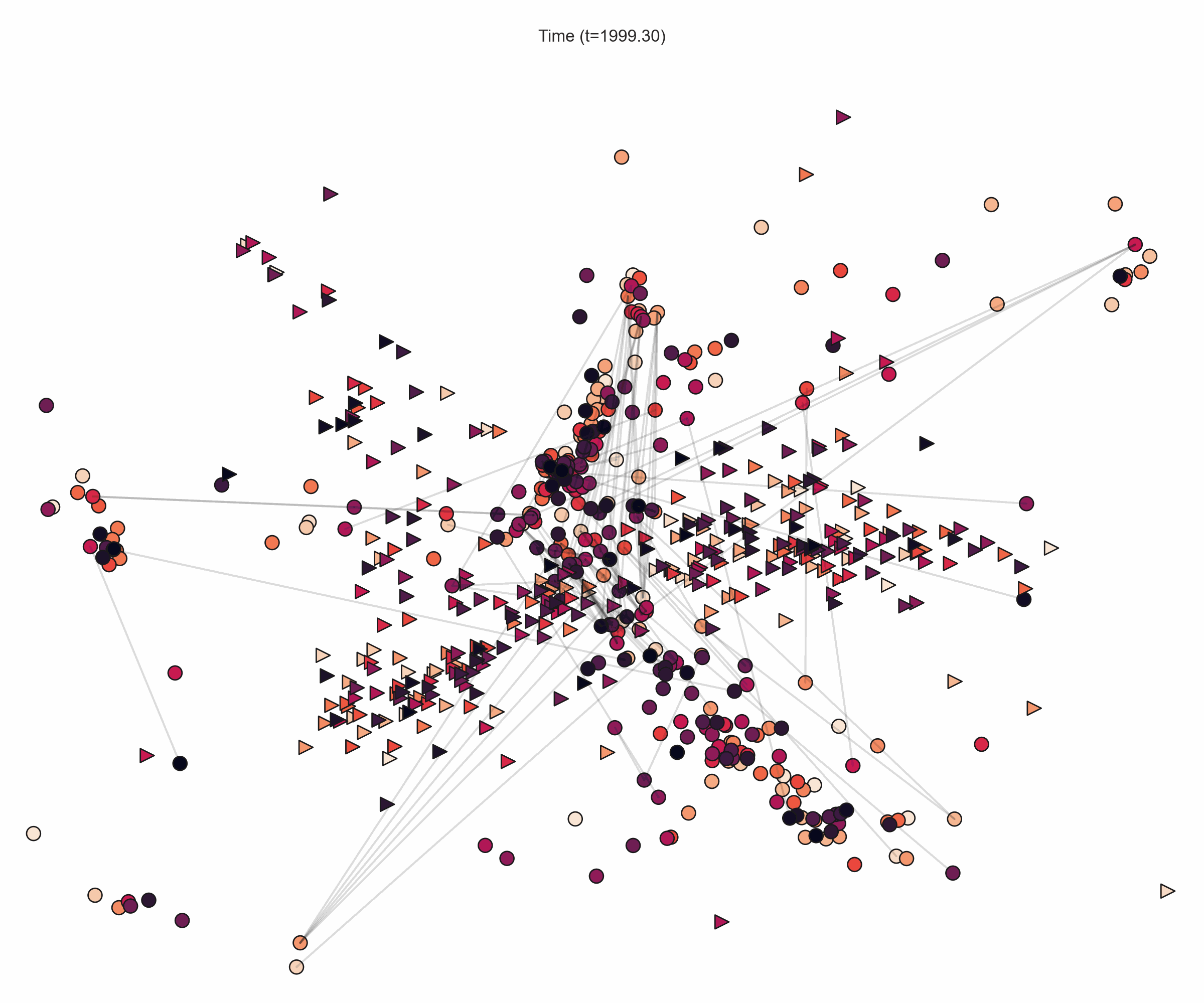}}
\hfill
\subfigure[$t=1991.91$]{\includegraphics[trim={5cm 6cm 5cm 6cm},clip,width=0.16\textwidth]{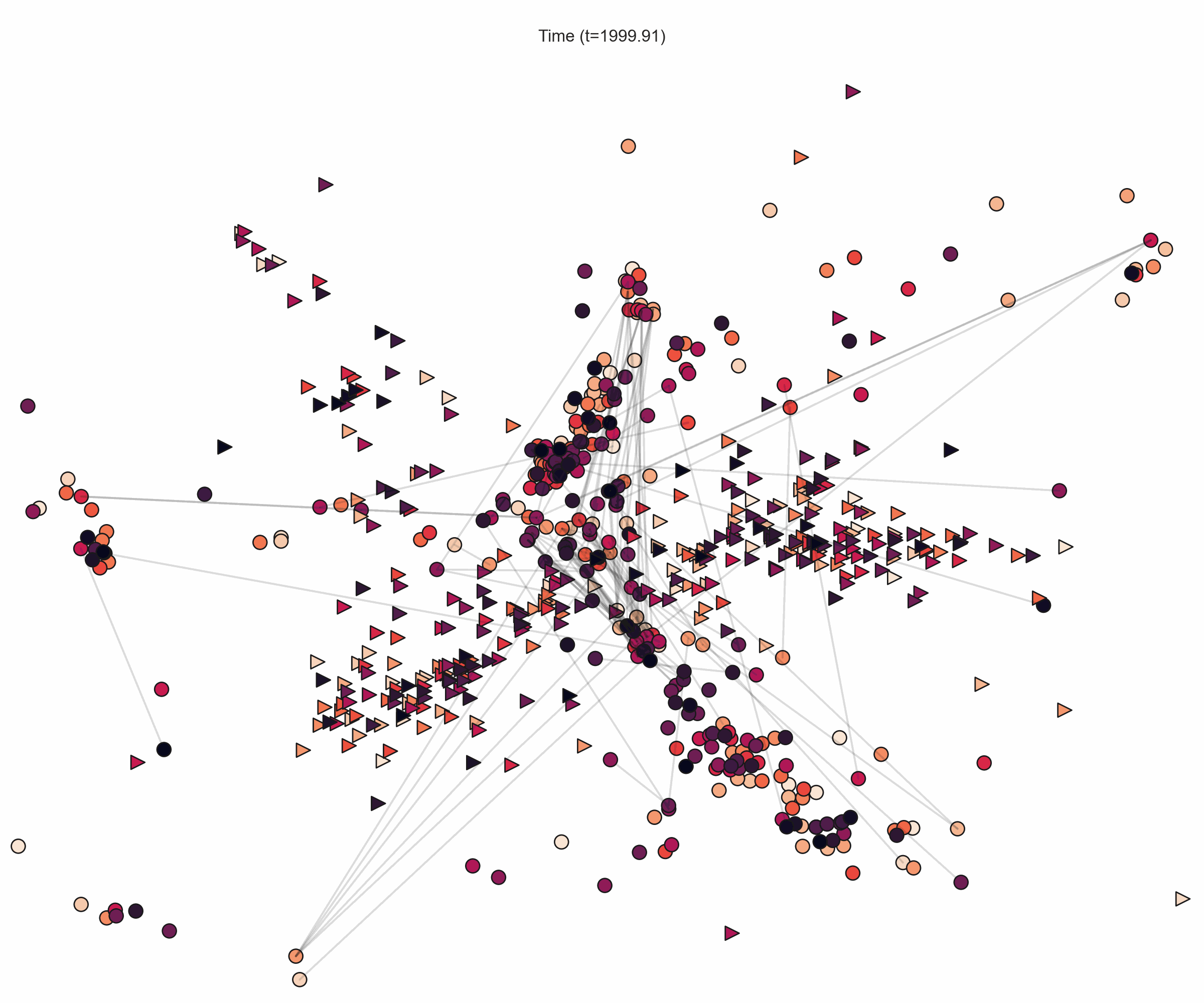}}
\caption{Snapshots of the continuous-time embeddings learned by \textsc{\modelname} for various time points over \textsl{NeurIPS}.}\label{fig:appendix_visualization_neurips}
\end{figure*}
\begin{figure*} 
\centering
\subfigure[$t=(t=1435$]{\includegraphics[trim={5cm 6cm 5cm 6cm},clip,width=0.16\textwidth]{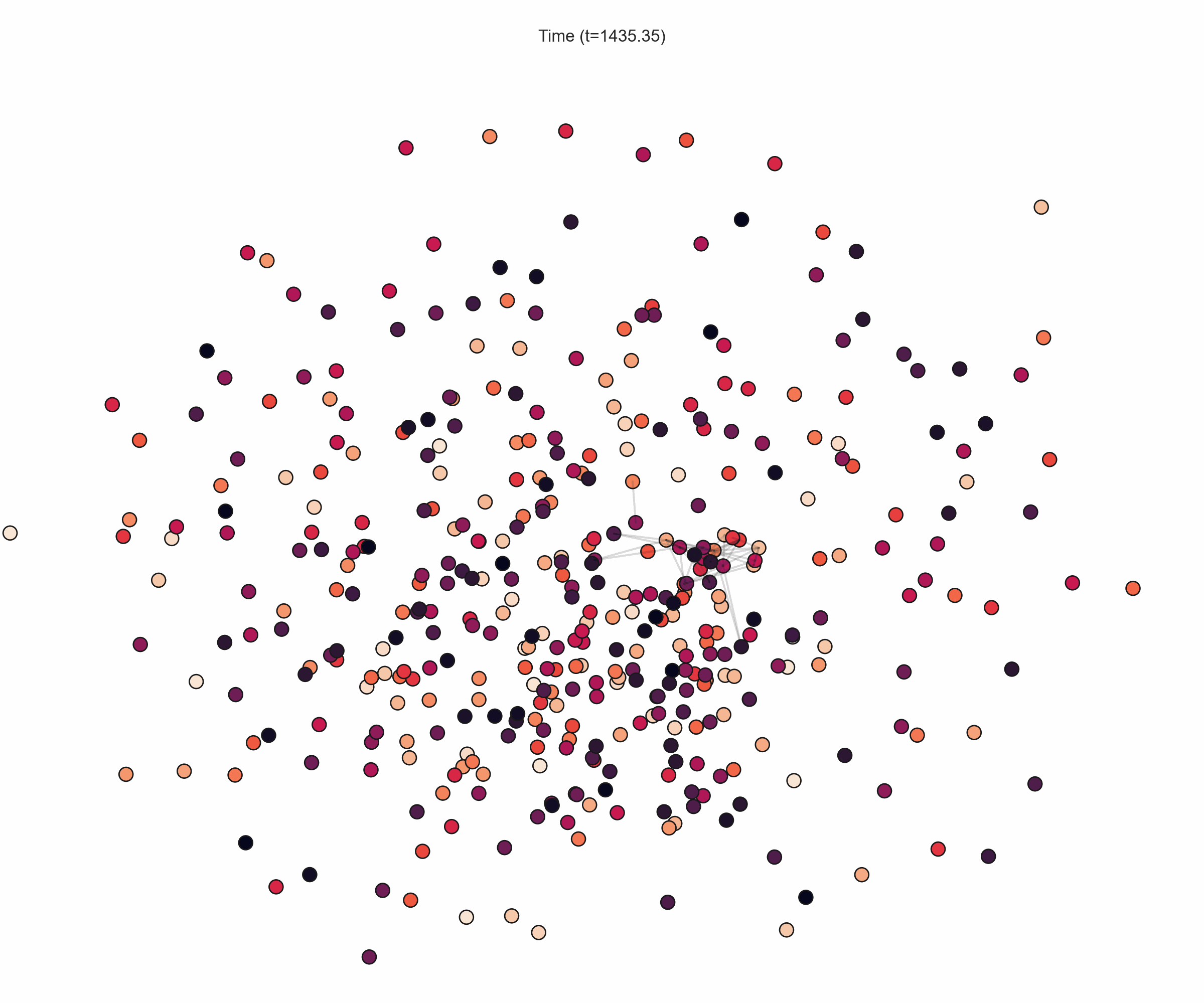}}
\hfill
\subfigure[$t=2870$]{\includegraphics[trim={5cm 6cm 5cm 6cm},clip,width=0.16\textwidth]{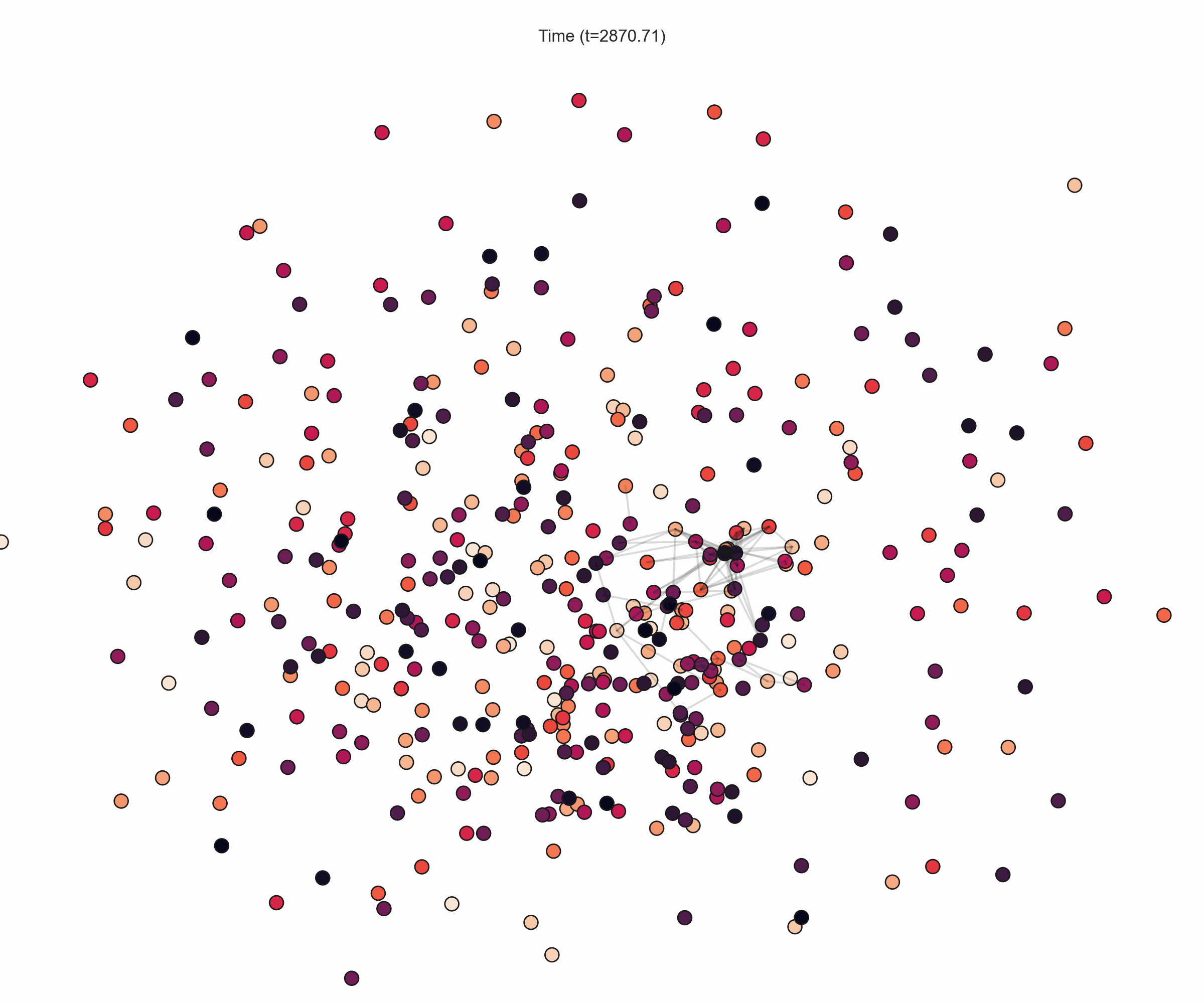}}
\hfill
\subfigure[$t=4306$]{\includegraphics[trim={5cm 6cm 5cm 6cm},clip,width=0.16\textwidth]{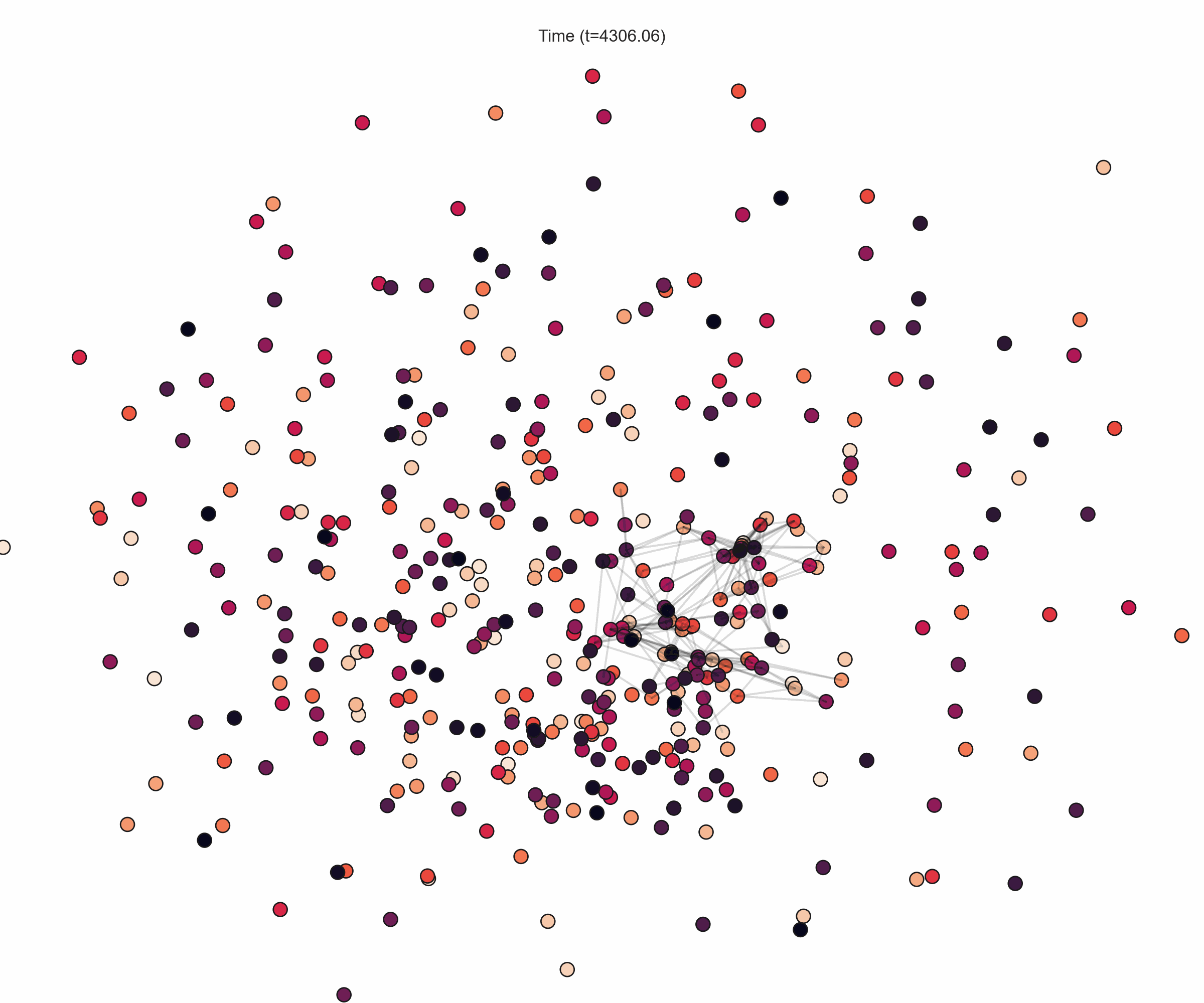}}
\hfill
\subfigure[$t=5741$]{\includegraphics[trim={5cm 6cm 5cm 6cm},clip,width=0.16\textwidth]{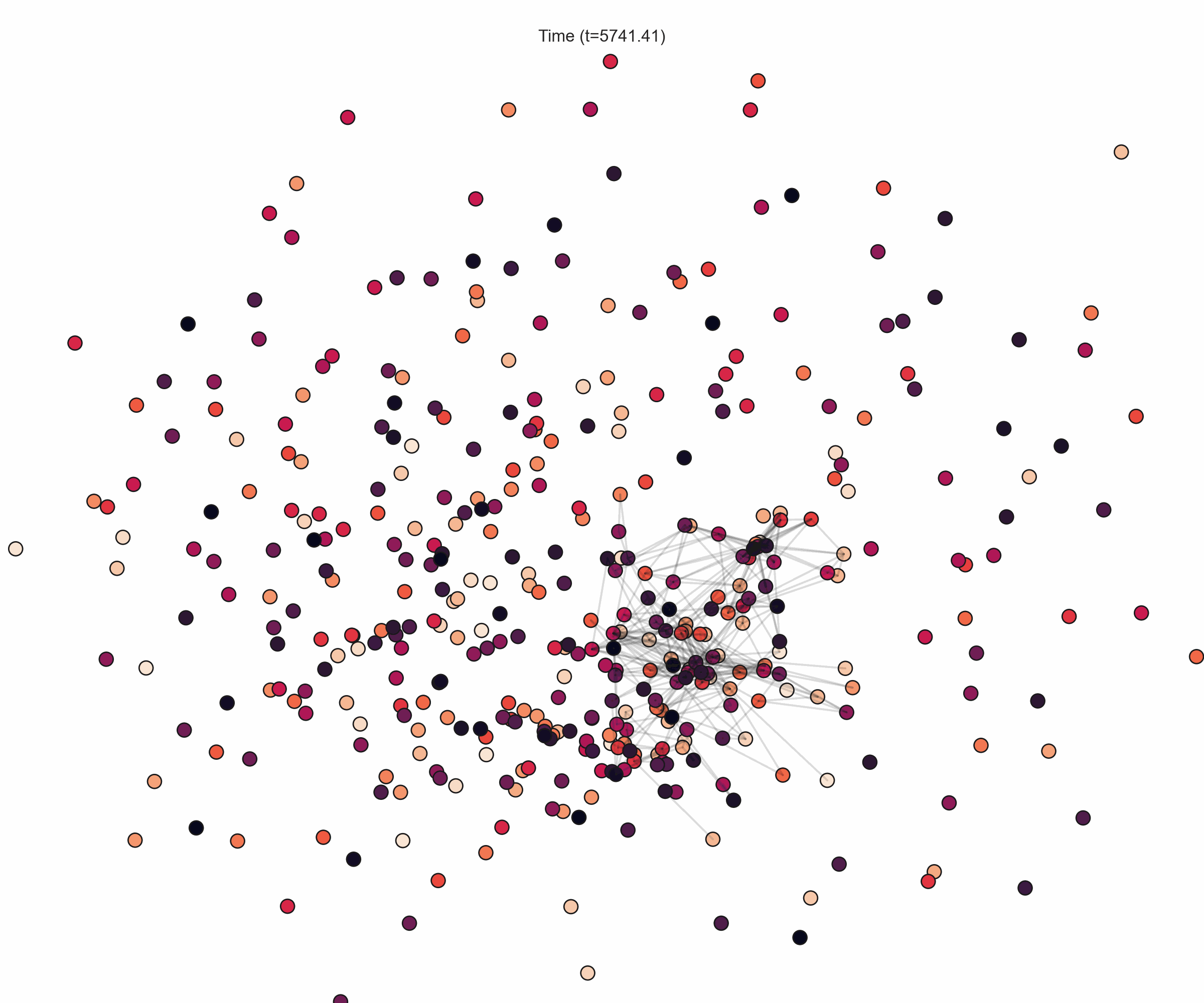}}
\hfill
\subfigure[$t=7176$]{\includegraphics[trim={5cm 6cm 5cm 6cm},clip,width=0.16\textwidth]{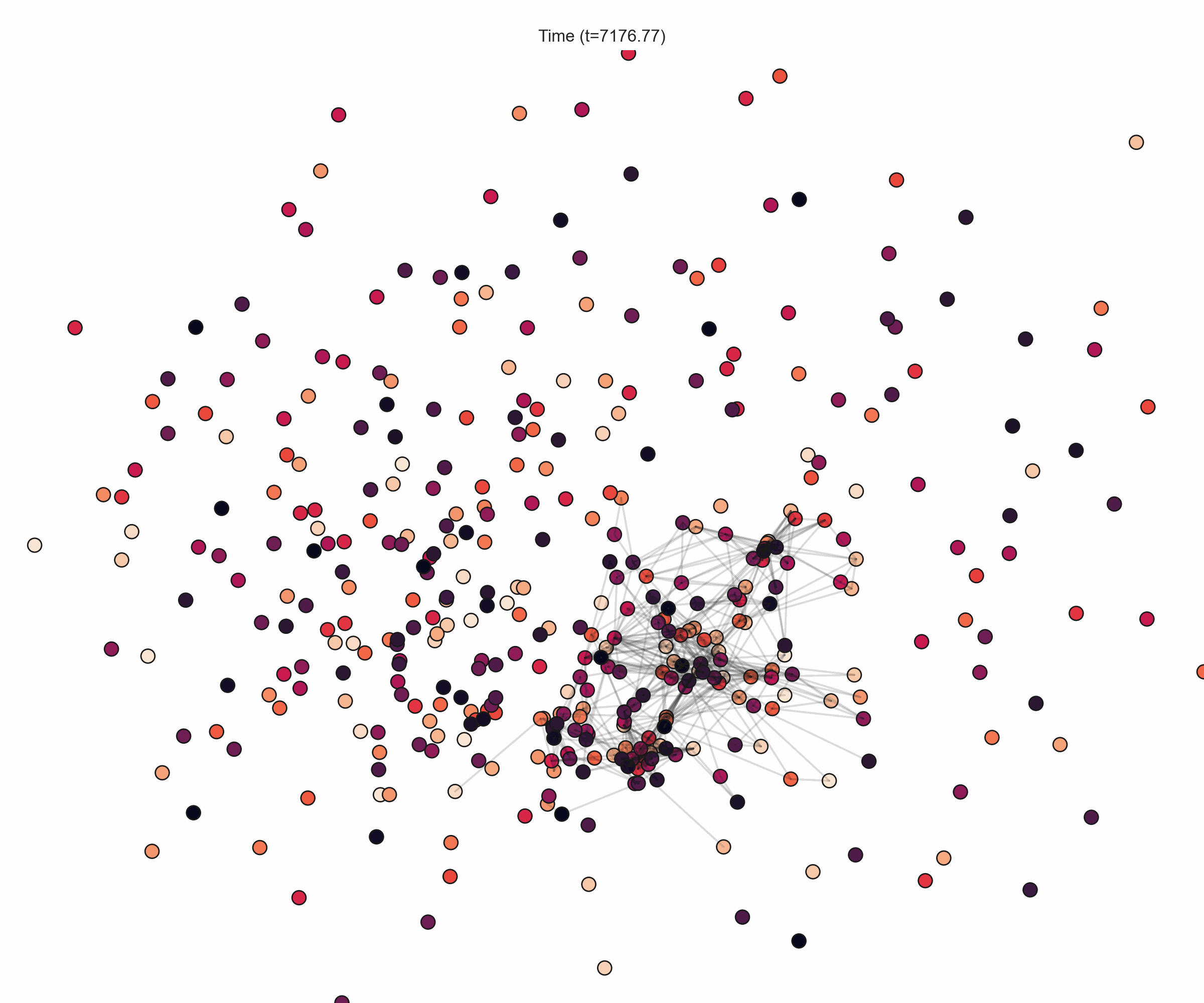}}
\hfill
\subfigure[$t=8612$]{\includegraphics[trim={5cm 6cm 5cm 6cm},clip,width=0.16\textwidth]{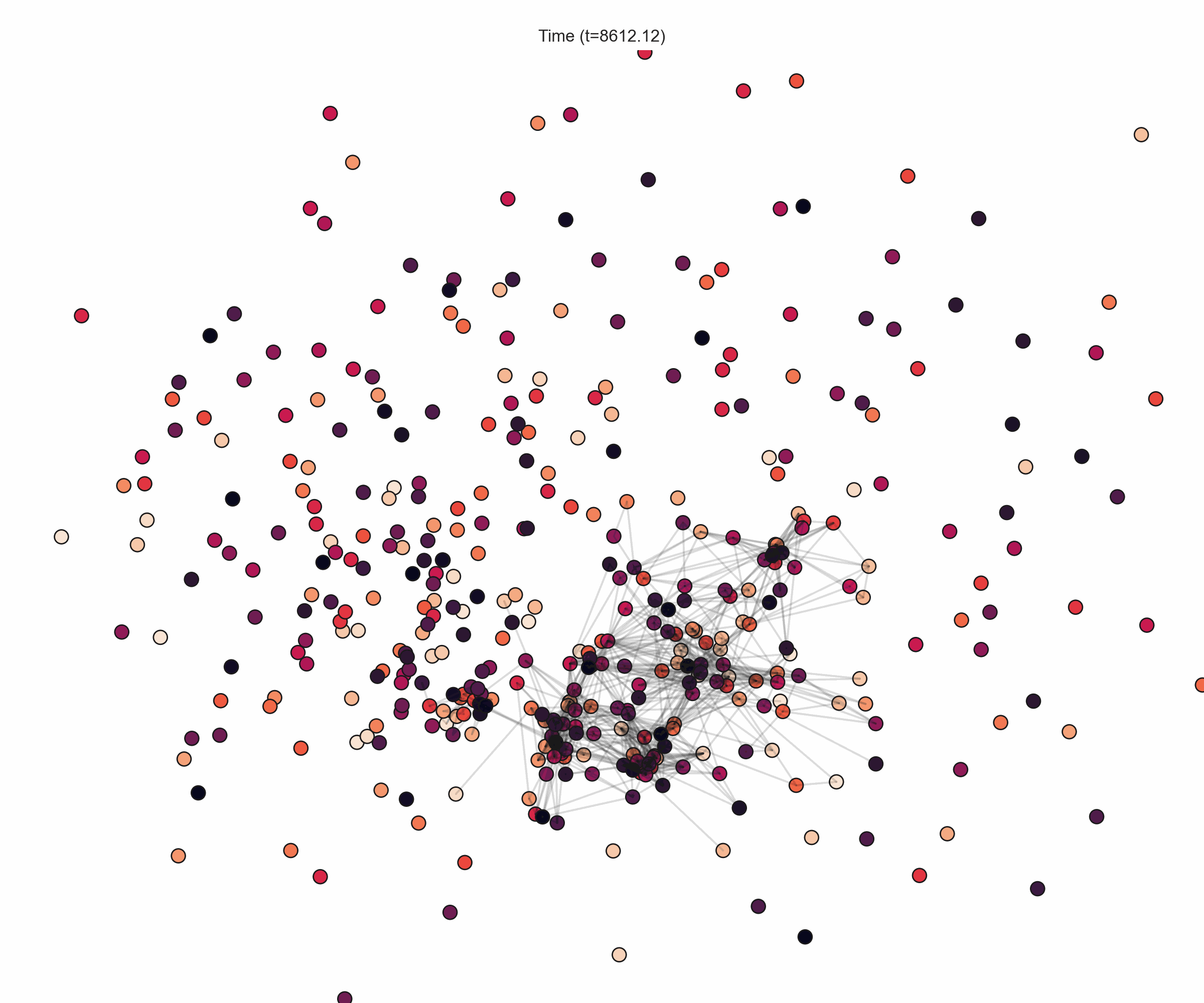}}
\subfigure[$t=10047$]{\includegraphics[trim={5cm 6cm 5cm 6cm},clip,width=0.16\textwidth]{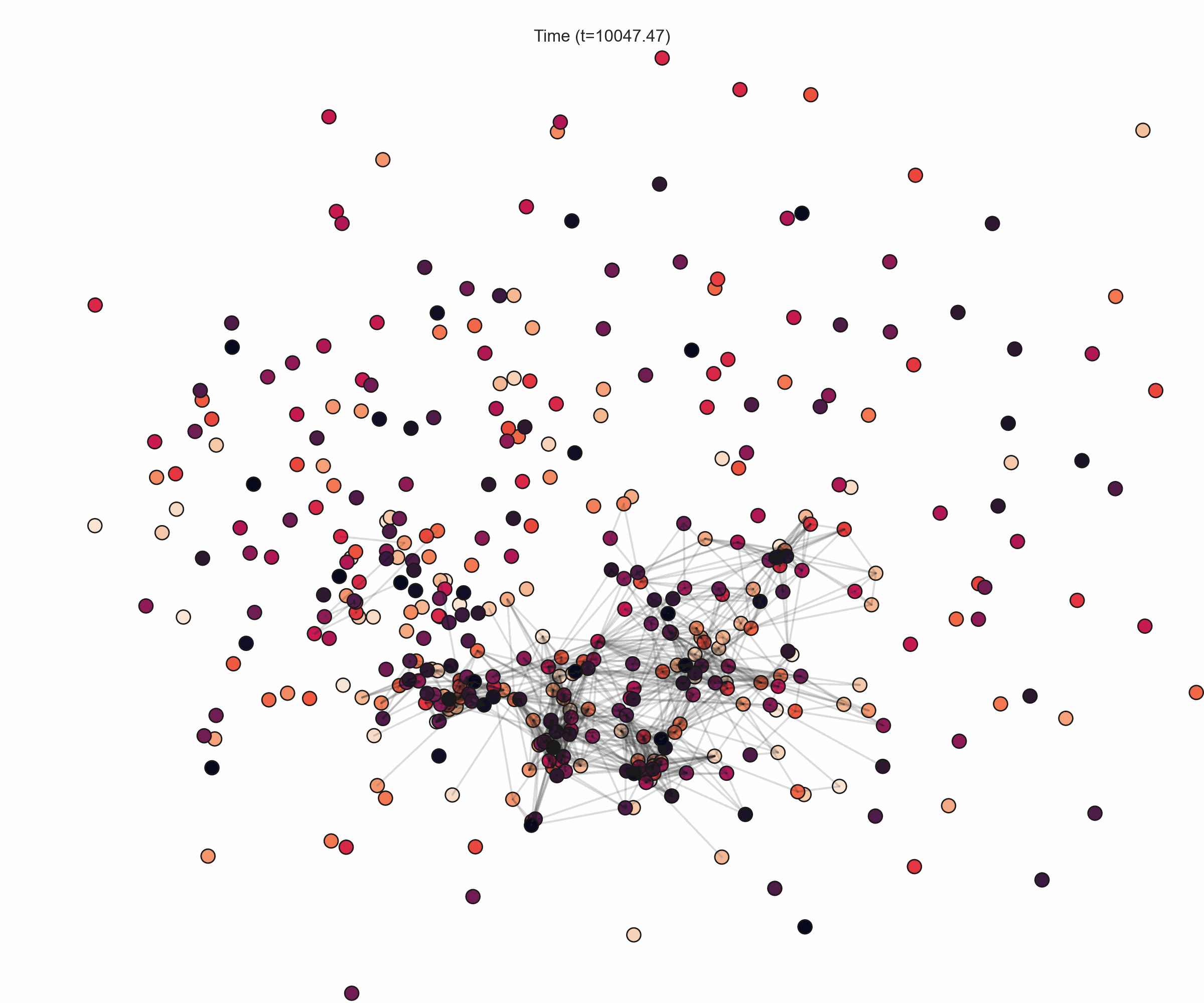}}
\hfill
\subfigure[$t=11482$]{\includegraphics[trim={5cm 6cm 5cm 6cm},clip,width=0.16\textwidth]{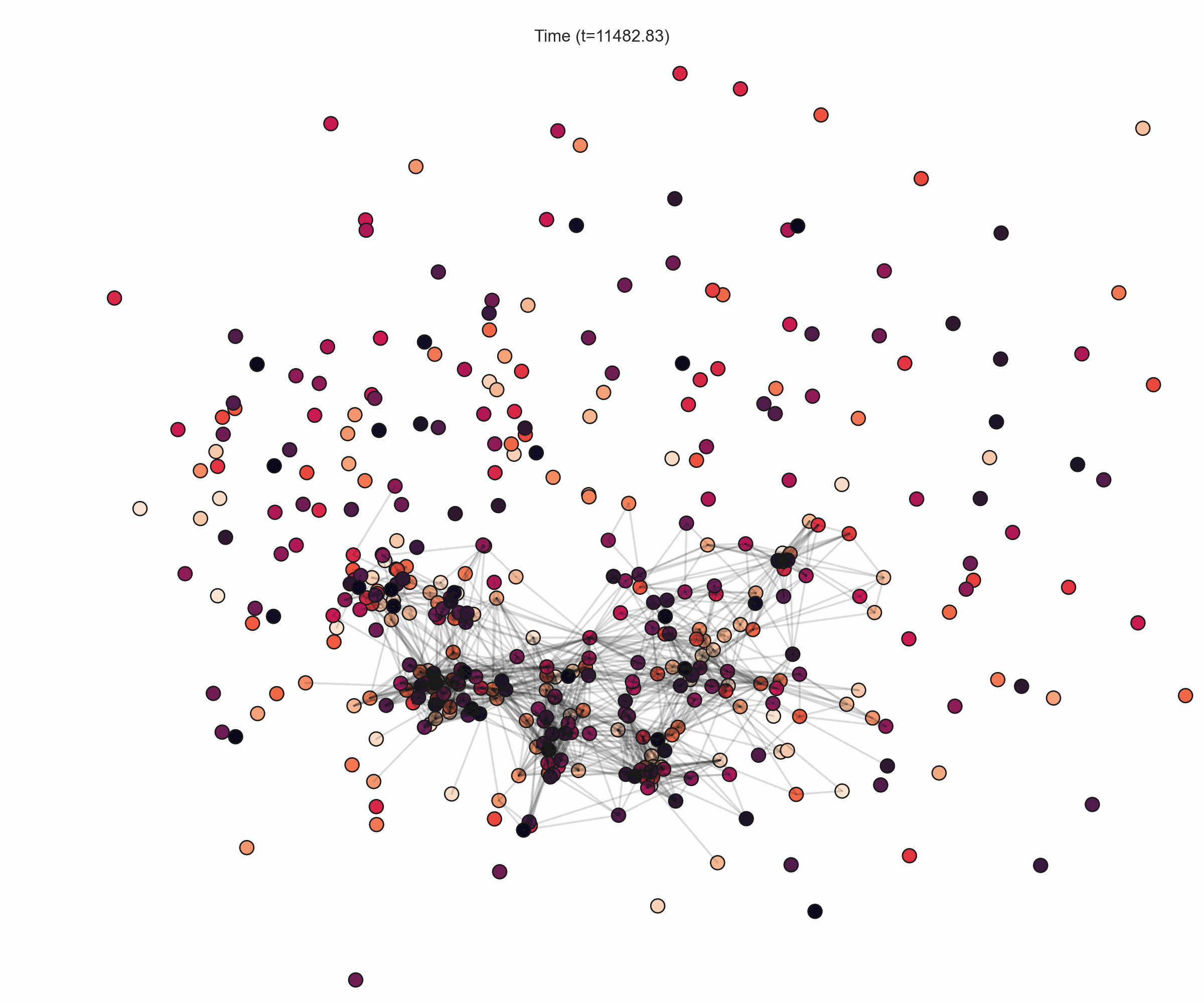}}
\hfill
\subfigure[$t=12918$]{\includegraphics[trim={5cm 6cm 5cm 6cm},clip,width=0.16\textwidth]{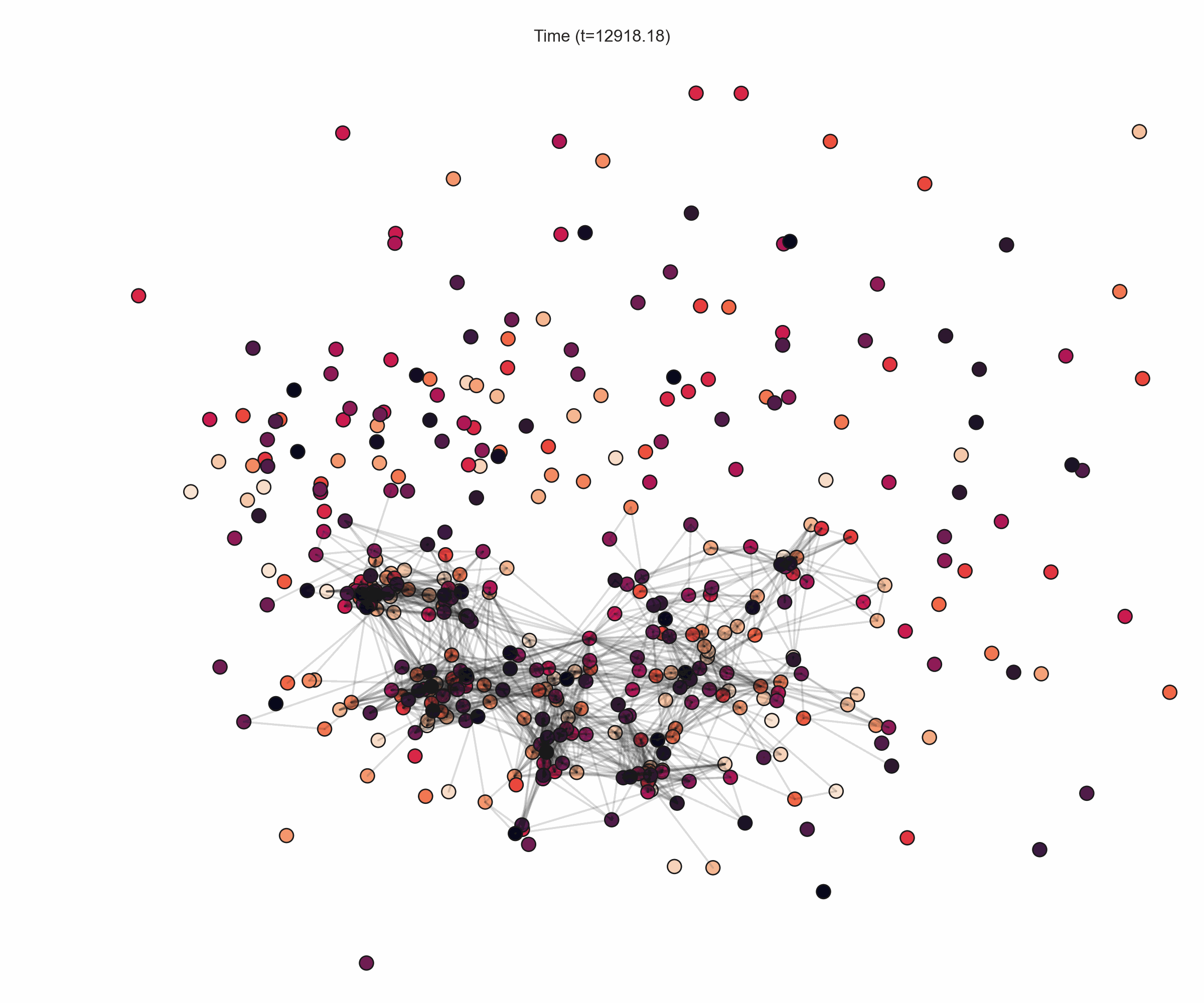}}
\hfill
\subfigure[$t=14353$]{\includegraphics[trim={5cm 6cm 5cm 6cm},clip,width=0.16\textwidth]{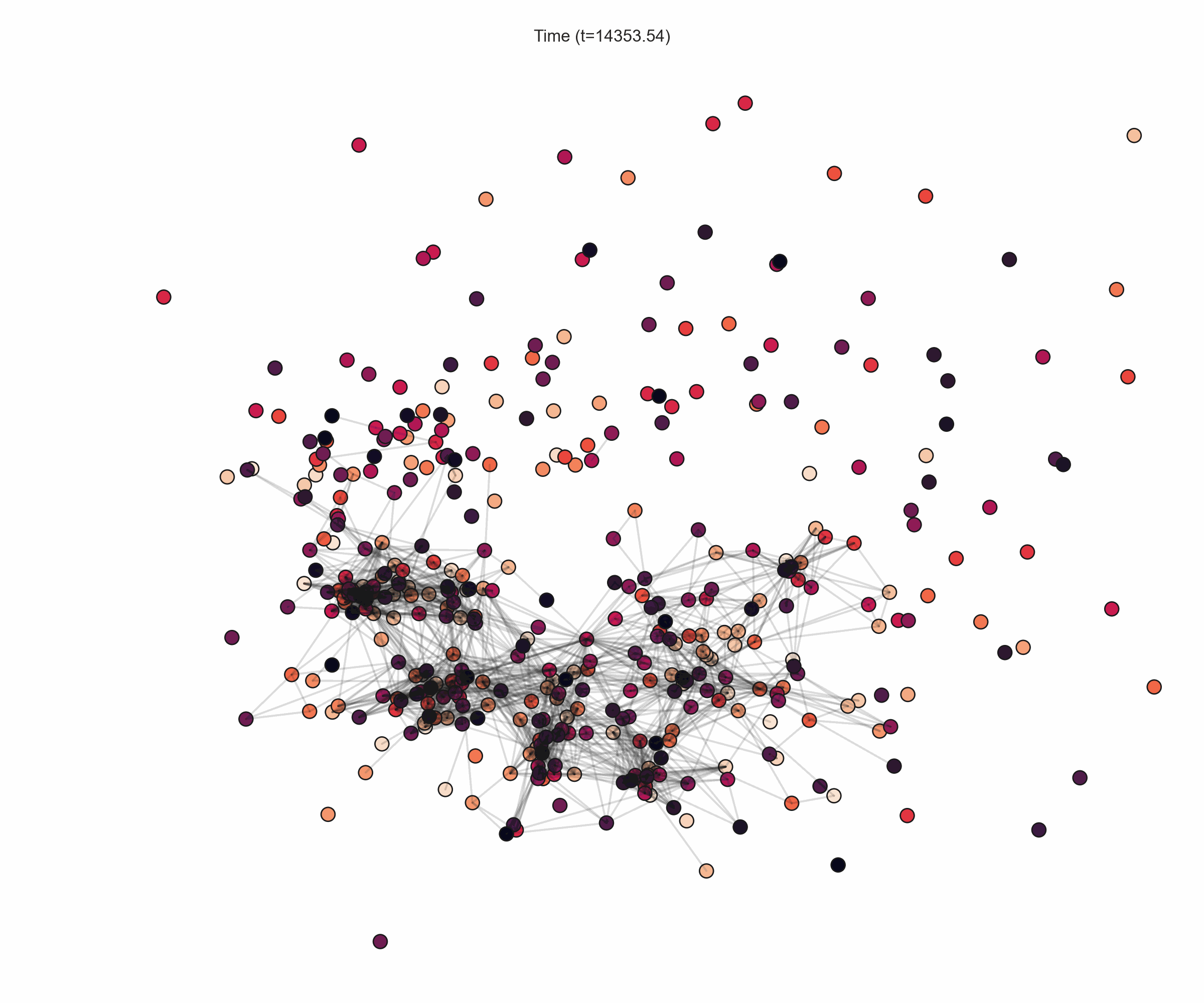}}
\hfill
\subfigure[$t=15788$]{\includegraphics[trim={5cm 6cm 5cm 6cm},clip,width=0.16\textwidth]{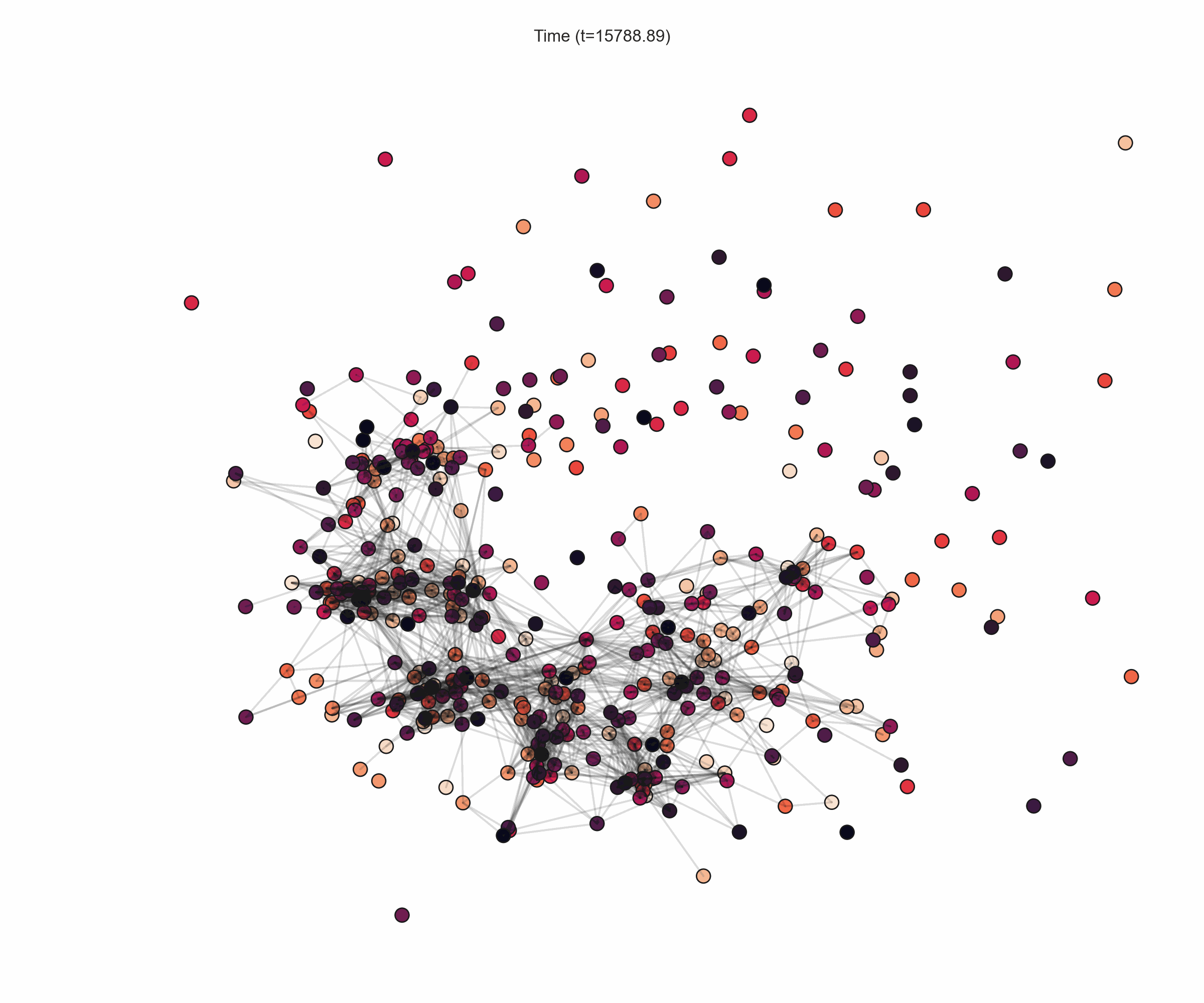}}
\hfill
\subfigure[$t=17224$]{\includegraphics[trim={5cm 6cm 5cm 6cm},clip,width=0.16\textwidth]{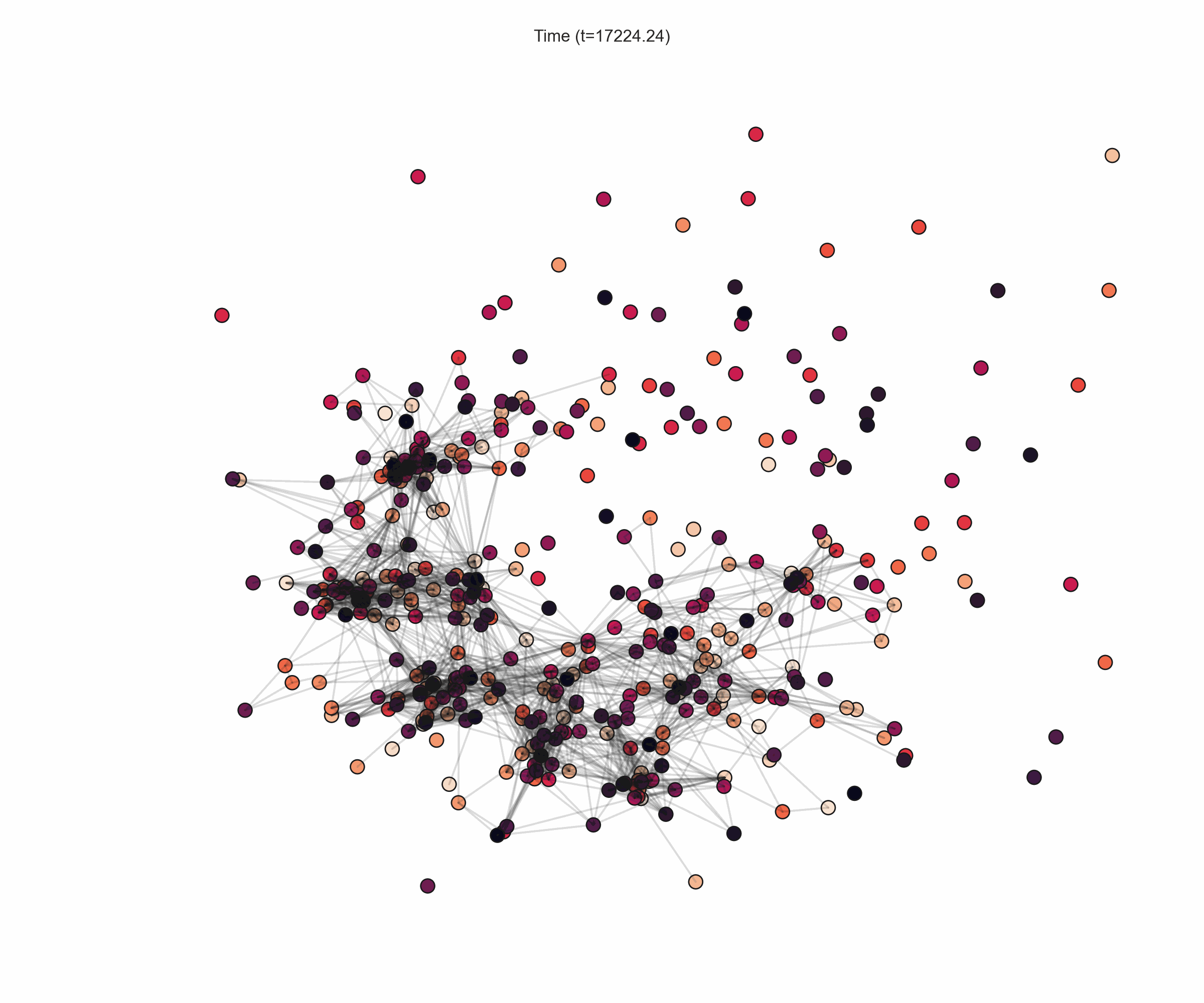}}
\subfigure[$t=18659$]{\includegraphics[trim={5cm 6cm 5cm 6cm},clip,width=0.16\textwidth]{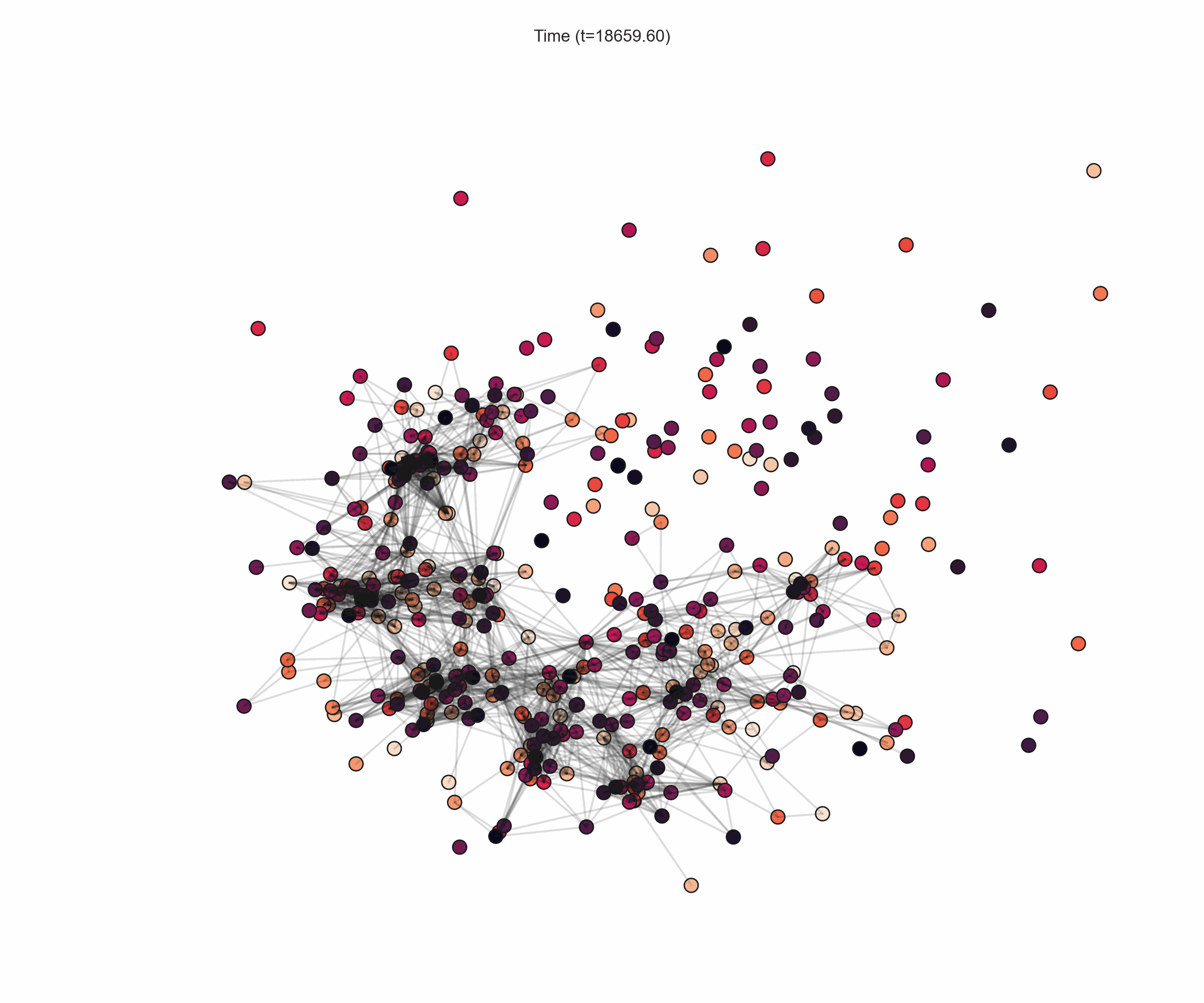}}
\hfill
\subfigure[$t=20094$]{\includegraphics[trim={5cm 6cm 5cm 6cm},clip,width=0.16\textwidth]{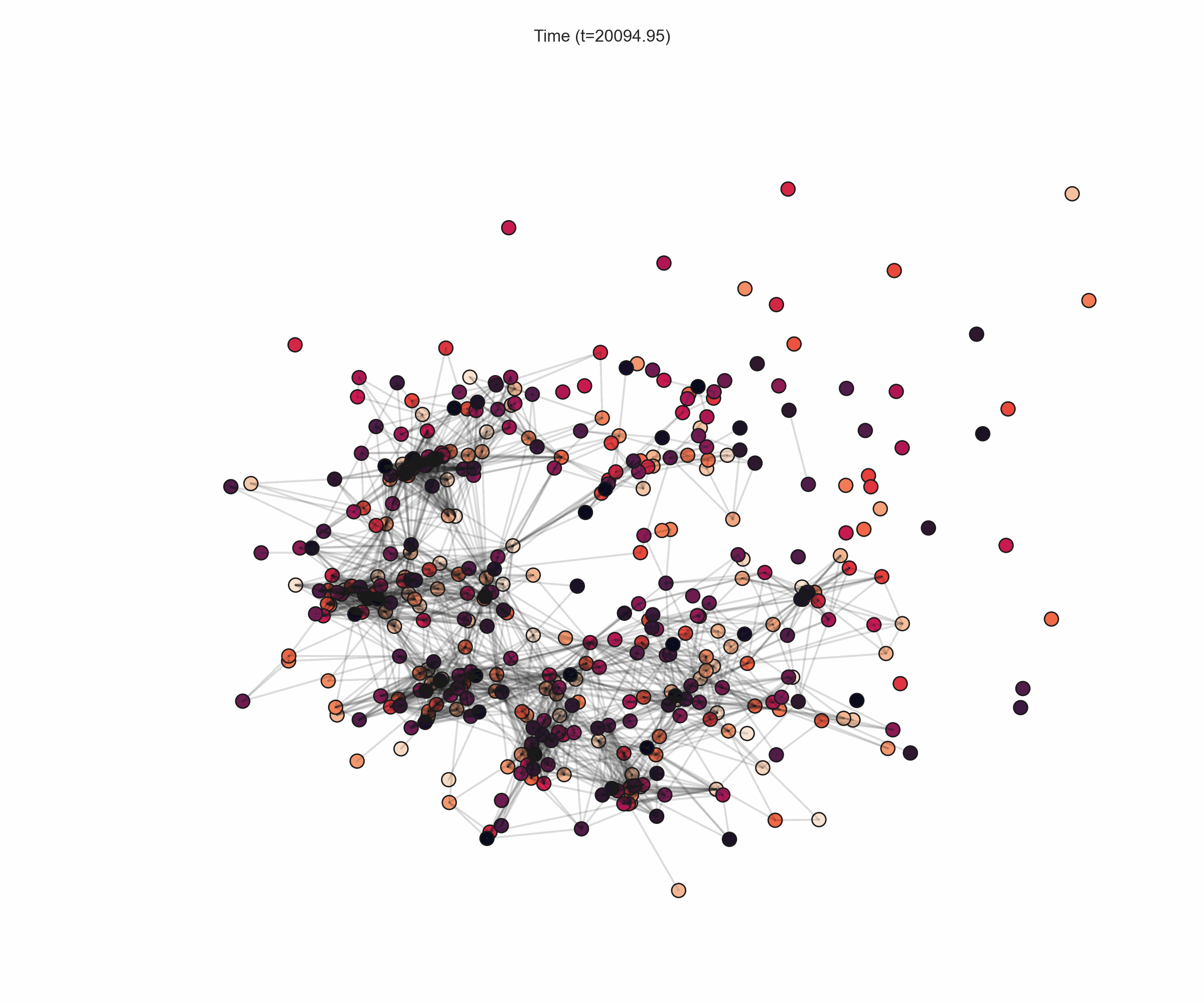}}
\hfill
\subfigure[$t=21530$]{\includegraphics[trim={5cm 6cm 5cm 6cm},clip,width=0.16\textwidth]{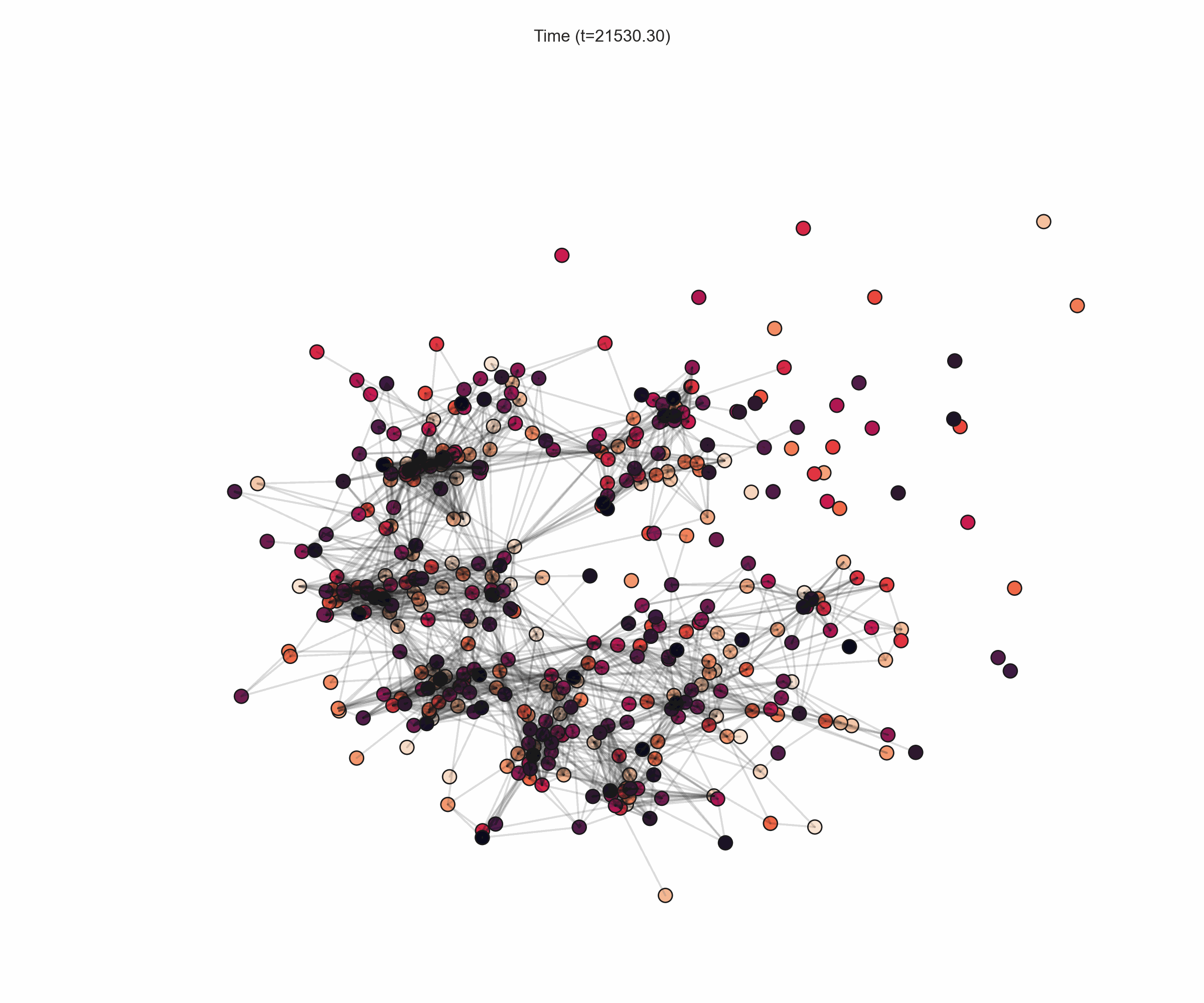}}
\hfill
\subfigure[$t=22965$]{\includegraphics[trim={5cm 6cm 5cm 6cm},clip,width=0.16\textwidth]{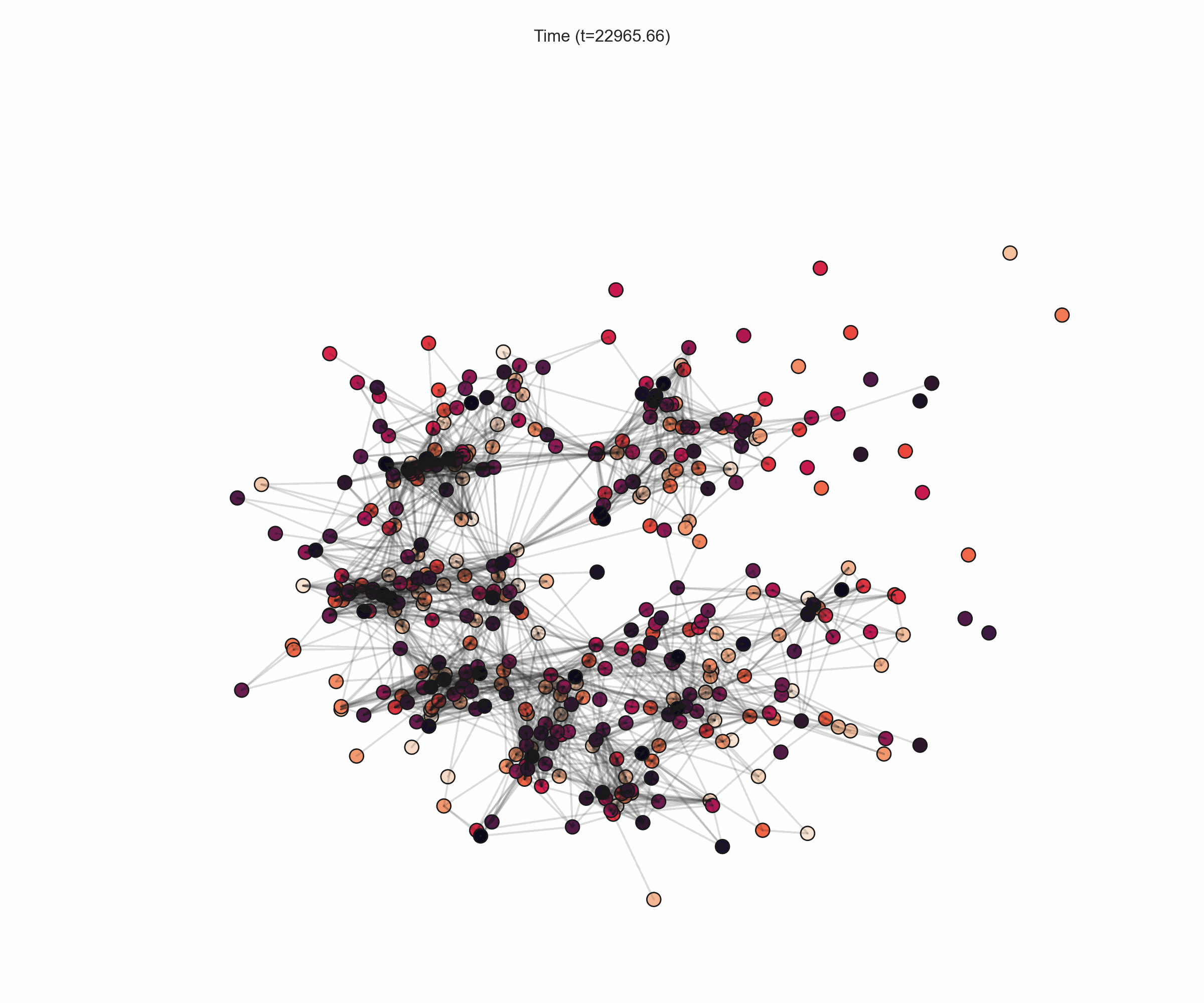}}
\hfill
\subfigure[$t=24401$]{\includegraphics[trim={5cm 6cm 5cm 6cm},clip,width=0.16\textwidth]{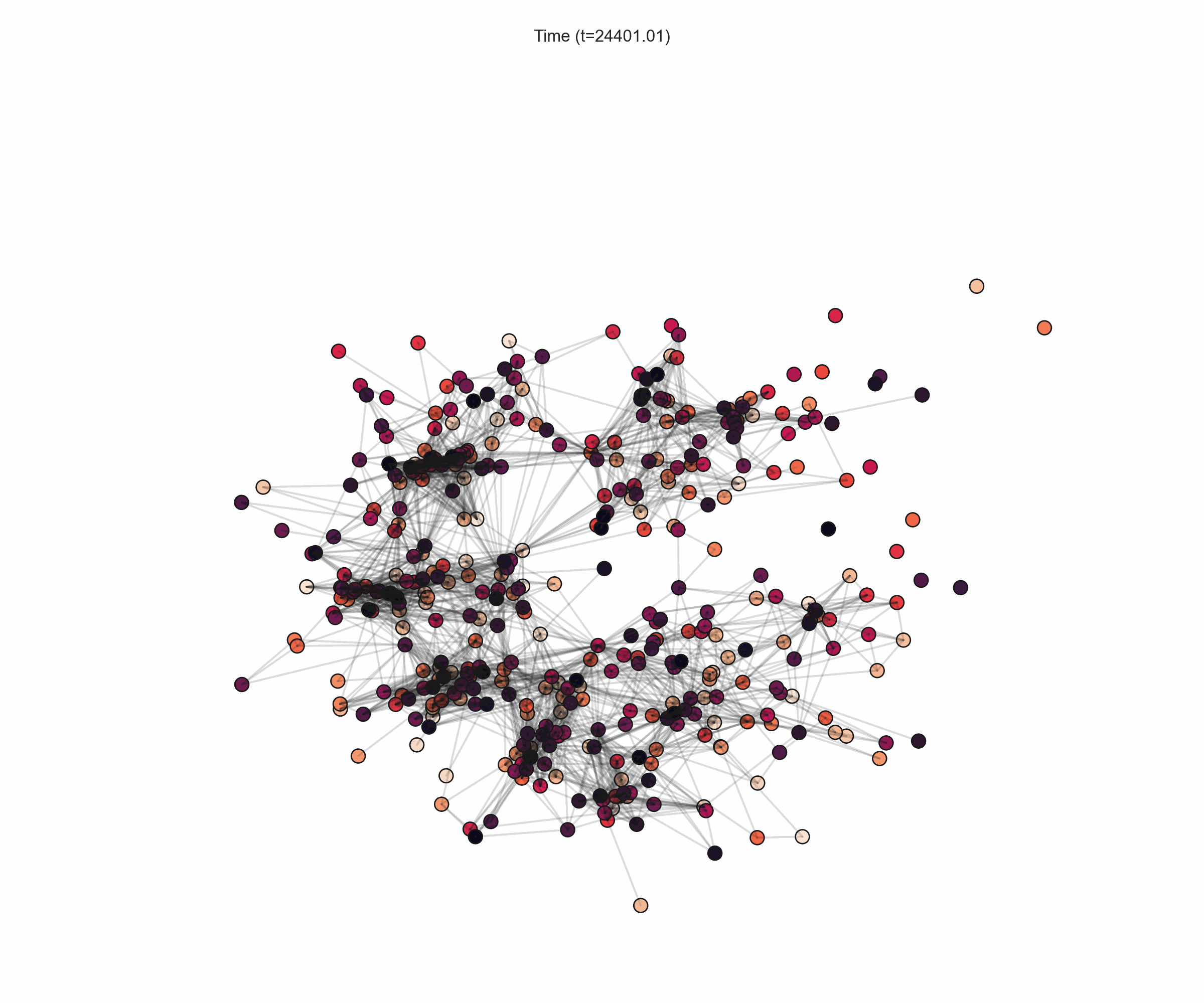}}
\hfill
\subfigure[$t=25836$]{\includegraphics[trim={5cm 6cm 5cm 6cm},clip,width=0.16\textwidth]{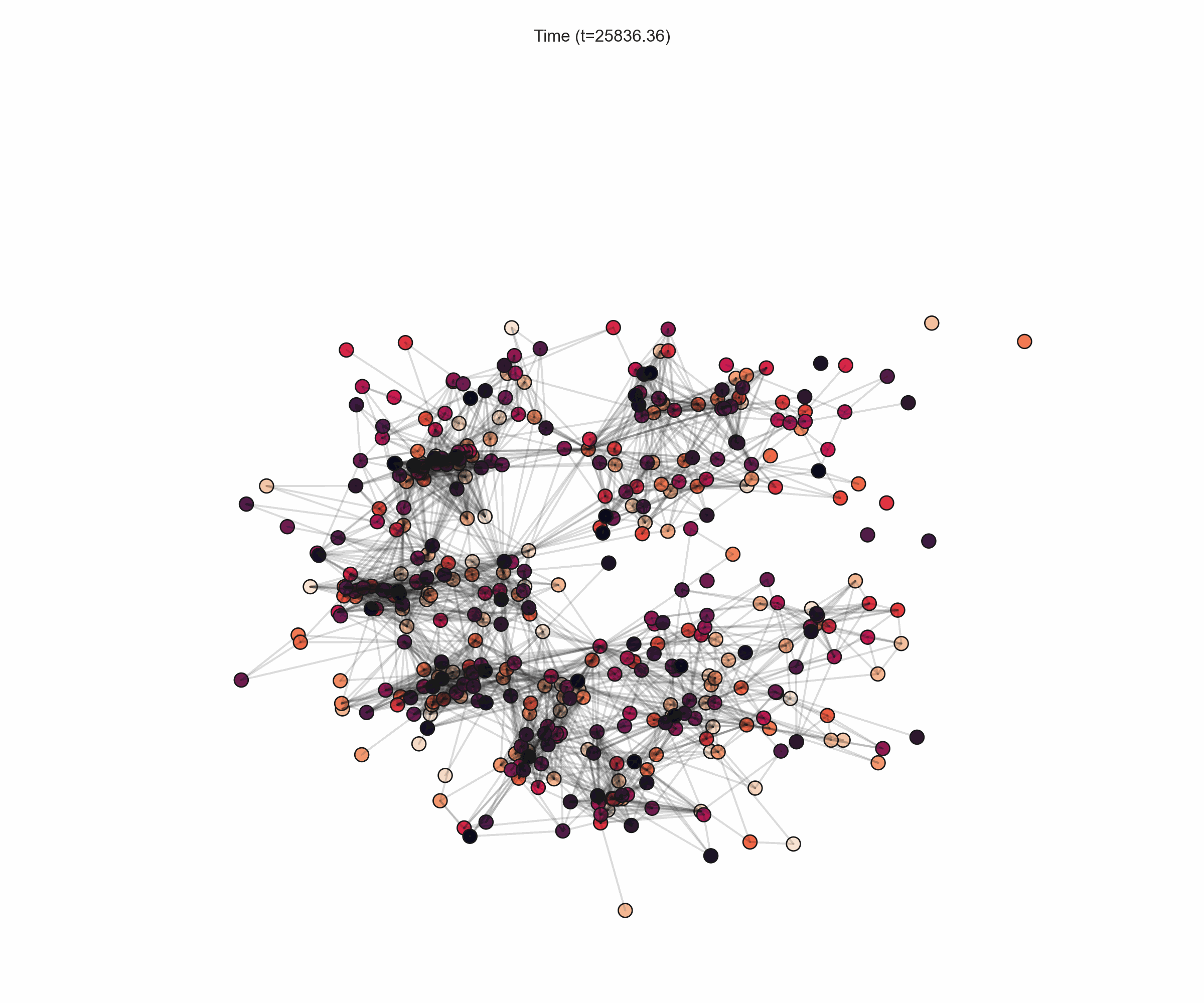}}
\caption{Snapshots of the continuous-time embeddings learned by \textsc{\modelname} for various time points over \textsl{Infectious}.}\label{fig:appendix_visualization_infectious}
\end{figure*}

\begin{figure*} 
\centering
\subfigure[\textsl{Synthetic-$\beta$}]{\includegraphics[width=0.49\textwidth]{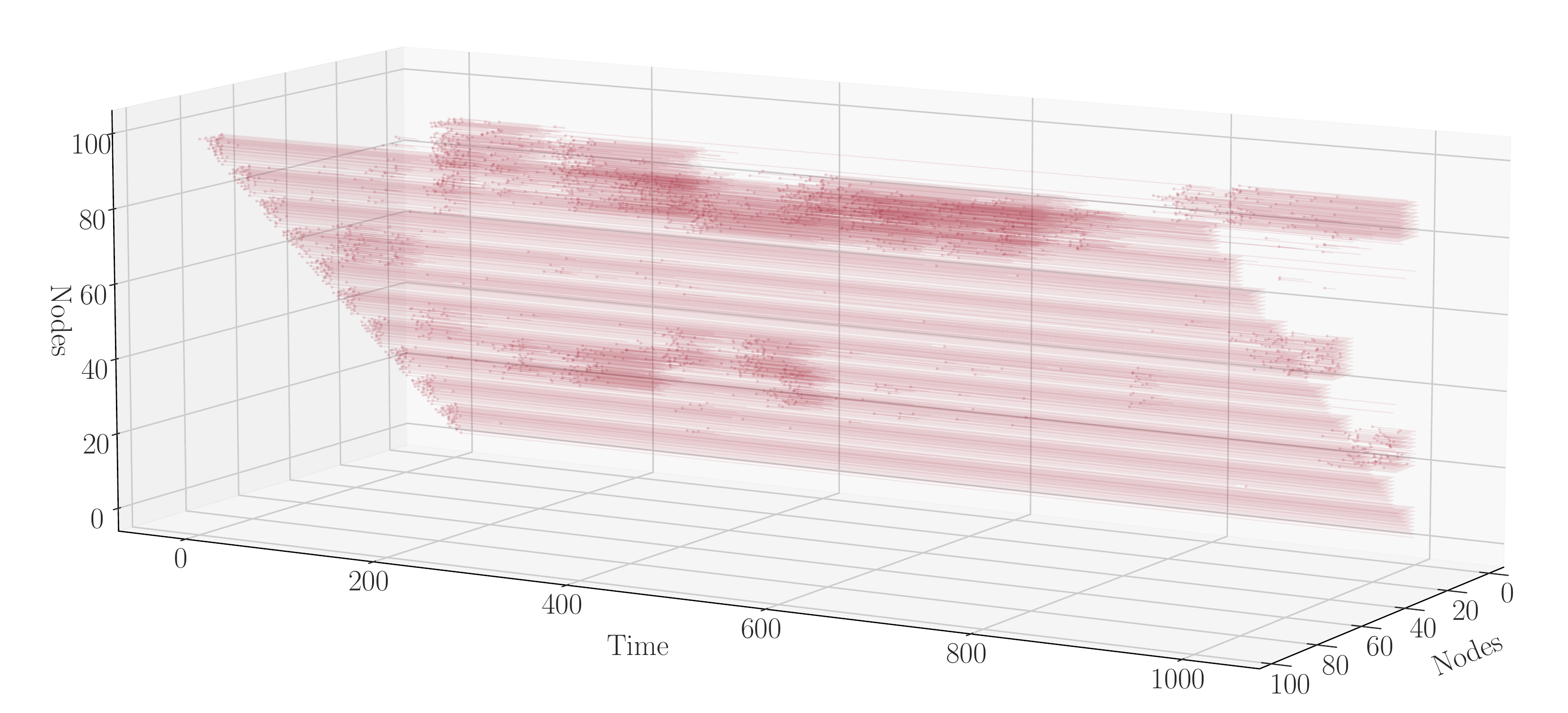}}
\hfill
\subfigure[\textsl{Synthetic-$\alpha$}]{\includegraphics[width=0.49\textwidth]{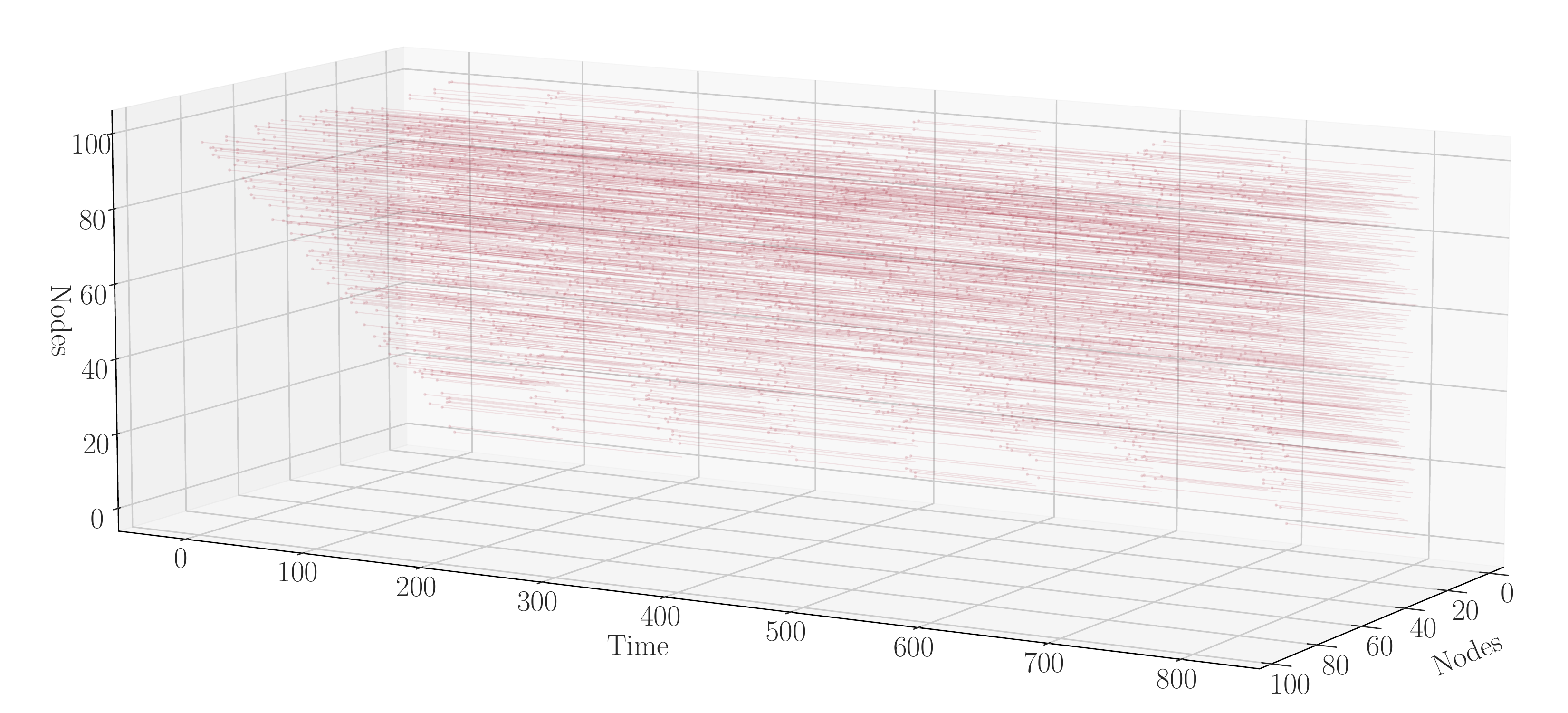}}
\subfigure[\textsl{Contacts}]
{\includegraphics[width=0.49\textwidth]{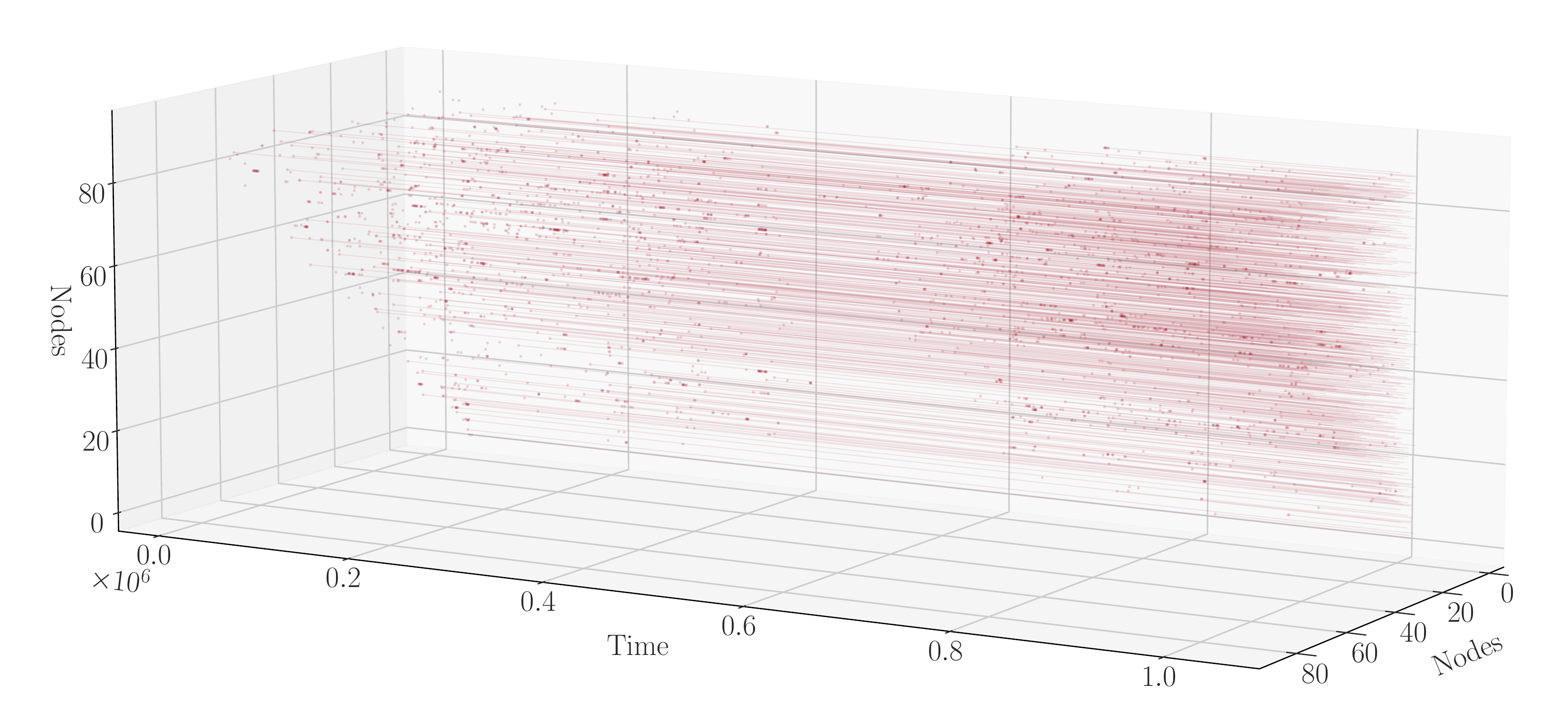}}
\hfill
\subfigure[\textsl{HyperText}]{\includegraphics[width=0.49\textwidth]{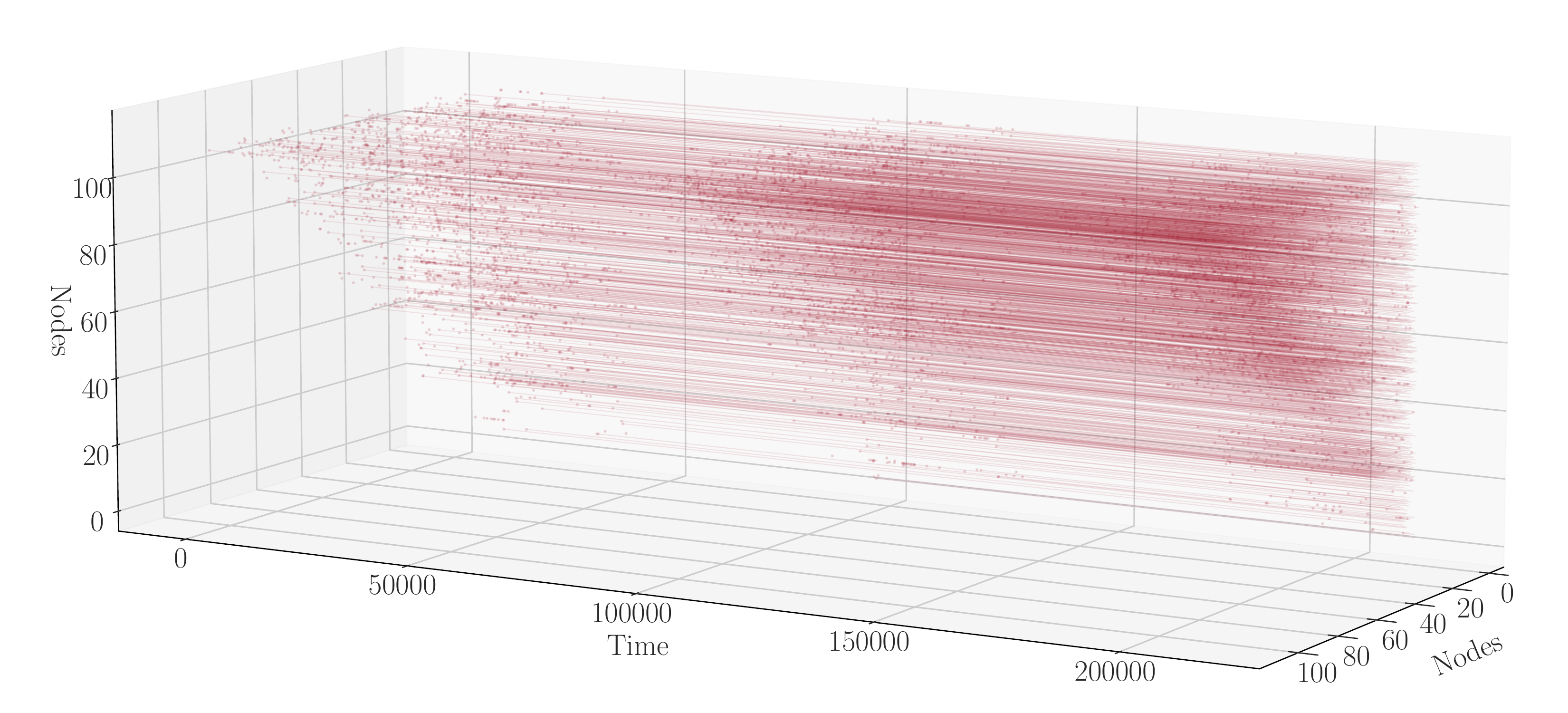}}
\subfigure[\textsl{Facebook}]{\includegraphics[width=0.49\textwidth]{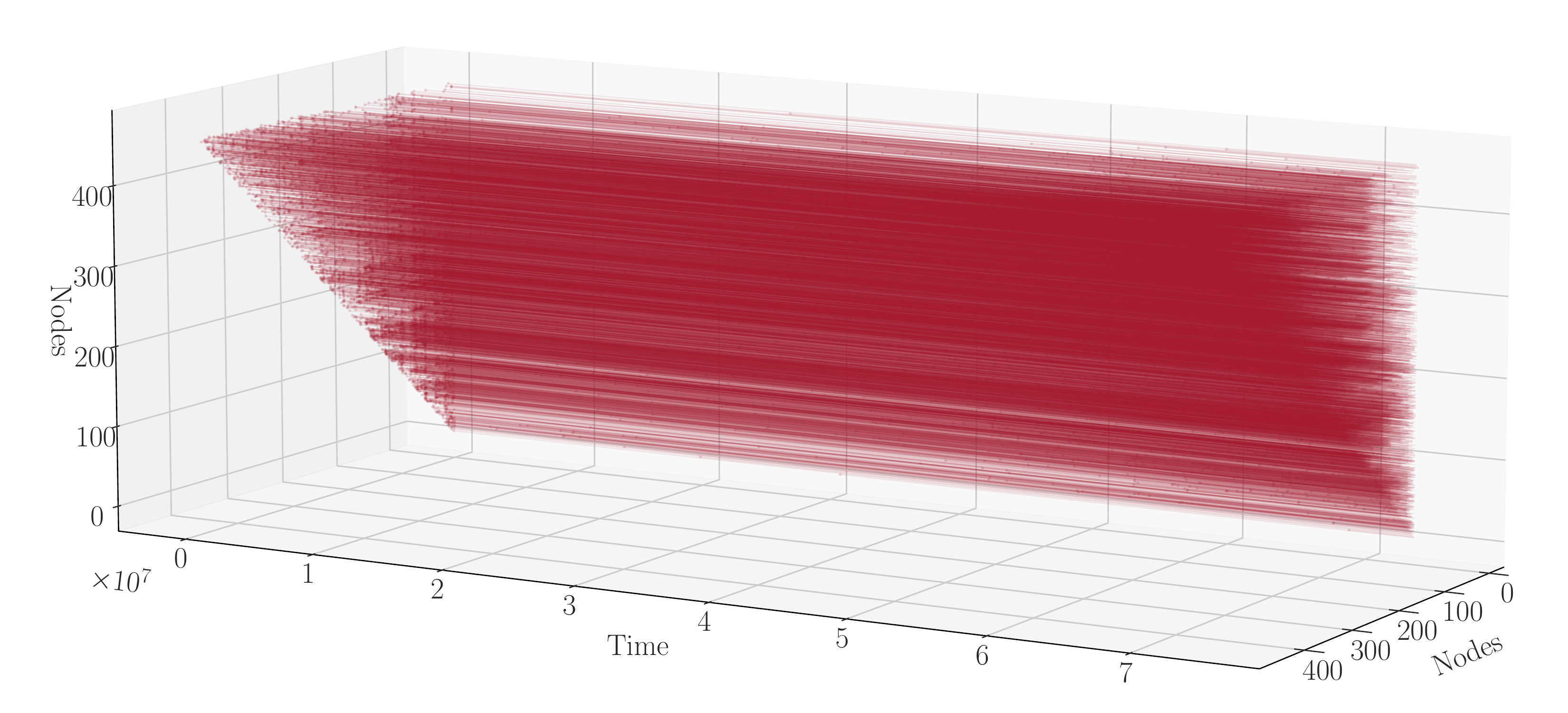}}
\hfill
\subfigure[\textsl{NeurIPS}]{\includegraphics[width=0.49\textwidth]{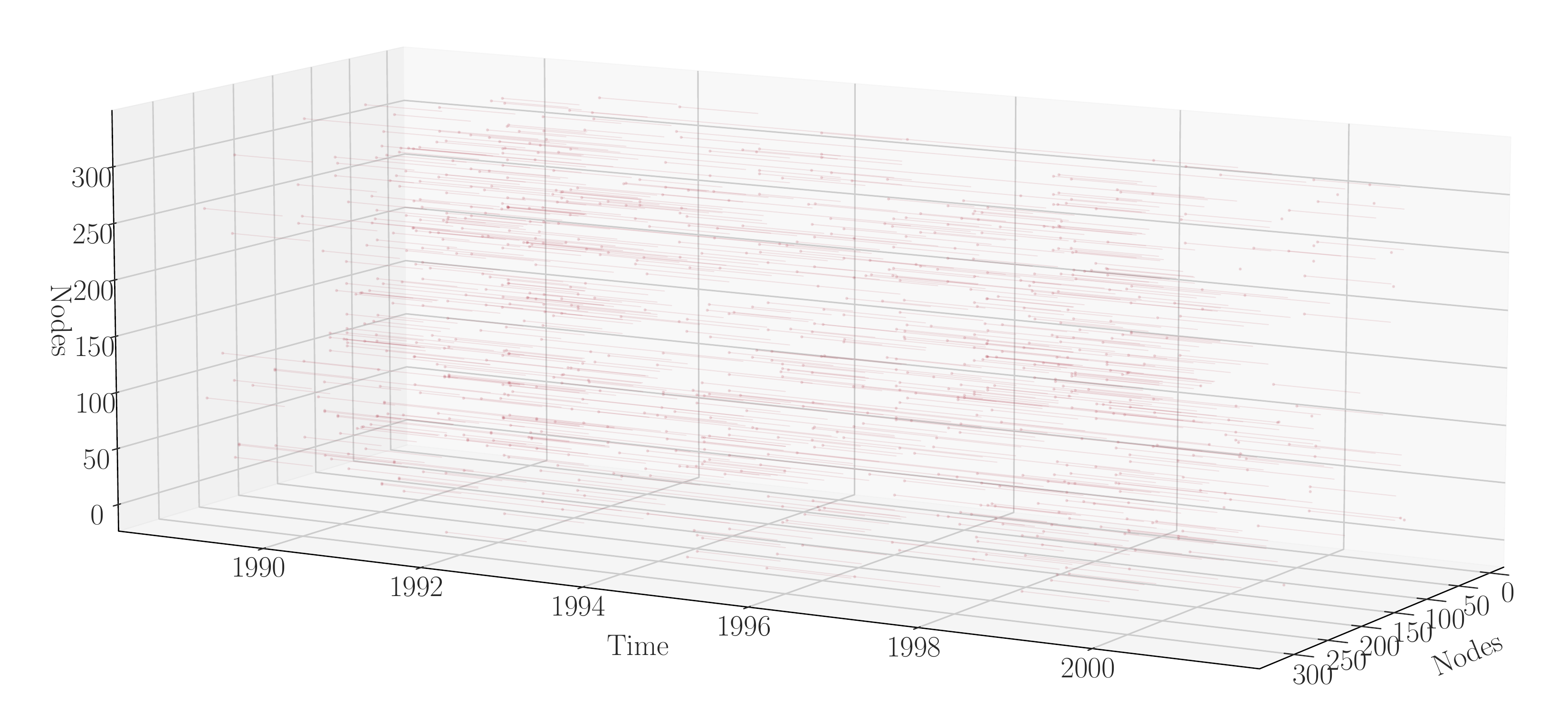}}
\subfigure[\textsl{Infectious}]{\includegraphics[width=0.49\textwidth]{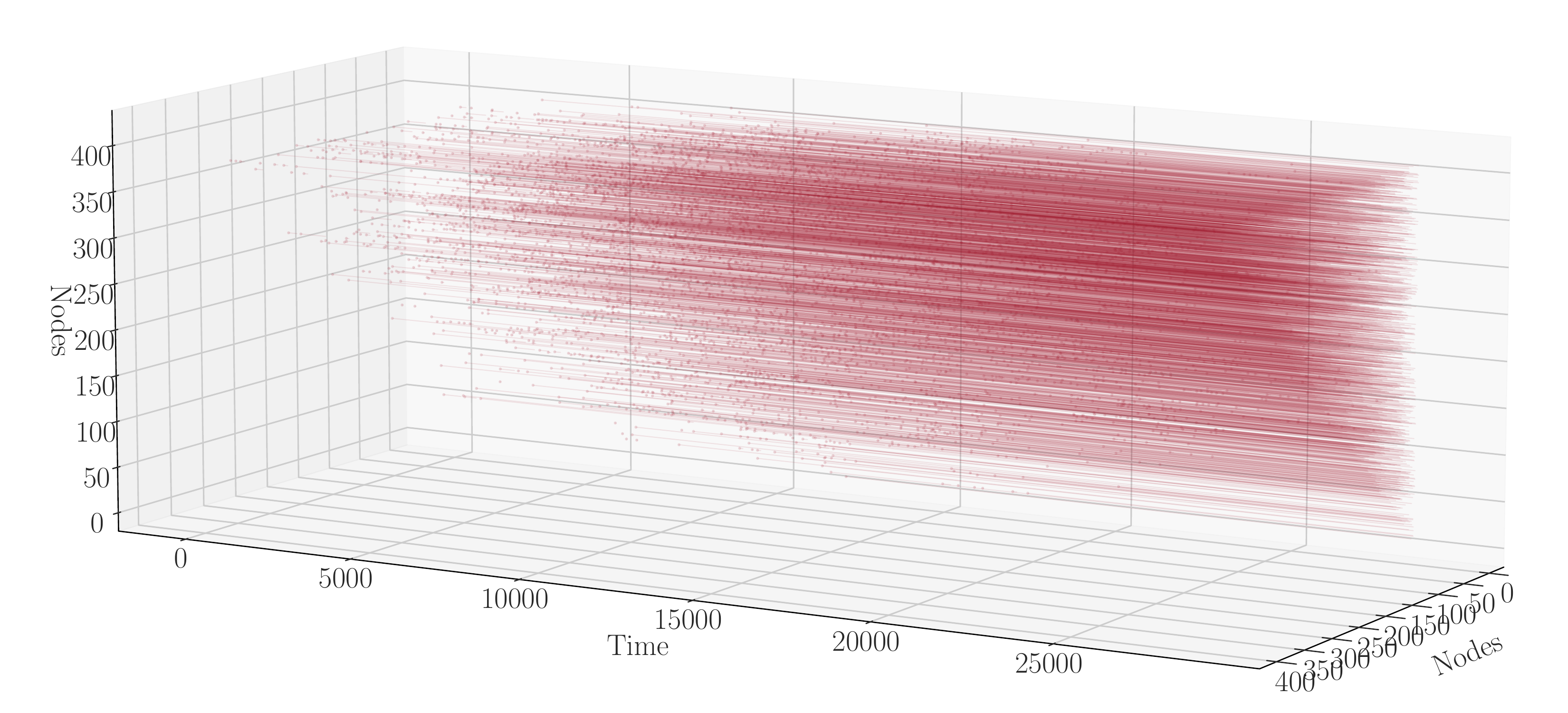}}
\caption{Intermittent time-persistent linkage structures of the networks.}
\label{fig:appendix_link_structure}
\end{figure*}

\clearpage
\appendix
\bibliography{references}

\end{document}